\title{\LARGE 
Provable Partially Observable Reinforcement Learning  \\
with Privileged Information 
}  
\author{%
  Yang Cai$^1$~~~~~~~~~~Xiangyu Liu$^2$~~~~~~~~~~Argyris Oikonomou$^1$~~~~~~~~~~Kaiqing Zhang$^2$ \\\\
$^1$ Yale University\\ 
$^2$University of Maryland, College Park
\\
\texttt{yang.cai@yale.edu\qquad xyliu999@umd.edu}\\ \texttt{argyris.oikonomou@yale.edu\qquad kaiqing@umd.edu}
}
\newcommand{\rg}[1]{{\color{black}{#1}}}
\newcommand{\kz}[1]{\textcolor{red}{[Kaiqing: #1]}}
\newcommand{\notshow}[1]{}
 \newcommand{\xy}[1]{\textcolor{green}{[Xiangyu: #1]}} 
 \newcommand{\xynew}[1]{{\color{black}#1}} 
  \newcommand{\kzedit}[1]{{\color{black}#1}} 
\newcommand{\Rea}{{\mathbb R}}
\newcommand{\gen}{\text{gen}}
\newcommand{\tQ}[2]{\widetilde{Q}^{#1}_{#2}}
\newcommand{\wg}{\widetilde{g}}
\newcommand{\V}[2]{V^{#1}_{#2}}
\newcommand{\tV}[2]{\widetilde{V}^{#1}_{#2}}
\newcommand{\Lpi}{\pi^{E}}
\newcommand{\LPi}{\Pi^{(E)}}
\newcommand{\DPi}{\Pi^{D}}
\newcommand{\E}{{\mathbb E}}
\newtheorem{observation}{Observation}
\newcommand{\InParentheses}[1]{\left({#1}\right)}
\newcommand{\InBrackets}[1]{\left[{#1}\right]}
\newcommand{\InBraces} [1]{\left\{  #1  \right\}}
\newcommand{\InAngles}[1]{\left\langle{#1}\right\rangle}
\newcommand{\InNorms}[1]{\left|{#1}\right|}
\newenvironment{prevproof}[1]{\noindent {\bf {Proof of \Cref{#1}:}}}{$\hfill \square$}
\begin{document}

\maketitle

\notshow{
Partial observability of the underlying states generally presents significant challenges for reinforcement learning (RL). In practice, certain *privileged information* , e.g., the access to states from simulators, has been exploited in training and achieved prominent empirical successes. To better understand the benefits of privileged information, we revisit and examine several simple and practically used paradigms in this setting, with both computation and sample efficiency analyses. 

Specifically, we first formalize the empirical paradigm of *expert distillation* (also known as *teacher-student* learning), demonstrating its pitfall in finding near-optimal policies. We then identify a condition of the partially observable environment, the deterministic filter condition, under which expert distillation achieves sample and computational complexities that are *both* polynomial. Furthermore, we investigate another successful empirical paradigm of *asymmetric actor-critic*, and focus on the more challenging setting of observable partially observable Markov decision processes. We develop a belief-weighted asymmetric actor-critic algorithm with polynomial sample and quasi-polynomial computational complexities, where one key component is a new provable oracle for learning belief states that preserve *filter stability* under a misspecified model, which may be of independent interest. Finally, we also investigate the provable efficiency of partially observable multi-agent RL (MARL) with privileged information. We develop algorithms with the feature of centralized-training-with-decentralized-execution, a popular framework in empirical MARL, with polynomial sample and (quasi-)polynomial computational complexity in both paradigms above. Compared with a few recent related theoretical studies, our focus is on understanding practically inspired algorithmic paradigms, without computationally intractable oracles. 
}
\renewcommand{\thefootnote}{}
\renewcommand{\thefootnote}{\arabic{footnote}} 

\begin{abstract}
Partial observability of the underlying states generally presents significant challenges for 
reinforcement learning (RL). 
In practice, certain  
\emph{privileged information}, e.g., the access to states from simulators,  
has been exploited in training and has achieved  
prominent empirical successes. 
To better understand the benefits of privileged information, we revisit and examine several  
simple and practically used paradigms in this setting. 
Specifically, we first formalize the empirical paradigm of \emph{expert distillation} (also known as   \emph{teacher-student} learning), demonstrating its pitfall in finding near-optimal policies. We then identify a condition of the partially observable environment, the \emph{deterministic filter condition}, under which expert distillation achieves sample and computational complexities that are \emph{both} polynomial. {Furthermore, we investigate 
another useful empirical paradigm of \emph{asymmetric actor-critic}, and focus on the more challenging setting of observable partially observable Markov decision processes. We develop a belief-weighted asymmetric actor-critic algorithm with polynomial sample and quasi-polynomial computational complexities}, {in which one key component is a new provable oracle for learning belief states that preserve \emph{filter stability} under a {misspecified model}, which may be of independent interest.} Finally, we also investigate the provable efficiency of partially observable multi-agent RL (MARL) with privileged information. We develop algorithms featuring   \emph{centralized-training-with-decentralized-execution}, a popular framework 
in empirical MARL, with polynomial sample and (quasi-)polynomial computational complexities in both paradigms above. 
Compared with a few recent related theoretical studies, our focus is on understanding practically inspired algorithmic paradigms, without computationally intractable oracles. 
\end{abstract}  
\section{Introduction}\label{sec:Introduction}

In most real-world applications of reinforcement learning (RL), e.g., perception-based robot learning \citep{levine2016end,andrychowicz2020learning}, autonomous driving \citep{shalev2016safe,kiran2021deep}, 
dialogue systems \citep{young2013pomdp}, and clinical trials \citep{shortreed2011informing}, only \emph{partial observations} of the environment state are available for sequential decision-making. Such partial observability presents significant challenges for efficient decision-making and learning, with known computational \citep{papadimitriou1987complexity} and statistical \citep{krishnamurthy2016pac,jin2020sample} barriers under the general model of partially observable Markov decision processes (POMDPs). The curse of partial observability becomes severer when  \emph{multiple} RL agents interact, 
where not only the environment state, but also other agents' information, are not fully-observable in decision-making \citep{witsenhausen1968counterexample,tsitsiklis1985complexity}. 

On the other hand, a flurry of empirical paradigms has made partially observable (multi-agent) RL promising in practice. One notable example is to exploit the \emph{privileged information} that may be available (only) during training. 
The privileged information usually includes direct access to the underlying states, as well as access to other agents' observations/actions in multi-agent RL (MARL), due to the use of simulators and/or high-precision sensors for training. The latter is also known as the \emph{centralized-training-with-decentralized-execution} (CTDE) framework in deep MARL, and has become prevalent in practice  \citep{lowe2017multi,rashid2018qmix,foerster2018counterfactual,vinyals2019grandmaster}. These approaches can be mainly categorized into two types: 
i) privileged \emph{policy} learning, where an expert/teacher {policy} is trained with privileged information, and then \emph{distilled} into a student partially observable policy. This \emph{expert distillation}, also known as \emph{teacher-student learning}, approach has been the key to some empirical successes in robotic locomotion \citep{lee2020learning,miki2022learning} and autonomous driving \citep{chen2020learning}; ii) privileged \emph{value} learning, where a {value}  function is trained conditioned on privileged information, and used to improve a partially observable policy. It is typically instantiated as the \emph{asymmetric actor-critic} algorithm \citep{pinto2017asymmetric}, and serves as the backbone of some high-profiled successes in robotic manipulation \citep{levine2016end,andrychowicz2020learning} and MARL \citep{lowe2017multi,vinyals2019grandmaster}.

Despite the remarkable empirical successes, theoretical understandings of partially observable RL with privileged information have been rather limited, except for a few very recent prominent advances in RL with \emph{hindsight observability}   \citep{lee2023learning,guo2023sample} (see \Cref{sec:releated_work} for a detailed discussion). However, most of these theoretically sound algorithms are different from those used in practice, and require \emph{computationally intractable} oracles to achieve provable sample efficiency. The soundness and efficiency of the aforementioned paradigms used in practice remain elusive. In this work, we examine both paradigms of expert distillation and asymmetric actor-critic, with foresight privileged information as in these empirical works.  
In contrast to  \cite{lee2023learning,guo2023sample},  which purely focused on \emph{sample efficiency}, we aim to understand the benefits of privileged information by examining these practically inspired paradigms under several POMDP models, without computationally intractable oracles. We 
summarize our contribution as follows.

\paragraph{Contributions.}  We first formalize the empirical paradigm of \emph{expert distillation}, and demonstrate its pitfall in distilling near-optimal policies even in observable POMDPs, a model class that was recently shown to allow provable partially observable RL without computationally intractable oracles \citep{golowich2022learning}.  We then identify a new condition for  POMDPs, the \emph{deterministic filter} condition, and establish sample {and} computational complexities that are \emph{both} polynomial for expert distillation. The new condition is weaker and thus encompasses several known (statistically) tractable POMDP models (see \Cref{fig:landscape} for a summary). Further, we revisit the \emph{asymmetric actor-critic} paradigm and analyze its efficiency under the more challenging setting of observable POMDPs above (where expert distillation fails). Identifying the inefficiency of vanilla asymmetric actor-critic, and inspired by the empirical success in \emph{belief-state-learning}, we develop a new \emph{belief-weighted} version of asymmetric actor-critic, with polynomial-sample and quasi-polynomial-time complexities. Key to the results is a new belief-state learning oracle that preserves \emph{filter stability}  under a misspecified model, which may be of independent interest. Finally, we also investigate the provable efficiency of partially observable multi-agent RL with privileged information, by studying algorithms under the CTDE framework, with polynomial-sample and (quasi-)polynomial-time complexities in both paradigms above. 

\begin{table}[!t]
\begin{minipage}[b]{0.49\linewidth}
\centering
\resizebox{1\textwidth}{!}{
\begin{tabular}{|l|c|c|}
\hline
                                                                 & Without PI                                                                                                                                                                                                                                                     & With PI (Ours)                                                                                                                                                    \\ \hline
Block MDP                                                        & \begin{tabular}[c]{@{}c@{}}
With STD:\\Oracle-efficient \citep{dann2018oracle,du2019provably,misra2020kinematic} \\ Without additional assump.:\\Comput. harder than SL \citep{golowich2024exploration} \end{tabular} & \multirow{3}{*}{\begin{tabular}[c]{@{}c@{}}Tabular: \\ Poly sample\\ + time\\ \\ FA: \\ Poly sample + \\ Classification \vspace{-2pt}\\ (SL) oracle\end{tabular}} \\ \cline{1-2}
\begin{tabular}[c]{@{}l@{}}$k$-decodable \\ POMDP\end{tabular}   & \begin{tabular}[c]{@{}c@{}}Exponential-in-$k$ \\ sample + time  \citep{EfroniJKM22}\end{tabular}                                                                                                                                                              &                                                                                                                                                                   \\ \cline{1-2}
Det. POMDP                                                       & \begin{tabular}[c]{@{}c@{}}Without WSE: \\ Statistically hard \citep{liu2022partially}\\With WSE: \\ Poly sample + time \citep{jin2020sample}\end{tabular}                                                                                                    &                                                                                                                                                                   \\ \hline
\begin{tabular}[c]{@{}l@{}}POSG with \\ det. filter\end{tabular} & N/A                                                                                                                                                                                                                                                            & Poly sample + time                                                                                                                                                \\ \hline
\begin{tabular}[c]{@{}l@{}}Observable\\ POMDP\end{tabular}       & \multirow{2}{*}{\begin{tabular}[c]{@{}c@{}}Quasi-poly \\sample + time  \citep{golowich2022learning,liu2023partially}\end{tabular}}                                                                                                                    & \multirow{2}{*}{\begin{tabular}[c]{@{}c@{}}Poly sample + \\ Quasi-poly time\end{tabular}}                                                                         \\ \cline{1-1}
\begin{tabular}[c]{@{}l@{}}Observable\\ POSG\end{tabular}        &                                                                                                                                                                                                                                                                &                                                                                                                                                                   \\ \hline
\end{tabular}
}
    \caption{Comparison of theoretical guarantees with/without privileged information. PI: privileged information; STD: structural assumptions on transition dynamics, e.g., deterministic transition or reachability of all states; SL: supervised learning; FA: function approximation; WSE: well-separated emission.}
    \label{table:compare}
\end{minipage}
\hspace{3pt} 
\begin{minipage}[b]{0.5\linewidth}
\hspace{-10pt} 
\includegraphics[width=88mm]{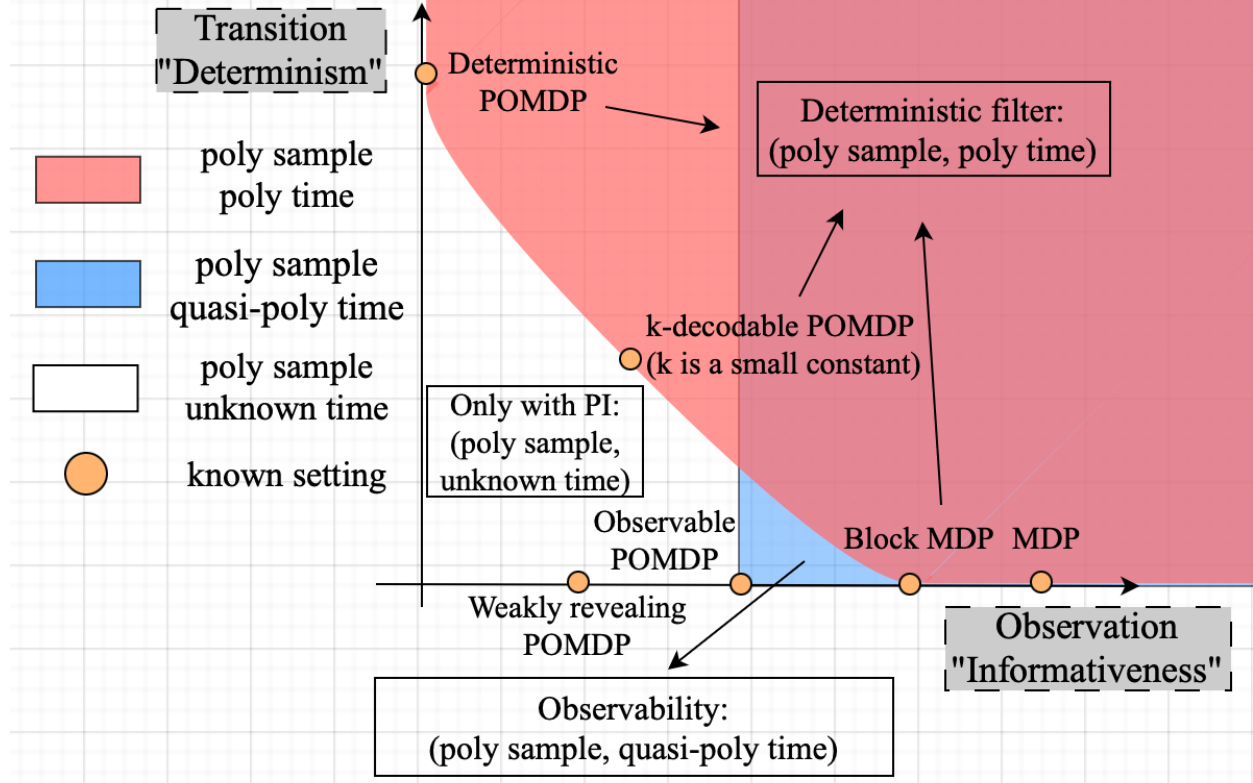} 
\vspace{-7pt} 
\captionof{figure}{A landscape of POMDP models that partially observable RL with privileged information addresses, with both statistical and computational complexity considerations. 
The $x$ and $y$ axes denote the ``restrictiveness'' of the assumptions, on the emission channels/observations and transition dynamics, respectively.}
\label{fig:landscape}
\end{minipage}
\end{table}

\subsection{Related Work}\label{sec:releated_work}

\paragraph{Provable partial observable RL.} While POMDPs are generally known to be both statistically hard \citep{krishnamurthy2016pac} and computationally intractable \citep{papadimitriou1987complexity}, a productive line of research has identified several  structured subclasses of POMDPs that can be efficiently solved. \cite{krishnamurthy2016pac} introduced the class of POMDPs in the rich-observation setting, where the observation space can be large and fully reveal the underlying state, where sample-efficient RL becomes possible \citep{JiangKALS17,misra2020kinematic}. 
 \cite{EfroniJKM22} introduced $k$-step decodable POMDPs, where the last $k$ observation-action pairs can uniquely determine the state, and proposed polynomial-sample complexity algorithms (assuming $k$ is a small constant). 
Beyond settings where the underlying state can be \emph{exactly} recovered, \cite{jin2020sample, liu2022partially} proposed weakly revealing POMDPs, where the observations are assumed to be {informative enough}. {Under the weakly revealing condition (and its variant), there has been a fast-growing line of recent works developing sample-efficient RL algorithms for various settings, see e.g.,  \cite{wang2022represent,chen2023partially,cai2022reinforcement,lu2023pessimism,liu2023optimistic, zhan2023pac,liu2024partiallyobservablemultiagentreinforcement_arxiv}.} 
{Notably, these algorithms 
are typically computationally inefficient, requiring access to an optimistic planning oracle for POMDPs. On a promising note, \cite{golowich2022planning} showed that in observable POMDPs (see \Cref{assump:observa}), one can have quasi-polynomial-time algorithms for computing  the near-optimal policy, which further leads to provable RL algorithms \citep{golowich2022learning,golowich2023planning} with \emph{both quasi-polynomial} samples and time.}

\paragraph{RL under hindsight observability.} The closest line of research to ours are the recent theoretical studies for Hindsight Observable Markov Decision Processes (HOMDPs) \citep{lee2023learning}, where the underlying state is revealed at the end of each episode; {see also subsequent related works in \cite{guo2023sample,shi2024theoretical} with different observation feedback models}. These works focused purely on \emph{sample efficiency}, and showed that polynomial sample complexity can be achieved 
without {(or by further relaxing)} {aforementioned}  structural assumptions of the model (e.g., observability or decodability), in both tabular and/or function approximation settings. However, the algorithms (also) require an oracle for planning or even \emph{optimistic planning} in a {learned approximate} POMDP, which are not computationally tractable in general. Indeed, without any structural assumption, learning the optimal policy in HOMDPs is computationally no easier than the planning problem, which thus remains \texttt{PSPACE-hard}   \cite{papadimitriou1987complexity}.  
Meanwhile, even under the additional assumption of observability {on the \emph{underlying} POMDP model}, it is still not clear if these algorithms can avoid computationally intractable oracles, {since the approximate POMDP that \cite{lee2023learning} needs to do planning on at every iteration during learning can be quite different from the underlying model. For example, at the beginning of exploration when not enough samples are collected, or when there exist certain states that remain less explored during the entire learning process, the potentially \emph{misspecified emission (and transition)} may break the observability (or other structural) assumptions made for the underlying POMDP.} This makes that single iteration even computationally intractable. In contrast, 
our focus is on better understanding practically inspired algorithmic paradigms, which in practice often do access and use the privileged state information \emph{during} each episode (instead of only at the end) \citep{pinto2017asymmetric,lee2020learning,chen2020learning},  {without  computationally intractable oracles.}

\paragraph{Most related empirical works.} Privileged information has been widely used in empirical partially observable RL, with two main types of approaches based on privileged \emph{policy} and privileged \emph{value} learning, respectively. For the former, one prominent example is expert distillation \citep{chen2020learning,nguyen2022leveraging,margolis2021learning}, also known as \emph{teacher-student} learning \citep{lee2020learning,miki2022learning,shenfeld2023tgrl}, as we analyze in \Cref{sec:expert_distillation_theory}. For the latter, asymmetric actor-critic \citep{pinto2017asymmetric} represents one of the well-known examples, with other studies in \cite{baisero2022asymmetric,andrychowicz2020learning}. Learning privileged value functions (to improve the policies) has also been widely used in multi-agent RL, featured in centralized-training-decentralized-execution, see e.g., \cite{lowe2017multi,foerster2018counterfactual,rashid2018qmix,yang2022perfectdou,vinyals2019grandmaster, liu2021towards}.  Intriguingly, it was shown that if the privileged value function depends \emph{only on} the state, the associated actor will cause bias   \citep{baisero2022unbiased,lyu2022deeper,lyu2023centralized}. This has thus necessitated the use of history/belief in asymmetric actor-critic, as in our \Cref{sec:aac_approximate_belief}. Notably, 
the empirical framework in \cite{wang2023learning} exactly matches ours, where they exploited the privileged state information in training for \emph{belief learning}, followed by policy optimization over the learned belief states. Indeed, many empirical works explicitly separate the procedures of explicit belief-state learning and planning \citep{gangwani2020learning,nguyen2021belief,hu2021learned,chen2022flow,yang2023belief} as we study in \Cref{sec:aac_approximate_belief}, oftentimes with privileged state information to supervise the belief learning procedure 
\citep{moreno2018neural,avalos2023wasserstein}. 


 
\section{Preliminaries}\label{sec:pre}

\subsection{Partially Observable RL (with Privileged Information)}



\paragraph{Model.}
Consider a POMDP characterized by a tuple $\cP = (H, \mathcal{S}, \mathcal{A}, \mathcal{O}, \mathbb{T}, \mathbb{O}, \mu_1, r)$, where $H$ denotes the length of each episode, $\mathcal{S}$ is the state space with $|\mathcal{S}|=S$, $\mathcal{A}$ denotes the action space with $|\mathcal{A}|=A$. We use $\mathbb{T}=\{\mathbb{T}_h\}_{h\in [{H}]}$ to denote the collection of transition matrices, so that $\mathbb{T}_h(\cdot\given s, {a})\in\Delta({\mathcal{S}})$ gives the probability of the next state if 
action ${a}$ is taken at state $s$ and step $h$. In the following discussions, for any given $a$, we treat $\TT_h(a)\in\RR^{S\times S}$ as a matrix, where each row gives the probability for reaching each next state from different current states. We use $\mu_1$ to denote the distribution of the initial state $s_1$, and $\cO$ {to} denote the observation space with $|\mathcal{O}|=O$. We use $\mathbb{O}=\{\mathbb{O}_h\}_{h\in [H]}$ 
to denote the collection of   emission matrices, so that $\mathbb{O}_h(\cdot\given s)\in\Delta({\mathcal{O}})$ gives the emission distribution over the  observation space $\mathcal{O}$ at state $s$ and step $h$. For notational convenience, we will at times adopt the matrix convention, where $\OO_h$ is a matrix with rows $\OO_h(\cdot\given s)$ for each $s\in \cS$. Finally, $r=\{r_{h}\}_{h\in [H]}$ is a collection of reward functions, so that $r_{h}(s, a)\in[0, 1]$ is the reward given the state $s$ and action $a$ at step $h$. 
When privileged information is available, the agent can observe the underlying state $s\in\cS$ directly \emph{during training} (only). We thus  denote the trajectory until step $h$ \emph{with states} as $\overline{\tau}_h = (s_{1:h}, o_{1:h}, a_{1:h-1})$, the one  \emph{without states} as $\tau_h=(o_{1:h}, a_{1:h-1})$, and its space as $\cT_h$. {Finally, we use $\bb_h(\tau_h)$ to denote the posterior distribution over the underlying state at step $h$ given history $\tau_h$, which is known as the \emph{belief state} (c.f. \Cref{app:belief} for more details).}

\paragraph{Policy and value functions.} We define a stochastic policy at step $h$ as: 
\#\label{policy}
    {\pi}_{h}: \cO^{h}\times\cA^{h-1}\rightarrow \Delta({\mathcal{A}}),
\#
where the agent bases on the entire (partially observable) history for decision-making. The corresponding policy class is denoted as $\Pi_h$. We further denote $\Pi = \times_{h\in [H]} \Pi_{h}$. 
 We also define $\Pi^{\gen}:=\{\pi_{1:H}\given {\pi}_{h}: \cS^h\times\cO^{h}\times\cA^{h-1}\rightarrow \Delta({\mathcal{A}})\text{ for }h\in[H]\}$ to be the most general policy space in partially observable RL with privileged state information, which can potentially depend on all historical states, observations, and actions. It can be seen that $\Pi{\subseteq}\Pi^{\gen}$. We may also use policies that only receive a \emph{finite memory} instead of the whole history as inputs: fix an integer  $L>0$, we define the policy space $\Pi^L$ to be the space of all possible policies $\pi=\pi_{1:H}:=(\pi_{h})_{h\in[H]}$ such that $\pi_h:\cZ_h\rightarrow\Delta(\cA)$ with $\cZ_h:=\cO^{\min\{L, h\}}\times\cA^{\min\{L, h\}}$ for each $h\in[H]$. Finally, we define the space of state-based policies  as $\Pi_\cS$, i.e., for any $\pi=\pi_{1:H}\in \Pi_\cS$, $\pi_h:\cS\to\Delta(\cA)$ for all $h\in[H]$.

Given the POMDP model $\cP$, {we use $\PP^{\cP}_{s_{1:H+1}, a_{1:H}, o_{1:H}\sim\pi}(\cE)$ to denote the \kzedit{probability of some} event $\cE$ when $(s_{1:H+1}, a_{1:H}, o_{1:H})$ \notshow{I think this is the right trajectory index -- shall we just stick to this one --- only $s_h$ goes to $H+1$..} is drawn as a trajectory following the policy $\pi$ in the model $\cP$. We will also use the shorthand notation $\PP^{\pi, \cP}(\cdot)$ if $(s_{1:H+1}, a_{1:H}, o_{1:H})$ is evident. We write $\EE_{\pi}^{\cP}[\cdot]$ to denote the expectation similarly.} We define the value function at step $h$ as $V_h^{\pi, \cP}(y_h):=\EE_{\pi}^{\cP}[\sum_{t=h}^H r_t(s_t, a_t)\given y_h]$, denoting the expected accumulated rewards from step $h$, where $y_h\subseteq (s_{1:h}, o_{1:h}, a_{1:h-1})$, and we slightly abuse the notation by treating as a set the sequence of states $s_{1:h}$, the sequence of observations $o_{1:h}$, and the sequence of actions $a_{1:h-1}$ up to time $h$, which are the available information to the agent at step $h$. We say $y_h$ is \emph{reachable}  if there exists some policy $\pi\in\Pi^{\gen}$ such that $\PP^{\pi, \cP}(y_h)> 0$.  For $h=1$, we adopt the simplified notation $v^{\cP}(\pi)=\EE_{\pi}^\cP[\sum_{h=1}^H r_h(s_h, a_h)]$. 
Meanwhile, we also define $Q_h^{\pi, \cP}(y_h, a_h):=\EE_{\pi}^{\cP}[\sum_{t=h}^H r_t(s_t, a_t)\given y_h, a_h]$. We denote  the occupancy measure on the state space as $d_h^{\pi, \cP}(s_h) = \PP^{\pi, \cP}(s_h)$. 
 The goal of learning in POMDPs is to find the optimal policy that \emph{maximizes}  the expected accumulated reward {over the policies that take $\tau_h$ as input at each step $h\in[H]$, i.e., those  $\pi\in\Pi$}. Formally, we define: 
 
\begin{definition}[$\epsilon$-optimal policy]
	Given $\epsilon>0$, a policy $\pi^\star\in\Pi$ is $\epsilon$-optimal, if $v^{\cP}(\pi^\star)\ge \max\limits_{\pi\in\Pi}v^\cP(\pi)-\epsilon$.
\end{definition}

\paragraph{Learning with privileged information.} Common RL algorithms for POMDPs deal with the scenario where during \emph{both}  the training and test time, the agent can only observe its historical observations and actions $\tau_h$ at step $h$, while the states are not accessible. In other words, the agent can only utilize policies from $\Pi$ to interact with the environment. In contrast, in settings with \emph{privileged information}, e.g., training in simulators and/or using sensors with higher precision, the underlying state can be used in training. Thus,  the agent is allowed to utilize policies from the class $\Pi^{\text{gen}}$ during training. Meanwhile, the objective is still to find the optimal history-dependent policy in the space of $\Pi$, since at test time, the agent cannot access the state information anymore, and it is the performance for such policies that matters eventually. {For simplicity, we assume the reward function is known since under our privileged information setting, learning the reward function is much easier than learning the transition and emission{, and the sample/computational complexity for the former is dominated by that for the latter}.} {This assumption has also been made for learning in POMDPs without privileged information \cite{jin2020sample,liu2022partially,liu2023optimistic}.}

\subsection{Partially Observable Multi-agent RL with Information Sharing}

Partially observable stochastic games (POSGs) are a natural generalization of POMDPs with multiple self-interested agents.  
We  
define a POSG with $n$ agents by a tuple $\cG = (H, \mathcal{S}, \{\mathcal{A}_i\}_{i=1}^{n}, \{\mathcal{O}_i\}_{i=1}^{n}, \mathbb{T}, \mathbb{O}, \mu_1, \{r_i\}_{i=1}^{n})$, where each agent $i$ has its individual action space $\cA_i$, observation space $\cO_i$, and reward function $r_i=\{r_{i,h}\}_{h\in[H]}$ with $r_{i,h}(s,a)\in[0,1]$ denoting  the reward given  state $s$ and joint action $a$ for agent $i$ at step $h$. An episode of POSG proceeds as follows: at each step $h$ and state $s_h$, a joint observation is drawn from $(o_{i, h})_{i\in[n]}\sim\OO_h(\cdot\given s_h)$, and each agent receives its own observation $o_{i,h}$, takes the corresponding action $a_{i, h}$, obtains the reward $r_{i,h}(s_h, a_h)$, where $a_h:=(a_{i, h})_{i\in[n]}$, and then the system transitions to the next state as $s_{h+1}\sim\TT_h(\cdot\given s_h,a_h)$. Notably, each agent $i$ may not only know its local information $(o_{i, 1:h}, a_{i, 1:h-1})$, but also information from some other agents. 
Therefore, we denote the information available to each agent $i$ at step $h$ also as $\tau_{i, h}\subseteq (o_{1:h}, a_{1:h-1})$ and define the \emph{common information} as $c_h=\cap_{i\in[n]}\tau_{i, h}$ and \emph{private information} as $p_{i, h}=\tau_{i, h}\setminus c_h$. 
We denote the space for common information and private information as $\cC_h$ and $\cP_{i, h}$ for each agent $i$ and step $h$. {The joint private information at step $h$ is denoted as \rg{$p_{h}=(p_{i,h})_{i\in[n]}$}, where the collection of the joint private \rg{information} is given by $\cP_h = \cP_{1, h}\times\cdots\times\cP_{n, h}$.} We refer more examples of this setting of POSG with information-sharing to \Cref{app:posg-prelim} (and also \cite{nayyar2013common,nayyar2013decentralized,liu2023partially}). Correspondingly, the policy each agent $i$ deploys at test time takes the form of $\pi_{i, h}:\Omega_h\times\cC_h\times\cP_{i, h}\rightarrow\Delta(\cA_i)$, where $\Omega_h$ is the space of random seeds. We denote the space of such policies for agent $i$ as  $\Pi_i$. If $\pi_{i, h}$ takes the state $s_h$ instead of $(c_h, p_{i, h})$ as input, we denote its policy space as $\Pi_{\cS, i}$, \rg{e.g., for agent $i$ and policy ${\pi}_{1:H}\in \Pi_{\cS, i}$,  ${\pi}_{i,h}:\cS\rightarrow\Delta(\cA_i)$ for each $h\in[H]$}. {Similar to the POMDP setting, we define $\Pi^{\gen}$ to be the most general policy space, i.e., $\Pi^{\gen}:=\{\pi_{1:H}\given {\pi}_{h}: \cS^h\times\cO^{h}\times\cA^{h-1}\rightarrow \Delta({\mathcal{A}})\text{ for }h\in[H]\}$.} 
{Note that this model covers several recent POSG models studied for partially observable MARL, e.g., \cite{liu2022sample,golowich2023planning}. For example, at each step $h$,  if there is no shared information, then $c_h=\emptyset$; if all history information is shared, then $p_{i, h}=\emptyset$ for all $i\in[n]$.} {In \emph{privileged-information}-based learning, the training algorithm may exploit not only the underlying state information, but also the observations and actions of other agents.}  


\paragraph{Solution concepts.}
 Different from POMDPs, where we hope to learn an optimal policy, The solution concepts for POSGs are usually the \emph{equilibria}, particularly Nash equilibrium (NE) for two-player zero-sum games (i.e., when $n=2$ and  $r_{1, h}+r_{2, h}=\rg{1}$),\footnote{Note that we require $r_{1, h}+r_{2, h}$ to be 1 instead of $0$ to be consistent with our assumption that $r_{i,h} \in [0,1]$ for each $i \in [0,1]$, and this requirement does not lose optimality as one can always subtract the constant-sum offset to attain a zero-sum reward structure.} and correlated equilibrium (CE) or coarse correlated equilibrium (CCE) for general-sum games. 
\rg{For a joint policy $(\pi_i)_{i=1}^n\in \Pi$, we denote the expected reward of agent $i$ by $v_i^{\cG}(\pi)=\EE_{\pi}^\cP[\sum_{h=1}^H r_{i,h}(s_h, a_h)]$.}

\begin{definition}[$\epsilon$-approximate Nash equilibrium {with information sharing}]
For any $\epsilon\ge 0$, a product policy $\pi^{\star}\in\Pi$ is an $\epsilon$-approximate Nash equilibrium  of the POSG $\cG$ with information sharing if:
\begin{align*}
    \operatorname{NE-gap}(\pi^\star):=\max_i \left(\max_{\pi_{i}^{\prime}\in \Pi_i}v_i^{\cG}(\pi_i^{\prime}\times {\pi^{\star}_{-i}})-v_i^{\cG}(\pi^\star)\right)\le  \epsilon.
\end{align*}
\end{definition}

\begin{definition}[$\epsilon$-approximate coarse correlated equilibrium {with information sharing}]
For any $\epsilon\ge 0$, a joint policy $\pi^{\star}\in\Pi$
is an $\epsilon$-approximate coarse correlated equilibrium of the POSG $\cG$ with information sharing if:
\begin{align*}
    \operatorname{CCE-gap}(\pi^\star):=\max_i \left(\max_{\pi_{i}^{\prime}\in \Pi_i}v^{\cG}_{i}(\pi_i^{\prime}\times {\pi^{\star}_{-i}})-v_i^{\cG}(\pi^\star)\right)\le  \epsilon.
\end{align*}
\end{definition}

\begin{definition}[$\epsilon$-approximate correlated equilibrium {with information sharing}]\label{def:CE}
For any $\epsilon\ge 0$, a joint policy $\pi^{\star}\in\Pi$ is an $\epsilon$-approximate correlated equilibrium of the POSG $\cG$ with information sharing if: 
\begin{align*}
    \operatorname{CE-gap}(\pi^\star):=\max_i \left(\max_{m_i}v_{i}^{\cG}\left((m_i\diamond \pi_i^\star)\odot {\pi^{\star}_{-i}}\right)-v_i^{\cG}(\pi^\star)\right)\le  \epsilon,
\end{align*} 
where $m_i$ is called \emph{strategy modification} for agent $i$, and $m_i = \{m_{i, h, c_h, p_{i, h}}\}_{h, c_h, p_{i, h}}$, with each $m_{i, h, c_h, p_{i, h}}:\cA_i\rightarrow \cA_i$ being a mapping from the action set to itself. The space of $m_i$ is denoted as $\cM_i$. The composition $m_i\diamond\pi_i$ is defined as follows: at step $h$, when agent $i$ is given $c_h$ and $p_{i, h}$, the {joint} action chosen to be $(a_{1, h}, \cdots, a_{i, h}, \cdots,a_{n, h})$ will be modified to $(a_{1, h}, \cdots, m_{i, h, c_h, p_{i, h}}(a_{i, h}), \cdots, a_{n, h})$. Note that this definition follows from that in \cite{song2021can,liu2021sharp,jin2021v,liu2022sample,liu2023partially} when there exists certain common information, and is a natural generalization of the definition in the normal-form game case  \citep{roughgarden2010algorithmic}. Meanwhile, we also denote by $\cM_i^\gen$ the space of all possible strategy modifications $m_i$ if it is conditioned on {the entire joint history information including the state,} instead of only $(c_h, p_{i, h})$. 
Similarly, we use $\cM_{\cS, i}$ to denote the space of all possible strategy modifications $m_i$ if it only conditions on the current state.
\end{definition} 




\subsection{Technical Assumptions for Computational Tractability} 



A key technical assumption 
is that the POMDPs/POSGs we consider satisfy an \emph{observability}  assumption, as outlined below. This observability assumption allows us to use short memory policies to approximate the optimal policy, and enables quasi-polynomial-time complexity for both planning and learning in POMDPs/POSGs \citep{golowich2022planning,golowich2022learning,liu2023partially}. 

 
\begin{assumption}[$\gamma$-observability \citep{Even-DarKM07,golowich2022planning,golowich2022learning}]\label{assump:observa}
	Let $\gamma> 0$. For $h\in [H]$, we say that the matrix $\OO_h$ satisfies the $\gamma$-observability assumption if for each $h\in[H]$, for any $b, b^\prime\in \Delta(\cS)$,
$\left\|\mathbb{O}_{h}^{\top} b-\mathbb{O}_{h}^{\top} b^{\prime}\right\|_{1} \geq \gamma\left\|b-b^{\prime}\right\|_{1}.$  
A POMDP/POSG satisfies  $\gamma$-observability if all its $\OO_h$ for  $h\in[H]$ do so.  
\end{assumption}

Additionally, to solve a POSG without computationally intractable oracles, certain information-sharing is also needed {even under the observability assumption}  \citep{liu2023partially}. We thus make the following assumption as in \cite{nayyar2013common, gupta2014common,liu2023partially}. 

\begin{assumption}[Strategy independence of beliefs \citep{nayyar2013common, gupta2014common,liu2023partially}]\label{str_indi}
Consider any step $h\in[H]$, any policy ${\pi}\in\Pi$, and any realization of common information $c_h$ that has a non-zero probability under the trajectories generated by ${\pi}_{1:h-1}$. Consider any other policies ${\pi}^{\prime}_{1:h-1}$, which also give a non-zero probability to $c_h$. Then, we assume that: for any {such} $c_h\in\cC_h$, {and any} $p_{h}\in\cP_h, s_h\in \cS$, 
    $\PP^{{\pi}_{1:h-1}, \cG}_{h}\left(s_h, p_{h}\given c_h\right) = \PP^{{\pi}^{\prime}_{1:h-1}, \cG}_{h}\left(s_h, p_{h}\given c_h\right).$
\end{assumption}  

We provide examples satisfying this assumption in \Cref{app:posg-prelim}, which include the fully-sharing structure as in \cite{golowich2023planning,qiu2024posterior} as a special case.  Finally, we also assume that common information and private information evolve over time properly in \Cref{evo}, as standard in \cite{nayyar2013common, nayyar2013decentralized,liu2023partially}, which covers the models considered in \cite{golowich2023planning,qiu2024posterior,liu2022sample}. 

\section{Revisiting Empirical Paradigms of  RL with Privileged Information} \label{sec:revisiting}



Most empirical paradigms of RL with privileged information can be  categorized into two types: 
i) privileged \emph{policy}  learning, where the policy in training is conditioned on the privileged information, and the trained policy is then \emph{distilled} to a policy that does not take the privileged information as input. This is usually referred to as either \emph{expert distillation} \citep{chen2020learning,nguyen2022leveraging,margolis2021learning} or \emph{teacher-student learning}  \citep{lee2020learning,miki2022learning,shenfeld2023tgrl} in the literature; ii) privileged \emph{value} learning, where the value function is conditioned on the privileged information, and is then used to directly output a policy that takes partial observation (history) as input. One prominent example of ii) is \emph{asymmetric-actor-critic}  \citep{pinto2017asymmetric,andrychowicz2020learning}. 
It is worth noting that asymmetric-actor-critic is also closely related to one of the most successful paradigms for multi-agent RL, \emph{centralized-training-with-decentralized-execution}  \citep{lowe2017multi,yang2022perfectdou,foerster2016learning}, which is usually instantiated under the actor-critic framework, with the critic taking privileged information as input in training. Here we formalize and revisit the potential pitfalls of these two paradigms, and further develop theoretical guarantees under certain additional conditions and/or algorithm variants. 

\subsection{Privileged Policy Learning: Expert Policy Distillation}

The motivation behind expert policy distillation is that learning an optimal \emph{fully observable}  policy in MDPs is a much easier and better-studied problem with many known efficient algorithms. 
The (expected) distillation objective can be formalized as follows:
\#\label{eq:kl}
\hat{\pi}^\star\in\arg\min_{\pi\in\Pi}~~\EE_{\pi^\prime}^\cP\InBrackets{\sum_{h=1}^H D_{f}\left(\pi^\star_h(\cdot\given s_h)\given \pi_{h}(\cdot\given \tau_h)\right)}, 
\# 
where $\pi^\prime\in\Pi^{\gen}$ is some given behavior policy to collect exploratory trajectories, $\pi^\star\in\arg\max_{\pi\in\Pi_\cS}v^\cP(\pi)$ denotes the optimal fully observable policy, and $D_f$ denotes the general $f$-divergence to measure the discrepancy between $\pi^\star$ and $\pi$.

Such a formulation looks promising since it essentially circumvents the challenging issue of \emph{exploration in partially observable environments}, by directly mimicking an expert policy that can be obtained from any off-the-shelf MDP learning algorithm. However, we point out in the following proposition that even if the POMDP satisfies \Cref{assump:observa},  the distilled policy can still be strictly suboptimal even with infinite data, i.e., by solving the expected objective \Cref{eq:kl} completely. {We postpone the proof of \Cref{prop:pitfall_ppl} to \Cref{apx:revisiting}.}

\begin{proposition}[Pitfall of expert policy distillation]\label{prop:pitfall_ppl}
	For any $\epsilon, \gamma\in(0, 1)$, there exists a $\gamma$-observable POMDP $\cP^\epsilon$ with $H=1$, $S=O=A=2$ such that for any behavior  policy $\pi^\prime\in\Pi^{\gen}$ and choice of $D_f$ in \Cref{eq:kl}, it holds that $v^{\cP^\epsilon}(\hat{\pi}^\star)\le \max_{\pi\in\Pi} v^{\cP^\epsilon}(\pi)-\frac{(1-\epsilon)(1-\gamma)}{4}$.
\end{proposition}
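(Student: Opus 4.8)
The plan is to exhibit, for each $\epsilon,\gamma\in(0,1)$, an explicit one-step ($H=1$) POMDP on which the population distillation objective \eqref{eq:kl} is minimized by a policy whose value is bounded away from optimal, \emph{uniformly} over all behavior policies $\pi'\in\Pi^{\gen}$ and all $f$-divergences $D_f$. Concretely I would take $\cS=\cO=\cA=\{1,2\}$, uniform $\mu_1=(\tfrac12,\tfrac12)$, and the symmetric emission $\OO_1(1\mid1)=\OO_1(2\mid2)=\tfrac{1+\gamma}{2}$, $\OO_1(2\mid1)=\OO_1(1\mid2)=\tfrac{1-\gamma}{2}$; here $\min_{b\neq b'}\|\OO_1^\top(b-b')\|_1/\|b-b'\|_1=\tfrac{1+\gamma}{2}-\tfrac{1-\gamma}{2}=\gamma$, so $\cP^\epsilon$ is exactly $\gamma$-observable. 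The rewards are $r_1(1,1)=1$, $r_1(1,2)=0$, $r_1(2,2)=1$, $r_1(2,1)=1-\delta$, with the small perturbation $\delta:=\tfrac{\epsilon(1-\gamma)}{1+\gamma}\in(0,1)$ fixed at the end.

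\emph{Reduction to a two-point problem.} Because $H=1$, the integrand $D_f(\pi^\star_1(\cdot\mid s_1)\,\|\,\pi_1(\cdot\mid o_1))$ depends only on $(s_1,o_1)$, whose law $\mu_1(s_1)\OO_1(o_1\mid s_1)$ does not depend on $\pi'$; hence \eqref{eq:kl} is independent of the behavior policy and decouples across $o\in\{1,2\}$. The optimal state policy is the (unique, thanks to $\delta>0$) deterministic map $\pi^\star_1(1)=1,\ \pi^\star_1(2)=2$, the posteriors are $\bb_1(\cdot\mid 1)=(\tfrac{1+\gamma}{2},\tfrac{1-\gamma}{2})$ and $\bb_1(\cdot\mid 2)=(\tfrac{1-\gamma}{2},\tfrac{1+\gamma}{2})$, and $\PP(o=1)=\PP(o=2)=\tfrac12$. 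Writing $Q(o,a)=\sum_s\bb_1(s\mid o)\,r_1(s,a)$, action $1$ is strictly best at each observation, so $\max_{\pi\in\Pi}v^{\cP^\epsilon}(\pi)$ equals the value $1-\delta/2$ of the constant policy $a\equiv1$, and
\[
\max_{\pi\in\Pi}v^{\cP^\epsilon}(\pi)-v^{\cP^\epsilon}(\hat\pi^\star)=\tfrac12\big(\hat\pi^\star_1(2\mid1)\,A_1+\hat\pi^\star_1(2\mid2)\,A_2\big),\qquad A_o:=Q(o,1)-Q(o,2),
\]
where $A_1=\tfrac{(1+\gamma)-(1-\gamma)\delta}{2}\ \ge\ A_2=\tfrac{(1-\gamma)-(1+\gamma)\delta}{2}>0$ and $\hat\pi^\star_1(2\mid o)$ is the distilled mass on action $2$ given observation $o$.

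\emph{The key symmetry step.} Relabeling $1\leftrightarrow2$ simultaneously in $\cS,\cO,\cA$ fixes $\mu_1$, $\OO_1$, and $\pi^\star$, and maps the $o=1$ subproblem of \eqref{eq:kl} onto the $o=2$ subproblem for \emph{every} $f$, using $D_f(\mathbf e_1\|\mathrm{swap}(q))=D_f(\mathbf e_2\|q)$ (with $\mathbf e_a$ the point mass on action $a$). The per-observation objective $x\mapsto \bb_1(1\mid o)\,D_f(\mathbf e_1\|(x,1{-}x))+\bb_1(2\mid o)\,D_f(\mathbf e_2\|(x,1{-}x))$ has second derivative $\bb_1(1\mid o)\,x^{-3}f''(1/x)+\bb_1(2\mid o)\,(1{-}x)^{-3}f''(1/(1{-}x))$, hence is strictly convex whenever $f$ is, giving a unique minimizer that must respect the symmetry: $\hat\pi^\star_1(\cdot\mid2)=\mathrm{swap}(\hat\pi^\star_1(\cdot\mid1))$. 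Thus with $m:=\hat\pi^\star_1(2\mid1)$ we get $\hat\pi^\star_1(2\mid2)=1-m$, and using $A_1\ge A_2$,
\[
\tfrac12\big(mA_1+(1-m)A_2\big)\ \ge\ \tfrac12A_2\ =\ \tfrac{(1-\gamma)-(1+\gamma)\delta}{4}\ =\ \tfrac{(1-\epsilon)(1-\gamma)}{4},
\]
by the choice of $\delta$, which is exactly the claimed bound.

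\emph{Main obstacle.} The delicate part is uniformity over all $D_f$: different divergences place different mass $m$ on the wrong action (for instance $\chi^2$ does not return the Bayes posterior), so a per-$f$ computation would be unwieldy and the gap would seem $f$-dependent. The relabeling symmetry is what removes the case analysis, pinning down $\hat\pi^\star_1(2\mid2)=1-\hat\pi^\star_1(2\mid1)$ independently of $f$, after which $A_1\ge A_2$ closes the bound for \emph{every} $m\in[0,1]$. The only technical point left to verify is uniqueness of the per-observation minimizer (needed to transfer the symmetry to the given $\hat\pi^\star$): this holds for every strictly convex $f$ by the displayed second-derivative computation, and the non-strictly-convex boundary case of total variation is handled by a direct check that its minimizer is the corner $m=0$, which is precisely the equality case of the bound.
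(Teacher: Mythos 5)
Your proposal is correct, and it reaches the stated bound by a genuinely different construction than the paper's. The paper uses an \emph{asymmetric} prior $\mu_1=(\frac{1-\gamma}{2-\gamma},\frac{1}{2-\gamma})$ with emissions $\OO_1(o^1\given s^1)=1$, $\OO_1(o^1\given s^2)=1-\gamma$, engineered so the posterior at $o^1$ is \emph{exactly uniform}; then, for strictly convex $f$, Jensen's inequality applied to $g(x)=xf(1/x)$ forces the distilled policy at $o^1$ to be uniform over actions, and the gap follows from $\PP(o^1)\ge 1-\gamma$ together with the direct computation $V_1^{\tilde\pi}(o^1)-V_1^{\hat\pi^\star}(o^1)=\frac{1-\epsilon}{4}$. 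You instead keep the prior and emission fully symmetric (and exactly $\gamma$-observable) and put the asymmetry into the rewards via $\delta=\epsilon(1-\gamma)/(1+\gamma)$; the relabeling identity $D_f(\mathbf{e}_1\|\mathrm{swap}(q))=D_f(\mathbf{e}_2\|q)$, combined with uniqueness of the per-observation minimizer (your second-derivative computation is the same $g''(x)=f''(1/x)/x^3$ fact the paper uses), pins down only the \emph{relation} $\hat\pi^\star_1(\cdot\given 2)=\mathrm{swap}(\hat\pi^\star_1(\cdot\given 1))$ rather than the minimizer itself, after which $A_1\ge A_2>0$ closes the bound uniformly in $m\in[0,1]$; I verified $A_1-A_2=\gamma(1+\delta)>0$, $\tfrac12 A_2=\frac{(1-\epsilon)(1-\gamma)}{4}$, and the optimal value $1-\delta/2$. (Both proofs dispose of the behavior policy identically: at $H=1$ the law of $(s_1,o_1)$ is independent of $\pi'$.) Your route buys two things: you never solve the $f$-dependent minimization, avoiding any case analysis over divergences; and your instance genuinely covers total variation, which the paper's does not --- with TV and a uniform posterior, the paper's per-observation objective at $o^1$ equals the constant $\tfrac12$ for every $q$, so its $\arg\min$ contains the truly optimal policy and the paper's argument (which needs strict convexity for uniqueness) degenerates, whereas in your instance the TV minimizer is the unique corner $m=0$ and attains the bound with equality. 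What the paper's route buys is an explicit closed form for the distilled policy (uniform), making the final value comparison immediate. One shared caveat, not a defect of your proposal relative to the published proof: for $f$ with $f(0)=\infty$ (e.g., reverse KL) or non-strictly-convex $f$ other than TV, the $\arg\min$ in \Cref{eq:kl} can be non-unique or degenerate, and neither your argument nor the paper's (which explicitly restricts to strictly convex $f$) covers those cases.
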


The key why  \Cref{eq:kl} fails 
is that in general,  the underlying state can remain highly uncertain even given the full history.  Thus, the {distilled 
policy may not be able to mimic the state-based expert policy well at different states $s_h$ if the associated $\pi^\star_h(\cdot\given s_h)$ differs significantly across $s_h$.} 
 
To see how we may rule out such problems, we notice that if $\gamma=1$\footnote{Note that $\gamma$ cannot be larger than $1$ since according to \Cref{assump:observa},  $\gamma\le \|\OO_h\|_\infty\le 1$.}, implying that the observation can decode the underlying state, the bound in \Cref{prop:pitfall_ppl} becomes vacuous. Inspired by this observation, we propose the following condition that can incorporate this case of $\gamma=1$, and will be shown to suffice to make expert distillation effective. 

\begin{definition}[Deterministic filter condition]\label{def:deterministic-filter}
	We say a POMDP $\cP$ satisfies the \emph{deterministic filter} condition if for each $h\ge 2$, the belief update operator under $\cP$ satisfies that there exists an \textit{(unknown) function} $\psi_h:\cS\times\cA\times\cO\rightarrow \cS$ such that for any reachable $s_{h-1}\in\cS$, $o_{h}\in\cO$, $a_{h-1}\in\cA$, $U_{h}(b^{s_{h-1}}; a_{h-1}, o_h)=b^{\psi_h(s_{h-1}, a_{h-1}, o_{h})}$, where we define for any $s\in\cS$,  $b^{s}\in\Delta(\cS)$ and $b^s(s)=1$ to be a one-hot vector. In addition, for $h=1$, there exists an \textit{(unknown) function} $\psi_1:\cO\rightarrow\cS$ such that for any reachable $o_1$, $B_1(\mu_1; o_1)=b^{\psi_1(o_1)}$. Here we define $B_h(b; o_h):=\PP_{s_h\sim b}^\cP(\cdot\given o_h)\in\Delta(\cS)$, $U_h(b; a_{h-1}, o_h):=\PP_{s_{h-1}\sim b}^\cP(\cdot\given a_{h-1}, o_{h-1})\in\Delta(\cS)$ to be the belief update operators  {under the Bayes rule} for any $b\in\Delta(\cS)$, for which we defer the formal introduction to \Cref{app:belief}.
\end{definition}

Notably, this condition is weaker than and thus covers several known tractable classes of POMDPs with sample and computation efficiency guarantees as follows. We also refer to \Cref{fig:landscape} for a graphical illustration.


 \begin{example} [Deterministic POMDP \citep{jin2020sample, uehara2023computationally}]\label{def:det pomdp}
 	We say a POMDP $\cP$ is of deterministic transition if entries of matrices $\{\TT_h\}_{h\in[H]}$ and the vector $\mu_1$ are either $0$ or $1$. Note that we do not make any assumptions on the emission matrices.
 \end{example}

 \begin{example}[Block MDP \citep{krishnamurthy2016pac,du2019provably}]\label{def:block mdp}
 	We say a POMDP $\cP$ is a block MDP if for any $h\in[H]$, $s_h, s_h^\prime\in\cS$, it holds that $\supp(\OO_h(\cdot\given s_h))\cap \supp(\OO_h(\cdot\given s_h^\prime))=\emptyset$ when $s_h\not=s_h^\prime$.
 \end{example}

 \begin{example}[$k$-step decodable POMDP \citep{EfroniJKM22}]\label{def:m step decodability}
 	We say a POMDP $\cP$ is a $k$-step decodable POMDP if there exists an unknown decoder $\phi^\star=\{\phi_h^\star:\cZ_h\rightarrow \cS\}_{h\in[H]}$ such that for any $h\in[H]$ and reachable trajectory $\tau_h$, $\PP^\cP(s_h=\phi_h^\star(z_h)\given \tau_h)=1$, where $\cZ_h=(\cO\times\cA)^{\max\{h-1, k-1\}}\times\cO$, $z_h=((o, a)_{k(h):h-1}, o_h)$, and {$k(h)=\max\{h-k+1, 1\}$}.
 \end{example} 

 Finally, to understand how our condition can extend beyond known examples in the literature, we show that one can indeed allow the decoding length of \Cref{def:m step decodability} to be unknown and arbitrary (instead of being a small known constant as in \cite{EfroniJKM22} to have  provably efficient algorithms). 

 \begin{example}[POMDP with  arbitrary, unknown {decodable} length]\label{def:unknown-decoding}
 	This example is similar to \Cref{def:m step decodability}, but the decoding length $m$ is unknown and not necessarily a small constant.
 \end{example}

In light of the pitfall in \Cref{prop:pitfall_ppl}, we will analyze \emph{both}  the computational and statistical efficiencies of expert distillation in \Cref{sec:expert_distillation_theory}, under the condition in \Cref{def:deterministic-filter}. 


\newcommand{\cgk}{\overline{\cG}^k}
\newcommand{\ucgk}{\underline{\cG}^k}
\newcommand{\eecg}{\EE_{c_1}^{\cG}}
\newcommand{\eecgk}{\EE_{c_1}^{\cgk}}
\newcommand{\cpk}{\overline{\cP}^k}
\newcommand{\ucpk}{\underline{\cP}^k}
\newcommand{\eecp}{\EE_{c_1}^{\cP}}
\newcommand{\eecpk}{\EE_{c_1}^{\cpk}}
\newcommand{\cmk}{{\cM}^k}
\newcommand{\ucmk}{\underline{\cM}^k}
\newcommand{\eecm}{\EE_{c_1}^{\cM}}
\newcommand{\eecmk}{\EE_{c_1}^{\cmk}}
\newcommand{\trunc}{\text{trunc}}
\newcommand{\high}{\text{high}}
\newcommand{\low}{\text{low}}
\newcommand{\apx}{\text{apx}}


\subsection{Privileged Value Learning: Asymmetric Actor-Critic}\label{sec:pvl_aac_alg}

Asymmetric actor-critic  \citep{pinto2017asymmetric} iterates between two procedures as in standard actor-critic algorithms \citep{konda2000actor}: policy \emph{improvement} and policy \emph{evaluation}. As the name suggests, its key difference from the standard  actor-critic is that the algorithm maintains $Q$-value functions (the critic)  based on the \emph{state/privileged information}, while the policy receives only the (partially observable)  \emph{history} as input. 

\paragraph{Policy evaluation.} 
At iteration $t-1$, given the policy $\pi^{t-1}$, the algorithm estimates $Q$-functions in the form of $\{Q^{t-1}_h(\tau_h, s_h, a_h)\}_{h\in[H]}$, where we adopt the ``unbiased''  version \citep{baisero2022unbiased} such that $Q$-functions are conditioned on \emph{both} the  \emph{history} and the \emph{state}.\footnote{Note that as pointed out in \cite{baisero2022unbiased}, the original asymmetric actor-critic \citep{pinto2017asymmetric}, where the value function was only conditioned on the \emph{state}, is a \emph{biased} estimate of the actual history-conditioned value function that appears in the policy gradient. 
{Although \cite{baisero2022unbiased} only showed such a bias in the infinite-horizon discounted setting, we verify that such a state-based value function is indeed also biased under our finite-horizon setting, see \Cref{remark:bias} for an example.} {We will thus use the unbiased value function estimate conditioned on \emph{both} the state and the history throughout}.} {One key to achieving sample efficiency is by adding some bonus terms in policy evaluation to encourage  exploration, i.e., obtaining some \emph{optimistic} $Q$-function estimates, similarly as in the fully-observable MDP  setting, see e.g.,   \cite{cai2020provably,shani2020optimistic}, 
for which we defer the detailed introduction to \Cref{sec:expert_distillation_theory}.}


\paragraph{Policy improvement.}
At each iteration $t$, given the critic $\{Q_{h}^{t-1}(\tau_h, s_h, a_h)\}_{h\in[H]}$ for $\pi^{t-1}$, the vanilla asymmetric actor-critic algorithm updates the policy according to the \emph{sample-based} gradient estimation via $K$ trajectories $\{s_{1:H+1}^k, o_{1:H}^k, a_{1:H}^k\}_{k\in[K]}$ sampled from $\pi^{t-1}$
\#\label{eq:vanilla}
\pi^t\leftarrow \textsc{Proj}_{\Pi} \InParentheses{ \pi^{t-1}+\frac{\lambda_t}{K}\sum_{k\in[K]}\sum_{h\in[H]}\nabla_{\pi} \log\pi_h^{t-1}(a_h^k\given \tau_h^k)Q_{h}^{t-1}(\tau_h^k, s_h^k, a_h^k)},
\#
\normalsize
where $\lambda_t$ is the step-size and $\textsc{Proj}_{\Pi}$ is the projection operator onto the space of $\Pi$, which corresponds to projecting onto the simplex of $\Delta(\cA)$ for each $h\in[H]$. Here we point out the potential {drawback} of the vanilla algorithm as in \cite{pinto2017asymmetric,baisero2022unbiased}, where the key insight is that for each iteration of policy evaluation and improvement, one roughly \emph{only} performs the computation of order $\cO(KH)$, while needing to collect $K$ new episodes of samples. Thus, the \emph{sample complexity} will scale in the same order as the \emph{computational complexity} {when  the algorithm converges  after some iterations to an $\epsilon$-optimal solution},  which {will be} super-polynomial even for $\gamma$-observable POMDPs \citep{golowich2023planning}.  {Proof of the result  is deferred to \Cref{apx:revisiting}.} 

\begin{proposition}[{Inefficiency} of vanilla asymmetric actor-critic]\label{pitfall:aac}
	Under the tabular parameterization for both the policy and the value function, the vanilla asymmetric actor-critic algorithm (\Cref{eq:vanilla}) suffers from super-polynomial sample complexity for  $\gamma$-observable POMDPs under standard hardness assumptions.
\end{proposition}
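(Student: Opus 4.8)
The plan is to exploit the tight coupling between the sample and computational complexities of the vanilla update in \Cref{eq:vanilla}. The guiding observation, already hinted at in the text above, is that this algorithm performs only a negligible (polynomial) amount of computation per collected sample, so its total running time is at most polynomially larger than the number of episodes it draws. Consequently, any super-polynomial lower bound on the \emph{computation} required to solve $\gamma$-observable POMDPs transfers directly into a super-polynomial lower bound on the \emph{sample complexity} of \Cref{eq:vanilla}. I would establish this in three steps: a per-iteration cost accounting, an aggregation over all iterations, and a reduction from the planning problem in observable POMDPs.

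First I would bound the per-iteration cost. Under the tabular parameterization, each iteration collects $K$ fresh episodes and forms the gradient estimate as a sum over the $K$ trajectories and the $H$ steps; each summand $\nabla_{\pi}\log\pi_h^{t-1}(a_h^k\given\tau_h^k)\,Q_h^{t-1}(\tau_h^k,s_h^k,a_h^k)$ costs $\cO(A)$ arithmetic operations, while the projection $\textsc{Proj}_{\Pi}$ onto the product of simplices, and the assembly of the optimistic $Q$-estimates in the evaluation step, each cost $\mathrm{poly}(S,A,O,H)$. Hence a single iteration uses $K$ episodes and $\cO(KH)+\mathrm{poly}(S,A,O,H)$ computation. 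Aggregating over $T$ iterations, the algorithm uses $N:=KT$ episodes and total computation $C=\cO(KHT)+T\cdot\mathrm{poly}(S,A,O,H)$; since each iteration draws $K\ge 1$ episodes we have $T\le N$, so this collapses to $C=\cO\big(N\cdot\mathrm{poly}(S,A,O,H)\big)$, i.e., the computational cost is at most polynomially larger than the sample count.

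The final step is a reduction from planning. Suppose, toward a contradiction, that on every $\gamma$-observable POMDP the algorithm reaches an $\epsilon$-optimal policy after drawing only a polynomial number $N$ of episodes. Given any $\gamma$-observable POMDP presented as a planning instance, I would use its model as a generative simulator that additionally reveals the underlying state at every step---precisely the privileged information the training procedure consumes---so that each episode is produced in $\mathrm{poly}(S,A,O,H)$ time. Simulating the entire run then solves the planning instance in total time $N\cdot\mathrm{poly}(S,A,O,H)+C$, which is polynomial under the assumption on $N$ together with the bound $C=\cO(N\cdot\mathrm{poly})$ from the previous step. This would yield a polynomial-time algorithm for computing $\epsilon$-optimal policies of observable POMDPs, contradicting the super-polynomial-time lower bound for planning in observable POMDPs established under standard hardness assumptions \citep{golowich2023planning}, and thus forcing $N$ to be super-polynomial.

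I expect the reduction to be the main obstacle, for two reasons. First, one must verify that the hard instances witnessing the planning lower bound are genuinely $\gamma$-observable and that disclosing the state during rollouts does not trivialize them; this holds because planning already grants full model knowledge, which subsumes privileged state access, so the implication only runs in the direction efficient-learning $\Rightarrow$ efficient-planning and never the other way. Second, care is needed with the convergence caveat: if the algorithm never outputs an $\epsilon$-optimal policy then its sample complexity is vacuously infinite and the claim is immediate, so it suffices to argue within the case where convergence occurs after $N$ episodes, where the coupling $C=\cO(N\cdot\mathrm{poly})$ is exactly what pins down the super-polynomial sample lower bound.
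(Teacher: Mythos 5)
Your proposal is correct and follows essentially the same route as the paper's own proof: both couple the per-iteration sample count $\Theta(K)$ with the $\textsc{poly}(K,H)$ per-iteration computation of the asynchronous tabular updates, aggregate to show total computation is at most polynomial in the number of episodes, and then invoke the ETH-based super-polynomial lower bound for planning in $\gamma$-observable POMDPs \citep{golowich2022planning,golowich2023planning} to force the sample complexity to be super-polynomial. Your write-up is in fact slightly more careful than the paper's, since you make explicit both the simulation-based reduction (learner $+$ model simulator $\Rightarrow$ planner, with privileged state access subsumed by model knowledge) and the vacuous case where the algorithm never converges, steps the paper leaves implicit.
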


To address such an issue, {one may need to perform \emph{more} computation \emph{per iteration}, so that although the \emph{total}  computational complexity (iteration number $\times$ per-iteration computational complexity) is super-polynomial, the total iteration number  can be lower such that the total sample complexity may be lower as well. This desideratum is  hard to achieve if one  \emph{computes} policy updates only on the \emph{sampled}  trajectories $\tau_h$ per iteration, i.e., update \emph{asynchronously},  since this will couple the scales of computational and sample complexities similarly as \Cref{eq:vanilla}. In contrast, we first propose to update \emph{all} trajectories per iteration in a \emph{synchronous} way, with the following proximal-policy optimization-type \citep{schulman2017proximal} policy improvement update with  the state-history-dependent  $Q$-functions $\{Q^{t-1}_h(\tau_h, s_h, a_h)\}_{h\in[H]}$:}
\#\label{eq:close-form}
\pi_{h}^t(\cdot\given \tau_h)\propto \pi_h^{t-1}(\cdot\given \tau_h)\exp\InParentheses{\eta\EE_{s_h\sim\bb_h(\tau_h)} \InBrackets{Q_h^{t-1}(\tau_h, s_h, \cdot)} }, \forall h\in[H], \tau_h\in\cT_h,
\#
where we recall $\bb_h(\tau_h)\in\Delta(\cS)$ denotes the belief state and $\eta>0$ is the learning rate{. This update rule   also  reduces to the natural policy gradient (NPG) \cite{kakade2002natural} update  under the softmax policy parameterization in the fully-observable case \cite{agarwal2020optimality}, when updated for each state $s_h$ separately  \cite{shani2020optimistic}}. We defer the  detailed derivation of \Cref{eq:close-form} to  \Cref{apx:revisiting}.  

However, such an update presents two challenges: (1)  It requires enumerating all possible $\tau_h$, whose number scales exponentially with the horizon, making it still computationally intractable;  (2) An explicit belief function $\bb_h$ is needed. Motivated by these two challenges, we propose to consider \emph{finite-memory}-based policy and assume access to an approximate belief function $\{\bb_h^{\text{apx}}:\cZ_h\rightarrow\Delta(\cS)\}_{h\in[H]}$ (the learning for which will be made clear later). 
Correspondingly, the policy update is modified as:\notshow{make sure we have consistent use of $\pi$, tilde, and $((z_h,s_h),a_h)$ v.s. $(z_h,s_h,a_h)$  for $Q$ overall!}
\$
	\pi_{h}^t(\cdot\given z_h)\propto \pi_h^{t-1}(\cdot\given z_h)\exp\InParentheses{{\eta}\EE_{s_h\sim\bb_h^{\text{apx}}(z_h)} \InBrackets{Q_h^{t-1}(z_h, s_h, \cdot)} }, \forall h\in[H], z_h\in\cZ_h.
\$
Then we develop and analyze one possible approach to learning such an approximate belief {efficiently} (c.f. \Cref{sec:aac_approximate_belief}). It is worth noting that the policy optimization algorithm we aim to develop and analyze does not depend on the specific algorithm approximate belief learning. Such a decoupling   enables a more modular  algorithm design framework, and can potentially incorporate the rich literature on learning approximate beliefs in practice, see e.g., \citep{gangwani2020learning,nguyen2021belief,chen2022flow,yang2023belief,wang2023learning}, which has mostly not been theoretically analyzed before. We will thus analyze such an oracle in \Cref{sec:aac_approximate_belief}.

\section{Provably Efficient Expert Policy Distillation}\label{sec:expert_distillation_theory}

We now focus on the provable correctness and efficiency of expert policy distillation, under the deterministic filter condition in \Cref{def:deterministic-filter}. 
We will defer all the proofs in this section to \Cref{apx:expert_distillation_theory}. \Cref{def:deterministic-filter} motivates us to consider only succinct policies that incorporate an auxiliary parameter representing the most recent state, as well as the most recent observations and actions. We consider policies that are the composition of two functions: at step  $h$ a function $g_h:\cS\times \cA\times \cO \rightarrow\cS$ that decodes the state based on the previous state, the most recent action, and the most recent observation, and a policy {$\Lpi\in\Pi_\cS$} that takes as input the current (decoded) underlying state and outputs a distribution over actions. 

\begin{definition}\label{def:policy deterministic}
We define a policy class $\DPi$ as:
\begin{align*}
\DPi =\InBraces{\Lpi_h\InParentheses{g_h(s_{h-1},a_{h-1},o_{h})}: g_h:\cS\times\cA\times\cO\rightarrow\cS , \Lpi_{h}: \cS \rightarrow \Delta(\cA)}_{h \in [H]},
\end{align*}
where $\Lpi$ stands for an arbitrary expert policy, and $\DPi$ stands for the distilled policy class, {and for $h=1$, $a_0$, $s_0$ are some fixed dummy action and state for notational convenience. Intuitively, the distilled policy $\pi\in\DPi$ executes as follows: it firstly \emph{decodes} the underlying states by applying $\{g_h\}_{h\in[H]}$ \emph{recursively} along the history, and then takes actions using $\Lpi$ based on the decoded states. }
\end{definition} 
Our goal is to learn the two functions independently, that is, we want to learn an approximately optimal policy $\Lpi\in\Pi_\cS$ with respect to the MDP $\cM$ derived from POMDP $\cP$ by omitting the observations and observing the underlying state {(see \Cref{def:associated MDP})}, and for each step $h\in[H]$, a decoding function $g_h(s_{h-1},a_{h-1},o_{h})$ 
such that the {probability of \emph{incorrectly} decoding a state-action-observation triplet over the trajectories induced by the policy $\Lpi$} 
is low. 


\begin{definition}[POMDP-induced MDP]\label{def:associated MDP}
Given a POMDP \(\mathcal{P} = (H, \mathcal{S}, \mathcal{A}, \mathcal{O}, \mathbb{T}, \mathbb{O}, \mu_1, r)\), we define its associated Markov Decision Process (MDP) \(\mathcal{M}\) 
as \(\mathcal{M} = (H, \mathcal{S}, \mathcal{A}, \mathbb{T}, \mu_1, r)\) without observations. 
\end{definition}

\notshow{\begin{lemma}\label{lem:performance of composed policy}
Let $\cP=(H, \mathcal{S}, \mathcal{A}, \mathcal{O}, \mathbb{T}, \mathbb{O}, \mu_1, r)$ be a POMDP that satisfies \Cref{def:deterministic-filter}, and consider the \kzedit{associated} MDP $\cM = (H, \mathcal{S}, \mathcal{A}, \mathbb{T}, \mu_1, r)$ and consider a policy $\Lpi$ on MDP $\cM$. 
Consider a \kzedit{set of} decoding functions $\InBraces{g_h }_{h\in[H]}$ such that,
\begin{align*}
\PP^{\Lpi, \cP}\ \InBrackets{\exists h \in[H]:g_h\InParentheses{s_{h-1},a_{h-1}, o_{h}}\neq s_{h}}\leq \epsilon.
\end{align*}\kz{shall we define/introduce $\Lpi$?}\rgn{addreseed.}
Consider the policy $\pi=\InBraces{\Lpi_h\InParentheses{g_h(\cdot)}:\cS\times\cA\times\cO\rightarrow\Delta(\cA)}_{h\in[H]}$ on POMDP $\cP$,
then:
\begin{align*}
    v^{\cP}(\pi) \geq v^{\cM}(\Lpi) - H\epsilon,
\end{align*}
\end{lemma}
}

\begin{definition}\label{def:tilde_pi_E}
   Consider a POMDP $\cP$ that satisfies \Cref{def:deterministic-filter}, and let $\psi=\{\psi_h\}_{h\in[H]}$ be the promised set of functions that always correctly decode  a state-action-observation triplet into an underlying state. Consider policy $\tilde{\Lpi}=\{\Lpi_h(\psi(\cdot)):\cS\times\cA\times\cO\rightarrow\Delta(\cA)\}_{h\in[H]}\in \DPi$.
We slightly abuse the notation and simply denote by $v^{\cP}(\Lpi)=v^{\cP}(\tilde{\Lpi})$.
\end{definition}

\begin{lemma}\label{lem:performance of composed policy}
Let $\cP=(H, \mathcal{S}, \mathcal{A}, \mathcal{O}, \mathbb{T}, \mathbb{O}, \mu_1, r)$ be a POMDP that satisfies \Cref{def:deterministic-filter}, and consider a policy $\Lpi\in\Pi_\cS$. Consider a set of decoding functions $\InBraces{g_h }_{h\in[H]}$ such that,
\begin{align*} 
\PP^{\Lpi, \cP}\ \InBrackets{\exists h \in[H]:g_h\InParentheses{s_{h-1},a_{h-1}, o_{h}}\neq s_{h}}\leq \epsilon. 
\end{align*}
Consider the policy $\pi=\InBraces{\Lpi_h\InParentheses{g_h(\cdot)}:\cS\times\cA\times\cO\rightarrow\Delta(\cA)}_{h\in[H]}$ on the POMDP $\cP$,
then:
\begin{align*}
    v^{\cP}(\pi) \geq v^{\cP}(\tilde{\Lpi}) - H\epsilon. 
\end{align*}
\end{lemma}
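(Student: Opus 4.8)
The plan is to prove the bound by a coupling argument between the trajectory distribution of $\tilde{\Lpi}$ (which uses the correct decoders $\psi$) and that of $\pi$ (which uses the approximate decoders $\{g_h\}$), showing that the two processes are identical as long as no decoding error occurs, so that their value gap is controlled by the error probability $\epsilon$.

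First I would record that under $\tilde{\Lpi}$, the $\psi$-decoded states coincide with the true states almost surely. This is immediate from the deterministic filter condition (\Cref{def:deterministic-filter}): from the true initial state $s_1$, the posterior $B_1(\mu_1; o_1)$ is the one-hot vector $b^{\psi_1(o_1)}$, forcing $s_1=\psi_1(o_1)$ with probability one; inductively, given true $s_{h-1}$, action $a_{h-1}$, and observation $o_h$, the update $U_h(b^{s_{h-1}}; a_{h-1}, o_h)=b^{\psi_h(s_{h-1}, a_{h-1}, o_h)}$ forces $s_h=\psi_h(s_{h-1}, a_{h-1}, o_h)$ almost surely. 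Hence $\tilde{\Lpi}$ decodes every state correctly and effectively executes $\Lpi_h(\cdot\mid s_h)$ at each step, so its trajectory law coincides with $\PP^{\Lpi, \cP}$, the law of running the state-based $\Lpi\in\Pi_\cS$ on $\cP$.

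Next I would construct the coupling. Let $E$ denote the event, along a $\tilde{\Lpi}$-trajectory, that $g_h(s_{h-1}, a_{h-1}, o_h)=s_h$ for every $h\in[H]$; by hypothesis $\PP^{\Lpi, \cP}(E^c)\le\epsilon$. I would build a joint law over a $\tilde{\Lpi}$-trajectory and a $\pi$-trajectory on shared randomness, arguing by induction on $h$ that whenever the $\pi$-decoded state satisfies $\hat{s}_{h-1}=s_{h-1}$ (no error so far) and $g_h(s_{h-1}, a_{h-1}, o_h)=s_h$, then the recursively decoded state obeys $\hat{s}_h=g_h(\hat{s}_{h-1}, a_{h-1}, o_h)=g_h(s_{h-1}, a_{h-1}, o_h)=s_h$. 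Under this hypothesis both policies draw actions from the identical distribution $\Lpi_h(\cdot\mid s_h)$, and the next state and observation come from the identical kernels $\TT_h(\cdot\mid s_h, a_h)$ and $\OO_{h+1}(\cdot\mid s_{h+1})$, so the two trajectories can be coupled to coincide. Thus on $E$ the coupled trajectories are identical over the entire horizon.

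Finally I would bound the value gap. Under the coupling, on $E$ the two trajectories, and hence their total rewards, agree exactly, whereas on $E^c$ each total reward lies in $[0,H]$ (rewards in $[0,1]$ over $H$ steps) so they differ by at most $H$. Therefore $v^\cP(\tilde{\Lpi})-v^\cP(\pi)\le H\cdot\PP(E^c)\le H\epsilon$, which rearranges to the claim. The main obstacle I anticipate is reconciling the recursive nature of the decoding, where $g_h$ acts on the previously decoded state $\hat{s}_{h-1}$, with the error hypothesis, which is phrased in terms of the true previous state $s_{h-1}$; the induction in the coupling step is exactly what resolves this, since correctness of all earlier decodings guarantees $\hat{s}_{h-1}=s_{h-1}$ precisely when the one-step error bound is invoked.
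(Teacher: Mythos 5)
Your proof is correct and follows essentially the same route as the paper's: the paper's argument is exactly your coupling in algebraic form, decomposing over the event that every decoding succeeds, asserting that on this event the trajectory law of $\pi$ coincides with that of $\Lpi$ (equivalently $\tilde{\Lpi}$, via \Cref{def:tilde_pi_E}), and bounding the complement's contribution by $H\epsilon$ using $0\le \sum_h r_h\le H$. Your explicit induction resolving the mismatch between the recursively decoded state $\hat{s}_{h-1}$ and the true state $s_{h-1}$ is precisely the step the paper's proof leaves implicit when it equates $\E^{\cP}_\pi$ and $\E^{\cP}_{\Lpi}$ of the reward restricted to the no-error event, so your write-up is, if anything, slightly more rigorous on that point.
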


We can use any off-the-shelf algorithm to learn an approximate optimal policy $\Lpi$ for the associated MDP $\cM$ \rg{(see \Cref{def:associated MDP})}. Thus, in the rest of the section, we focus on learning the decoding function $\InBraces{g_h}_{h\in[H]}$. To efficiently learn the decoding function, we model the access to the underlying state 
by keeping track of the most recent pair of the action and the observation, as well as the two most recent states. {We summarize the algorithm of decoding-function learning in \Cref{alg:learning decoding}.}  

\begin{theorem}\label{thm:dec embedding outcome} 
Consider a POMDP $\cP$ that satisfies \Cref{def:deterministic-filter}, a policy $\Lpi\in\Pi_\cS$, and let $\{g_h\}_{h\in[H]}$ be the output of \Cref{alg:learning decoding} with $M=\frac{AOS+ \log(H/\delta) }{ \epsilon^2}$.
Then, with probability at least $1-\delta$, for each step $h\in[H]$:
\begin{align*}
\PP^{\Lpi, \cP}\ \InBrackets{\exists h\in[H]:g_h\InParentheses{s_{h-1},a_{h-1}, o_{h}}\neq s_{h}}\leq \epsilon,
\end{align*}
using $\textsc{poly}\left(H,A,O,S,\frac{1}{\epsilon},\log\left(\frac{1}{\delta}\right)\right)$ episodes in time $\textsc{poly}\left(H,A,O,S,\frac{1}{\epsilon},\log\left(\frac{1}{\delta}\right)\right).$
\end{theorem}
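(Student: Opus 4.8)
The plan is to use the deterministic filter condition (\Cref{def:deterministic-filter}) to reduce decoding-function learning to a \emph{noiseless coverage} problem. The structural fact I would extract first is this: by \Cref{def:deterministic-filter}, for every reachable triplet $(s_{h-1},a_{h-1},o_h)$ the one-hot belief $b^{s_{h-1}}$ is mapped by the belief-update operator to another one-hot belief $b^{\psi_h(s_{h-1},a_{h-1},o_h)}$, so conditioned on knowing $s_{h-1}$ exactly and then observing $(a_{h-1},o_h)$, the next state $s_h$ equals the \emph{deterministic} function value $\psi_h(s_{h-1},a_{h-1},o_h)$ with probability one. Since under privileged information each training trajectory drawn from $\Lpi$ reveals the true states $s_{h-1},s_h$ alongside $a_{h-1},o_h$, a \emph{single} occurrence of a triplet uncovers the correct value of $\psi_h$ on it, with no label noise. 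Consequently, if \Cref{alg:learning decoding} sets $g_h$ on each observed triplet to its observed successor state (and arbitrarily elsewhere), then $g_h$ agrees with $\psi_h$ on every triplet it has encountered, and the only way $g_h$ can err along a fresh $\Lpi$-trajectory is by hitting a triplet never seen during training.

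This reduces the theorem, for each fixed step $h$, to bounding the $\mu_h$-probability mass of the set of \emph{unseen} triplets, where $\mu_h$ denotes the distribution of $(s_{h-1},a_{h-1},o_h)$ induced by running $\Lpi$ on $\cP$. The crucial point is that the $M$ training episodes and the test trajectory are generated by the same process, so each episode supplies one i.i.d.\ draw from $\mu_h$ and coverage of $\mu_h$ directly controls the step-$h$ decoding error; moreover only \emph{reachable} triplets carry positive $\mu_h$-mass, so the arbitrary (and possibly wrong) values of $g_h$ on unreachable triplets, on which \Cref{def:deterministic-filter} is silent, never hurt us. Writing $U_h$ for this missing mass over the at most $SAO$ triplets, I would first bound its expectation via $\E[U_h]=\sum_x \mu_h(x)\left(1-\mu_h(x)\right)^M\le \sum_x \mu_h(x)e^{-\mu_h(x)M}\le \frac{SAO}{eM}$, using that $u\mapsto u e^{-uM}$ is maximized at $u=1/M$.

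Next I would upgrade this to a high-probability statement. Since $U_h$ is a bounded, self-bounding function of the $M$ i.i.d.\ samples (changing one sample alters the ``seen'' status of at most two triplets), a concentration inequality for the missing mass gives a fluctuation term of order $\sqrt{\log(H/\delta)/M}$ with probability at least $1-\delta/H$. Choosing $M=\frac{AOS+\log(H/\delta)}{\epsilon^2}$ makes both the expectation bound $\tfrac{SAO}{eM}$ and the fluctuation term at most $\epsilon$ (up to constants), so $U_h\le\epsilon$ for that step. A union bound over $h\in[H]$, absorbed by the $\log(H/\delta)$ in $M$, yields the claimed error bound with probability at least $1-\delta$. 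Finally, the sample cost is $M$ episodes and the running time is linear in the samples up to maintaining a size-$O(SAO)$ lookup table per step, so both are $\textsc{poly}\left(H,A,O,S,\frac{1}{\epsilon},\log\left(\frac{1}{\delta}\right)\right)$.

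The step I expect to be the main obstacle is the high-probability control of the missing mass: the expectation bound is immediate, but turning it into a uniform-over-$H$ guarantee requires a concentration argument robust to triplets of large mass (for which naive bounded-difference constants degrade), which is exactly where the $1/\epsilon^2$ dependence in the stated $M$ originates. The deterministic-filter reduction itself is the conceptual crux but is clean, since it eliminates all estimation noise and leaves only coverage to control.
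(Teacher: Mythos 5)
Your proposal is correct, and its first half --- the reduction via \Cref{def:deterministic-filter} to a \emph{noiseless coverage} problem, where $g_h$ agrees with $\psi_h$ on every triplet seen in training and errs only on unseen triplets of positive $\mu_h$-mass --- is exactly the paper's reduction (the paper works with the quadruple distribution $D_h$ over $(s_{h-1},a_{h-1},o_h,s_h)$, but on the reachable support the fourth coordinate is determined by the triplet, so this is the same object). Where you genuinely diverge is the quantitative step. The paper bounds the unseen-triplet probability by $d_{TV}(D_h,\tilde{D}_h^M)$, where $\tilde{D}_h^M$ is the empirical distribution --- valid since the empirical measure vanishes on unseen points --- and then invokes the standard discrete distribution-learning guarantee (their citation to Canonne's note): $M=\Theta\bigl((SAO+\log(H/\delta))/\epsilon^2\bigr)$ samples give TV error $\epsilon$ with probability $1-\delta/H$. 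You instead bound the missing mass directly: the Good--Turing-style expectation bound $\E[U_h]\le SAO/(eM)$ plus a dedicated missing-mass concentration inequality with fluctuation $O\bigl(\sqrt{\log(H/\delta)/M}\bigr)$. You correctly flag that this last step is the delicate one --- naive bounded-difference constants scale with $\max_x \mu_h(x)$ and can be vacuous --- but inequalities of exactly this form are known (McAllester--Ortiz / Berend--Kontorovich: the missing mass is sub-Gaussian with variance proxy $O(1/M)$ uniformly over the underlying distribution), so your route is sound once such a result is cited rather than the loose ``self-bounding'' sketch. What each approach buys: the paper's is a one-line reduction to an off-the-shelf learning bound; yours is tighter, since the expectation term needs only $M\gtrsim SAO/\epsilon$ rather than $SAO/\epsilon^2$, so the stated $M$ is more than sufficient and the $SAO$-dependence could in principle be improved to $M=\Theta\bigl(SAO/\epsilon+\log(H/\delta)/\epsilon^2\bigr)$. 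One shared bookkeeping caveat: both you and the paper bound each step's error by $\epsilon$ and then sum over $h\in[H]$ via the union-bound decomposition, so the guarantee as derived is really $H\epsilon$; fixing this requires rescaling $\epsilon\mapsto\epsilon/H$, which the $\textsc{poly}\left(H,A,O,S,\frac{1}{\epsilon},\log\left(\frac{1}{\delta}\right)\right)$ complexity claims absorb in both arguments.
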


{The following is an immediate consequence of \Cref{lem:performance of composed policy} and \Cref{thm:dec embedding outcome}. {Note that \emph{both} the sample and computation complexities are \emph{polynomial}, which is in stark contrast to the $k$-decodable POMDP case \citep{EfroniJKM22} (a special one covered by our \Cref{def:deterministic-filter}), for which the sample complexity is necessarily \emph{exponential in $k$} when there is no privileged information \citep{EfroniJKM22}. In fact, thanks to privileged information, the complexities are only polynomial in horizon $H$ even when the decodable length is \emph{unknown}. For the benefits of using privileged information in several other subclasses of problems, we refer to \Cref{table:compare} for more details.}

\begin{theorem}\label{thm:distillation-main}
    Suppose $\cP$ satisfies  \Cref{def:deterministic-filter},     and consider any policy $\Lpi\in\Pi_\cS$.
    Then, using $\textsc{poly}\left(H,A,O,S,\frac{1}{\epsilon},\log\left(\frac{1}{\delta}\right)\right)$ {episodes} and in time 
    $\textsc{poly}\left(H,A,O,S,\frac{1}{\epsilon},\log\left(\frac{1}{\delta}\right)\right)$, we can compute a policy $\pi\in\DPi$ (see \Cref{def:policy deterministic}) such that with probability at least $1-\delta$, 
    $
    v^{\cP}(\pi) \geq v^{\cP}(\Lpi) - \epsilon. 
$
\end{theorem}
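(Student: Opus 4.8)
The plan is to combine the two preceding results --- the error-propagation guarantee of \Cref{lem:performance of composed policy} and the learning guarantee of \Cref{thm:dec embedding outcome} --- since \Cref{thm:distillation-main} is essentially their immediate composition. The only care needed is in choosing the target decoding accuracy so that the per-step error amplification in \Cref{lem:performance of composed policy} still yields the desired $\epsilon$-suboptimality.

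First I would invoke \Cref{thm:dec embedding outcome} with target decoding accuracy $\epsilon/H$ in place of $\epsilon$. Concretely, one runs \Cref{alg:learning decoding} while rolling out the given state-based policy $\Lpi$, using the privileged state information available during training to supervise the decoding maps. Setting the number of samples to $M=\frac{H^2(AOS+\log(H/\delta))}{\epsilon^2}$, \Cref{thm:dec embedding outcome} guarantees that with probability at least $1-\delta$, the returned decoding functions $\{g_h\}_{h\in[H]}$ satisfy
\begin{align*}
\PP^{\Lpi, \cP}\InBrackets{\exists h\in[H]:g_h\InParentheses{s_{h-1},a_{h-1}, o_{h}}\neq s_{h}}\leq \frac{\epsilon}{H}.
\end{align*}

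Next I would form the distilled policy $\pi=\InBraces{\Lpi_h\InParentheses{g_h(s_{h-1},a_{h-1},o_{h})}}_{h\in[H]}$, which lies in $\DPi$ by \Cref{def:policy deterministic}: at each step it recursively decodes the current state from the previous state estimate, the last action, and the last observation, and then plays $\Lpi$ on the decoded state. Applying \Cref{lem:performance of composed policy} with decoding error $\epsilon/H$ then yields, on the same high-probability event,
\begin{align*}
v^{\cP}(\pi)\geq v^{\cP}(\tilde{\Lpi})-H\cdot\frac{\epsilon}{H}=v^{\cP}(\Lpi)-\epsilon,
\end{align*}
where the final equality is merely the notational convention $v^{\cP}(\Lpi)=v^{\cP}(\tilde{\Lpi})$ from \Cref{def:tilde_pi_E}.

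Finally I would verify the complexity claim. Since the substitution $\epsilon\mapsto\epsilon/H$ only inflates the sample count $M$ by a factor of $H^2$, the total sample and time complexities inherited from \Cref{thm:dec embedding outcome} remain $\textsc{poly}\InParentheses{H,A,O,S,\frac{1}{\epsilon},\log\InParentheses{\frac{1}{\delta}}}$. There is essentially no genuine obstacle here: the statistical and computational content has already been discharged in \Cref{thm:dec embedding outcome} (the analysis of decoding-function learning) and \Cref{lem:performance of composed policy} (the error-propagation bound). The only point requiring attention is the additive-over-steps amplification, which is handled cleanly by the $\epsilon/H$ rescaling above; note also that since $\Lpi$ is given and only $\{g_h\}$ is learned, no exploration in the partially observable space is ever required, which is precisely what makes both complexities polynomial.
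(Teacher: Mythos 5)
Your proposal is correct and follows exactly the paper's route: the paper itself states that \Cref{thm:distillation-main} is an immediate consequence of \Cref{thm:dec embedding outcome} and \Cref{lem:performance of composed policy}, which is precisely your composition with the target accuracy rescaled to $\epsilon/H$. The rescaling costs only an $H^2$ factor in $M$, so the polynomial sample and time bounds are preserved, just as you argue.
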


\paragraph{Extension to the case with general function approximation.} {Due to the {modularity of our algorithmic framework and its compatibility with supervised learning oracles}, it can be readily generalized to the function approximation setting to handle large {observation} spaces. We defer the corresponding results to  \Cref{sec:expert_distillation_theory function approximation}.
} 
}


\notshow{
\paragraph{Extensions to the case with general function approximation.} Note that in \Cref{thm:distillation-main}, the algorithm has an explicit dependency on $O$, which can be large or even infinite for practical applications. To handle such scenarios, we show that \Cref{alg:learning decoding} can be replaced by a simple classification oracle, where the sample complexity only depends on the complexity measure of $\mathcal{F}_h$, where $\mathcal{F}_h$ is the function class for decoding functions at step $h$. \rg{We postpone the statement and proof to \Cref{sec:expert_distillation_theory function approximation}.} 
\notshow{
The guarantees can be stated as follows.

\begin{theorem}\label{thm:dec embedding outcome rich}
Consider a POMDP $\cP$ that satisfies \Cref{def:deterministic-filter}, a policy $\Lpi\in\Pi_\cS$, and let $\{\mathcal{F}_h\subseteq \{\cS\times\cA\times\cO\rightarrow\cS\}\}_{h\in[H]}$ be the decoding function class, and $\psi_h\in\mathcal{F}_h$ for each $h\in[H]$, i.e., $\{\mathcal{F}_h\}_{h\in[H]}$ is realizable. Then given access to the classification oracle of \cite{brukhim2022characterization}, there exists an algorithm learning the decoding function $\{g_h\}_{h\in[H]}$ such that with probability at least $1-\delta$, for each step $h\in[H]$:
\begin{align*}
\PP^{\Lpi, \cP}\ \InBrackets{\exists h\in[H]:g_h\InParentheses{s_{h-1},a_{h-1}, o_{h}}\neq s_{h}}\leq \epsilon,
\end{align*}
using $\cO\InParentheses{\frac{H^2\InParentheses{\max_{h\in[H]}d_{DS}^{3/2}(\mathcal{F}_h) + \log\InParentheses{\frac{1}{\delta}}}}{\epsilon }
}$ episodes, where $d_{DS}(\mathcal{F}_h)$ is the the complexity measure of $\mathcal{F}_h$.
\end{theorem}
}
}



\section{Provable Asymmetric Actor-Critic with Approximate Belief Learning}\label{sec:aac_approximate_belief}

{Unlike most existing  theoretical studies on provably {sample}-efficient  partially observable RL   {\citep{jin2020sample,golowich2022learning,liu2022partially}}, which 
directly learn an approximate \emph{POMDP model} for planning near-optimal policies, we consider a general framework with two steps: firstly learning an approximate \emph{belief function}, followed by adopting a  \emph{fully observable RL} subroutine on the belief state space. 



\subsection{Belief-Weighted Optimistic {Asymmetric} {Actor-Critic}}




We now introduce our main algorithmic contribution to the privileged policy learning setting. Our algorithm is conceptually similar to the natural policy gradient methods  \cite{kakade2002natural,agarwal2020optimality,shani2020optimistic} in the fully-observable setting,  \kzedit{with additional weighting over the states $s_h$ using some learned belief states, to handle the additional \emph{state}-dependence in the asymmetric critic. The overall algorithm is presented in \Cref{alg:belief-weighted npg}.}
The algorithm requires a belief-learning 
subroutine that takes the stored memory as input and outputs a belief about the underlying state (c.f. $\{\bb^\apx_h\}_{h\in[H]}$). Additionally, similar to the fully observable setting, we include a subroutine to estimate the $Q$-function, which introduces additional challenges due to partial observability (see \Cref{apx:aac}). 
We establish the performance guarantee of \Cref{alg:belief-weighted npg} in the following theorem. We defer the proof to \Cref{apx:aac}.

\begin{theorem}[Near-optimal policy]
\label{thm:npg}
Fix $\epsilon, \delta\in(0, 1)$. 
Given a POMDP $\cP$ and an approximate belief $\{\bb^\apx_h:\cZ_h\rightarrow\Delta(\cS)\}_{h\in[H]}$, with probability \kzedit{at least}  $1-\delta$, \Cref{alg:belief-weighted npg} can learn an approximate optimal policy $\pi^\star$ of $\cP$ in the space of $\Pi^{L}$ such that \$
v^\cP(\pi^\star)\ge \max_{\pi\in\Pi^L}v^\cP(\pi)-\cO(\epsilon+H^2\epsilon_{\text{belief}}),
\$ 
 with sample complexity $\textsc{poly}(S, A, O, H, \frac{1}{\epsilon}, \log\frac{1}{\delta})$ and time complexity $\textsc{poly}(S, A, O, H, Z, \frac{1}{\epsilon}, \log\frac{1}{\delta})$, where  $\epsilon_{\text{belief}}$ is the belief-learning error defined as  $\epsilon_{\text{belief}}:=\max_{h\in[H]}\max_{\pi\in\Pi^L}\EE_{\pi}^\cP\|\bb_h(\tau_h)-\bb_h^{\text{apx}}(z_h)\|_1$ and $Z:=\max_{h\in[H]} |\cZ_h|$. Furthermore, if $\cP$ is additionally $\gamma$-observable (c.f. \Cref{assump:observa}), then $\pi^\star$ is also an approximate optimal policy in the space of $\Pi$ such that
 \$
 v^\cP(\pi^\star)\ge \max_{\pi\in\Pi}v^\cP(\pi)-\cO(\epsilon+H^2\epsilon_{\text{belief}}),
\$
 as long as $L\ge\tilde\Omega(\gamma^{-4}\log (SH/\epsilon))$.
\end{theorem}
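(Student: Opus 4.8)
The plan is to decouple the statement into two independent pieces: a convergence analysis of the belief-weighted natural-policy-gradient (NPG) updates over the finite-memory class $\Pi^L$ (which yields the first bound), and an approximation result that lets the finite-memory optimum track the fully history-dependent optimum under $\gamma$-observability (which upgrades the first bound to the second). The conceptual cornerstone for the first piece is a \emph{sufficiency} observation: for any finite-memory policy $\pi\in\Pi^L$, conditioning on $(z_h,s_h)$ already determines the law of the future, so the unbiased asymmetric critic satisfies $Q_h^{\pi,\cP}(\tau_h,s_h,a_h)=Q_h^{\pi,\cP}(z_h,s_h,a_h)$, and the true history-dependent $Q$ factors as $Q_h^{\pi,\cP}(\tau_h,a_h)=\EE_{s_h\sim\bb_h(\tau_h)}[Q_h^{\pi,\cP}(z_h,s_h,a_h)]$. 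This is exactly the quantity the belief-weighted update in \Cref{eq:close-form} targets, with $\bb_h(\tau_h)$ replaced by $\bb^\apx_h(z_h)$ and the exact critic replaced by its optimistic estimate.

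For the first bound I would run the standard mirror-descent/NPG regret analysis, treating each pair $(h,z_h)$ as an independent online-learning instance over $\Delta(\cA)$ whose exponential-weights update has per-round loss range at most $H$. Via the history-dependent performance-difference lemma against the comparator $\pi^\dagger:=\arg\max_{\pi\in\Pi^L}v^\cP(\pi)$, the averaged suboptimality $\frac1T\sum_t(v^\cP(\pi^\dagger)-v^\cP(\pi^t))$ splits into (i) the aggregated per-state regret, bounded by $H\InParentheses{\frac{\log A}{\eta}+\frac{\eta T H^2}{2}}$, and (ii) a \emph{surrogate bias} accumulated because the update uses $\bb^\apx_h(z_h)$ and an estimated critic rather than the true factorization above. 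Using $\InNorms{Q}_\infty\le H$, the belief contribution to (ii) is at most $H\InNorms{\bb_h(\tau_h)-\bb^\apx_h(z_h)}_1$ per step; taking expectations and summing over $h$ turns this into $\cO(H^2\epsilon_{\text{belief}})$, which is precisely the source of the quadratic-in-$H$ belief term. Balancing $\eta$ and $T$ drives (i) below $\epsilon$ with $T=\tilde\Theta(H^4/\epsilon^2)$ iterations, so outputting the best (or a uniformly random) iterate gives $v^\cP(\pi^\star)\ge\max_{\pi\in\Pi^L}v^\cP(\pi)-\cO(\epsilon+H^2\epsilon_{\text{belief}})$.

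The separation of sample and time complexity is where privileged information does the work. Because states are observed during training, the critic is estimated model-based: the transition $\TT$ and emission $\OO$ are $S^2A$- and $SO$-sized tables that can be learned to $\epsilon/\textsc{poly}(H)$ accuracy with $\textsc{poly}(S,A,O,H,1/\epsilon)$ trajectories using an exploration/optimism subroutine (adding bonuses to guarantee coverage), and a simulation lemma converts model error into the $Q$-estimation error fed into (ii). Crucially, once the model is learned, each $Q_h^{\pi^t}(z_h,s_h,\cdot)$ and the update over all $z_h$ are computed by \emph{planning}, costing time $\textsc{poly}(S,A,O,H,Z)$ but no further samples; hence the sample complexity is free of $Z=\max_h\InNorms{\cZ_h}$ while the runtime carries it, matching the stated bounds.

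Finally, the second bound follows by composing the first with a short-memory approximation guarantee: under \Cref{assump:observa}, any history-dependent policy is approximated in value by an $L$-memory policy up to $\epsilon$ once $L\ge\tilde\Omega(\gamma^{-4}\log(SH/\epsilon))$, so $\max_{\pi\in\Pi^L}v^\cP(\pi)\ge\max_{\pi\in\Pi}v^\cP(\pi)-\epsilon$, and substituting into the first bound yields the claim over $\Pi$. I expect the main obstacle to be step (ii): rigorously controlling the surrogate bias so that the approximate belief enters only multiplicatively against the $\le H$ critic range (rather than being amplified through the recursive belief updates), and simultaneously ensuring the optimistic model-based critic stays accurate on the trajectory distribution induced by \emph{every} iterate $\pi^t$ without paying a $Z$ factor in the sample complexity.
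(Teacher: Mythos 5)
Your proposal follows essentially the same route as the paper's proof: the paper also (i) works with $Q$-functions on the extended memory-state space $\cZ_h\times\cS$ (your ``sufficiency'' observation is exactly why the performance difference lemma is applied over $\Pi^{PL}\supseteq\Pi^L$), (ii) runs per-$(h,z_h)$ exponential-weights regret with loss range $H$, (iii) swaps $\bb_h(\tau_h)$ for $\bb^\apx_h(z_h)$ inside the comparator expectation at cost $2H\,d_{TV}$ per step, yielding the $\cO(H^2\epsilon_{\text{belief}})$ term, and (iv) closes the second bound via the short-memory approximation of \cite{golowich2022planning} with $L\ge\tilde\Omega(\gamma^{-4}\log(SH/\epsilon))$, just as you do.

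The one place you diverge is the critic, and your version needs a correction in emphasis. You propose learning $\TT,\OO$ once to uniform $\epsilon/\textsc{poly}(H)$ accuracy and invoking a simulation lemma; but uniform accuracy is unattainable for states that no policy visits with non-negligible probability, so ``bonuses to guarantee coverage'' cannot literally deliver it. The paper instead re-estimates the model \emph{on-policy} at every iteration (\Cref{alg:opt Q}) and uses \emph{one-sided} optimism (\Cref{lem:optimistic Q}): pointwise $\tilde{Q}_h\ge$ the Bellman backup makes the residual terms in the performance difference lemma nonnegative under the comparator $\pi^*$'s distribution, so no coverage of $\pi^*$ is ever required, while the overestimation $\tilde V_1^{\pi^t}-V_1^{\pi^t}$ is controlled on-policy by a pigeonhole argument on visitation counts. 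Your route can be repaired by replacing ``uniform accuracy'' with the on-average-under-every-policy guarantee that the paper proves separately (\Cref{lem:free}, via the MDP-exploration subroutine with redirection of hard-to-reach states); with that guarantee, two-sided accuracy of the critic under both $\pi^t$'s and $\pi^*$'s distributions follows from the simulation lemma and your decomposition goes through, at the price of importing the exploration machinery the paper reserves for belief learning. Your remaining claims — the $T=\tilde\Theta(H^4\log A/\epsilon^2)$ iteration count (the paper's slightly tighter bookkeeping gives $\tilde\Theta(H^3/\epsilon^2)$, immaterial for the stated polynomial bounds), and the observation that $Z$ enters only the runtime (via planning over all $z_h$, i.e., $\textsc{poly}(A^L,O^L)$) and not the sample complexity — match the paper exactly.
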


\notshow{\begin{definition}\label{as:opt Q}
Consider the approximation of the Q-function for a policy $\pi\in \Pi$, given by $\tQ{\pi}{}=\InBraces{\tQ{\pi}{h}:\cA\times \cZ_h\times\cS\rightarrow\Rea}_{h\in[H]}$. The associated optimistic V-function is defined as $\tV{\pi}{}=\InBraces{\tV{\pi}{h}:\cZ_h\times\cS\rightarrow \Rea}_{h\in[H]}$. For a memory-state pair $(z_h,s_h)\in \cZ_h\times\cS$ at step $h$ and action $a_h$, this is defined as:

\[
\tV{\pi}{h}(z_h,s_h)= \E_{a_h\sim \pi_h(z_h)} \InBrackets{\tQ{\pi}{h}((z_h,s_h),a_h)}.
\]

The Bellman operator at step $h\in[H]$ for the approximation $\tV{\pi}{h+1}$, denoted as $\cR^{\tV{\pi}{h+1}}_{h}:\cA\times\cZ_h\times\cS\rightarrow \Rea$, is defined for each $(z_h,s_{h})\in \cZ_h\times\cS$ as:

\[
\cR^{\tV{\pi}{h+1}}_{h}((z_h,s_h),a_h)=\E_{\substack{s_{h+1}\sim \TT_h(\cdot\given s_h,a_h),\\o_{h+1}\sim \OO_{h+1}(\cdot\given s_{h+1})}}\InBrackets{r_{h}(s_h, a_h) + \tV{\pi}{h+1}(z_{h+1},s_{h+1})}.
\]

We term the approximation of the Q-function, $\tQ{\pi}{}$, as optimistic if it satisfies the following inequality for each step $h\in[H]$, each state $(z_h,s_h)\in \cZ_h\times\cS$, and each action $a_h\in\cA$:

\[
\tQ{\pi}{h}((z_h,s_h),a_h) \geq \cR^{\tV{h+1}{\pi}}_{h}((z_h,s_h),a_h).
\]

This assumption asserts that the estimated Q-value of a state-action pair is never less than the expected return, computed as the immediate reward plus the value of the next state, hence ensuring optimism in our estimates.
\end{definition}
}

\notshow{
\begin{lemma}[Optimistic $Q$-function] 
For a POMDP $\cP$ and a policy $\pi\in \Pi^L$ \kz{in which class?} consider the  Q-function, given by $Q^{\pi}$, the V-function $V^{\pi}$ on the memory state space  $ \cZ_h\times\cS$\kz{the sentence does not read well. also, should $Q^{\pi}$ and $V^{\pi}$ have $h$ in index?}. At step  $h$, for a memory-state pair $(z_h,s_h)\in \cZ_h\times\cS$ and action $a_h$, this \kz{what is ``this''?} is defined as:
\begin{align*}
V^\pi_{H+1}=& 0,\\
Q^\pi_h( (z_h,s_h), a_h) = &\E_{\substack{s_{h+1}\sim \TT_h(\cdot\given s_h,a_h),\\o_{h+1}\sim \OO_{h+1}(\cdot\given s_{h+1})}}\InBrackets{r_{h}(s_h, a_h) + V^{\pi}_{h+1}(z_{h+1},s_{h+1})},\\
V^{\pi}_{h}(z_h,s_h)=& \E_{a_h\sim \pi_h(z_h)} \InBrackets{Q^\pi_{h}((z_h,s_h),a_h)}.    
\end{align*}\kz{make sure we have defined this-type-of  policy $\pi_h(z_h)$ before}
In time $\textsc{poly}(H,M,A^m,O^m)$ \kz{what is $m$} using $H\cdot M$ \kz{and what is $M$?} samples, we can construct \kz{again -- which algorithm? maybe pointing to a pseudocode?} an optimistic Q-function \kz{or ``$Q$-function''? lets keep consistent -- I changed mostly i see to ``$Q$-function''. check overall.} $\tilde{Q}$ \kz{should it better be $\{\tilde{Q}_h\}_{h\in[H]}$? we seem to have never defined what a notation without $h$ is, so better use a set?} such that with probability at least $1-\delta$,  for each memory-state pair $(z_h,s_h)\in \cZ_{h}\times \cS$,  if we define the optimistic value function $\tilde{V}$ \kz{same here. a set?} as:
\begin{align*}
    \tilde{V}_{H+1}(\cdot) =& 0,\\
    \tilde{V}_h(z_h,s_h) = & \E_{a_h\sim \pi_h(\cdot\given z_h)} [\tilde{Q}_h(z_h,a_h)],
\end{align*}
{then} the following hold:\kz{what is $\tV{\pi}{h+1}$? typo (about $\pi$)?}  
\begin{align*}
    \tilde{Q}_h((z_h,s_h),a_h)\geq& \E_{\substack{s_{h+1}\sim \TT_h(\cdot\given s_h,a_h),\\o_{h+1}\sim \OO_{h+1}(\cdot\given s_{h+1})}}\InBrackets{r_{h}(s_h, a_h) + \tV{\pi}{h+1}(z_{h+1},s_{h+1})}, \\
    \tilde{V}_1(s_1) - V^\pi(s_1)\geq & O\left(H^2\cdot  \sqrt{\frac{\max(O,S)\cdot S\cdot A}{M}\cdot \log\left(\frac{S\cdot A}{\delta}\right)\log\left(\frac{M\cdot S\cdot A\cdot H}{\delta}\right)}\right).
\end{align*}\kz{should $V^\pi(s_1)$ be $V_1^\pi(s_1)$?}
\end{lemma}
}


\subsection{Approximate Belief Learning}\label{sec:reward-free}

At a high level, our belief-learning algorithm first learns an approximate POMDP model $\hat{\cP}$ by explicitly exploring the state space. \rg{The main technical challenge here is that there may exist states that are reachable with very low probability, making it infeasible to collect enough samples to sufficiently explore them, thus potentially breaking the $\gamma$-observability property of the ground-truth model $\cP$. To circumvent this issue, we ignore such hard-to-visit states and \emph{redirect}  probabilities flowing to them to certain other states. Thus, in our truncated POMDP, where each state is sufficiently explored, we can approximate the transition and emission matrices to a desired accuracy \emph{uniformly across all the states} and preserve the $\gamma$-observability property. This ensures that the learned approximate belief function in the truncated POMDP is sufficiently close to the actual belief function of the original POMDP~$\mathcal{P}$.} 
{Note that the key to achieving belief learning with \emph{both} polynomial sample and time complexities is our explicit exploration in the state space, which relies on executing \emph{fully observable} policies from an  \emph{MDP learning} subroutine. We remark that the belief function may also be learned even if the state space is only explored by partially observable policies, thus utilizing only hindsight observability may be sufficient for this purpose  \cite{lee2023learning}. However, for such exploration to be \emph{computationally tractable}, one requires to avoid using computationally intractable oracles for \emph{POMDP learning}, which is in fact our final goal. 
We present the guarantees  
in the next theorem and postpone the proof to \Cref{apx:aac}.  

\begin{theorem}\label{thm:belief}
	Consider a $\gamma$-observable POMDP $\cP$ {(c.f. \Cref{assump:observa})}, and assume that $L\geq {\tilde{\Omega}(\gamma^{-4}\log(SH/\epsilon))}$ for an $\epsilon>0$. Then, we can learn an approximate belief $\{\bb_h^\apx\}_{h\in[H]}$ \kzedit{from \Cref{alg:reward-free-pomdp}} using $
 \tilde{\cO}(\frac{S^2AH^2O+S^3AH^2}{\epsilon^2}+\frac{S^4A^2H^6O}{\epsilon\gamma^2})$ 
	 episodes in time  {$\textsc{poly}\left(S,H,A,O,\frac{1}{\gamma},\frac{1}{\epsilon},\log\left(\frac{1}{\delta}\right)\right)$} such that with probability at least $1-\delta$,  for any $\pi\in \Pi^L$ and $h\in[H]$,  
	$
\EE_{\pi}^{\cP}\|\bb_{h}(\tau_h)-\bb^\apx_{h}(z_h)\|_1\le \epsilon. 
	$
\end{theorem}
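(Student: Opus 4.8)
The plan is to learn a \emph{truncated} approximate POMDP $\hat\cP$ whose transition and emission matrices are uniformly accurate across \emph{all} its states, and then argue that the belief recursion computed in $\hat\cP$ stays close to the true belief recursion of $\cP$ along any trajectory induced by a short-memory policy. The overall strategy decouples into three pieces: (i) an exploration/estimation step that produces good uniform estimates on a set of ``sufficiently visited'' states; (ii) a truncation construction that handles rarely-reachable states while \emph{preserving} $\gamma$-observability; and (iii) a \emph{filter-stability} argument that converts the per-step model error into an error on the $L$-step belief, which is where the memory length requirement $L\ge\tilde\Omega(\gamma^{-4}\log(SH/\epsilon))$ enters.

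First I would run a \emph{fully observable} MDP exploration subroutine (using privileged state access during training) to obtain, for every step $h$, accurate estimates of $\TT_h(\cdot\mid s,a)$ and $\OO_h(\cdot\mid s)$ for those $(s,a)$ whose occupancy $d_h^{\pi,\cP}(s)$ is above a threshold $\beta$. Because exploration is done on the state space via MDP policies, standard reward-free / occupancy-based arguments give, with $\textsc{poly}$ sample complexity, uniform $\ell_1$ accuracy on the well-visited states; this is exactly where polynomial (rather than quasi-polynomial) sample complexity comes from, avoiding any POMDP planning oracle. The sample counts $\tilde\cO(\frac{S^2AH^2O+S^3AH^2}{\epsilon^2}+\frac{S^4A^2H^6O}{\epsilon\gamma^2})$ should fall out of setting $\beta$ and the per-entry estimation accuracy so that both the emission error and the redirected transition mass contribute at most $\epsilon$-order terms to the final belief error.

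Next comes the truncation: for $(s,a)$ pairs that are visited with probability below $\beta$, I redefine the model in $\hat\cP$ by redirecting their incoming transition mass to already-explored states, so that in $\hat\cP$ every state is well-explored and the emission matrices $\hat\OO_h$ remain $\gamma$-observable (here one must check that a small $\ell_1$ perturbation of each $\OO_h(\cdot\mid s)$ keeps the $\gamma$-observability inequality of \Cref{assump:observa}, up to shrinking $\gamma$ by a constant, which is where the extra $\gamma$-factors in the sample complexity arise). The total probability mass on the ignored states along any trajectory of a policy in $\Pi^L$ is controlled by a union/occupancy bound, so the trajectory distributions of $\cP$ and $\hat\cP$ are $O(\epsilon)$-close in total variation.

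The hard part, and the main obstacle, is step (iii): transferring the uniform \emph{one-step} model accuracy into a bound on $\EE_\pi^\cP\|\bb_h(\tau_h)-\bb_h^\apx(z_h)\|_1$, since the exact belief $\bb_h(\tau_h)$ depends on the \emph{entire} history $\tau_h$ whereas $\bb_h^\apx(z_h)$ only uses the last $L$ observations/actions. The plan is to invoke a \emph{filter-stability} property of $\gamma$-observable models (as in the observable-POMDP literature, e.g.\ \cite{golowich2022learning}): under $\gamma$-observability the Bayesian filter contracts, so the belief after processing $L$ fresh observation–action pairs forgets the initial condition at a rate governed by $\gamma$, giving an $L$-truncation error of order $\mathrm{poly}(S)\exp(-\Omega(\gamma^4 L))$. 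Choosing $L\ge\tilde\Omega(\gamma^{-4}\log(SH/\epsilon))$ makes this smaller than $\epsilon$. The delicate point — and the part I expect requires the most care — is that this contraction must hold for the \emph{misspecified} filter run under $\hat\cP$ rather than under $\cP$, so I would need a stability result that is robust to the model error introduced by estimation and truncation; concretely, I would bound the belief discrepancy by a telescoping sum over the $L$ recent steps, splitting each term into a ``model-mismatch'' contribution (controlled by the uniform one-step estimation error and the preserved $\gamma$-observability) and a ``forgetting'' contribution (controlled by filter-stability decay), then combine with the $O(\epsilon)$ trajectory-distribution closeness and take expectations over $\pi\in\Pi^L$ and a union bound over $h\in[H]$.
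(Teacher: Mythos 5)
Your steps (i) and (ii) are essentially the paper's proof: the paper likewise uses an MDP-learning oracle (EULER-style reaching policies $\Psi(h,s_h)$ exploiting privileged states) to estimate $\TT_h,\OO_h$ (\Cref{lem:free}), truncates by redirecting transition mass from under-visited states $\cS_h^{\low}$ into $\cS_h^{\high}$, and verifies via the same $\ell_1$-perturbation inequality $\|\hat{\OO}_h^\top(b-b')\|_1\ge(\gamma-\|\hat\OO_h-\OO_h\|_\infty)\|b-b'\|_1$ that the truncated learned model $\hat{\cP}^{\text{sub}}$ remains $\gamma/2$-observable, with trajectory distributions $O(HS\epsilon_1)$-close in total variation. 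Where you diverge is step (iii). You propose proving a \emph{robust} (misspecified-model) filter-stability result by telescoping the belief recursion over the last $L$ steps, splitting each step into a ``model-mismatch'' and a ``forgetting'' term. The paper avoids this entirely via a three-term chain $\bb_h \to \bb_h^{\trunc} \to \hat{\bb}_h^{\trunc} \to \bb_h^{\apx}$: the first two gaps compare \emph{exact} (full-history) beliefs across \emph{different} models, which are controlled by the total variation between full trajectory distributions (\Cref{lem:traj}), crucially using that the posterior $\PP^{\pi,\cP}(s_h\mid\tau_h)$ is policy-independent; filter stability is then invoked only for the \emph{last} gap, entirely \emph{within} the learned model $\hat{\cP}^{\text{sub}}$, which is a bona fide $\gamma/2$-observable POMDP by construction, so the off-the-shelf result (Theorem 4.1 of \cite{golowich2022learning}) applies with no misspecification to worry about.

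This difference matters, because the route you sketch hits a real obstacle that the paper's decomposition is specifically designed to sidestep: the Bayes update operator is \emph{not} a one-step contraction in total variation (it can expand belief discrepancies), and the $\exp(-\Omega(\gamma^4 L))$ forgetting of $\gamma$-observable filters holds only \emph{in expectation over an $L$-step window}, not per step. A per-step telescoping with a ``mismatch $+$ forgetting'' split therefore does not close without essentially reproving a robust filter-stability theorem under model error — a nontrivial piece of analysis in its own right (and one the paper explicitly highlights circumventing). If you instead (a) bound exact-belief differences across models by trajectory-TV as in \Cref{lem:traj}, and (b) apply filter stability only inside the observability-preserving learned model, your argument assembles from existing ingredients; the remaining bookkeeping is the change-of-measure corrections (expectations under $\cP$ versus $\cP^{\trunc}$ versus $\hat{\cP}^{\trunc}$), which the paper handles with two additional trajectory-TV terms. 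Your variant, if carried out, would yield a genuinely stronger standalone robust-stability statement, but it is not needed for the theorem and is considerably harder than the claimed proof.
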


\Cref{thm:belief} shows that an approximate belief can be learned with both polynomial samples and time, which, combined with \Cref{thm:npg}, yields the final polynomial sample and quasi-polynomial time guarantee below. In contrast to the case without privileged information \citep{golowich2022learning,golowich2023planning}, the sample complexity is reduced from quasi-polynomial to polynomial for $\gamma$-observable POMDPs. Note that the computational complexity remains quasi-polynomial, which is known to be unimprovable even for planning \citep{golowich2023planning}. The key to such an improvement, as pointed out in \Cref{sec:pvl_aac_alg}, is the more practical update rule of actor-critic (in conjunction with our belief-weighted idea), which allows \emph{more computation} at each iteration (instead of only performing computation at the \emph{sampled} finite-memory). This allows the total computation to remain quasi-polynomial, while the overall sample complexity becomes polynomial. A detailed comparison can be found in \Cref{table:compare}. 

\begin{theorem}\label{thm:final result}
	Let $\cP$ be a $\gamma$-observable POMDP (c.f. \Cref{assump:observa}),  and choose {$L\geq {\tilde{\Omega}(\gamma^{-4}\log(SH/\epsilon))}$} for an $\epsilon>0$. With probability at least $1-\delta$, \xynew{\Cref{alg:belief-weighted npg}} can learn a policy $\pi\in \Pi^L$ such that $v^{\cP}(\pi) \geq  \max_{\pi^\prime\in\Pi}v^{\cP}(\pi^\prime) - \epsilon,$ using $\textsc{poly}(S,H, 1/\epsilon,1/\gamma,\log(1/\delta),O,A)$ episodes and in time $\textsc{poly}(S,H, 1/\epsilon,\log(1/\delta),O^L,A^L)$.
\end{theorem}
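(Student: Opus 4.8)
The plan is to derive \Cref{thm:final result} as a corollary by combining the belief-learning guarantee of \Cref{thm:belief} with the policy-optimization guarantee of \Cref{thm:npg}: these two results already carry the technical weight, so the remaining work is the bookkeeping of accuracy parameters, complexities, and failure probabilities. First I would invoke \Cref{thm:belief} on the given $\gamma$-observable POMDP $\cP$. Since the suboptimality in \Cref{thm:npg} charges the belief error amplified by a factor $H^2$, I would run the belief-learning subroutine \Cref{alg:reward-free-pomdp} with target accuracy $\epsilon_1 := c\,\epsilon/H^2$ (for a small absolute constant $c$) and confidence $\delta/2$, obtaining with probability at least $1-\delta/2$ an approximate belief $\{\bb_h^\apx\}_{h\in[H]}$ with $\EE_\pi^\cP\|\bb_h(\tau_h)-\bb_h^\apx(z_h)\|_1\le\epsilon_1$ for every $\pi\in\Pi^L$ and $h\in[H]$. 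Taking the maximum over $h$ and $\pi\in\Pi^L$ then yields $\epsilon_{\text{belief}}\le\epsilon_1$. The key sanity check here is that substituting $\epsilon/H^2$ for $\epsilon$ in \Cref{thm:belief} only changes the memory requirement to $L\geq\tilde\Omega(\gamma^{-4}\log(SH^3/\epsilon))$, which is absorbed into the assumed $L\geq\tilde\Omega(\gamma^{-4}\log(SH/\epsilon))$, and inflates the episode count and running time only polynomially.

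Next I would feed this learned belief into \Cref{thm:npg} with target accuracy $\epsilon/2$ and confidence $\delta/2$, running \Cref{alg:belief-weighted npg}. Because $\cP$ is $\gamma$-observable and $L\geq\tilde\Omega(\gamma^{-4}\log(SH/\epsilon))$, the ``furthermore'' clause of \Cref{thm:npg} applies, promoting the guarantee from $\Pi^L$-optimality to $\Pi$-optimality; thus with probability at least $1-\delta/2$ the returned $\pi\in\Pi^L$ satisfies
\[
v^\cP(\pi)\;\ge\;\max_{\pi'\in\Pi}v^\cP(\pi')-\cO\big(\epsilon+H^2\epsilon_{\text{belief}}\big)\;\ge\;\max_{\pi'\in\Pi}v^\cP(\pi')-\cO(\epsilon),
\]
where the last inequality uses $H^2\epsilon_{\text{belief}}\le c\,\epsilon$. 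Rescaling $\epsilon$ by the hidden absolute constant delivers the stated $-\epsilon$ bound, and a union bound over the two failure events (belief learning and policy optimization) gives overall success probability at least $1-\delta$.

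For the complexities, the total sample count is the sum of the episodes used by \Cref{thm:belief} and by \Cref{thm:npg}, both of which are $\textsc{poly}(S,H,1/\epsilon,1/\gamma,\log(1/\delta),O,A)$, hence polynomial; the total running time is the sum of the polynomial time from \Cref{thm:belief} and the $\textsc{poly}(S,A,O,H,Z,1/\epsilon,\log(1/\delta))$ time from \Cref{thm:npg}, where $Z=\max_{h}|\cZ_h|=O^LA^L$, producing the quasi-polynomial $\textsc{poly}(S,H,1/\epsilon,\log(1/\delta),O^L,A^L)$ bound. I do not expect a deep obstacle, since the statement is essentially a composition of the two preceding theorems; the only genuinely delicate point is the $H^2$ amplification of the belief error, which forces the belief-learning accuracy down to order $\epsilon/H^2$. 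Verifying (as above) that this costs only polynomial factors in samples and time, and keeps the memory length within $\tilde\Omega(\gamma^{-4}\log(SH/\epsilon))$, is what makes the final polynomial-sample, quasi-polynomial-time guarantee go through, with the $\gamma$-observability assumption and the short-memory length $L$ entering precisely at the $\Pi^L$-to-$\Pi$ upgrade supplied by \Cref{thm:npg}.
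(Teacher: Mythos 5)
Your proposal is correct and follows essentially the same route as the paper, which obtains \Cref{thm:final result} precisely by composing the belief-learning guarantee of \Cref{thm:belief} (with accuracy driven down to order $\epsilon/H^2$) with the policy-optimization guarantee of \Cref{thm:npg}, using the $\gamma$-observability clause to upgrade from $\Pi^L$- to $\Pi$-optimality. Your bookkeeping of the $H^2$ amplification, the memory length $L$, the union bound, and the sample/time complexities matches the paper's intended argument.
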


\begin{figure*}[!t]
\centering
 \hspace{-10pt} 
\includegraphics[width=1.00\textwidth]{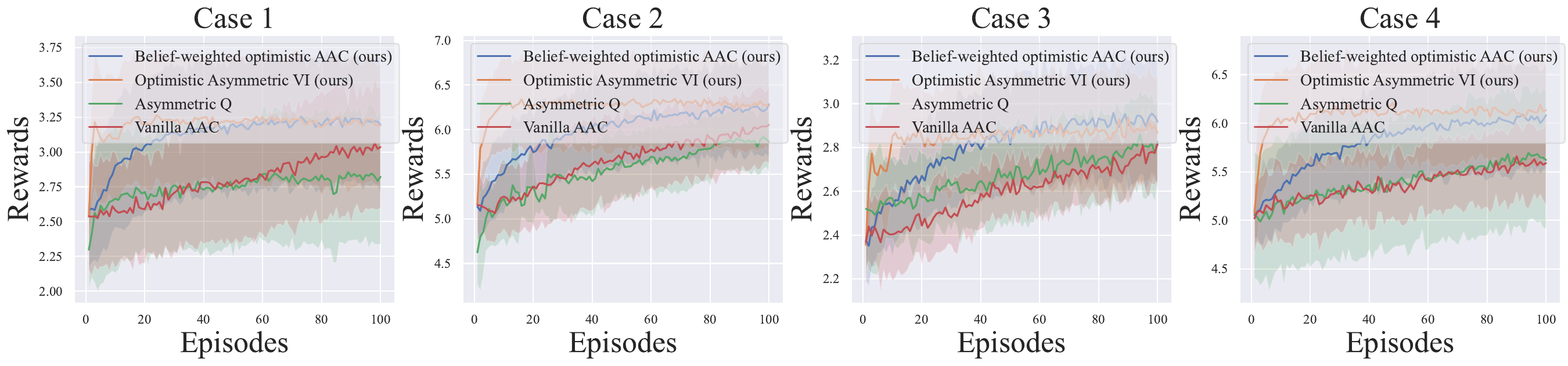}
\caption{Results for POMDPs of different moderate sizes, where our methods achieve the best performance with the lowest sample complexity (VI: value iteration; AAC: asymmetric actor-critic).}
\label{fig}
 \end{figure*}

\begin{table}[!t]
\centering
\renewcommand{\arraystretch}{1.3} 
\small
\setlength{\tabcolsep}{8pt} 
\begin{tabular}{|l|c|c|c|c|}
\hline
\multirow{2}{*}{} 
& \multicolumn{4}{c|}{\textbf{Approaches}} \\ \cline{2-5} 
& \textbf{\begin{tabular}[c]{@{}c@{}}Asymmetric\\ Optimistic NPG\end{tabular}} 
& \textbf{\begin{tabular}[c]{@{}c@{}}Expert Policy\\ Distillation\end{tabular}} 
& \textbf{\begin{tabular}[c]{@{}c@{}}Asymmetric\\ Q-learning\end{tabular}} 
& \textbf{Vanilla AAC} \\ \hline
\multicolumn{5}{|c|}{\textbf{Deterministic POMDP}} \\ \hline
\textbf{Case 1} & $3.32 \pm 0.66$ & \pmb{$3.33 \pm 0.62$} & $3.04 \pm 0.58$ & $3.25 \pm 0.65$ \\ \hline
\textbf{Case 2} & $7.10 \pm 0.48$ & \pmb{$7.26 \pm 0.68$} & $6.15 \pm 0.85$ & $6.41 \pm 0.91$ \\ \hline
\textbf{Case 3} & $3.04 \pm 0.23$ & \pmb{$3.25 \pm 0.33$} & $3.09 \pm 0.39$ & $3.10 \pm 0.38$ \\ \hline
\textbf{Case 4} & $6.51 \pm 0.60$ & \pmb{$6.54 \pm 0.58$} & $6.16 \pm 0.48$ & $5.87 \pm 0.58$ \\ \hline
\multicolumn{5}{|c|}{\textbf{Block MDP}} \\ \hline
\textbf{Case 1} & $3.31 \pm 0.46$ & \pmb{$3.37 \pm 0.41$} & $3.03 \pm 0.40$ & $3.08 \pm 0.43$ \\ \hline
\textbf{Case 2} & $6.36 \pm 0.52$ & \pmb{$6.67 \pm 0.54$} & $5.74 \pm 0.43$ & $5.70 \pm 0.46$ \\ \hline
\textbf{Case 3} & $3.20 \pm 0.26$ & \pmb{$3.37 \pm 0.22$} & $3.14 \pm 0.31$ & $2.97 \pm 0.32$ \\ \hline
\textbf{Case 4} & $6.01 \pm 0.32$ & \pmb{$6.44 \pm 0.36$} & $5.58 \pm 0.32$ & $5.33 \pm 0.19$ \\ \hline
\end{tabular}
\vspace{2mm}
\caption{Rewards of different approaches for POMDPs under the deterministic filter condition (c.f. \Cref{def:deterministic-filter}).}
\label{table}
\end{table}

\section{Numerical Validation}\label{sec:exp}


We now provide some numerical results for both of our principled algorithms. 
Here we mainly compare with two baselines, the vanilla asymmetric actor-critic  \citep{pinto2017asymmetric}, and asymmetric $Q$-learning \citep{baisero2022asymmetric}, on two settings, POMDP under the deterministic filter condition (c.f. \Cref{def:deterministic-filter}) and general POMDPs. We defer the introduction for the implementation details to \Cref{sec:apx_exp}.

\paragraph{POMDP under deterministic filter condition.} We first evaluate our algorithms on POMDPs with certain structures, i.e., the deterministic filter condition. In particular, we generate POMDPs, where either the transition dynamics are deterministic or the emission ensures decodability. We test two of our approaches, expert policy distillation, asymmetric optimistic natural policy gradient. We summarize our results in \Cref{table}, where the four cases correspond to POMDPs with $(S=A=2, O=3, H=5)$, $(S=A=2, O=3, H=10)$, $(S=3, A=2, O=4, H=5)$, $(S=3, A=2, O=4, H=5)$, and we can see that our approach based on expert distillation outperforms all the other methods, which is consistent with the fact that such methods have exploited the special deterministic filter structure of the POMDPs to achieve both polynomial sample and computational complexity (c.f. \Cref{thm:distillation-main}).

\paragraph{General POMDPs.} Here we also evaluate our methods for general randomly generated POMDPs without any structures. Hence, we compare the baselines with asymmetric optimistic natural policy gradient and asymmetric optimistic value iteration (i.e., the single-agent version of \Cref{alg:multi-agent-decode}). In \Cref{fig}, we show the performance of different algorithms in POMDPs of different sizes, where the four cases correspond to POMDPs with $(S=A=O=2, H=5)$, $(S=A=O=2, H=10)$, $(S=O=3, A=2, H=10)$, $(S=A=3,O=2, H=10)$, and our approaches achieve the highest rewards with small number of episodes.

\paragraph{Empirical insights and interpretation of the experimental results.} 
To understand intuitively why our approaches outperform those baseline algorithms, we notice the key difference in the value and policy update style between our approaches and vanilla asymmetric actor-critic and asymmetric $Q$-learning. The baselines often roll-out the policies, collect trajectories, and only update the value and the policies on the states/history the \emph{trajectories have visited}, i.e., updates in an \emph{asynchronous}  fashion. Therefore, ideally, to learn a good policy for these baselines, the number of trajectories to collect is at least as large as the history size, which is indeed exponential in the horizon $H$. This is known as the curse of history for partially observable RL. In contrast, in our algorithms, we explicitly estimate the empirical transition dynamics and emissions, which are indeed of polynomial size. Thus, the sample complexity avoids suffering from the potential exponential dependency on the horizon or the length of the finite memory. Finally, we acknowledge that the baselines are developed to handle complex, high-dimensional deep RL problems, while scaling our methods to deep RL benchmarks requires non-trivial engineering efforts, and is an important future work. 

\section{{Extension to}  Partially Observable MARL with Privileged Information}\label{sec:marl}  

\subsection{Privileged Policy Learning: Equilibrium Distillation}

To understand how the deterministic filter condition may be extended for POSGs, we first note the following equivalent characterization of  \Cref{def:deterministic-filter}, {the proof of which is deferred to \Cref{app:marl}.}
\begin{proposition}\label{prop:decode}
	\Cref{def:deterministic-filter} is equivalent to the following: for each $h\in[H]$, there exists an \emph{unknown} function $\phi_h:\cT_h\rightarrow \cS$ such that $\PP^{\cP}(s_h=\phi_h(\tau_h)\given \tau_h)=1$  for any reachable $\tau_h\in\cT_h$. 
\end{proposition}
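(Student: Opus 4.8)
The plan is to prove the equivalence by identifying both conditions with a single cleaner statement: \emph{for every reachable history $\tau_h$, the belief state $\bb_h(\tau_h)$ is a degenerate (one-hot) distribution}. The bridge between the two formulations is the standard Bayes-filter recursion (established in \Cref{app:belief}), namely $\bb_1(o_1) = B_1(\mu_1; o_1)$ and $\bb_h(\tau_h) = U_h\InParentheses{\bb_{h-1}(\tau_{h-1}); a_{h-1}, o_h}$ for $h \geq 2$, where $\tau_h = (\tau_{h-1}, a_{h-1}, o_h)$; recall also that this posterior over states is independent of the behavior policy once the realized actions in $\tau_h$ are conditioned upon. Both directions then proceed by induction on $h$.

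For the forward direction (\Cref{def:deterministic-filter} $\Rightarrow$ the decoding characterization), I would show by induction that $\bb_h(\tau_h) = b^{\phi_h(\tau_h)}$ for a recursively defined $\phi_h$. At $h=1$ the condition gives $\bb_1(o_1) = B_1(\mu_1; o_1) = b^{\psi_1(o_1)}$, so set $\phi_1 := \psi_1$. Assuming $\bb_{h-1}(\tau_{h-1}) = b^{\phi_{h-1}(\tau_{h-1})}$, the recursion yields $\bb_h(\tau_h) = U_h\InParentheses{b^{\phi_{h-1}(\tau_{h-1})}; a_{h-1}, o_h} = b^{\psi_h(\phi_{h-1}(\tau_{h-1}), a_{h-1}, o_h)}$, by applying the deterministic-filter identity to the one-hot input. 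Defining $\phi_h(\tau_h) := \psi_h\InParentheses{\phi_{h-1}(\tau_{h-1}), a_{h-1}, o_h}$ closes the induction, and $\bb_h(\tau_h) = b^{\phi_h(\tau_h)}$ is exactly the statement $\PP^\cP(s_h = \phi_h(\tau_h) \given \tau_h) = 1$.

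For the reverse direction, fix $h \geq 2$ and a reachable triple $(s_{h-1}, a_{h-1}, o_h)$. Since $s_{h-1}$ is reachable, there is a history $\tau_{h-1}$ and a policy $\pi\in\Pi^{\gen}$ with $\PP^{\pi,\cP}(s_{h-1}, \tau_{h-1}) > 0$; in particular $\PP^\cP(s_{h-1}\given \tau_{h-1}) > 0$, so the characterization $\PP^\cP(\phi_{h-1}(\tau_{h-1})\given \tau_{h-1})=1$ forces $\phi_{h-1}(\tau_{h-1}) = s_{h-1}$, i.e. $\bb_{h-1}(\tau_{h-1}) = b^{s_{h-1}}$. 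Applying the recursion at the extended (reachable) history $\tau_h = (\tau_{h-1}, a_{h-1}, o_h)$ gives $U_h\InParentheses{b^{s_{h-1}}; a_{h-1}, o_h} = \bb_h(\tau_h) = b^{\phi_h(\tau_h)}$, which is one-hot. I would then set $\psi_h(s_{h-1}, a_{h-1}, o_h) := \phi_h(\tau_h)$ and verify well-definedness: because $U_h\InParentheses{b^{s_{h-1}}; a_{h-1}, o_h}$ depends on $\tau_{h-1}$ only through the one-hot vector $b^{s_{h-1}}$, any other qualifying history $\tau_{h-1}'$ with $\phi_{h-1}(\tau_{h-1}')=s_{h-1}$ produces the identical belief, so $\psi_h$ is a genuine function of $(s_{h-1},a_{h-1},o_h)$. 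The case $h=1$ is handled identically, reading off $\psi_1$ from $B_1(\mu_1; o_1)$.

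The main obstacle I anticipate is the careful bookkeeping of \emph{reachability}, specifically: (i) showing that each reachable $s_{h-1}$ is realized by some history whose belief is already concentrated at $s_{h-1}$ --- this is where the characterization hypothesis is combined with the positivity $\PP^\cP(s_{h-1}\given\tau_{h-1})>0$; and (ii) confirming that the extended history $\tau_h$ is itself reachable so that the characterization applies to it. Once these quantifier-matching issues are settled, the policy-independence of the belief and the fact that $U_h$ acts on one-hot inputs solely through the state index make the well-definedness of $\psi_h$ essentially immediate.
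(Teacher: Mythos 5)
Your proof is correct and takes essentially the same route as the paper's: the forward direction constructs $\phi_h$ by recursively composing the $\psi_h$'s through the belief-update recursion, and the reverse direction reads $\psi_h(s_{h-1},a_{h-1},o_h)$ off the one-hot belief $U_h(b^{s_{h-1}};a_{h-1},o_h)$ at a history whose posterior is concentrated at $s_{h-1}$. Your handling of well-definedness and the reachability bookkeeping is, if anything, slightly more explicit than the paper's, which disposes of the unfilled triples $(s_{h-1},a_{h-1},o_h)$ by noting they correspond exactly to unreachable configurations excluded by \Cref{def:deterministic-filter}.
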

 
\Cref{prop:decode} implies that at each step $h$, given the \emph{entire} history, the agent  can uniquely decode the current underlying state $s_h$. Thus, we generalize this condition to POSGs by requiring each agent to uniquely decode the current state $s_h$ given the information it has collected so far.

\begin{definition}[Deterministic filter condition for POSGs]\label{def:det-posg}
	We say a POSG $\cG$ satisfies the \emph{deterministic filter condition} if for each $i\in[n]$, $h\in [H]$, there exists \kzedit{\emph{an unknown}}  function  $\phi_{i, h}:\cC_h\times\cP_{i, h}\rightarrow\cS$ such that $\PP^\cG(s_h=\phi_{i, h}(c_h, p_{i, h})\given c_h, p_{i, h})=1$   for any reachable $(c_h, p_{i, h})$.
\end{definition}

Here we have required that each agent can decode the underlying state through their own information \emph{individually}. Naturally, one may wonder whether one can relax it so that only the \emph{joint} {history} information of all the agents can decode the underlying state. However, we point out in the following that it does not circumvent the computational hardness of POSG, \xynew{the proof of which is deferred to \Cref{app:marl}}. Note that the computational hardness result can not be mitigated even with privileged state information, as the hardness we state here holds even for the planning problem with model knowledge, with which one can simulate the RL problems with privileged information. 

\begin{proposition}\label{prop:local-hard}
Computing CCE in POSGs that satisfy that \kzedit{for each  step $h\in[H]$,} there exists a function $\phi_{h}:\cC_h\times\cP_{h}\rightarrow\cS$ such that $\PP^\cG(s_h=\phi_{h}(c_h, p_{h})\given c_h, p_{h})=1$ for any reachable $(c_h, p_{h})$ is still \texttt{PSPACE-hard}.
\end{proposition}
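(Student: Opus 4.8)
The plan is to establish hardness by a polynomial-time reduction from the PSPACE-hard problem of planning in POMDPs \cite{papadimitriou1987complexity}, which (as noted just before the statement) already holds \emph{with model knowledge}. Given an arbitrary POMDP instance $\cP$, I would embed it into a two-agent POSG $\cG$ that satisfies the \emph{joint}-decodability hypothesis of the statement but deliberately \emph{violates} the individual-decodability condition of \Cref{def:det-posg}, thereby isolating exactly the gap between the two conditions as the source of hardness.

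Concretely, agent $1$ is the genuine controller: its private information $p_{1,h}$ is exactly the POMDP observation history, its actions drive the transitions of $\cP$, and it collects the POMDP reward as $r_{1,h}$. Agent $2$ is a passive \emph{state observer}: I equip it with a single dummy action so that its choices never influence dynamics or reward, and I let its private observation reveal the underlying state, so that $p_{2,h}$ determines $s_h$. Taking $c_h=\emptyset$, the joint information $(c_h, p_h)$ contains $p_{2,h}$ and hence determines $s_h$, so the decoder $\phi_h$ required by the hypothesis exists; meanwhile agent $1$ alone still faces genuine partial observability, which is precisely why \Cref{def:det-posg} fails here. I would set $r_{2,h}=1-r_{1,h}$ to respect the paper's constant-sum convention, though a common-reward variant behaves identically.

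The crux is to show that \emph{any} $\epsilon$-CCE $\pi^\star$ of $\cG$ yields an $\epsilon$-optimal policy for $\cP$. Because agent $2$ is passive, for every $\pi_1'\in\Pi_1$ we have $v_1^{\cG}(\pi_1'\times\pi^\star_{-1})=v^{\cP}(\pi_1')$, so agent $1$'s best unilateral deviation attains exactly $\max_{\pi}v^{\cP}(\pi)$, independently of any correlation in $\pi^\star$. The CCE-gap condition $\max_{\pi_1'\in\Pi_1} v_1^{\cG}(\pi_1'\times\pi^\star_{-1}) - v_1^{\cG}(\pi^\star)\le\epsilon$ then forces $v_1^{\cG}(\pi^\star)\ge \max_{\pi}v^{\cP}(\pi)-\epsilon$, i.e.\ the induced (marginal) policy of agent $1$ is $\epsilon$-optimal for $\cP$. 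Consequently, a polynomial-time CCE solver would produce an $\epsilon$-optimal POMDP policy, contradicting the PSPACE-hardness of POMDP planning. I expect the main obstacle to be this final \emph{no-shortcut} argument: a CCE permits arbitrary cross-agent correlation, and one must verify that such correlation cannot exploit agent $2$'s state information to sidestep agent $1$'s partial observability. Making agent $2$ strictly passive collapses $\cG$ to an effectively single-agent problem and reduces the CCE condition precisely to POMDP optimality for agent $1$; the remaining routine checks are that the reduction is polynomial-time and that $\cG$ genuinely satisfies the joint-decodability hypothesis while failing individual decodability.
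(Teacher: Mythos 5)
Your proposal is correct and follows essentially the same route as the paper's proof: augment the given POMDP with a passive dummy agent whose observation reveals the exact state and whose single dummy action affects neither transitions nor rewards, so that the joint information $(c_h,p_h)$ decodes $s_h$ while the genuine controller remains partially observing, and then conclude via the \texttt{PSPACE}-hardness of POMDP planning \cite{papadimitriou1987complexity}. Your explicit verification that any $\epsilon$-CCE yields an $\epsilon$-optimal POMDP policy (the step the paper dismisses as ``direct to see''), including the observation that cross-agent correlation cannot leak agent $2$'s state information to agent $1$ since $c_h=\emptyset$ and agent $2$ is strictly passive, is a welcome elaboration rather than a deviation.
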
 


\paragraph{Learning multi-agent individual decoding functions with unilateral exploration.}
Similar to our framework for POMDPs, our framework for POSGs is also decoupled into two steps: i) learning an \emph{expert}  equilibrium policy that is fully observable, ii)  learning the \emph{decoding function}, where the first step can be instantiated by any provable off-the-shelf algorithm of  learning in Markov games. The major difference from the framework for POMDPs lies in how to learn the decoding function. In \Cref{thm:decoder}, we prove that the difference of the NE/CE/CCE-gap
between the expert policy and the distilled student policy is bounded by the decoding errors under policies from the \emph{unilateral deviation} of the expert policy. {Hence, given the expert policy $\pi$, the key algorithmic principle is to perform \emph{unilateral exploration} for each agent $i$ to make sure the decoding function is accurate under policies $(\pi_i^\prime, \pi_{-i})$ for any $\pi_i^\prime$, keeping $\pi_{-i}$ fixed.} 
We defer the detailed algorithm to \Cref{alg:multi-agent-decode}, and present below the guarantees for learning the decoding functions and the corresponding distilled policy for learning NE/CCE, and we defer the results for learning CE to \Cref{thm:ma-decode-ce}.

\begin{theorem}[Equilibria  learning;  Combining \Cref{thm:decoder} and \Cref{thm:ma-decode}]     
 Under \Cref{str_indi} and conditions of \Cref{def:det-posg}, given an   $\frac{\epsilon}{2}$-NE/CCE $\pi^E$
  for the associated Markov game of $\cG$, \Cref{alg:multi-agent-decode} can learn decoding function $\{\hat{g}_{i, h}\}_{i\in[n], h\in[H]}$ such that with probability at least $1-\delta$, it is guaranteed that
	$
	\max_{u_i\in\Pi_i, j\in[n]}\PP^{u_i\times\pi_{-i}, \cG}(s_h\not=\hat{g}_{j, h}(c_h, p_{j, h}))\le \frac{\epsilon}{4nH^2},
	$ 
	for any $i\in[n], h\in[H]$
	with both sample and computational complexities  $\textsc{poly}(S, A, H, O, \frac{1}{\epsilon}, \log\frac{1}{\delta})$. Consequently, policy $\pi$ distilled from $\pi^E$ (c.f. \Cref{thm:decoder} for the formal distillation procedure) is an $\epsilon$-NE/CCE of $\cG$.
\end{theorem}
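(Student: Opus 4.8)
The plan is to follow the same decoupling as in the single-agent case: reduce the POSG $\cG$ to its associated Markov game, take the given $\frac{\epsilon}{2}$-NE/CCE $\pi^E$ there (a fully-observable, state-based equilibrium), and then \emph{distill} it into a policy over decoded states by learning individual decoding functions $\{\hat{g}_{i,h}\}_{i\in[n],h\in[H]}$. The stated result combines two ingredients that I would prove separately and then chain: \Cref{thm:ma-decode}, which produces decoders with the claimed per-agent, per-step accuracy $\frac{\epsilon}{4nH^2}$ at polynomial sample and computational cost, and \Cref{thm:decoder}, which converts that accuracy into a bound on the equilibrium gap of the distilled policy.

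For the decoding guarantee, I would exploit the determinism granted by \Cref{def:det-posg}: each reachable information pair $(c_h,p_{i,h})$ has a \emph{unique} correct state $\phi_{i,h}(c_h,p_{i,h})$, so it suffices to identify this map on every reachable information state, and a single clean observation of the privileged state pins down the correct value. The difficulty is that the information-state space grows with the horizon, so the decoder must be built \emph{recursively} from per-step increments of the common and private information, using the information-evolution structure of \Cref{evo} and the strategy-independence-of-beliefs property \Cref{str_indi}, exactly mirroring how the single-agent recursive map $g_h:\cS\times\cA\times\cO\to\cS$ is learned in \Cref{thm:dec embedding outcome}. Crucially, the error must hold \emph{uniformly over unilateral deviations} $u_i\in\Pi_i$, not merely on-policy. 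I would therefore, for each agent $i$, freeze $\pi^E_{-i}$ and treat agent $i$'s induced problem as a single-agent problem with privileged state information, then run a reward-free-style exploration of agent $i$'s reachable (polynomial-size) state space so that every recursive transition reachable under \emph{some} $u_i$ is visited enough to be decoded correctly. Invoking the single-agent decoder-learning result on each induced problem and union-bounding over $i\in[n]$ and $h\in[H]$ yields the target accuracy $\frac{\epsilon}{4nH^2}$ with the claimed complexity.

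For the gap bound (\Cref{thm:decoder}), the idea is a simulation / value-difference argument generalizing \Cref{lem:performance of composed policy} to the game setting. For any agent $i$ and any deviation $u_i$, both $v_i^{\cG}(u_i\times\pi_{-i})$ under the distilled policy and $v_i^{\cG}(\pi)$ differ from their counterparts in the associated Markov game by at most $H$ times the probability that \emph{some} agent's decoder errs along the induced trajectory; since best responses to $\pi_{-i}$ range over all $u_i$, this is precisely where uniform-over-deviation accuracy is required. Union-bounding the per-step, per-agent errors gives a total decode-error probability at most $nH\cdot\frac{\epsilon}{4nH^2}=\frac{\epsilon}{4H}$, hence each value term is perturbed by at most $H\cdot\frac{\epsilon}{4H}=\frac{\epsilon}{4}$, and the two value terms in the NE/CCE-gap contribute at most $\frac{\epsilon}{2}$. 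Added to the expert's $\frac{\epsilon}{2}$ gap, this yields exactly an $\epsilon$-NE/CCE of $\cG$, which explains the choice of the accuracy target $\frac{\epsilon}{4nH^2}$.

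I expect the main obstacle to be establishing accuracy \emph{uniformly across all unilateral deviations} while keeping the sample complexity polynomial: unlike on-policy decoding, a transition that is rare under the exploration policy may be frequent under some best response of the deviating agent, so the exploration scheme must provably cover every information state reachable by \emph{any} $u_i$ against fixed $\pi_{-i}$. Reducing this to polynomially-many per-step recursive transitions via the information-evolution structure, and verifying that the recursive decoding error composes additively into the information-state decoding error used in the gap bound, is the most delicate part; the equilibrium-gap telescoping itself is then relatively standard once the uniform decoding guarantee is in hand.
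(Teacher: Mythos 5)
Your high-level blueprint matches the paper's decomposition (a gap-transfer theorem plus a uniform-over-deviations decoder-learning theorem, with the same arithmetic $2nH^2\cdot\frac{\epsilon}{4nH^2}=\frac{\epsilon}{2}$ added to the expert's $\frac{\epsilon}{2}$), and your idea of covering deviations via reward-free, occupancy-maximizing exploration against the frozen $\pi_{-i}$ is exactly what \Cref{alg:multi-agent-decode} does. However, your instantiation of the decoder-learning step has a genuine gap: you propose to learn \emph{recursive} per-step tabular decoders mirroring \Cref{alg:learning decoding}, i.e., maps built from the previously decoded state plus agent-local information increments. The equivalence that justifies such a recursion (\Cref{prop:decode}) is proved only for POMDPs, where the decoder sees the full joint action $a_{h-1}$ and joint observation $o_h$. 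In the POSG setting, \Cref{def:det-posg} only guarantees that the \emph{full} pair $(c_h,p_{j,h})$ determines $s_h$; the factorization of this map through $(s_{h-1}, a_{j,h-1}, \text{new local info})$ can fail, because the likelihood of agent $j$'s increments given $s_{h-1}$ depends on other agents' actions, which agent $j$ does not observe and which, under a history-dependent unilateral deviation $u_i$, are not functions of $s_{h-1}$. The paper sidesteps this entirely: \Cref{alg:multi-agent-decode} learns the model $(\hat{\TT},\hat{\OO})$ via unilateral exploration and defines the (stochastic) decoder as the posterior in the learned model, $\hat{g}_{j,h}(\cdot\given c_h,p_{j,h}):=\PP^{\hat{\cG}}(\cdot\given c_h,p_{j,h})$. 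The decode error is then bounded by the model's total-variation error along deviation trajectories (\Cref{lem:ma-decode}, via \Cref{lem:bound conditional} and \Cref{lem:traj}), and \Cref{str_indi} is what makes this posterior policy-independent, so a single learned decoder is simultaneously valid against \emph{all} unilateral deviations — a role for strategy independence your sketch does not exploit.

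Your gap-transfer step also elides two points that carry real weight in the paper's proof of \Cref{thm:decoder}. First, the deviation value under the distilled profile, $v_i(u_i\times\pi^{\hat{g}}_{-i})$, must be compared to $v_i(u_i\times\pi_{-i})$, but your learning guarantee controls decode errors only under $u_i\times\pi_{-i}$; the paper bridges this via a coupling identity showing $\PP^{u_i\times\pi^{\hat{\phi}}_{-i},\cG}$ and $\PP^{u_i\times\pi_{-i},\cG}$ agree on the event that all other agents' decoders are correct, so the error probability is the same under both measures (and randomized decoders are handled as mixtures of deterministic ones). Second, to conclude that the expert's $\frac{\epsilon}{2}$-NE/CCE property in the \emph{Markov game} controls its gap in $\cG$ against \emph{history-dependent} deviations $u_i\in\Pi_i$, one needs $\max_{u_i\in\Pi_i}v_i(u_i\times\pi_{-i})=\max_{u_i\in\Pi_{\cS,i}}v_i(u_i\times\pi_{-i})$, which the paper derives from the deterministic filter condition and the Markovianity of $\pi_{-i}$ (and the analogous \Cref{lem:stf} for CE). The same state-based reduction (via a \Cref{lem:state}-type argument) is also what makes your "cover every transition reachable under some $u_i$" exploration requirement polynomially achievable. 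These steps are fixable with the paper's techniques, but they are not the routine telescoping your proposal suggests, and the recursive-decoder construction itself would need to be replaced or separately justified.
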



\subsection{Privileged Value Learning: Asymmetric MARL with Approximate Belief Learning}\label{subsec:pvl}

For POMDPs, we have used \emph{finite-memory} policies for computational efficiency. We generalize to POSGs \kzedit{with information sharing} by defining the \emph{compression} of the common information.

\begin{definition}[Compressed approximate common information \citep{mao2020information,subramanian2022approximate,liu2023partially}]
	For each $h\in[H]$, given a set $\hat{\cC}_h$, we say $\text{Compress}_h$ is a compression function if $\text{Compress}_h\in\{f: \cC_h\rightarrow\hat{\cC}_h\}$. For each $c_h\in\cC_h$, we denote $\hat{c}_h:=\text{Compress}_h(c_h)$. We also require the compression function to satisfy the regularity  condition that for each $h\in[H]$, there exists a function $\hat{\Lambda}_{h+1}$ such that $\hat{c}_{h+1}=\hat{\Lambda}_{h+1}(\hat{c}_h, \varpi_{h+1})$, for any $c_h\in\cC_h$, $\varpi_{h+1}\in\Upsilon_{h+1}$, where we recall $c_{h+1}:=c_h\cup \varpi_{h+1}$  \xynew{and the definition of $\Upsilon_{h+1}$} in \Cref{evo}.
\end{definition}

Similar to the framework we developed for POMDPs in \Cref{sec:aac_approximate_belief}, we firstly develop the multi-agent RL algorithm based on some approximate belief,  and then instantiate it with one provable approach to  learning such an approximate belief.

\paragraph{Optimistic value iteration of POSGs with approximate belief.} \label{sec:posg}

For POMDPs, a  sufficient statistics for optimal decision-making is the \emph{posterior distribution} over the underlying state given history. However, for POSGs with information-sharing, as shown in \cite{nayyar2013decentralized,nayyar2013common,liu2023partially}, a sufficient statistics become the posterior distribution over the state \emph{and the private information} given the common information, instead of only the state. Therefore, we consider the approximate belief in the form of $\hat{P}_h:\hat{\cC}_h\rightarrow\Delta(\cP_h\times\cS)$ for each $h\in[H]$. \kzedit{Note that both $\hat{P}_h$ and thus $\epsilon_{\text{belief}}$ have implicit dependencies on $\text{Compress}_h$, as $\hat{c}_h:=\text{Compress}_h(c_h)$.} 
We outline our algorithm in \Cref{alg:optimisticvi}, which is conceptually similar to the algorithm for POMDPs \xynew{(\Cref{alg:belief-weighted npg})},  maintaining the asymmetric critic (i.e., value function), and performing the actor update (i.e., policy update) using the \emph{belief-weighted} value function.  

\begin{theorem}[Equilibria  learning; Combining \Cref{thm:vi-cce} and \Cref{thm:vi-ce}]\label{thm:equi_learning_VI}
		\xynew{Fix $\epsilon, \delta\in(0, 1)$ and approximate belief $\{\hat{P}_h\}_{h\in[H]}$.} We define the error of the approximate belief compared with the ground-truth belief as $\epsilon_{\text{belief}}:=\max_{h\in[H]}\max_{\pi\in\Pi}\EE_{\pi}^{\cG}\sum_{s_h, p_h}|\PP^{\cG}(s_h, p_h\given c_h)-\hat{P}_h(s_h, p_h\given \hat{c}_h)|$. Under \Cref{str_indi}, with probability at least $1-\delta$, \Cref{alg:optimisticvi} can learn an $(\epsilon+H^2\epsilon_{\text{belief}})$-NE if $\cG$ is zero-sum and $(\epsilon+H^2\epsilon_{\text{belief}})$-CE/CCE if $\cG$ is general-sum with sample complexity $\cO\left(\frac{H^4SAO\log(SAHO/\delta)}{\epsilon^2}\right)$ and computational  complexity  $\textsc{poly}\left(S, \max_{h\in[H]}|\hat{\cC}_h|, \max_{h\in[H]}|\cP_h|, H, \frac{1}{\epsilon}, \log\frac{1}{\delta}\right)$. 
\end{theorem}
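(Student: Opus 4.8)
The plan is to exploit the common-information reduction licensed by \Cref{str_indi}. Since the conditional law $\PP^{\cG}(s_h, p_h \given c_h)$ is strategy-independent, the POSG can be recast as an equivalent fully observable \emph{coordinator} game whose augmented state at step $h$ is the posterior over $(s_h,p_h)$ given the compressed common information $\hat{c}_h$, and whose actions are prescriptions mapping private information to action distributions. On this coordinator game, \Cref{alg:optimisticvi} runs an optimistic, belief-weighted value iteration: at each $h$ and each reachable $\hat{c}_h$, it builds a stage game whose payoffs are the approximate-belief-weighted $Q$-estimates $\sum_{s_h,p_h}\hat{P}_h(s_h,p_h\given\hat{c}_h)\,\widehat{Q}_h(\cdot)$ and solves the appropriate stage solution concept --- a minimax point for zero-sum NE, and a CCE (resp.\ CE) for general-sum CCE (resp.\ CE). Backward induction composes these stage equilibria into a global equilibrium of the coordinator game, which maps back to an equilibrium of $\cG$.

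First I would settle the statistical side. The transitions, emissions, and rewards are estimated from the collected episodes, and optimistic estimates $\widehat{Q}_h$ are formed by adding Hoeffding/Bernstein-type bonuses exactly as in fully observable optimistic VI. Concentration over the $\cO(SA)$ transition and $\cO(SO)$ emission parameters, with a union bound over $[H]$, shows that with the stated $\tilde{\cO}(H^4 SAO/\epsilon^2)$ episodes the estimated value functions are within $\epsilon$ of the true ones uniformly; the $H^4$ factor comes from propagating a per-step estimation error of order $\epsilon/H^2$ through the horizon while accounting for the $H$-boundedness of returns. This produces the $\epsilon$ term in the final accuracy.

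Next I would bound the belief-approximation error via a performance-difference (simulation) argument adapted to the coordinator game. Replacing the true posterior $\PP^{\cG}(s_h,p_h\given c_h)$ by $\hat{P}_h(\cdot\given\hat{c}_h)$ perturbs each stage payoff by at most $H\,\|\PP^{\cG}(\cdot\given c_h)-\hat{P}_h(\cdot\given\hat{c}_h)\|_1$ in expectation, since values are $H$-bounded; by the definition of $\epsilon_{\text{belief}}$ this is at most $H\epsilon_{\text{belief}}$ per step. Summing over the $H$ steps of backward induction shows that the equilibrium computed on the approximate-belief-weighted payoffs is an $\cO(H^2\epsilon_{\text{belief}})$-equilibrium of the true coordinator game. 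Because this estimate holds uniformly over all common-information realizations and all prescriptions, it simultaneously bounds the gain of any unilateral deviation, which is precisely the quantity the NE/CE/CCE-gap measures.

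The main obstacle I anticipate is controlling both error sources inside the \emph{equilibrium}-gap rather than the on-policy value: the gap maximizes over unilateral deviations $\pi_i'\in\Pi_i$, so I must ensure that both the optimism-driven estimation error and the belief error are bounded \emph{uniformly} over the deviator's policy class and over all reachable compressed common-information states, and that solving the stage games certifies the best-response gap (not merely the equilibrium value). Establishing this uniformity --- and checking that the compression preserves the regularity $\hat{c}_{h+1}=\hat\Lambda_{h+1}(\hat{c}_h,\varpi_{h+1})$ needed to keep backward induction consistent --- is where the delicate work lies. The CE case additionally involves the strategy modifications $m_i$, which I would handle through a stage-wise CE/swap-regret construction so that per-step CE guarantees compose into a global CE-gap of the same order. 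Assembling these pieces yields the claimed $(\epsilon+H^2\epsilon_{\text{belief}})$ bounds, together with the stated sample complexity and the $\textsc{poly}\InParentheses{S,\max_{h}|\hat{\cC}_h|,\max_{h}|\cP_h|,H,1/\epsilon,\log(1/\delta)}$ computational complexity, the latter because each stage equilibrium is computed over prescription and common-information spaces of size polynomial in $|\cP_h|$ and $|\hat{\cC}_h|$.
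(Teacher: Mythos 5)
Your high-level architecture matches the paper's: the common-information reduction under \Cref{str_indi}, stage games solved via Bayesian CE/CCE at each compressed common information $\hat{c}_h$ composed by backward induction, and the belief-error propagation — your performance-difference bound is exactly \Cref{lem:value-bound}, which gives $\EE_{\pi'}^{\cG}\big|V_{i,h}^{\pi,\cG}(c_h)-\hat{V}_{i,h}^{\pi,\cG}(c_h)\big|\le (H-h+1)^2\epsilon_{\text{belief}}$ and produces the $H^2\epsilon_{\text{belief}}$ term. The genuine gap is in your statistical core. You claim that with $\tilde{\cO}(H^4SAO/\epsilon^2)$ episodes "the estimated value functions are within $\epsilon$ of the true ones uniformly." That step fails: the episodes are collected \emph{on-policy} by executing the iterates $\pi^k$, and no sample budget yields uniform accuracy at state–action pairs those policies rarely reach; the theorem makes no reachability assumption. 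The paper never proves uniform convergence. Instead, \Cref{alg:optimisticvi} is an \emph{online} optimistic procedure that estimates only the composite kernel $\hat{\JJ}^k_h(o_{h+1}\given s_h,a_h)$ from its own trajectories (the separate transition/emission estimation you describe belongs to the belief-learning oracle of \Cref{thm:posg-belief}, not to this algorithm, and rewards are assumed known), and maintains \emph{both} optimistic and pessimistic value functions with bonuses $b_h^k(s_h,a_h)\asymp H\sqrt{O\log(\cdot)/N_h^k(s_h,a_h)}$. The proof then shows: (i) optimism, $V^{\high,k}_{i,h}(\hat{c}_h)\ge \hat{V}^{\pi_i'\times\pi^k_{-i},\cG}_{i,h}(c_h)$ \emph{simultaneously for every} unilateral deviation $\pi_i'$ (\Cref{lem:optimism-1}; the CE analogue with modifications $m_i$ is \Cref{lem:optimism-1-ce}) — this requires only the pointwise concentration event of \Cref{lem:future-emission}, not uniform visitation, and is precisely how the "uniformity over the deviator's class" obstacle you flag gets resolved; (ii) pessimism, $V^{\low,k}_{i,h}\le \hat{V}^{\pi^k,\cG}_{i,h}$ (\Cref{lem:optimism-2}); (iii) a regret decomposition of $\sum_{k}\big(V^{\high,k}_{i,1}-V^{\low,k}_{i,1}\big)$ into martingale terms handled by Azuma--Hoeffding, a pigeonhole sum of bonuses of order $\tilde{\cO}(H^2\sqrt{SAOK})$, and the accumulated belief error $KH^2\epsilon_{\text{belief}}$.

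The certification step also differs from your sketch: the PAC guarantee does not come from planning on a uniformly accurate model, but from returning the iterate $k^\star\in\arg\min_{k,i}\big(V^{\high,k}_{i,1}(c_1^k)-V^{\low,k}_{i,1}(c_1^k)\big)$, whose optimism--pessimism gap is bounded by the average regret, giving the $\epsilon$ term once $K=\tilde{\cO}(H^4SAO/\epsilon^2)$; your attribution of the $H^4$ to propagating a per-step error $\epsilon/H^2$ through the horizon does not reflect the actual source (an $H^2$ leading constant in the regret, squared when inverting $\mathrm{regret}/K\le\epsilon$). Two smaller discrepancies: for zero-sum games the algorithm computes a CCE and \emph{marginalizes} the output (using $\operatorname{NE-gap}(\hat{\pi})\le\operatorname{CCE-gap}(\pi)$ in two-player zero-sum), rather than solving stage minimax problems; and the CE case is handled through agent-form Bayesian CE on the optimistic $Q$-functions together with \Cref{lem:optimism-1-ce}, which is the concrete instantiation of the "stage-wise swap-regret construction" you leave open. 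Your backward-induction composition would suffice for planning with a known model, but proving the theorem as stated requires the optimism/pessimism sandwich and the regret argument to handle under-visited state--action pairs.
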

\paragraph{Learning approximate belief with model truncation.} 
The belief learning algorithm we design for POSGs is conceptually similar to that we designed for POMDPs, where the key to achieving \emph{both}  polynomial sample and computational complexity is still to firstly learn approximate models, i.e., transitions and emissions, and then carefully \emph{truncate} (as in \Cref{sec:reward-free}) its transition and emission to build the approximate belief, where we defer the detailed algorithm to \Cref{alg:reward-free-posg}. Next, we provide its provable guarantees,  
which leads to a final polynomial-sample and quasi-polynomial-time complexity result when combined with \Cref{thm:equi_learning_VI}. 

\begin{theorem}\label{thm:posg-belief}
	For any $\epsilon>0$, under \Cref{assump:observa}, it holds that one can learn the approximate belief $\{\hat{P}_h:\hat{\cC}_h\rightarrow\Delta(\cS\times\cP_h)\}_{h\in[H]}$ such that $\epsilon_{\text{belief}}\le \frac{\epsilon}{H^2}$ with both  polynomial sample complexity and computational complexity $\textsc{poly}\left(S, A, O, H, \frac{1}{\gamma}, \frac{1}{\epsilon}, \log\frac{1}{\delta}\right)$ for all the  examples in \Cref{subsec:example}. {As a consequence, \Cref{alg:optimisticvi} can learn an $\epsilon$-NE if $\cG$ is zero-sum and $\epsilon$-CE/CCE if $\cG$ is general-sum with sample complexity $\cO\left(\frac{H^4SAO\log(SAHO/\delta)}{\epsilon^2}\right)$ and computational  complexity  $\textsc{poly}\left(S,H, \frac{1}{\epsilon},\log\frac{1}{\delta},O^L,A^L\right)$ with $L\geq {\tilde{\Omega}(\gamma^{-4}\log(SH/\epsilon))}$.}
\end{theorem}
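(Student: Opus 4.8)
The plan is to mirror the two-stage strategy established for POMDPs in \Cref{sec:reward-free} (culminating in \Cref{thm:belief}), adapting it to the common-information structure of POSGs. The key conceptual device is that, under \Cref{str_indi}, the target object $\PP^{\cG}(s_h, p_h \mid c_h)$ is \emph{strategy-independent} and is therefore exactly the coordinator's belief in the common-information reformulation \citep{nayyar2013common,liu2023partially}; it evolves through a recursion driven solely by the model $(\mathbb{T}, \mathbb{O})$ together with the compression map $\hat{\Lambda}_{h+1}$ over $\hat{\cC}_h$. Consequently, learning the belief reduces to (i) estimating the transition and emission matrices accurately and \emph{uniformly across states}, and (ii) running this recursion on the learned model. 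This effectively turns the multi-agent belief-learning problem into a single-agent filter-stability statement over the augmented state $(s_h, p_h)$, so that the POMDP machinery can be reused.

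First I would estimate $\{\mathbb{T}_h, \mathbb{O}_h\}$ by explicit state-space exploration, executing \emph{fully observable} policies produced by an off-the-shelf Markov-game learning subroutine, exactly as in \Cref{sec:reward-free}. The central difficulty, identical in spirit to the POMDP case, is that some states are reachable only with vanishing probability, so their rows cannot be estimated to the desired accuracy. I would handle this by \emph{truncating} the model: discard the hard-to-visit states and redirect the probability mass flowing into them onto sufficiently-explored states, producing a truncated POSG $\hat{\cG}$ in which every surviving state has enough samples. The two points to verify are that the truncation (a) leaves $\mathbb{O}_h$ close enough that \Cref{assump:observa} is preserved up to a mild degradation of $\gamma$, and (b) is compatible with the information decomposition, so the coordinator recursion can still be run via $\hat{\Lambda}_{h+1}$ over $\hat{\cC}_h$ and $\cP_h$.

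Next I would invoke \emph{filter stability under a misspecified model}, the multi-agent analogue of the oracle behind \Cref{thm:belief}. Because $\hat{\cG}$ is $\gamma$-observable and its transitions and emissions are accurate to the required precision uniformly over states, the belief obtained by iterating the learned recursion contracts past errors geometrically and hence stays within $\epsilon/H^2$ of the true belief $\PP^{\cG}(s_h, p_h \mid c_h)$ in expected $\ell_1$ distance, for every $\pi \in \Pi$ and every $h$; this is precisely the bound $\epsilon_{\text{belief}} \le \epsilon/H^2$ demanded. The main technical burden lies here: adapting the single-agent filter-stability argument to the joint object over $(s_h, p_h)$, tracking how the compression of common information interacts with the contraction, and ensuring that the error from truncating low-probability states contributes only to the low-probability part of the expectation, so that it is absorbed into the $\epsilon/H^2$ budget.

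Finally, I would verify the polynomial sample and computational bounds for each model in \Cref{subsec:example}, where the point is that $|\cP_h|$ and $|\hat{\cC}_h|$ are controlled—polynomial, or quasi-polynomial of order $O^L A^L$ after compression with $L = \tilde{\Omega}(\gamma^{-4}\log(SH/\epsilon))$—so that both the exploration and the recursion run within the stated complexity. The advertised consequence is then immediate: plugging $\epsilon_{\text{belief}} \le \epsilon/H^2$ into \Cref{thm:equi_learning_VI}, whose guarantee is $(\epsilon + H^2\epsilon_{\text{belief}})$-NE/CE/CCE, yields an $O(\epsilon)$-equilibrium with the claimed $\cO(H^4 SAO \log(SAHO/\delta)/\epsilon^2)$ sample complexity and $\textsc{poly}(S, H, 1/\epsilon, \log(1/\delta), O^L, A^L)$ time.
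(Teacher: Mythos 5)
Your proposal is correct and follows essentially the same route as the paper: reward-free model estimation via state-space exploration with fully observable policies (the paper treats the POSG as a centralized MDP in \Cref{alg:reward-free-posg}), truncation of hard-to-visit states to preserve $\gamma/2$-observability, transfer of model error to belief error via the strategy independence of beliefs (\Cref{str_indi} and \Cref{corr:traj}), and plugging $\epsilon_{\text{belief}}\le \epsilon/H^2$ into \Cref{thm:equi_learning_VI}. The only difference is that the step you flag as the ``main technical burden''---filter stability of the compressed common-information belief over the joint object $(s_h,p_h)$ for the examples in \Cref{subsec:example}---is not re-derived in the paper but imported directly from \cite{liu2023partially}, which supplies the compression function with quasi-polynomial $\hat{C}_h\le (AO)^{C\gamma^{-4}\log(SH/\epsilon_2)}$ and the accompanying belief-accuracy guarantee on the truncated learned model.
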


 \section{Concluding Remarks}\label{sec:remark}

In this paper, we aim to understand the provable benefits of privileged information for partially observable RL problems under two empirically successful paradigms, \emph{expert distillation} \cite{chen2020learning,nguyen2022leveraging,margolis2021learning} and \emph{asymmetric actor-critic} \cite{pinto2017asymmetric,baisero2022asymmetric,andrychowicz2020learning},  which represent  privileged \emph{policy}  and  privileged \emph{value} learning, respectively, 
with an emphasis on studying  \emph{both} the computational and sample efficiencies of the algorithms. Our results (as summarized in \Cref{table:compare})  showed that privileged information does improve learning efficiency in a series of known POMDP subclasses. One potential limitation of our work is that we only focused on the case with \emph{exact} state information. It remains to explore whether such an assumption 
can be further relaxed, e.g., when privileged state information may be  biased, partially observable, or delayed, as usually happens in practice, and how our theoretical results may be affected. 
Meanwhile, as an initial theoretical study, we have been primarily focusing on the tabular settings \kzedit{(except \Cref{sec:expert_distillation_theory function approximation})}, and it would be interesting to extend the results to function-approximation settings to handle massively large state, action, and observation spaces in practice. 

\section*{Acknowledgement}
The authors would like to thank the anonymous reviewers and area chair from NeurIPS 2024 for the valuable feedback. Y.C. acknowledges the support from the NSF Awards CCF-1942583 (CAREER) and CCF-2342642. X.L. and K.Z. acknowledge the support from Army Research Laboratory (ARL) Grant W911NF-24-1-0085. K.Z. also acknowledges the support from Simons-Berkeley Research
Fellowship.  A.O. acknowledges financial support from a Meta PhD fellowship, a Sloan Foundation Research Fellowship and the NSF Award CCF-1942583 (CAREER).
Part of this work was done while the authors were visiting the Simons Institute for the Theory of Computing.

 \bibliographystyle{unsrt}
\bibliography{main}

\clearpage
\appendix 

~\\
\centerline{{\fontsize{16}{16}\selectfont \textbf{Supplementary Materials}}}

\vspace{6pt}

%

\tableofcontents
\clearpage

\section{Additional Preliminaries}\label{apx:preliminaries}
\subsection{Additional Preliminaries on POMDPs}\label{app:belief}
\paragraph{Belief and approximate belief.}  
Although in a POMDP, the agent cannot see the underlying state directly, it can still form a  \emph{belief} over the underlying state by the historical observations and actions. 

\begin{definition}[Belief state update]
For each $h\in [H+1]$, the Bayes operator  
$B_{h}:\Delta(\cS)\times \cO\rightarrow \Delta(\cS)$ is defined for $b\in\Delta(\cS)$, and $o\in\cO$ by: 
\begin{align*}
    B_h(b ; o)(x)=\frac{\mathbb{O}_h(o \given x) b(x)}{\sum_{z \in \mathcal{S}} \mathbb{O}_h(o \given z) b(z)},
\end{align*}
for each $x\in \cS$. 
For each $h\in [H+1]$, the belief update operator $U_h: \Delta(\mathcal{S}) \times \mathcal{A} \times \mathcal{O} \rightarrow \Delta(\mathcal{S})$, is defined by 
\begin{align*}
    U_h(b ; a, o)=B_{h+1}\left(\mathbb{T}_h(a) \cdot b ; o\right),
\end{align*}
where $\mathbb{T}_h(a) \cdot b$ represents the matrix multiplication. We use the notation $b_h$ to denote the belief update function, which receives a sequence of actions and observations and outputs a distribution over states at the step $h$: the belief state at step $h=1$ is defined as $\bb_1(\emptyset) = \mu_1$. 
For any $2\le h\le H$ and any action-observation sequence $(a_{1:h}, o_{1:h})$, we inductively define the belief state:
\begin{align*}
    \bb_{h+1}(a_{1:h}, o_{1:h}) &= \mathbb{T}_h(a_h)\cdot \bb_{h}( a_{1:h-1}, o_{1:h}),\\
    \bb_{h}(a_{1:h-1}, o_{1:h}) &= B_{h}(\bb_{h}(a_{1:h-1}, o_{1:h-1}); o_h).
\end{align*}
We also define the approximate belief update using the most recent $L$-step history. For $1\le h \le H$, we follow the notation of \cite{golowich2022planning} and define 
\begin{align*}
    \bb_h^{\mathrm{apx},\cP}(\emptyset;{D})= \begin{cases}\mu_1 & \text { if } h=1 \\ {D} & \text { otherwise },\end{cases}
\end{align*}
where ${D}\in\Delta(\cS)$ is the prior for the approximate belief update. Then for any $1\le h-L< h\le H$ and any action-observation sequence $(a_{h-L:h}, o_{h-L+1:h})$, we inductively define
\begin{align*}
\bb_{h+1}^{\mathrm{apx}, \cP}(a_{h-L:h}, o_{h-L+1:h};{D}) &= \mathbb{T}_h(a_h)\cdot \bb^{\mathrm{apx}, \cP}_{h}(a_{h-L:h-1}, o_{h-L+1:h};{D}),\\
\bb_{h}^{\mathrm{apx}, \cP}(a_{h-L:h-1}, o_{h-L+1:h};{D}) &= B_{h}(\bb^{\mathrm{apx}, \cP}_{h}(a_{h-L:h-1}, o_{h-L+1:h-1};{D}); o_h).
\end{align*}
For the remainder of our paper, we will use an important initialization for the approximate belief, which are defined as $\bb^{\prime}_{h}(\cdot):=\bb_{h}^{\mathrm{apx}, \cP}(\cdot;\operatorname{Unif}(\cS))$.

\end{definition} 

\subsection{Additional Preliminaries for POSGs}\label{app:posg-prelim}
\paragraph{Model.}
We use a general framework of partially observable stochastic games (POSGs) as the model for partially observable MARL. 
Formally, we 
define a POSG with $n$ agents by a tuple $\cG = (H, \mathcal{S}, \{\mathcal{A}_i\}_{i=1}^{n}, \{\mathcal{O}_i\}_{i=1}^{n}, \mathbb{T}, \mathbb{O}, \mu_1, \{r_i\}_{i=1}^{n})$, where $H$ denotes the length of each episode, $\mathcal{S}$ is the state space with $|\mathcal{S}|=S$, $\mathcal{A}_i$ denotes the action space for {agent $i$} with $|\mathcal{A}_i|=A_i$. We denote by $a := (a_1, \cdots, a_n)$ the joint action  of all the $n$ agents, and by $\mathcal{A}=\mathcal{A}_1\times \cdots\times \mathcal{A}_n$ the joint action space with $|\mathcal{A}| = A=\prod_{i=1}^{n}A_i$. We use $\mathbb{T}=\{\mathbb{T}_h\}_{h\in [H]}$ to denote the collection of   transition matrices, so that $\mathbb{T}_h(\cdot\given s, {a})\in\Delta({\mathcal{S}})$ gives the probability of the next state if joint action ${a}\in\cA$ is taken at state $s\in\cS$ and step $h$. 
In the following discussions, for any given $a$, we treat $\TT_h(a)\in\RR^{S\times S}$ as a matrix, where each row  gives the probability for the next state from different current states.  {We use} $\mu_1$ {to} denote the distribution of the initial state $s_1${, and} $\cO_i$ {to} denote the observation space for agent $i$ with $|\mathcal{O}_i|=O_i$.  We denote by ${o}:=(o_1, \dots, o_n)$ the joint observation of all $n$ agents, and by $\mathcal{O}:=\mathcal{O}_1\times\dots\times\mathcal{O}_n$ with $|\mathcal{O}|=O=\prod_{i=1}^{n}O_i$. We use $\mathbb{O}=\{\mathbb{O}_h\}_{h\in [H]}$ to denote  the collection of the joint emission matrices, so that $\mathbb{O}_h(\cdot\given s)\in\Delta({\mathcal{O}})$ gives the emission distribution over the joint observation space $\mathcal{O}$ at state $s$ and step $h$.  {For notational convenience, we will at times adopt the matrix convention, where $\OO_h$ is a matrix with rows $\OO_h(\cdot\given s_h)$. 
} 
We also denote $\OO_{i, h}(\cdot\given s)\in \Delta(\cO_i)$ as the marginalized emission for agent $i$ agent. Finally, $r_i=\{r_{i, h}\}_{h\in [{H}]}$  is a collection of reward functions, so that $r_{i, h}(s_h, a_h)$ is the reward of agent $i$ agent given the state $s_h$ and joint action $a_h$ at step $h$. 

Similar to a POMDP, in a POSG, the states are not observable to the agents, and each agent can only access its own individual observations. The game proceeds as follows.  At the beginning of each episode, the environment samples $s_1$ from $\mu_1$. At each step $h$, each agent $i$ observes its own observation $o_{i, h}$, where ${o}_h:=(o_{1, h},\dots, o_{n, h})$ is sampled jointly from $\mathbb{O}_h(\cdot\given  s_h)$. Then each agent $i$ takes the action $a_{i, h}$ and receives the reward $r_{i,h}(s_h, a_h)$. After that the environment transitions to the next state $s_{h+1}\sim \mathbb{T}_h(\cdot\given s_h, {a}_h)$. The current episode terminates once $s_{H+1}$ is reached.

\paragraph{Information sharing, common and private information.} 
Each agent $i$ in the POSG maintains its own information,  $\tau_{i, h}$, a collection of historical observations and actions  at step $h$, namely,  $\tau_{i, h} \subseteq \{o_1, a_{1}, o_2, \cdots, a_{h-1}, o_{h}\}$, and the collection of such history at step $h$ is denoted  by $\mathcal{T}_{i, h}$.  



{In many practical examples, agents may share part of the history with each other, which may introduce some \emph{information structures} of the game that may lead to both sample and computation efficiencies. The information sharing splits  the history into \emph{common/shared} and \emph{private}  information for each agent.} 
The \emph{common information} at step $h$ is a subset of the joint history {$\tau_h=(\tau_{i,h})_{i\in[n]}$}: ${c}_h \subseteq \{o_1, a_{1}, o_2, \cdots, a_{h-1}, o_{h}\}$, which is available to \emph{all the agents} in the system, and the collection of the common information is denoted as $\cC_h$ and  we define $C_h=|\cC_h|$. Given the common information ${c}_h$, each agent also has her private information $p_{i, h} = \tau_{i, h}\setminus{c_h}$, where the collection of the private information for \rg{agent $i$} is denoted as $\cP_{i, h}$ and its  cardinality as  $P_{i, h}$. \rg{The cardinality of the joint private information is $P_{h} = \prod_{i=1}^nP_{i, h}$.} We allow $c_h$ or $p_{i, h}$ to take the special value $\emptyset$ when there is \emph{no} common or private information. {In particular, when $\cC_h = \{\emptyset\}$, the problem reduces to the general POSG without any favorable information structure; when $\cP_{i, h} = \{\emptyset\}$, every agent holds the same history, and it reduces to a POMDP when the agents share a common reward function, where the goal is usually to find the team-optimal solution.} 


\paragraph{Policies and value functions.} 
We define a stochastic policy for agent $i$ at step $h$ as: 
\begin{align}\label{policygame}
    {\pi}_{i, h}: \Omega_h\times\cP_{i, h}\times \cC_{h}\rightarrow \Delta({\mathcal{A}_i}),
\end{align}
\rg{where $\Omega_h$ is the random seed space, which is shared among agents and {$\omega_{i, h}\in \Omega_h$ is the random seed for agent $i$.}}
The corresponding policy class is denoted as ${\Pi}_{i, h}$. {Hereafter, unless otherwise noted, when referring to \emph{policies}, we mean the policies given in the form of \eqref{policygame}, which maps the available information of agent $i$, i.e., the private information together with the common information, to the distribution over her actions.}

\rg{We define $\pi_i$ as a sequence of policies for agent $i$ at all steps $h\in[H]$, i.e., $\pi_i = (\pi_{i, 1}, \cdots, \pi_{i, H})$. } We further denote $\Pi_i = \times_{h\in [H]} \Pi_{i, h}$ as the policy space for agent $i$ and \rg{$\Pi=\prod_{i\in[n]}\Pi_i$} as the joint policy space. 
As a special case, we define the {space of}  
\emph{deterministic}  policy {as $\Tilde{\Pi}_i$, where $\Tilde{\pi}_i\in\Tilde{\Pi}_i$} maps the private information and common information to a \emph{deterministic} action for agent $i$ and the joint space as \rg{$\Tilde{\Pi}=\prod_{i\in[n]}\Tilde{\Pi}_i$}.


 A \emph{product} policy is denoted as $\pi=\pi_1\times\pi_2\cdots\times\pi_n\in \Pi$ if {the distributions of drawing each seed $\omega_{i, h}$ for different agents are independent, and \rg{a (potentially correlated) joint policy 
is denoted as $\pi=\pi_1\odot\pi_2\cdots\odot\pi_n\in \Pi$.}} 

We are now ready to define the \emph{value function} {conditioned on the common information} {under our model of POSG with information sharing}:
\begin{definition}[Value function {with information sharing}]\label{def:g_value_functioin}
For each agent $i\in [n]$ and step $h\in [H]$, given common information $c_h$ and joint policy  ${\pi}=(\pi_{i})_{i=1}^{n}\in\Pi$, the 
\emph{value function {conditioned on the common information}} of agent $i$ is defined as: 
    $V^{{\pi}, \cG}_{i, h}(c_h) \vcentcolon= \EE_{\pi}^\cG\left[\sum_{h^{\prime}=h}^{H}r_{i, h^{\prime}}\rg{(s_{h^\prime},a_{h^\prime})} \biggiven c_{h}\right]$, 
    where the expectation   is taken over the randomness from the model $\cG$, policy $\pi$, and the random seeds. For any $ c_{H+1}\in \cC_{H+1}: V^{{\pi}, \cG}_{i, H+1}(c_{H+1}) \vcentcolon= 0$.  {From now on, we will refer to  it as \emph{value function}  for short.} 
\end{definition} 

Another key concept in our analysis is the belief about the state \emph{and} the private information conditioned on the common information among agents. Formally, at step $h$, given policies from $1$ to $h-1$, we consider the common-information-based conditional belief $\PP^{{\pi}_{1:h-1}, \cG}_h\left(s_h, p_h\given c_h\right)$. This belief not only infers the current underlying state $s_h$, but also all agents' private information $p_h$. With the common-information-based conditional belief, the value function given in Definition \ref{def:g_value_functioin}  has the following recursive structure:
\begin{align}\label{dp}
    V^{{\pi}, \cG}_{i, h}(c_h) = \EE_{\pi}^{\cG}[\rg{r_{i, h}(s_h,a_h)} + V^{{\pi}, \cG}_{i, h+1}(c_{h+1})\given c_h], 
\end{align} 
where the expectation is taken over the randomness of $(s_h,p_h,a_h,o_{h+1})$. 
With this relationship, we can define the \emph{prescription-value}  function correspondingly, a generalization of the \emph{action-value}  function in (fully observable)  stochastic games and MDPs to POSGs with information sharing, as follows.

\begin{definition}[Prescription-value function {with information sharing}]\label{def:pres_value}
At step $h$, given the common information $c_h$, joint policies ${\pi}=(\pi_{i})_{i=1}^{n}\in\Pi$, and prescriptions $(\gamma_{i, h})_{i=1}^{n}\in\Gamma_h$, the \emph{prescription-value function  {conditioned on the common information and joint prescription}} of agent $i$ is defined as: 
\begin{align*}
    &Q^{{\pi}, \cG}_{i, h}(c_h, (\gamma_{j, h})_{j\in [n]}):= \EE_{\pi}^\cG\big[r_{i, h}(s_h,a_h) + V^{{\pi}, \cG}_{i, h+1}(c_{h+1})\biggiven c_h, (\gamma_{j, h})_{j\in [n]}\big],
\end{align*} 
where  prescription $\gamma_{i, h}\in \Delta(\cA_i)^{P_{i, h}}$ replaces the partial function $\pi_{i, h}(\cdot\given \omega_{i, h}, c_h, \cdot)$ in the value function. {From now on, we will refer to  it as \emph{prescription-value function}  for short.} 
With such a prescription-value function, agents can take actions purely based on their local private information  \citep{nayyar2013decentralized,nayyar2013common,liu2023partially}.  

\end{definition}
This prescription-value function indicates the expected return for {agent $i$} when all the agents firstly adopt the prescriptions $(\gamma_{j, h})_{j\in [n]}$ and then follow the policy $\pi$.

\subsubsection{Evolution of the Common and Private Information}\label{sec:evolution common and private}
\begin{assumption}[Evolution of common and private information]\label{evo}
We assume that common information and private information evolve over time as follows:
\begin{itemize}
    \item Common information $c_h$ is non-decreasing over  time, that is, $c_h\subseteq c_{h+1}$ for all $h$. Let $\varpi_{h+1} = c_{h+1}\setminus c_h$. Thus, $c_{h+1}=\{c_h, \varpi_{h+1}\}$. Further, we have 
   \begin{align}\label{common_trans}
        \varpi_{h+1} = \chi_{h+1}(p_h, a_h, o_{h+1}),
    \end{align}
    where $\chi_{h+1}$ is a fixed transformation. We use $\Upsilon_{h+1}$ to denote the collection of $\varpi_{h+1}$ at step $h$. 
    \item Private information evolves according to: 
    \begin{align}\label{private_trans}
        p_{i, h+1} = \xi_{i, h+1}(p_{i, h}, a_{i, h}, o_{i, h+1}),
    \end{align}
    where $\xi_{i, h+1}$ is a fixed transformation.
\end{itemize}
\end{assumption} 

Equation \eqref{common_trans} states that the increment in the common information depends on the ``new" information $(a_h, o_{h+1})$ generated between steps $h$ and $h+1$ and part of the old information $p_h$. {The incremental common information can be obtained by certain sharing and communication protocols among the agents.} Equation \eqref{private_trans} implies that the evolution of private information only depends on the newly generated private information $a_{i, h}$ and $o_{i, h+1}$. These evolution rules are standard in the literature \citep{nayyar2013common, nayyar2013decentralized}, specifying the source of common information and private information. {Based on such evolution rules, we define \rg{$\{f_h\}_{h\in [H]}$ and $\{g_h\}_{h\in [H]}$}, where $f_h:\cA^h\times\cO^h\rightarrow \cC_h$  and \rg{$g_h:\cA^h\times\cO^h\rightarrow \cP_h$ for $h\in [H]$}, as the mappings that map the joint history to common information and joint private information, respectively.}
\notshow{

\subsection{Technical Assumptions}\label{subsec:assump}

We now lay out several technical assumptions from the literature that can circumvent the known computational hardness of POMDP/POSGs, which may be used later for different approaches. 

A key technical assumption 
is that the POMDPs/POSGs we consider satisfy an \emph{observability}  assumption, as outlined below. This observability assumption allows us to use short memory policies to approximate the optimal policy, and yields quasi-polynomial-time complexity for both planning and learning in POMDPs/POSGs \citep{golowich2022planning,golowich2022learning,liu2023partially}. 
 
\begin{assumption}[$\gamma$-observability \citep{Even-DarKM07,golowich2022planning,golowich2022learning}]\label{assump:observa}
	Let $\gamma> 0$. For $h\in [H]$, we say that the matrix $\OO_h$ satisfies the $\gamma$-observability assumption if for each $h\in[H+1]$\kz{so our $o$ never goes to $H+1$. reminder.}, for any $b, b^\prime\in \Delta(\cS)$,
\begin{align*}
\left\|\mathbb{O}_{h}^{\top} b-\mathbb{O}_{h}^{\top} b^{\prime}\right\|_{1} \geq \gamma\left\|b-b^{\prime}\right\|_{1}.
\end{align*}
A POMDP/POSG satisfies  $\gamma$-observability if all its $\OO_h$ for  $h\in[H]$ do so.  
\end{assumption}

Additionally, \kzedit{to solve} a POSG \kzedit{without computationally intractable oracles,} 
certain information-sharing is necessary \kzedit{even under the observability assumption above}  \citep{liu2023partially}. We thus make the following assumption as in \cite{liu2023partially}. 

\begin{assumption}[Strategy independence of beliefs \citep{nayyar2013common, gupta2014common,liu2023partially}]\label{str_indi}
Consider any step $h\in[H]$, any policy ${\pi}\in\Pi$, and any realization of common information $c_h$ that has a non-zero probability under the trajectories generated by ${\pi}_{1:h-1}$. Consider any other policies ${\pi}^{\prime}_{1:h-1}$, which also give a non-zero probability to $c_h$. Then, we assume that: for any {such} $c_h\in\cC_h$, {and any} $p_{h}\in\cP_h, s_h\in \cS$, 
    $\PP^{{\pi}_{1:h-1}, \cG}_{h}\left(s_h, p_{h}\given c_h\right) = \PP^{{\pi}^{\prime}_{1:h-1}, \cG}_{h}\left(s_h, p_{h}\given c_h\right).$
\end{assumption}  

We provide examples satisfying this assumption in \Cref{app:posg-prelim}, which include the fully-sharing structure as in \cite{golowich2023planning,qiu2024posterior} as a special case.  Finally, we also assume that common information and private information evolve over time properly in \Cref{evo}, as standard in \cite{nayyar2013common, nayyar2013decentralized,liu2023partially}, which covers the models considered in \cite{golowich2023planning,qiu2024posterior,liu2022sample}. 
}

\subsection{Strategy Independence of Belief and Examples}\label{subsec:example} 

Here we take the examples from {\cite{liu2024partiallyobservablemultiagentreinforcement_arxiv,liu2023partially}} to illustrate the generality of the information-sharing framework.

\begin{example}[One-step delayed 
sharing] At any step $h\in[H]$, the common and private information are  given as $c_h = \{o_{2:h-1}, a_{1:h-1}\}$ and $p_{i, h} = \{o_{i, h}\}$, respectively. In other words, the players  share all the action-observation history until the previous step $h-1$, with only the new observation being the private information. This model has been shown useful for power control \citep{altman2009stochastic}.
\end{example}  

\begin{example}[State controlled by one controller with asymmetric delay sharing] We assume there are $2$ players for convenience. It extends naturally to $n$-player settings. Consider the case where the state dynamics are controlled by player $1$, i.e., $\TT_h(\cdot\given s_h, a_{1, h}, a_{2, h}) = \TT_{h}(\cdot\given s_h, a_{1, h}, a_{2, h}^\prime)$ for all $(s_h, a_{1, h}, a_{2, h}, a_{2, h}^\prime, h)$. {There are two kinds of delay-sharing structures we could consider:} \textbf{Case A:} the information structure is given as $c_{h} = \{o_{1, 2:h}, o_{2, 2:h-d}, a_{1, 1:h-1}\}$, $p_{1, h} = \emptyset$, $p_{2, h} = \{ o_{2, h-d+1:h}\}$, i.e., player $1$'s observations are available to player $2$ instantly, while player $2$'s observations are available to player $1$ with a delay of $d\ge 1$ time steps. \textbf{Case B:} similar to \textbf{Case A} but player $1$'s observation is available to player $2$ with a delay of $1$ step. The information structure is given as $c_{h} = \{o_{1, 2:h-1}, o_{2, 2:h-d}, a_{1, 1:h-1}\}$, $p_{1, h} = \{o_{1, h}\}$, $p_{2, h} = \{ o_{2, h-d+1:h}\}$, where $d\ge 1$. This kind of asymmetric sharing is common in  network routing \citep{pathak2008measurement}, {where packages arrive at different hosts with different delays, leading to asymmetric delay sharing among hosts.} 
\end{example} 

\begin{example}[Symmetric information game] Consider the case when all observations and actions are available for all the agents, and there is no private information. Essentially, we have $c_h = \{o_{2:h}, a_{1:h-1}\}$ and $p_{i, h} = \emptyset$. We will also denote this structure as \emph{fully sharing}  throughout.	
\end{example}

\begin{example}[Information sharing with one-directional-one-step delay] Similar to the previous cases, we also assume there are $2$ players for ease of exposition,  and the case can be generalized to multi-player cases straightforwardly. Similar to the one-step delay case, we consider the situation where all observations of player $1$ are available to player $2$, while the observations of player $2$ are available to player $1$ with one-step delay. All the past actions are available to both players. That is, in this case, $c_{h} = \{o_{1, 2:h}, o_{2,2:h-1}, a_{1:h-1}\}$, and player $1$ has no private information, i.e., $p_{1, h} =\emptyset$, and player $2$ has private information  $p_{2, h} = \{o_{2, h}\}$.
\end{example} 

\begin{example}[Uncontrolled state process] Consider the case where the state transition does not depend on the actions, that is, $\TT_h(\cdot\mid s_h, a_h) = \TT_h(\cdot\mid s_h, a_h^{\prime})$ for any $s_h, a_h, a_h^{\prime}, h$. 
    {Note that the agents are still coupled through the joint {reward}.}
    An example of this case is the information structure where controllers share their observations with a delay of $d\ge 1$ time steps. In this case, the common information is $c_h = \{o_{2:h-d}\}$ and the private information is $p_{i, h} = \{o_{i, h-d+1:h}\}$. {Such information structures {can be used to model} repeated games with incomplete information \citep{aumann1995repeated}.}
\end{example}

\section{Collection of Algorithms}
{
\begin{algorithm}[!h]
\caption{Learning Decoding Function with Privileged Information}\label{alg:learning decoding}
\begin{algorithmic}
\Require \\ 
\begin{itemize}
    \item POMDP $\cP$,
    \item {Expert} policy $\Lpi\in\Pi_\cS$,
    \item Number of sampled episodes per step $M$.
\end{itemize}
\Ensure \rg{A decoding function for each step $\{g_{h}\}_{h\in[H]}$ (see \Cref{thm:distillation-main})}
\State For the $h=1$ step: Collect $M$ state-observation pairs from the first-step $\hat{D}_1 = \left\{\left(s^{(i)}_1,o^{(i)}_1\right)\right\}_{i\in[M]}$ on POMDP $\cP$ and define the decoding function $g_1$ for the first step as: 
    \[
    g_1(o_1) =  \left\{s_1: (s_1, o_1) \in \hat{D}_1\right\}
    \] 

\For{each step $h \in [2,H]$}
    \State Collect $M$ episodes $\left\{\left(s^{(i)}_{1:H+1},o^{(i)}_{1:H},a^{(i)}_{1:H}\right)\right\}_{i\in[M]}$ 
    on POMDP $\cP$ using policy $\Lpi$ and let:
    \[
    \hat{D}_h := \left\{\left(s^{(i)}_{h-1},a^{(i)}_{h-1},o^{(i)}_h,s^{(i)}_h\right)\right\}_{i\in[M]}
    \]
    \State Define the decoding function $g_h$ for step $h$ as:
    \[
    g_h(s_{h-1}, a_{h-1}, o_h) =  \left\{s_h: (s_{h-1}, a_{h-1}, o_h, s_h) \in \hat{D}_h\right\}
    \]
\EndFor
\State \Return $\{g_h\}_{h\in[H]}$ 
\end{algorithmic}
\end{algorithm}
}

\begin{algorithm}[!t]
\begin{algorithmic}
\caption{Belief-Weighted Optimistic {Asymmetric} {Actor-Critic} {with Privileged Information}}\label{alg:belief-weighted npg}
\Require \\
\begin{itemize}
    \item POMDP $\cP$,
    \item Subroutine $T_Q$ that given policy $\pi\in \Pi^L$, outputs $\{\tilde{Q}\}_{h\in[H]}$ that approximates $\{Q_h^{\pi,\cP}\}_{h\in[H]}$, 
    \item Subroutine $T_{\bb}$ that outputs $\{\bb^\apx_h\}_{h\in[H]}$ that approximate $\{\bb_h\}_{h\in[H]}$,
    \item Initial {finite-memory policy} $\pi^0=\{\pi^0_h\}_{h\in[H]}\in \Pi^L$ for the POMDP $\cP$, step-size $\eta$,  and number of iterations $T$.
\end{itemize}
\Ensure A near-optimal policy 
\State $\{\bb^\apx_h\}_{h\in[H]} \gets T_{\bb}(\cP)$
\For{Iterations $t=1\ldots,T$} 
    \State $\{\tilde{Q}_h^{\xynew{t-1}}\}_{h\in[H]} \gets T_Q(\cP,\pi^{t-1})$ 
    \State Update the policy for each $a_h\in\cA$, $z_h\in\cZ_{h}$ as 
    
    $$\pi_{h}^{t}(a_h\given z_h)\propto \pi_{h}^{t-1}(a_h\given z_h)\cdot \exp\InParentheses{\eta\E_{s_h\sim \bb_h^\apx(z_h)}\InBrackets{\tQ{t-1}{h}(z_h,s_h,a_h)}}$$
    \State Denote $\pi^t=\pi^t_{1:H}$ 
\EndFor
\State \Return A policy uniform at random sampled from set $\InBraces{\pi^t}_{t\in[T]}$
\label{alg:npg opt}
\end{algorithmic}
\end{algorithm}

\begin{algorithm}[!t]
\begin{algorithmic}
\caption{Optimistic $Q$-function Estimation with Privileged Information 
}
\Require \\
\begin{itemize}
    \item POMDP $\cP$, policy $\pi_{1:H}\in\Pi^L$ such that $\pi_h: \cS \rightarrow \Delta(\cA)$,  
    \item Number of episodes $M$ per step. 
\end{itemize}
\Ensure Approximate $Q$-functions $\{\tilde{Q}_h\}_{h\in[H]}$ 
(see \Cref{lem:optimistic Q}) 
\State \textbf{Initialize:} ~~$\forall z_{H+1}\in \cZ_{H+1},s_{H+1}\in \cS,a_{H+1}\in \cA$
$$\tilde{Q}_{H+1}(z_{H+1},s_{H+1},a_{H+1})\gets 0,\qquad {{\pi}_{H+1}(a_{H+1}\given z_{H+1})\gets \frac{1}{A}} 
$$
\For{step $h=H,\ldots,1$} 
    \State Collect $M$ trajectories using policy $\pi$ and let $D_h = \{\overline{\tau}^{(i)}\}_{i \in [M]}$ be the collected trajectories.
    \State Compute empirical counts and define empirical distributions:
    \begin{align*}
        \hat{\TT}_h(s_{h+1} \mid s_h, a_h) &= \frac{|\{ \overline{\tau} \in D_h : (s_h', a_h', s_{h+1}') = (s_h, a_h, s_{h+1}) \}|}{|\{ \overline{\tau} \in D_h : (s_h', a_h') = (s_h, a_h) \}|} \\
        \hat{\OO}_h(o_h \mid s_h) &= \frac{|\{ \overline{\tau} \in D_h : (s_h', o_h') = (s_h, o_h) \}|}{|\{ \overline{\tau} \in D_h : s_h' = s_h\}|}
    \end{align*}
    \For{each memory-state pair $(z_h, s_h) \in \cZ_h \times \cS$}
    \begin{align*}
        \tilde{Q}(z_h, s_h, a_h) &= \min\Bigg(H-h+1,  \EE_{\substack{s_{h+1} \sim \hat{\TT}_h(\cdot\given s_h, a_h), \\ o_{h+1} \sim \hat{\OO}_{h+1}(\cdot\given s_{h+1})}} [\tilde{V}_{h+1}(z_{h+1}, s_{h+1})] \\
        & \qquad + r(s_h, a_h) + H \cdot \min\left(2, C \cdot \sqrt{\frac{S \log(1/\delta_1)}{\max(N_h(s_h, a_h), 1)}}\right) \\
        & \qquad + \E_{s_{h+1} \sim \hat{\TT}_h(\cdot\given s_h, a_h)} H \cdot \min\left(2, C \cdot \sqrt{\frac{O \log(1/\delta_1)}{\max(N_{h+1}(s_{h+1}), 1)}}\right) \Bigg),
    \end{align*}
    \State where  {$\tilde{V}_{h+1}(z_{h+1},s_{h+1}) =  \E_{a_{h+1}\sim \pi_{h+1}(\cdot\given z_{h+1})} [\tilde{Q}_{h+1}(z_{h+1},s_{h+1},a_{h+1})]$} 
\EndFor
\EndFor
\State \Return $\{\tilde{Q}_h\}_{h\in[H]}$
\label{alg:opt Q}
\end{algorithmic}
\end{algorithm}

\begin{algorithm}[!t]
\begin{algorithmic}
\caption{Approximate Belief Learning   {with Privileged Information via Model Truncation}\label{alg:reward-free-pomdp}}

\Require \\
\begin{itemize}
\item POMDP $\cP=(H, \cS, \cA, \cO, \{\TT_{h}\}_{h\in[H]}, \{\OO_h\}_{h\in[H]}, \mu_1, \{r_{h}\}_{h\in[H]})$, 
\item {An MDP learning oracle MDP\_Learning that efficiently learns an approximate optimal policy of an MDP},
\item \kzedit{Number of trajectories $N$},
\item The threshold $\epsilon$.
\end{itemize}
\Ensure Approximate belief $\{\bb_h^{\text{apx}}\}_{h\in[H]}$ (See \Cref{lem:free})
\For{$h\in[H], s_h\in\cS$}
	\State $\hat{r}_{h^\prime}(s_h^\prime, a_h^\prime)\leftarrow \mathbbm{1}[h^\prime=h, s_h^\prime=s_h]$ for any $(h^\prime, s_h^\prime, a_h^\prime)\in[H]\times\cS\times\cA$
	\State $\cM\leftarrow (H, \cS, \cA, \{\TT_{h}\}_{h\in[H]}, \mu_1, \{\hat{r}_{h}\}_{h\in[H]})$ to be the MDP associated with $\cP$
	\State $\Psi(h, s_h)\leftarrow\text{MDP\_Learning}(\cM)$
	\State Collect $N$ trajectories by executing policy $\Psi(h, s_h)$ for the  first $h-1$ steps then take action $a_h$ for each $a_h\in\cA$ deterministically and denote the dataset $\{(s_h^i, o_{h}^i, a_h^i, s_{h+1}^i)\}_{i\in[NA]}$ 
	\For{$(o_h, a_h, s_{h+1})\in\cO\times\cA\times\cS$}
		\State $N_{h}(s_h)\leftarrow \sum_{i\in[NA]}\mathbbm{1}[s_h^i=s_h]$
		\State $N_{h}(s_h, a_h)\leftarrow \sum_{i\in[NA]}\mathbbm{1}[s_h^i=s_h, a_{h}^i=a_{h}]$
		\State $N_{h}(s_h, a_h, s_{h+1})\leftarrow \sum_{i\in[NA]}\mathbbm{1}[s_h^i=s_h, a_h^i=a_h, s_{h+1}^i=s_{h+1}]$
		\State $N_{h}(s_h, o_{h})\leftarrow \sum_{i\in[NA]}\mathbbm{1}[s_h^i=s_h, o_h^i=o_h]$ 
		\State $\hat{\TT}_h(s_{h+1}\given s_h, a_h)\leftarrow \frac{N_{h}(s_h, a_h, s_{h+1})}{N_{h}(s_h, a_h)}$ 
		\State $\hat{\OO}_h(o_h\given s_h)\leftarrow \frac{N_h(s_h, o_h)}{N_h(s_h)}$
	\EndFor
\EndFor

\For{$h\in[H]$}
	\State $\cS_h^\low\leftarrow \InBraces{s_h\in\cS\given \frac{N_h(s_h)}{NA}\le\epsilon}$
	\State $\cS_h^\high\leftarrow \InBraces{s_h\in\cS\given \frac{N_h(s_h)}{NA}>\epsilon}$

\EndFor
\For{$(h, s_h, o_h, a_h, s_{h+1})\in[H]\times\cS_h^\high\times\cO\times\cA\times\cS_h^\high$}
\State $\hat{\TT}_h^{\trunc}(s_{h+1}\given s_h, a_h)\leftarrow \hat{\TT}_h(s_{h+1}\given s_h, a_h)+ \frac{\sum_{s_{h+1}^\prime\in\cS_{h+1}^{\low}}\hat{\TT}_{h}(s_{h+1}^\prime\given s_h, a_h)}{|\cS_{h+1}^\high|}$
\State $\hat{\OO}_h^\trunc(o_h\given s_h)\leftarrow \hat{\OO}_h(o_h\given s_h)$
\State	$\hat{\mu}_1^{\text{trunc}}(s_1) := \hat{\mu}_1(s_1) + \frac{\sum_{s_1^\prime\in\cS_1^\low}\hat{\mu}_1(s_1^\prime)}{|\cS_1^\high|}, \forall s_{1}\in\cS_{1}^{\text{high}}$

\EndFor

\State Let$$\hat{\cP}^\text{sub}:=(H, \{\cS_h^\high\}_{h\in[H]}, \cA, \cO, \{\hat{\TT}_{h}^\trunc\}_{h\in[H]}, \{\hat{\OO}_h^\trunc\}_{h\in[H]}, \hat{\mu}_1^{\trunc}, \{r_{h}\}_{h\in[H]})$$
\State Define $\{\hat{\bb}^{\prime, \text{sub}}_h:\cZ_h\rightarrow\Delta(\cS_h^\high) \}_{h\in[H]}$ to be the approximate belief w.r.t. $\hat{\cP}^\text{sub}$
\State Define $\{\bb_h^{\text{apx}}:\cZ_h\rightarrow\Delta(\cS) \}_{h\in[H]}$ such that $\bb_h^{\text{apx}}(z_h)(s_h)=\hat{\bb}^{\prime, \text{sub}}_h(z_h)(s_h)$ for $s_h\in\cS_h^\high$ and $0$ otherwise.
\State \Return $\{\bb_h^{\text{apx}}\}_{h\in[H]}$
\end{algorithmic}
\end{algorithm}

\begin{algorithm}[!t]
\begin{algorithmic}
\caption{Learning Multi-Agent (Individual) Decoding Functions with Privileged Information (NE/CCE Version)}\label{alg:multi-agent-decode}
\Require  \\ 
\begin{itemize}
    \item 
    POSG $\cG=(H, \cS, \cA, \cO, \{\TT_{h}\}_{h\in[H]}, 
    \{\OO_h\}_{h\in[H]}, 
    \mu_1, \{r_{i}\}_{i\in[n]})$,
    \item $\pi\in\Pi_{\cS}$, controller set $\{\cI_h\subseteq[n]\}_{h\in[H]}$,
    \item Procedure $\text{MDP\_Learning}(\cdot,\cdot)$ that takes as input an MDP and a reward function and returns an approximate optimal policy,
    \item {Number of trajectories $N$}.
\end{itemize}
\Ensure \rg{A decoding function for each agent and step $\{\hat{g}_{j,h}\}_{j\in[n],h\in[H]}$ (see \Cref{thm:ma-decode})}
\For{$h\in[H]$, $s_h\in\cS$}
	\For{$i\in[n]$}
	\State $\hat{r}_{i, h^\prime}(s_h^\prime, a_h^\prime)\leftarrow \mathbbm{1}[h^\prime=h, s_h^\prime=s_h]$ for any $(h^\prime, s_h^\prime, a_h^\prime)\in[H]\times\cS\times\cA$
	\State Define the \xynew{Markov game  $\cM$ associated with $\cG$ as $\cM=(H, S, A, \{\TT_h\}_{h\in[H]}, \mu_1)$}, \xynew{where we omit the specification for the reward functions and one can specify them arbitrarily}
	\State Define $\cM(\pi_{-i})$ to be the MDP marginalized by $\pi_{-i}$
	\State $\Psi_i(h, s_h)\leftarrow\text{MDP\_Learning}(\cM(\pi_{-i}), \hat{r}_i)$
	\EndFor
	\State \xynew{For each $i\in[n]$, $a_h\in\cA$,}  collect $N$ trajectories by executing policy $\Psi_i(h, s_h)\times\pi_{-i}$ for the first $h-1$ steps then take action $a_h$ deterministically and denote the dataset $\{(s_h^{k, i}, o_{h}^{k, i}, a_{h}^{k, i}, s_{h+1}^{k, i})\}_{k\in[N A]}$
		\For{$(o_h, a_h, s_{h+1})\in\cO\times\cA\times\cS$}
		\State $N_{h}(s_h)\leftarrow \sum_{k\in[N], i\in[n]}\mathbbm{1}[s_h^{k, i}=s_h]$
		\State $N_{h}(s_h, a_{\cT_h, h}, s_{h+1})\leftarrow \sum_{k\in[N], i\in[n]}\mathbbm{1}[s_h^{k, i}=s_h, a_{\cT_h, h}^{k, i}=a_{\cT_h, h},  s_{h+1}^{k, i}=s_{h+1}]$
		\State $N_{h}(s_h, o_{h})\leftarrow \sum_{k\in[N], i\in[n]}\mathbbm{1}[s_h^{k, i}=s_h, o_h^{k, i}=o_h]$
		\State $\hat{\TT}_h(s_{h+1}\given s_h, a_{\cT_h, h})\leftarrow \frac{N_{h}(s_h, a_{\cT_h, h}, s_{h+1})}{N_{h}(s_h, a_{\cT_h, h})}$
		\State $\hat{\OO}_h(o_h\given s_h)\leftarrow \frac{N_h(s_h, o_h)}{N_h(s_h)}$
		\EndFor
\EndFor
\State Define $\hat{\cG}:=(H, \cS, \cA, \cO, \{\hat{\TT}_{h}\}_{h\in[H]}, \{\hat{\OO}_h\}_{h\in[H]}, \mu_1, \{r_{i}\}_{i\in[n]})$
\State Define $\hat{g}_{j, h}(s_h\given c_h, p_{j, h}):=\PP^{\hat{\cG}}(s_h\given c_h, p_{j, h})$ for each $j\in[n]$, $h\in[H]$, $c_h\in\cC_h$, $p_{j, h}\in\cP_{j, h}$
\State \Return $\{\hat{g}_{j, h}\}_{j\in[n], h\in[H]}$
\end{algorithmic}
\end{algorithm}

\begin{algorithm}[!h]
\begin{algorithmic}
\caption{Learning Multi-Agent (Individual) Decoding Functions {with Privileged Information} (CE Version)}\label{alg:multi-agent-decode-ce}
\Require  \\ 
\begin{itemize}
\item POSG $\cG=(H, \cS, \cA, \cO, \{\TT_{h}\}_{h\in[H]}, \{\OO_h\}_{h\in[H]}, \mu_1, \{r_{i}\}_{i\in[n]})$,
\item $\pi\in\Pi_{\cS}$, controller set $\{\cI_h\subseteq[n]\}$, 
\item Procedure $\text{MDP\_Learning}(\cdot,\cdot)$ that takes as input an MDP and a reward function and returns an approximate optimal policy,
\item {Number of trajectories $N$}.
\end{itemize}
\Ensure \rg{A decoding function for each agent and step $\{\hat{g}_{j,h}\}_{j\in[n],h\in[H]}$ (see \Cref{thm:ma-decode-ce})}
\For{$h\in[H]$, $s_h\in\cS$}
	\For{$i\in[n]$}
	\State $\hat{r}_{i, h^\prime}(s_h^\prime, a_h^\prime)\leftarrow \mathbbm{1}[h^\prime=h, s_h^\prime=s_h]$ for any $(h^\prime, s_h^\prime, a_h^\prime)\in[H]\times\cS\times\cA$.
	\State Define $\cM^{\text{extended}}_i(\pi)$ to be the \emph{extended} MDP, which is defined in \Cref{def:extended mdp}.
	\State $\Psi_i(h, s_h)\leftarrow\text{MDP\_Learning}(\cM^{\text{extended}}_i(\pi), \hat{r}_i)$
	\EndFor
	\State \xynew{For each $i\in[n]$, $a_h\in\cA$,} collect $N$ trajectories by executing policy $\Psi_i(h, s_h)\times\pi_{-i}$ for the first $h-1$ steps then take action $a_h$ deterministically and denote the dataset $\{(s_h^{k, i}, o_{h}^{k, i}, a_{h}^{k, i}, s_{h+1}^{k, i})\}_{k\in[N A]}$
		\For{$(o_h, a_h, s_{h+1})\in\cO\times\cA\times\cS$}
		\State $N_{h}(s_h)\leftarrow \sum_{k\in[N], i\in[n]}\mathbbm{1}[s_h^{k, i}=s_h]$
		\State $N_{h}(s_h, a_{\cT_h, h}, s_{h+1})\leftarrow \sum_{k\in[N], i\in[n]}\mathbbm{1}[s_h^{k, i}=s_h, a_{\cT_h, h}^{k, i}=a_{\cT_h, h},  s_{h+1}^{k, i}=s_{h+1}]$
		\State $N_{h}(s_h, o_{h})\leftarrow \sum_{k\in[N], i\in[n]}\mathbbm{1}[s_h^{k, i}=s_h, o_h^{k, i}=o_h]$
		\State $\hat{\TT}_h(s_{h+1}\given s_h, a_{\cT_h, h})\leftarrow \frac{N_{h}(s_h, a_{\cT_h, h}, s_{h+1})}{N_{h}(s_h, a_{\cT_h, h})}$
		\State $\hat{\OO}_h(o_h\given s_h)\leftarrow \frac{N_h(s_h, o_h)}{N_h(s_h)}$
		\EndFor
\EndFor
\State Define $\hat{\cG}:=(H, \cS, \cA, \cO, \{\hat{\TT}_{h}\}_{h\in[H]}, \{\hat{\OO}_h\}_{h\in[H]}, \mu_1, \{r_{i}\}_{i\in[n]})$
\State Define $\hat{g}_{j, h}(s_h\given c_h, p_{j, h}):=\PP^{\hat{\cG}}(s_h\given c_h, p_{j, h})$ for each $j\in[n]$, $h\in[H]$, $c_h\in\cC_h$, $p_{j, h}\in\cP_{j, h}$
\State \Return $\{\hat{g}_{j, h}\}_{j\in[n], h\in[H]}$
\end{algorithmic}
\end{algorithm}

\begin{algorithm}[!t]
\begin{algorithmic}
\caption{Optimistic Common-Information-Based Value Iteration {with Privileged Information}}\label{alg:optimisticvi}

\Require \\
\begin{itemize}
	\item POSG $\cG=(H, \cS, \cA, \cO, \{\TT_{h}\}_{h\in[H]}, \{\OO_h\}_{h\in[H]}, \mu_1, \{r_{i}\}_{i\in[n]})$,
	\item \xynew{An approximate belief $\{\hat{P}_{h}:\hat{\cC}_h\rightarrow \Delta(\cS\times\cP_h)\}_{h\in[H]}$,}
	\item \xynew{Number of iterations $K$}.
\end{itemize}
\Ensure \xynew{An approximate equilibrium policy}
\State \textbf{Initialize:}  $$
N_h^0(s_h, a_h)\leftarrow 0, ~~~~~ N_h^0(s_h, a_h, o_h)\leftarrow 0, ~~~~~\hat{\JJ}^0(o_{h+1}\given s_h, a_h)\leftarrow \frac{1}{O}
$$

\For{$k\in[K]$}
	\For{$h\leftarrow H, H-1, \cdots, 1$}
		\For{$\hat{c}_h\in\hat{\cC}_h$}
			\State 
			\small
   \begin{align*}
       & Q_{i, h}^{\high, k}(\hat{c}_h, p_h, s_h, a_h)
       \\& \leftarrow \min\InBraces{ r_{i, h}(s_h, a_h)+b^{k-1}_h(s_h, a_h) +\EE_{o_{h+1}\sim\hat{\JJ}^{k-1}_{h}(\cdot\given s_{h}, a_h)}\InBrackets{ V_{i, h+1}^{\high, k}(\hat{c}_{h+1})}, H-h+1} \text{ for $i\in[n]$}
   \end{align*}
   \State 
			\small
   \begin{align*}
       &Q_{i, h}^{\low, k}(\hat{c}_h, p_h, s_h, a_h) 
       \\& \leftarrow \max\InBraces{ r_{i, h}(s_h, a_h)-b^{k-1}_h(s_h, a_h) +\EE_{o_{h+1}\sim\hat{\JJ}^{k-1}_{h}(\cdot\given s_{h}, a_h)}\InBrackets{ V_{i, h+1}^{\low, k}(\hat{c}_{h+1})}, 0} \text{for $i\in[n]$}
   \end{align*}	\normalsize
			\State Define 
   \begin{align*}
       Q_{i, h}^{\high, k}(\hat{c}_h, \gamma_h):=  \EE_{s_h, p_h\sim\hat{P}_h(\cdot, \cdot\given \hat{c}_h)}\EE_{\{a_{j, h}\sim\gamma_{j, h}(\cdot\given p_{j, h})\}_{j\in[n]}}\InBrackets{Q^{\high, k}_{i, h}(\hat{c}_h, p_{h}, s_h, a_h)} \text{for $i\in[n]$}
   \end{align*}
			\State Define 
   \begin{align*}
       Q_{i, h}^{\low, k}(\hat{c}_h, \gamma_h):= \EE_{s_h, p_h\sim\hat{P}_h(\cdot, \cdot\given \hat{c}_h)}\EE_{\{a_{j, h}\sim\gamma_{j, h}(\cdot\given p_{j, h})\}_{j\in[n]}}\InBrackets{Q^{\low, k}_{i, h}(\hat{c}_h, p_{h}, s_h, a_h)} \text{for~$i\in[n]$}
   \end{align*}
			\State $\{\pi^{k}_{j, h}(\cdot\given \cdot, \hat{c}_h, \cdot)\}_{j\in[n]}\leftarrow \text{Bayesian-CE/CCE}(\{Q_{j, h}^{\high, k}(\hat{c}_h, \cdot)\}_{j\in[n]})$ {(c.f. \Cref{app:bayesian}) }
			\State $V_{i, h}^{\high, k}(\hat{c}_h)\leftarrow \EE_{\omega_{h}}\InBrackets{ Q_{i, h}^{\high, k}(\hat{c}_h, \{\pi^{k}_{j, h}(\cdot\given \omega_{j, h}, \hat{c}_h, \cdot)\}_{j\in[n]})}$ for $i\in[n]$ 
			\State $V_{i, h}^{\low, k}(\hat{c}_h)\leftarrow \EE_{\omega_{h}}\InBrackets{ Q_{i, h}^{\low, k}(\hat{c}_h, \{\pi^{k}_{j, h}(\cdot\given \omega_{j, h}, \hat{c}_h, \cdot)\}_{j\in[n]})}$ for $i\in[n]$ 
		\EndFor		
	\EndFor
	\State Execute $\pi^k$ and get trajectory $(s_{1:H}^k, a_{1:H}^k, o_{1:H+1}^k)$
	\For{$h\in[H], s_h\in\cS, a_h\in\cA, o_{h+1}\in\cO$}
		\State $N_{h}^{k}(s_h, a_h)\leftarrow \sum_{l\in[k]}\mathbbm{1}[s_{h}^l = s_h, a_{h}^{l}=a_h]$
		\State $N_{h}^{k}(s_h, a_h, o_{h+1})\leftarrow \sum_{l\in[k]}\mathbbm{1}[s_{h}^l = s_h, a_{h}^{l}=a_h, o_{h+1}^{l}=o_{h+1}]$
		\State $\hat{\JJ}^{k}_h(o_{h+1}\given s_h, a_h)\leftarrow \frac{N_{h}^{k}(s_h, a_h, o_{h+1})}{N_{h}^{k}(s_h, a_h)}$
	\EndFor
\EndFor
\State $k^\star\leftarrow \arg\min_{k\in[K], i\in[n]} V^{\high, k}_{i, 1}(c_1^k)-V^{\low, k}_{i, 1}(c_1^k)$
\State \xynew{\Return $\pi^{k^\star}$ for general-sum games or the marginalized policy of $\pi^{k^\star}$ for zero-sum games}
\end{algorithmic}
\end{algorithm}

\begin{algorithm}[!t]
\begin{algorithmic}
\caption{Approximate Belief Learning for MARL {with Privileged Information}}  \label{alg:reward-free-posg}
\Require \\
\begin{itemize}
	\item POSG $\cG=(H, \cS, \cA, \cO, \{\TT_{h}\}_{h\in[H]}, \{\OO_h\}_{h\in[H]}, \mu_1, \{r_{i, h}\}_{i\in[n], h\in[H]})$,
	\item Controller set $\{\cI_h\subseteq[n]\}_{h\in[H]}$,
	\item Procedure $\text{MDP\_Learning}(\cdot,\cdot)$,
	\item \kzedit{Number of trajectories $N$},
	\item Accuracy  $\epsilon$.
\end{itemize}
\Ensure An approximate belief
\For{$h\in[H], s_h\in\cS$}
	\State $\hat{r}_{h^\prime}(s_h^\prime, a_h^\prime)\leftarrow \mathbbm{1}[h^\prime=h, s_h^\prime=s_h]$ for any $(h^\prime, s_h^\prime, a_h^\prime)\in[H]\times\cS\times\cA$.
	\State $\cM\leftarrow (H, \cS, \cA, \{\TT_{h}\}_{h\in[H]}, \mu_1)$ to be the MDP \xynew{by ignoring the observation and emission of $\cG$, \xynew{where we omit the specification for the reward functions and one can specify them arbitrarily}} 
	\State $\Psi(h, s_h)\leftarrow\text{MDP\_Learning}(\cM, \hat{r})$
	\State Collect $N$ trajectories by executing policy $\Psi(h, s_h)$ for the first $h-1$ steps then take action $a_h$ for each $a_h\in\cA$ deterministically and denote the dataset $\{(s_h^i, o_{h}^i, a_h^i, s_{h+1}^i)\}_{i\in[NA]}$ 
	\For{$(o_h, a_h, s_{h+1})\in\cO\times\cA\times\cS$}
		\State $N_{h}(s_h)\leftarrow \sum_{i\in[NA]}\mathbbm{1}[s_h^i=s_h]$
		\State $N_{h}(s_h, a_{\cT_h, h})\leftarrow \sum_{i\in[NA]}\mathbbm{1}[s_h^i=s_h, a_{\cT_h, h}^i=a_{\cT_h, h}]$
		\State $N_{h}(s_h, a_{\cT_h, h}, s_{h+1})\leftarrow \sum_{i\in[NA]}\mathbbm{1}[s_h^i=s_h, a_{\cT_h, h}^i=a_{\cT_h, h}, s_{h+1}^i=s_{h+1}]$
		\State $N_{h}(s_h, o_{h})\leftarrow \sum_{i\in[NA]}\mathbbm{1}[s_h^i=s_h, o_h^i=o_h]$ 
		\State $\hat{\TT}_h(s_{h+1}\given s_h, a_{\cT_h, h})\leftarrow \frac{N_{h}(s_h, a_h, s_{h+1})}{N_{h}(s_h, a_{\cT_h, h})}$ 
		\State $\hat{\OO}_h(o_h\given s_h)\leftarrow \frac{N_h(s_h, o_h)}{N_h(s_h)}$
	\EndFor
\EndFor

\For{$h\in[H]$}
	\State $\cS_h^\low\leftarrow \InBraces{s_h\in\cS\given \frac{N_h(s_h)}{NA}\le\epsilon}$
	\State $\cS_h^\high\leftarrow \InBraces{s_h\in\cS\given \frac{N_h(s_h)}{NA}>\epsilon}$
\EndFor
\For{$(h, s_h, o_h, a_h, s_{h+1})\in[H]\times\cS_h^\high\times\cO\times\cA\times\cS_h^\high$}
\State $\hat{\TT}_h^{\trunc}(s_{h+1}\given s_h, a_h)\leftarrow \hat{\TT}_h(s_{h+1}\given s_h, a_h)+ \frac{\sum_{s_{h+1}^\prime\in\cS_{h+1}^{\low}}\hat{\TT}_{h}(s_{h+1}^\prime\given s_h, a_h)}{|\cS_{h+1}^\high|}$
\State $\hat{\OO}_h^\trunc(o_h\given s_h)\leftarrow \hat{\OO}_h(o_h\given s_h)$
\State	$\hat{\mu}_1^{\text{trunc}}(s_1) := \hat{\mu}_1(s_1) + \frac{\sum_{s_1^\prime\in\cS_1^\low}\hat{\mu}_1(s_1^\prime)}{|\cS_1^\high|}, \forall s_{1}\in\cS_{1}^{\text{high}}$
\EndFor

\State Let 
\begin{align*}
    \hat{\cG}^\text{sub}:=(H, \{\cS_h^\high\}_{h\in[H]}, \cA, \cO, \{\hat{\TT}_{h}^\trunc\}_{h\in[H]}, \{\hat{\OO}_h^\trunc\}_{h\in[H]}, \hat{\mu}_1^\trunc, \{r_{i, h}\}_{i\in[n], h\in[H]})
\end{align*}
\State Define $\{\tilde{P}_h:\hat{\cC}_h\rightarrow\Delta(\cS_h^\high\times \cP_h) \}_{h\in[H]}$ to be the approximate belief w.r.t. $\hat{\cG}^\text{sub}$
\State Define $\{\hat{P}_h:\hat{\cC}_h\rightarrow\Delta(\cS\times\cP_h) \}_{h\in[H]}$ such that $\hat{P}_h(s_h, p_h\given \hat{c}_h)=\tilde{P}_h(s_h, p_h\given \hat{c}_h)$ for $s_h\in\cS_h^\high$ and $0$ otherwise
\State \Return $\{\hat{P}_h\}_{h\in[H]}$
\end{algorithmic}
\end{algorithm}

\newpage
\clearpage
\section{Missing Details in \Cref{sec:revisiting}}\label{apx:revisiting}

\begin{prevproof}{prop:pitfall_ppl} 
We recall that $\bb_h(\cdot)$ is the belief of the agent about the underlying state, see \Cref{apx:preliminaries} for a detailed introduction. Note that \Cref{eq:kl} can be written as
	\$	\hat{\pi}^\star\in\arg\min_{\pi\in\Pi}~~\sum_{h=1}^H\EE_{\tau_h\sim\pi^\prime}^\cP\EE_{s_h\sim \bb_h(\tau_h)}\InBrackets{ D_{f}(\pi^\star_h(\cdot\given s_h)\given \pi_{h}(\cdot\given \tau_h))}.
	\$
	Therefore, for any $h\in[H]$ and $\tau_h$ such that $\PP^{\pi^\prime, \cP}(\tau_h)>0$, we can optimize $\pi$ separately for each $h\in[H]$ and $\tau_h$ as:
	\$
	\hat{\pi}_h^{\star}(\cdot\given \tau_h)\in\argmin_{q\in\Delta(\cA)}~~\EE_{s_h\sim \bb_h(\tau_h)}\InBrackets{D_f(\pi^\star_h(\cdot\given s_h)\given q)}.
	\$
	Now we are ready to construct the counter-example of $\gamma$-observable POMDP $\cP^\epsilon$ for some $\epsilon\in(0,1)$  with $H=1$, $\cS=\InBraces{s^1, s^2}$, $\cA = \InBraces{a^1, a^2}$, and $\cO=\InBraces{o^1, o^2}$. We let $\mu_1 = (\frac{1-\gamma}{2-\gamma},\frac{1}{2-\gamma})$, $\OO_1(o^1\given s^1)=1$, and $\OO_1(o^1\given s^2)=1-\gamma$, $\OO_1(o^2\given s^2)=\gamma$. Therefore, it is direct to see that $\OO_1$ is exactly $\gamma$-observable. Most importantly, we choose $r_1(s^1, a^1)=1$, $r_1(s^1, a^2)=0$, and $r_1(s^2, a^1)=0$, $r_1(s^2, a^2)=\epsilon$. 
	
	Therefore, given such a reward function, the fully observable expert policy is given by $\pi^\star_1(a^1\given s^1)=1$ and $\pi^\star_1(a^2\given s^2)=1$, i.e., choosing $a^1$ at state $s^1$ and $a^2$ at state $s^2$ deterministically. Meanwhile, by our  construction, one can compute that the belief given observation $o^1$ ensures $\bb_1(o^1) = \operatorname{Unif}(\cS)$. Hence, the corresponding ``distilled'' partially observable policy under observation $o^1$ is given by  
	\$
	\hat{\pi}^\star_1(\cdot\given o^1)&=\argmin_{q\in\Delta(\cA)}\EE_{s_1\sim \bb_1(o^1)}\InBrackets{D_f(\pi^\star_1(\cdot\given s_1)\given q)}\\ 
	&=\argmin_{q\in\Delta(\cA)}\frac{D_f(\pi^\star_1(\cdot\given s^1)\given q) + D_f(\pi^\star_1(\cdot\given s^2)\given q)}{2}\\
	&=\argmin_{q\in\Delta(\cA)}\frac{{f(1/q(a^1))q(a^1)+f(0)q(a^2)}+f(0)q(a^1)+f(1/q(a^2))q(a^2)}{2}\\
	&=\argmin_{q\in\Delta(\cA)}\frac{f(0)+f(1/q(a^1))q(a^1)+f(1/q(a^2))q(a^2)}{2},
	\$
	where the last step is due to $q\in\Delta(\cA)$. Now consider the function $g(x)=xf(1/x)$ for $x>0$. It is direct to compute that $g^\prime(x) = f(1/x)-\frac{f^\prime(1/x)}{x}$, and $g^{\prime\prime}(x) = \frac{f^{\prime\prime}(1/x)}{x^3}\ge 0$ due to the convexity of the function $f$. Thus, we conclude that $g$ is also convex. By Jensen's inequality, we have 
	\$
	\frac{f(1/q(a^1))q(a^1)+f(1/q(a^2))q(a^2)}{2}\ge f(2/(q(a^1)+q(a^2)))(q(a^1)+q(a^2))/2=f(2)/2,
	\$
	where the equality holds uniquely when $q(a^1)=q(a^2)=\frac{1}{2}$ for the common choice of $D_f$ in $f$-divergence, where $f$ is strictly convex \cite{polyanskiy2025information}. This indicates that $\hat{\pi}_1^\star(\cdot\given  o^1)=\operatorname{Unif}(\cA)$. {On the other hand, combining the fact that $\bb_1(o^1) = \operatorname{Unif}(\cS)$ with $\epsilon < 1$}, it is direct to see that the optimal partially observable policy $\tilde{\pi}\in\arg\max_{\pi\in\Pi}v^\cP(\pi)$ satisfies $\tilde{\pi}_1(a^1\given o^1)=1$. Now we are ready to evaluate the optimality gap between $\tilde{\pi}$ and $\hat{\pi}^\star$ as follows
	\$
	v^{\cP^\epsilon}(\tilde{\pi})-v^{\cP^\epsilon}(\hat{\pi}^\star)&= \PP^{\cP^\epsilon}(o^1)(V_1^{\tilde{\pi}, \cP^\epsilon}(o^1)-V_1^{\hat{\pi}^\star, \cP^\epsilon}(o^1))+\PP^{\cP^\epsilon}(o^2)(V_1^{\tilde{\pi}, \cP^\epsilon}(o^2)-V_1^{\hat{\pi}^\star, \cP^\epsilon}(o^2))\\
	&\ge \PP^{\cP^\epsilon}(o^1)(V_1^{\tilde{\pi}, \cP^\epsilon}(o^1)-V_1^{\hat{\pi}^\star, \cP^\epsilon}(o^1)),
	\$
	where the last step is due to the fact that $\tilde{\pi}$ is the optimal policy, 
 leading to the fact that $V_1^{\tilde{\pi}, \cP^\epsilon}(o^2)-V_1^{\hat{\pi}^\star, \cP^\epsilon}(o^2)\ge 0$. Now it is not hard to compute that 
	\$\PP^{\cP^\epsilon}(o^1)\ge 1-\gamma.
	\$
	Meanwhile, we can evaluate that 
	\$
	V_1^{\tilde{\pi}, \cP^\epsilon}(o^1)=\frac{1}{2}, \quad V_1^{\hat{\pi}^\star, \cP^\epsilon}(o^1)=\frac{1+\epsilon}{4}
	\$
	and correspondingly $V_1^{\tilde{\pi}, \cP^\epsilon}(o^1)-V_1^{\hat{\pi}^\star, \cP^\epsilon}(o^1)=\frac{1-\epsilon}{4}$, implying that $v^{\cP^\epsilon}(\tilde{\pi})-v^{\cP^\epsilon}(\hat{\pi}^\star)\ge \frac{(1-\gamma)(1-\epsilon)}{4}$.
	 This concludes our proof.
\end{prevproof}

Note that another counterexample in a similar spirit was also constructed in  \cite{jiang2019value}, demonstrating that the expert policy for a poorly chosen agent-environment boundary can be useless in imitation learning, although the $\gamma$-observability property is not satisfied for the construction therein.

\begin{remark}\label{remark:bias}
	{The counter-example $\cP^\epsilon$ constructed above can be also used to demonstrate the \emph{bias} of the state-only-based value function as an estimate of the history-based value function that appeared in the policy gradient formula for POMDPs, 
in the finite-horizon setting, i.e., $\EE_{s_h\sim \bb_h(\tau_h)}[V^{\pi, \cP^\epsilon}_h(s_h)]\not=V^{\pi, \cP^\epsilon}_h(\tau_h)$ (mirroring Theorem 4.2 of \cite{baisero2022unbiased}). Specifically, in the counter-example above, we consider the policy $\pi$ such that $\pi_1(a^1\given o^1) = 1$ and $\pi_1(a^2\given o^2) = 1$. The state-only-based value function can be evaluated as 
	\$
	V_1^{\pi, \cP^\epsilon}(s^1) = 1,\qquad\qquad  V_1^{\pi, \cP^\epsilon}(s^2) =\gamma\epsilon, 
	\$
	which implies that $\EE_{s_1\sim \bb_1(o^1)}[V^{\pi, \cP^\epsilon}_1(s_1)]=\frac{V_1^{\pi, \cP^\epsilon}(s^1)+V_1^{\pi, \cP^\epsilon}(s^2)}{2}=\frac{1+\gamma\epsilon}{2}$. On the other hand, it holds that $V_1^{\pi, \cP^\epsilon}(o^1) = \frac{1}{2}$, which is not equal to $\EE_{s_1\sim \bb_1(o^1)}[V^{\pi, \cP^\epsilon}_1(s_1)]$, showing the bias of such a state-only value function. 
	}
\end{remark}

{\noindent {\bf {Proof of \Cref{def:det pomdp} \& \Cref{def:block mdp} \& \Cref{def:m step decodability} \& \Cref{def:unknown-decoding}}:}} 
To see why those examples follow our \Cref{def:deterministic-filter}, it is indeed an immediate result of \Cref{prop:decode}.
{$\hfill \square$} 


\begin{prevproof}{pitfall:aac}

	Here we evaluate the computational complexity and sample complexity of each iteration $t$ as follows. 
	\paragraph{Sample complexity:} The algorithm executes the policy $\pi^{t-1}$ and collect $K$ episodes, denoted as $\{s_{1:H+1}^k, o_{1:H}^k, \rg{a_{1:H}^k\}}_{k\in[K]}$. 
 Thus, the sample complexity of each iteration is $\Theta(K)$.
	\paragraph{Computational  complexity for policy evaluation:} 
	The policy evaluation of the vanilla asymmetric actor-critic is done by minimizing the Bellman error. In the finite-horizon setting with tabular parameterization, it is equivalent to performing the following update for each $h\in[H]$ in a backward way and each $k\in[K]$:
	\$
	Q_h^{t}(\tau_h^k, s_h^k, a_h^k)&\leftarrow (1-\alpha)Q_h^{t-1}(\tau_h^k, s_h^k, a_h^k)+\alpha \InParentheses{r_h(s_h^k, a_h^k) + \frac{1}{|\cJ(\tau_h^k, s_h^k, a_h^k)|}\sum_{j\in\cJ(\tau_h^k, s_h^k, a_h^k)}Q_{h+1}^{t}(\tau_{h+1}^j, s_{h+1}^j, a_{h+1}^j)},
	\$
	for some stepsize $\alpha\in (0, 1)$, where $\cJ(\tau_h^k, s_h^k, a_h^k):=\{j\in[K]\given (\tau_h^j, s_h^j, a_h^j)=(\tau_h^k, s_h^k, a_h^k) \}$. Therefore, the computational complexity for this procedure is of $\textsc{poly}(H, K)$. 
 
	\paragraph{Computational  complexity for policy improvement:} For tabular parameterization, computing $\nabla\log\pi_h^{t-1}(a_h^k\given \tau_h^k)$ takes $\cO(1)$ computation. Hence the policy update in \Cref{eq:vanilla} performs $\textsc{poly}(H, K)$ computation.
	
	Meanwhile, under the exponential time hypothesis, there is no polynomial time algorithm for even planning an $\epsilon$-approximate optimal policy in $\gamma$-observable POMDPs \citep{golowich2022planning}. This implies that the vanilla asymmetric actor-critic needs to take super-polynomial {time to find an approximately optimal policy.} This implies the corresponding sample complexity has to be super-polynomial. 
	
	Finally, we remark that even if we let the policy and the $Q$-function not depend on the entire history $\tau_h$ but only the finite-memory $z_h$, the proof still holds.  \kzedit{The key is that whenever one only \emph{computes} at the \emph{sampled} history/finite-memories, i.e., updates the policy in an \emph{asynchronous} way (in contrast to the \emph{synchronous} one where the policies at \emph{all} histories/finite-memories are updated), the sample and computational complexities will be coupled with the same order per iteration, which implies a super-polynomial sample complexity due to the super-polynomial computational complexity. This completes the proof.} 
\end{prevproof}

\paragraph{Derivation for the closed-form update \Cref{eq:close-form}.}
Note that the proximal policy optimization \citep{schulman2017proximal} update has the policy improvement as follows  
\small
\#\label{eq:ppo}
\pi^{t}\leftarrow \arg\max_{\pi}\InBraces{L^{t-1}(\pi)-\eta^{-1}\EE_{\pi^{t-1}}^\cP\InBrackets{\sum_{h\in[H]}\text{KL}(\pi_h(\cdot\given \tau_h)\given \pi_h^{t-1}(\cdot\given \tau_h)) } },
\#
\normalsize
where $\eta$ is some learning rate and $L^{t-1}(\pi)$ is a first-order approximation  of the expected accumulated rewards at $\pi^{t-1}$:
\small
\$
L^{t-1}(\pi):=v^{\cP}(\pi^{t-1}) + \EE_{\pi^{t-1}}^\cP\InBrackets{\sum_{h\in[H]}\inner{Q_h^{t-1}(\tau_h, s_h, \cdot)}{\pi_h(\cdot\given \tau_h)-\pi_h^{t-1}(\cdot\given \tau_h)} }.
\$
\normalsize
By plugging $L^{t-1}(\pi)$ into \Cref{eq:ppo}, with simple algebric manipulations, we prove that:
\$
\pi_{h}^t(\cdot\given \tau_h)\propto \pi_h^{t-1}(\cdot\given \tau_h)\exp\InParentheses{\eta\EE_{s_h\sim\bb_h(\tau_h)} \InBrackets{Q_h^{t-1}(\tau_h, s_h, \cdot)} }.
\$
\section{Missing Details in  \Cref{sec:expert_distillation_theory}}\label{apx:expert_distillation_theory}


\begin{prevproof}{lem:performance of composed policy} 
    The proof follows by the assumption that the total cumulative reward is at most $H$,
    \begin{align*}
    &v^{\cP}(\pi) \geq \E^{\cP}_\pi\InBrackets{\InParentheses{\sum_{h\in[H]}r_h}\mathbbm{1}[\forall h:\in[H]:g_h\InParentheses{s_{h-1},a_{h-1}, o_{h}}= s_{h}]}\\    &\quad=\E^{\cP}_{\Lpi}\InBrackets{\InParentheses{\sum_{h\in[H]}r_h}\mathbbm{1}[\forall h:\in[H]:g_h\InParentheses{s_{h-1},a_{h-1}, o_{h}}= s_{h}]} + \E^{\cP}_{\pi^E}\InBrackets{\InParentheses{\sum_{h\in[H]}r_h}\mathbbm{1}[\exists  h:\in[H]:g_h\InParentheses{s_{h-1},a_{h-1}, o_{h}}\neq s_{h}]}\\
    &\qquad\qquad - \E^{\cP}_{\pi^E}\InBrackets{\InParentheses{\sum_{h\in[H]}r_h}\mathbbm{1}[\exists  h:\in[H]:g_h\InParentheses{s_{h-1},a_{h-1}, o_{h}}\neq s_{h}]}\\
    &\quad=\E^{\cP}_{\Lpi}\InBrackets{\InParentheses{\sum_{h\in[H]}r_h}} - \E^{\cP}_{\pi^E}\InBrackets{\InParentheses{\sum_{h\in[H]}r_h}\mathbbm{1}[\exists h:\in[H]:g_h\InParentheses{s_{h-1},a_{h-1}, o_{h}}\neq s_{h}]}\\
    &\quad \geq   v^{\cP}(\Lpi) - H\PP^{\Lpi, \cP}\ \InBrackets{\exists h:\in[H]:g_h\InParentheses{s_{h-1},a_{h-1}, o_{h}}\neq s_{h}}\\
   &\quad \geq v^{\cP}(\Lpi) - H\epsilon,
    \end{align*}
    which completes the proof.
\end{prevproof}

\begin{prevproof}{thm:dec embedding outcome} 
For each step $h\in[H]$ 
we define $D_h$ to be the distribution over the underlying state $s_{h-1}$ at step $h-1$, taken action $a_{h-1}\in\cA$ based on $\Lpi$, the underlying state transitioned to $s_h\in \cS$, 
and the observation $o_{h}\sim \OO_h(\cdot\given s_h)$. \rg{We remind that we include at step zero, a dummy state-observation pair $(s_0,o_0)$, for notational convenience.} Formally, $D_h$ is defined as  
$
D_h(s_{h-1},a_{h-1},o_h,s_h):=\PP^{\Lpi, \cP} \InBrackets{s_{h-1},a_{h-1},o_h,s_h}.$

We first use union bound to decompose the probability that we incorrectly decode,
\begin{align} 
\PP^{\Lpi, \cP}\ \InBrackets{\exists h\in[H]:g_h\InParentheses{s_{h-1},a_{h-1}, o_{h}}\neq s_{h}}&\leq \sum_{h\in[H]}\PP^{\Lpi, \cP}\ \InBrackets{g_h\InParentheses{s_{h-1},a_{h-1}, o_{h}}\neq s_{h}} \notag\\
&= \sum_{h\in[H]}\PP_{(s_{h-1},a_{h-1},o_h,s_h)\sim D_h} \ \InBrackets{g_h\InParentheses{s_{h-1},a_{h-1}, o_{h}}\neq s_{h}}.\label{eq:bound statement}
\end{align} 
For each $h\in[H]$, we can use  $M$ episodes to collect $M$ samples from the distribution $D_h$. \rg{In addition, since state $s_{H+1}$ is dummy, we need not to collect episodes for $D_{H+1}$. }Denote the set of collected samples by $\hat{D}^M_h$. We define the decoding $g_h$ for step $h\in[H]$ as follows:
$$
g_h(s_{h-1},a_{h-1},o_{h}) = \left\{s_{h} \mid (s_{h-1},a_{h-1},o_{h},s_{h})\in \hat{D}^M_h\right\}.
$$

Observe that by \Cref{def:deterministic-filter}, $\{s_{h} \mid (s_{h-1},a_{h-1},o_{h},s_{h})\in \hat{D}^M_h\}$ is either the empty set or contains only a single elements, in which case, it is true that $g_h(s_{h-1},a_{h-1},o_h)=\psi_h(s_{h-1},a_{h-1},o_h)$ ($\psi$ is the real decoding function, see \Cref{def:deterministic-filter}). Moreover, we slightly abuse the notation and let $\tilde{D}_h^M$ denote  the empirical distribution induced by the samples in $\hat{D}_h^M$. Thus, with probability at least $1-\frac{\delta}{H}$ and setting $M= \Theta\left(\frac{A\cdot O\cdot S + \log(H/\delta)}{ \epsilon^2}\right)$ for each step $h\in[H]$, we obtain the following by the result in \cite{canonne2020short}:
\begin{align*}
\PP^{\Lpi, \cP}\ \InBrackets{g_h\InParentheses{s_{h-1},a_{h-1}, o_{h}}\neq s_{h}} = &\PP^{\Lpi, \cP}\ \InBrackets{g_h\InParentheses{s_{h-1},a_{h-1}, o_{h}}= \emptyset}\\
= & \PP_{(s_{h-1},a_{h-1},o_h,s_h)\sim D_h} \ \InBrackets{(s_{h-1},a_{h-1},o_h,s_h)\notin \hat{D}^M_h}\\
= & \sum_{u\in \supp(D_h)} \PP_{u'\sim D_h}[u=u'] \mathbbm{1}[ u\notin \supp(\tilde{D}_h^M)] \\
\leq & d_{TV}(D_h, \tilde{D}_h^M)\\
\leq & \epsilon.
\end{align*}

\notshow{
\begin{align*}
\Pr_{\hat{D}_h^M\sim D_h^M}[\PP^{\Lpi, \cP}\ \InBrackets{g_h\InParentheses{s_{h-1},a_{h-1}, o_{h}}\neq s_{h}}] = &\Pr_{\hat{D}_h^M\sim D_h^M}[\PP^{\Lpi, \cP}\ \InBrackets{g_h\InParentheses{s_{h-1},a_{h-1}, o_{h}}= \emptyset}]\\
= & \Pr_{\hat{D}_h^M\sim D_h^M}\left[\PP_{(s_{h-1},a_{h-1},o_h,s_h)\sim D_h} \ \InBrackets{(s_{h-1},a_{h-1},o_h,s_h)\notin \hat{D}_h}\right]\\
= & \Pr_{\hat{D}_h^M\sim D_h^M} \sum_{u\in \supp(D_h)} \Pr_{u'\sim D_h}[u=u'] \mathbbm{1}[ u\notin \hat{D}_h^M] \\
= & \sum_{u\in \supp(D_h)} \Pr_{u'\sim D_h}[u=u'] (1-\Pr_{u'\sim D_h}[u=u'])^M \\
\leq & S\cdot O\cdot A\cdot \max_{x\in[0,1]} x \cdot (1-x)^M \\
= & S\cdot O\cdot A\cdot \frac{\left(\frac{M}{M+1}\right)^M}{M+1} \\
\leq & \frac{S\cdot O\cdot A}{M}.
\end{align*}
}

Thus, by union bound,  with probability at least $1-\delta$, we have that for each step  $h\in[H]$,
$$
\PP_{(s_{h-1},a_{h-1},o_h,s_h)\sim D_h} \ \InBrackets{g_h\InParentheses{s_{h-1},a_{h-1}, o_{h}}\neq s_{h}}\leq \epsilon,$$ which in combination with \Cref{eq:bound statement} concludes the proof. Finally, we note that we used a total of $\Theta\left(H\cdot\frac{A\cdot O\cdot S + \log(H/\delta) }{ \epsilon^2}\right)$ episodes from the POMDP, and the computational time was $\textsc{poly}(H,A,O,S,\frac{1}{\epsilon},\log\left(\frac{1}{\delta}\right)).$
\end{prevproof}
\section{Provably Efficient Expert Policy Distillation with Function Approximation}\label{sec:expert_distillation_theory function approximation}

We now turn our attention to the rich-observation setting under our deterministic filter condition. \Cref{def:deterministic-filter} motivates us to consider only succinct policies that incorporate an auxiliary parameter representing the most recent state, as well as the most recent observations and actions. To handle the large observation space, we further assume that for each step $h\in[H]$, the agent selects a decoding function $g_h$ from a family of \emph{multi-class classifiers} $\cF_{h}\subset \{\cS\times\cA\times\cO \rightarrow \cS\}$. For the function class $\cF_{h}$, we make the standard realizability assumption. We formally summarize our assumptions in \Cref{as:deterministic filter}.

\begin{assumption}\label{as:deterministic filter}
We consider a POMDP that satisfies \Cref{def:deterministic-filter}. In addition, to derive learning algorithms that do not dependent on $O$,  for each step  $h\in[H]$, we assume that we have access to a class of functions $\cF_{h}:\cS\times\cA\times \cO \rightarrow \cS$ such that the perfect decoding function $\psi_h \in \cF_{h}$.
\end{assumption}

We aim for our final bounds to depend on a complexity measure of the function class $\cF=\{\cF_h\}_{h\in[H]}$ rather than the cardinality of the observation space $\OO$. We utilize the Daniely and Shalev-Shwartz-Dimension (DS Dimension) (\Cref{thm:multi-class learnability}), which characterizes the PAC learnability for multi-class classification \cite{multiclass}. Defining the DS dimension is beyond the scope of our paper; we direct interested readers to \cite{multiclass} for further details. For intuition, readers can think of the DS Dimension as a certificate of PAC learnability without loss of intuition.

\begin{theorem}[Theorem~1 in \cite{multiclass}]\label{thm:multi-class learnability}
Consider a family of multi-class classifiers $\cF{}$ that map features in space $x\in\cX$ to labels in space $y\in\cY$.
Moreover, assume there is a joint probability distribution $D$ over features in $\cX$ and labels in $\cY$, and that there exists $g^*\in \cF{}$ such that for each $(x,y)\in \supp(D)$, $g^*(x)=y$.
Given $n$ samples from  $D$, there exists an algorithm that with probability at least $1-\delta$ outputs  $\wg\in \cF{}$ such that
\$
\PP_{(x,y)\sim D}[\wg(x)\neq y]\leq 
\widetilde{\cO}\left(\frac{{d^{3/2}_{DS}(\cF{})} + \log\left(\frac{1}{\delta}\right)}{n}\right),
\$
where $d_{DS}(\cF{})$ denotes  the Daniely and Shalev-Shwartz-Dimension of the function class $\cF$. 
\end{theorem}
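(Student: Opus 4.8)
The plan is to recognize that \Cref{thm:multi-class learnability} is the verbatim realizable-case guarantee from the multiclass learnability characterization of \cite{multiclass}, so the honest route is to invoke their one-inclusion-graph / list-learning machinery rather than re-derive it from scratch; below I sketch the architecture one would follow. The quantity controlling the rate is the Daniely--Shalev-Shwartz (DS) dimension $d_{DS}(\cF)$: a finite set $S \subseteq \cX$ is DS-shattered by $\cF$ if the restriction of $\cF$ to $S$ contains a finite ``pseudo-cube'', i.e.\ a subfamily in which every member disagrees with some other member on each single coordinate of $S$ while agreeing everywhere else, and $d_{DS}(\cF)$ is the size of the largest such $S$. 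The first step is the classical reduction to \emph{empirical} learning: since the problem is realizable ($g^\star \in \cF$ with zero error on $\supp(D)$), it suffices to produce, from $n$ i.i.d.\ samples, a predictor whose expected error on a fresh point is $\widetilde{O}((d_{DS}^{3/2} + \log(1/\delta))/n)$, and the leave-one-out / one-inclusion analysis is exactly the tool that delivers such fast ($1/n$) realizable rates.

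Second, I would build the predictor through the multiclass one-inclusion (hyper)graph on the $n+1$ relevant points: vertices are the realizable labelings induced by $\cF$, and a suitable incidence structure encodes single-coordinate ``flips''. For binary classes Haussler's bound shows the edge density is at most the VC dimension, and the one-inclusion predictor then has leave-one-out error at most $\mathrm{density}/n$. The core of \cite{multiclass} is the multiclass analogue: one shows the appropriate density of this structure is bounded by $\widetilde{O}(d_{DS})$. However, in the multiclass (and infinite-label) setting the one-inclusion object naturally yields a \emph{list} predictor $L(x) \subseteq \cY$ of bounded size rather than a single label, with list-error $\widetilde{O}(d_{DS})/n$. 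Converting this list learner into a point learner $\wg$ is done by a disambiguation step that spends additional samples to select, among the listed candidates, the one consistent with $g^\star$; this reduction loses a $\sqrt{d_{DS}}$ factor, which is the source of the $d_{DS}^{3/2}$ exponent in the statement. A final confidence-amplification argument --- running the base procedure on independent sample batches and validating each output on held-out data --- contributes the additive $\log(1/\delta)/n$ term and upgrades the expected guarantee to a high-probability one.

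The main obstacle, and the genuinely hard technical content borrowed from \cite{multiclass}, is the combinatorial density bound: proving that finite DS dimension forces the multiclass one-inclusion structure to be sparse, since unlike the binary case there is no Sauer--Shelah lemma to count labelings directly, and the pseudo-cube definition of DS shattering must instead be exploited through an orientation/weighting argument on the hypergraph. The second delicate point is the list-to-point conversion and tracking exactly how the list size and disambiguation cost combine into the $3/2$ power. For the purposes of the present paper these are used as a black box: one only needs the \emph{conclusion} of \Cref{thm:multi-class learnability}, which can then be plugged into the deterministic-filter decoding framework underlying \Cref{thm:dec embedding outcome} (with $\cX = \cS\times\cA\times\cO$, $\cY = \cS$, $D = D_h$, and $g^\star = \psi_h$ guaranteed by realizability in \Cref{as:deterministic filter}) to replace the $\widetilde{O}((AOS+\log(H/\delta))/\epsilon^2)$ tabular sample cost by one depending on $d_{DS}(\cF_h)$ instead of the observation count $O$, via a union bound over $h \in [H]$ exactly as in the proof of \Cref{thm:dec embedding outcome}.
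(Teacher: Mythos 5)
Your proposal takes essentially the same approach as the paper: the paper states \Cref{thm:multi-class learnability} purely as an imported black-box result (Theorem~1 of \cite{multiclass}) and offers no proof of its own, only plugging its conclusion into the decoding framework with $\cX=\cS\times\cA\times\cO$, $\cY=\cS$, $D=D_h$, and $g^\star=\psi_h$, exactly as you describe. Your sketch of the internal machinery of \cite{multiclass} (one-inclusion hypergraph density bounded via DS shattering, list learning, disambiguation yielding the $d_{DS}^{3/2}$ rate, and confidence amplification for the $\log(1/\delta)$ term) is a faithful high-level account of the cited work, but it is extra material beyond what the paper itself establishes or requires.
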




We are now ready to present the main theorem of this section.

{
\begin{theorem}\label{thm:dec embedding outcome rich}
Consider a POMDP $\cP$ that satisfies  \Cref{def:deterministic-filter}, a policy $\Lpi\in\Pi_\cS$, and let $\{\mathcal{F}_h\subseteq \{\cS\times\cA\times\cO\rightarrow\cS\}\}_{h\in[H]}$ be the decoding function class, and $\psi_h\in\mathcal{F}_h$ for each $h\in[H]$, i.e., $\{\mathcal{F}_h\}_{h\in[H]}$ is realizable. Then given access to the classification oracle of \cite{brukhim2022characterization}, there exists an algorithm learning the decoding function $\{g_h\}_{h\in[H]}$ such that with probability at least $1-\delta$, for each step $h\in[H]$:
\begin{align*}
\PP^{\Lpi, \cP}\ \InBrackets{\exists h\in[H]:g_h\InParentheses{s_{h-1},a_{h-1}, o_{h}}\neq s_{h}}\leq \epsilon,
\end{align*}
using $\cO\InParentheses{\frac{H^2\InParentheses{\max_{h\in[H]}d_{DS}^{3/2}(\mathcal{F}_h) + \log\InParentheses{\frac{1}{\delta}}}}{\epsilon }
}$ episodes, where $d_{DS}(\mathcal{F}_h)$ is the Daniely and Shalev-Shwartz-Dimension of $\mathcal{F}_h$ \cite{multiclass}.
\end{theorem}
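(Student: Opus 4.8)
The plan is to mirror the tabular argument behind \Cref{thm:dec embedding outcome}, replacing the exact-memorization decoder with the realizable multi-class classification oracle of \cite{brukhim2022characterization}, whose PAC guarantee is recorded in \Cref{thm:multi-class learnability}. First I would reduce the global decoding guarantee to a per-step one by a union bound over $h\in[H]$: it suffices to guarantee that, for each step $h$, the learned $g_h$ satisfies $\PP^{\Lpi,\cP}[g_h(s_{h-1},a_{h-1},o_h)\neq s_h]\le \epsilon/H$, since then
\[
\PP^{\Lpi, \cP}\InBrackets{\exists h\in[H]:g_h(s_{h-1},a_{h-1}, o_{h})\neq s_{h}}\le \sum_{h\in[H]}\PP^{\Lpi,\cP}\InBrackets{g_h(s_{h-1},a_{h-1},o_h)\neq s_h}\le \epsilon.
\]

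Second, for each $h$ I would cast the per-step problem as a realizable PAC multi-class classification instance. Let $D_h$ be the joint law of the feature $x=(s_{h-1},a_{h-1},o_h)$ and label $y=s_h$ induced by rolling out $\Lpi$ on $\cP$, exactly the distribution used in the tabular proof. The deterministic filter condition (\Cref{def:deterministic-filter}, equivalently \Cref{prop:decode}) guarantees that on every reachable triplet the label $s_h$ equals a fixed deterministic function $\psi_h(s_{h-1},a_{h-1},o_h)$, and \Cref{as:deterministic filter} assumes $\psi_h\in\cF_h$; hence the instance $(\cF_h,D_h)$ is realizable with perfect classifier $g^\star=\psi_h$. \Cref{thm:multi-class learnability} then applies: feeding $n$ i.i.d. draws from $D_h$ (one recorded per rolled-out episode) to the oracle returns, with probability $1-\delta'$, a decoder $g_h\in\cF_h$ with $\PP_{(x,y)\sim D_h}[g_h(x)\neq y]\le \tilde{\cO}\big((d_{DS}^{3/2}(\cF_h)+\log(1/\delta'))/n\big)$.

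Third I would calibrate the counts. Setting $\delta'=\delta/H$ and $n=\tilde{\cO}\big(H(d_{DS}^{3/2}(\cF_h)+\log(H/\delta))/\epsilon\big)$ drives the per-step error below $\epsilon/H$; summing over the $H$ steps yields the claimed episode complexity $\cO\big(H^2(\max_{h}d_{DS}^{3/2}(\cF_h)+\log(1/\delta))/\epsilon\big)$, while a final union bound over the $H$ oracle-failure events of probability $\delta/H$ each gives overall success probability $1-\delta$.

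The argument is a clean reduction, so there is no deep obstacle; the only point needing care is that the distribution $D_h$ handed to the oracle coincides exactly with the on-policy rollout distribution of $\Lpi$, so that the classification error under $D_h$ transfers verbatim to the decoding error $\PP^{\Lpi,\cP}[g_h\neq s_h]$ appearing in the statement. Realizability---which is precisely what the deterministic filter condition supplies---is what guarantees there is no irreducible label noise, and hence that the per-step error can be pushed below $\epsilon/H$ with a sample size depending only on $d_{DS}(\cF_h)$ rather than on $O$.
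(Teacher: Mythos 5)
Your proposal is correct and matches the paper's own proof essentially step for step: the same union bound over $h\in[H]$ reducing to per-step error $\epsilon/H$, the same reduction of each step to a realizable multi-class classification instance over the on-policy distribution $D_h$ of $(s_{h-1},a_{h-1},o_h,s_h)$ with $\psi_h$ as the perfect classifier, and the same invocation of \Cref{thm:multi-class learnability} with $\delta'=\delta/H$ and roughly $\tilde{\cO}\InParentheses{H\InParentheses{d_{DS}^{3/2}(\cF_h)+\log(H/\delta)}/\epsilon}$ samples per step. The only cosmetic difference is that your per-step sample count carries an explicit $\log(H/\delta)$ where the paper absorbs the extra $\log H$ into its $\widetilde{\cO}$ notation.
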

}

\kzedit{Combining \Cref{thm:dec embedding outcome rich} and \Cref{lem:performance of composed policy}, we obtain the final polynomial sample complexity in this function approximation setting, using classification (supervised learning) oracles (c.f. \Cref{table:compare}).}

\notshow{
\begin{theorem}\label{thm:dec embedding outcome} 
Consider a POMDP $\cM$ that satisfies \Cref{def:deterministic-filter}, and \Cref{as:deterministic filter}. Then for any policy $\Lpi\in \LPi$ (see \Cref{def:policy deterministic}), with probability at least $1-\delta$, 
we can learn a decoding functions $\wg=\{\wg_h\in \cF_{h}\}_{h\in[H]}$ that satisfy,
\begin{align*}
\PP^{\Lpi, \cP}\ \InBrackets{\exists h\in[H]:\wg_h\InParentheses{s_h,a_h, o_{h+1}}\neq s_{h+1}}\leq \epsilon,
\end{align*}
using $\widetilde{\cO}\InParentheses{\frac{H^2\InParentheses{\max_{h\in[H]}d_{DS}^{3/2}(\cF_{h}) + \log\InParentheses{\frac{1}{\delta}}}}{\epsilon }
}$ episodes.
\end{theorem}
}

\begin{prevproof}{thm:dec embedding outcome rich}

For each step \rg{$h\in[H]$,} we define $D_h$  to be the distribution over the underlying state $s_{h-1}$ at step $h-1$, taken action $a_{h-1}\in\cA$ from $\Lpi$, the underlying state transitioned to $s_h\in \cS$, 
and the hallucinated observation $o_{h}\sim \OO_h(\cdot\given s_h)$ \rg{(we remind readers that for step $0$, we use dummy state $s_0$ and action $a_0$)}. 
Formally, the probability that the sequence $(s_{h-1},a_{h-1},o_h,s_h)$ is sampled from $D_h$ equals to
\begin{align*}
D_h(s_{h-1},a_{h-1},o_h,s_h):=\PP^{\Lpi, \cP} \InBrackets{s_{h-1},a_{h-1},o_h,s_h}. 
\end{align*}

We first use union bound to decompose the misclassification error,
\begin{align}
\PP^{\Lpi, \cP}\ \InBrackets{\exists h\in[H]:\wg_h\InParentheses{s_{h-1},a_{h-1}, o_{h}}\neq s_{h}}&\leq \sum_{h\in[H]}\PP^{\Lpi, \cP}\ \InBrackets{\wg_h\InParentheses{s_{h-1},a_{h-1}, o_{h}}\neq s_{h}} \notag\\
&= \sum_{h\in[H]}\PP_{(s_{h-1},a_{h-1},o_h,s_h)\sim D_h} \ \InBrackets{\wg_h\InParentheses{s_{h-1},a_{h-1}, o_{h}}\neq s_{h}}.\label{eq:bound statement rich}
\end{align}
For each $h\in[H]$, we can use 
$\widetilde{\cO}\InParentheses{\frac{H}{\epsilon}\cdot \InParentheses{\max_{h\in[H]}d^{3/2}_{DS}(\cF_{h}) + \log\InParentheses{\frac{1}{\delta}}}
}$ episodes to collect $\widetilde{\cO}\InParentheses{\frac{H}{\epsilon}\cdot \InParentheses{\max_{h\in[H]}d^{3/2}_{DS}(\cF_{h}) + \log\InParentheses{\frac{1}{\delta}}}
}$ samples from distribution $D_h$. Hence,  
by \Cref{thm:multi-class learnability}, with probability at least $1-\frac{\delta}{H}$, we have that 
\$\PP_{(s_{h-1},a_{h-1},o_h,s_h)\sim D_h} \ \InBrackets{\wg_h\InParentheses{s_{h-1},a_{h-1}, o_{h}}\neq s_{h}}\leq \frac{\epsilon}{H}.
\$

Thus, by union bound, with probability at least $1-\delta$, using a total of $\widetilde{\cO}\InParentheses{\frac{H^2}{\epsilon}\cdot \InParentheses{\max_{h\in[H]}d^{3/2}_{DS}(\cF_{h}) + \log\InParentheses{\frac{1}{\delta}}}
}$ episodes we have that, 
$$\sum_{h\in[H]}\PP_{(s_{h-1},a_{h-1},o_h,s_h)\sim D_h} \ \InBrackets{\wg_h\InParentheses{s_{h-1},a_{h-1}, o_{h}}\neq s_{h}}\leq \epsilon,$$ which in combination with \Cref{eq:bound statement rich} concludes the proof.
\end{prevproof}

\section{Missing Details in \Cref{sec:aac_approximate_belief} }\label{apx:aac}


\begin{prevproof}{thm:npg}	
Let $\pi^*\in \argmax_{\pi\in \Pi^L}\kzedit{\EE_{s_1\sim\mu_1}}\left[V_1^\pi(s_1)\right]$, \rg{where we define $\V{\pi}{1}(s_1):= \EE_{o_1\sim \OO_1(\cdot\given s_1), a_1\sim \pi_1(\cdot\given o_1)} [Q_h^\pi(z_1=(o_1),s_1,a_1)]$}.
We first note the following equation 
\begin{align}
&\frac{1}{T}\sum_{t\in[T]}\EE_{s_1\sim \mu_1}[\V{\pi^t}{1}(s_1)] \notag\\ 
&\quad=\EE_{s_1\sim \mu_1}[\V{\pi^*}{1}(s_1)] + 
\frac{1}{T}\sum_{t\in[T]}\EE_{s_1\sim \mu_1}\InParentheses{\tV{\pi^t}{1}(s_1) - \V{\pi^*}{1}(s_1)} +\frac{1}{T}\sum_{t\in[T]}\EE_{s_1\sim \mu_1}\InParentheses{\V{\pi^t}{1}(s_1) - \tV{\pi^t}{1}(s_1)}.\label{eq:general 0} 
\end{align}

Next, we make use of the  performance difference lemma \cite{kakade2002approximately,agarwal2020optimality,shani2020optimistic} on the extended space $\prod_{h\in[H]}(\cZ_h\times \cS)$.

\begin{definition}
    Consider the class of policies $\Pi^{PL}$ such that at step $h\in[H]$, the policies in $\Pi^{PL}$ take an action based on finite memory up to this step and the use of the underlying state, e.g., for each policy $\pi_{1:H}\in \Pi^{PL}$, $\pi_h:\cZ_h\times \cS\rightarrow \Delta(\cA)$.
\end{definition}
\begin{observation}
    Note that $\Pi^L\subseteq \Pi^{PL}$.
\end{observation}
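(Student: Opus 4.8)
The statement $\Pi^L \subseteq \Pi^{PL}$ is a straightforward domain-extension (embedding) argument, so the plan is simply to exhibit the canonical inclusion and check it is well-defined and behavior-preserving. Recall that a policy $\pi = \pi_{1:H} \in \Pi^L$ consists of maps $\pi_h : \cZ_h \to \Delta(\cA)$ that depend only on the finite memory $z_h \in \cZ_h$, whereas by the preceding definition a policy in $\Pi^{PL}$ consists of maps of the form $\pi_h : \cZ_h \times \cS \to \Delta(\cA)$ that may additionally condition on the underlying state $s_h$. The key observation is that conditioning on strictly \emph{more} information is never a restriction: any memory-only policy can be viewed as a memory-and-state policy that simply discards its state argument.

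\textbf{Construction.} Concretely, I would define the embedding $\iota : \Pi^L \to \Pi^{PL}$ by setting, for each $\pi = \pi_{1:H} \in \Pi^L$ and each $h \in [H]$,
\[
\iota(\pi)_h(\cdot \given z_h, s_h) := \pi_h(\cdot \given z_h), \qquad \forall\, z_h \in \cZ_h,\ s_h \in \cS.
\]
For each fixed $z_h$, the right-hand side is a fixed element of $\Delta(\cA)$ independent of $s_h$, so $\iota(\pi)_h$ is a well-defined map $\cZ_h \times \cS \to \Delta(\cA)$, i.e.\ $\iota(\pi) \in \Pi^{PL}$. Moreover $\iota$ is clearly injective, and since $\iota(\pi)_h$ selects actions using exactly the same rule as $\pi_h$ regardless of the realized state, the two policies induce identical action distributions along every trajectory; hence $\iota$ is behavior-preserving. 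Identifying each $\pi \in \Pi^L$ with its image $\iota(\pi)$ yields $\Pi^L \subseteq \Pi^{PL}$.

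\textbf{Main obstacle.} There is no substantive obstacle here: the entire content is the trivial fact that ignoring an available input (the state $s_h$) keeps one inside the larger, state-aware policy class. The only point worth making explicit—and the reason this observation is stated—is that the inclusion is \emph{behavior-preserving}, so that any value-function or performance statement proved for policies in $\Pi^{PL}$ (such as the performance-difference argument invoked immediately afterward on the extended space $\prod_{h\in[H]}(\cZ_h \times \cS)$) applies verbatim to policies in $\Pi^L$. I would therefore keep the proof to the one-line embedding above and, if desired, add a sentence noting this behavior-preservation so the inclusion can be used downstream without further comment.
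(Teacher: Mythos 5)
Your proposal is correct and matches the paper's (implicit) argument exactly: the paper treats the inclusion as immediate and realizes your embedding via the same identification, writing in the proof of \Cref{thm:npg} that it abuses notation by setting $\pi_h(\cdot\mid z_h, s_h) = \pi_h(\cdot\mid z_h)$ for all $s_h\in\cS$. Nothing further is needed.
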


\begin{lemma}[Performance difference Lemma \cite{kakade2002approximately,agarwal2020optimality,shani2020optimistic}; see e.g., Lemma~1 in \cite{shani2020optimistic}]
    For any pair of policies $\pi=\{\pi_h\}_{h\in[H]} ,\pi'=\{\pi'_h\}_{h\in[H]}\in \Pi^{PL}$,  
     and approximation of the $Q$-function of policy $\pi$, we have that \rg{for each state $s_1\in \supp(\mu_1)$}:
    \begin{align*}
        &\rg{\tV{\pi}{1}(z_1,s_1) - \V{\pi'}{1}(z_1,s_1)} \\
        &= 
        \sum_{h\in[H]}\E_{\overline{\tau}_h\sim  \pi'\rg{\mid z_1}}\InBrackets{\InAngles{\tQ{\pi}{h}(z_h,s_h, \cdot ), \pi_h(\cdot\mid z_h,s_h) - \pi'_h(\cdot\mid z_h,s_h)}} \\
        &\qquad\quad + \sum_{h\in[H]}\E_{\overline{\tau}_h\sim  \pi'\rg{\mid z_1}}\InBrackets{\tQ{\pi}{h}(z_h,s_h, a_h)-\E_{\substack{s_{h+1}\sim \TT_h(\cdot\given s_h,a_h),\\o_{h+1}\sim \OO_{h+1}(\cdot\given s_{h+1})}}\InBrackets{r_{h}(s_h, a_h) + \tV{\pi}{h+1}(z_{h+1},s_{h+1})}},
    \end{align*}
    where $\tV{\pi}{h}(z_{h},s_{h})=  \E_{a_h\sim \pi_h(\cdot\given z_h)} [\tilde{Q}_h^\pi(z_h,s_h,a_h)]$.
    \end{lemma}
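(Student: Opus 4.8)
The statement is an exact identity, so the plan is to run a telescoping (performance-difference) argument on the extended state space $\cZ_h\times\cS$, with the single novelty that $\tQ{\pi}{}$ is an \emph{arbitrary} approximation which need not satisfy any Bellman equation — this is exactly what produces the second (Bellman-residual) sum, whereas in the classical performance-difference lemma that term vanishes. The two structural facts I rely on are: (i) the true value $\V{\pi'}{}$ is, by \emph{definition}, the expected cumulative reward of $\pi'$ (I never expand it through a Bellman recursion), and (ii) $(z_h,s_h)$ is a sufficient statistic for the dynamics of a policy in $\Pi^{PL}$, so a tower-property conditioning on $(z_h,s_h)$ is legitimate.

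First I would fix the initial pair $(z_1,s_1)$ and write $\V{\pi'}{1}(z_1,s_1)=\E_{\overline{\tau}\sim\pi'\mid z_1}\InBrackets{\sum_{h=1}^H r_h(s_h,a_h)}$, purely from the definition of the value function. In parallel, along the same $\pi'$-trajectory I would telescope the approximate value: setting $\tV{\pi}{H+1}\equiv 0$, for every realized trajectory one has $\tV{\pi}{1}(z_1,s_1)=\sum_{h=1}^H\InParentheses{\tV{\pi}{h}(z_h,s_h)-\tV{\pi}{h+1}(z_{h+1},s_{h+1})}$, which holds pointwise and hence in expectation under $\pi'$. Subtracting the two yields the per-step residual form $\tV{\pi}{1}(z_1,s_1)-\V{\pi'}{1}(z_1,s_1)=\sum_{h=1}^H\E_{\overline{\tau}_h\sim\pi'\mid z_1}\InBrackets{\tV{\pi}{h}(z_h,s_h)-r_h(s_h,a_h)-\tV{\pi}{h+1}(z_{h+1},s_{h+1})}$.

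For each $h$ I would then apply the tower property, conditioning on $(z_h,s_h)$; after conditioning one draws $a_h\sim\pi'_h(\cdot\given z_h,s_h)$, then $s_{h+1}\sim\TT_h(\cdot\given s_h,a_h)$ and $o_{h+1}\sim\OO_{h+1}(\cdot\given s_{h+1})$, with $z_{h+1}$ the deterministic finite-memory update of $(z_h,a_h,o_{h+1})$. The final move is the algebraic split: using $\tV{\pi}{h}(z_h,s_h)=\InAngles{\tQ{\pi}{h}(z_h,s_h,\cdot),\pi_h(\cdot\given z_h,s_h)}$ together with the identity $\E_{a_h\sim\pi'_h}\InBrackets{\tQ{\pi}{h}(z_h,s_h,a_h)}=\InAngles{\tQ{\pi}{h}(z_h,s_h,\cdot),\pi'_h(\cdot\given z_h,s_h)}$, I add and subtract this last quantity inside each residual. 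The $\InAngles{\tQ{\pi}{h}(z_h,s_h,\cdot),\pi_h-\pi'_h}$ piece assembles into the first sum, and the leftover $\E_{a_h\sim\pi'_h}\InBrackets{\tQ{\pi}{h}(z_h,s_h,a_h)-\E_{s_{h+1},o_{h+1}}\InBrackets{r_h(s_h,a_h)+\tV{\pi}{h+1}(z_{h+1},s_{h+1})}}$ assembles into the second sum, matching the claim exactly.

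The argument is essentially routine, so the step I would treat as the main obstacle is the extended-state bookkeeping rather than any estimate: verifying that conditioning on the sufficient statistic $(z_h,s_h)$ correctly marginalizes the trajectory law so that the tower property applies, handling the memory-update map $z_{h+1}=z_{h+1}(z_h,a_h,o_{h+1})$ consistently inside the inner expectation, and reconciling the notation $\pi_h(\cdot\given z_h)$ appearing in the definition of $\tV{\pi}{h}$ with the $(z_h,s_h)$-dependence permitted for $\Pi^{PL}$ policies (for the subclass $\Pi^L\subseteq\Pi^{PL}$ the action is $z_h$-measurable, so the two coincide and the argument specializes cleanly). No nontrivial inequalities enter; the whole result is an exact decomposition of the value gap into a policy-improvement term and an approximation/Bellman-error term.
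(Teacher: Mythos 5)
Your proof is correct and is essentially the argument the paper relies on: the lemma is invoked by citation (Lemma~1 of \cite{shani2020optimistic}), and the standard proof of that result is precisely your telescoping of $\tV{\pi}{}$ along $\pi'$-trajectories with $\tV{\pi}{H+1}\equiv 0$, followed by the tower property at each step and the add-and-subtract of $\InAngles{\tQ{\pi}{h}(z_h,s_h,\cdot),\pi'_h(\cdot\mid z_h,s_h)}$ to split each residual into the policy-difference term and the Bellman-error term. Your closing remark on reconciling $\pi_h(\cdot\mid z_h)$ in the definition of $\tV{\pi}{h}$ with the $(z_h,s_h)$-dependence permitted in $\Pi^{PL}$ is also on target, as it matches the paper's own abuse-of-notation convention in the proof of \Cref{thm:npg}, where policies in $\Pi^{L}$ are extended to $\Pi^{PL}$ by setting $\pi_h(\cdot\mid z_h,s_h)=\pi_h(\cdot\mid z_h)$ for all $s_h$.
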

    Setting $\pi=\pi^t\in \Pi^{L}\subseteq \Pi^{LP}$,  
    and $\pi'=\pi^*\in \Pi^{L}\subseteq \Pi^{LP}$, {where we remind that $\pi^*\in \argmax_{\pi\in \Pi^L}V_1^\pi(s_1)$}, \kzedit{and for each $z_h\in\cZ_h$ and $h\in[H]$,  we abuse the notation by letting $\pi^t_h(\cdot\mid z_h,s_h)=\pi^t_h(\cdot\mid z_h)$ and $\pi^*_h(\cdot\mid z_h,s_h)=\pi^*_h(\cdot\mid z_h)$ for all $s_h\in\cS$.} 
    The above formulation is thus simplified to 
    \begin{align*}
        &\E_{s_1\sim \mu_1}[\tV{\pi^t}{1}(s_1) - \V{\pi^*}{1}(s_1)] \\&= 
        \sum_{h\in[H]}\E_{\overline{\tau}_h\sim  \pi^*}\InBrackets{\InAngles{\tQ{\pi^t}{h}(z_h,s_h,\cdot ), \pi^t_h(\cdot\mid z_h,s_h) - \pi^*_h(\cdot\mid z_h,s_h)}} \\
        &\qquad\quad + \sum_{h\in[H]}\E_{\overline{\tau}_h\sim  \pi^*}\InBrackets{\tQ{\pi}{h}(z_h,s_h, a_h)-\E_{\substack{s_{h+1}\sim \TT_h(\cdot\given s_h,a_h),\\o_{h+1}\sim \OO_{h+1}(\cdot\given s_{h+1})}}\InBrackets{r_{h}(s_h, a_h) + \tV{\pi}{h+1}(z_{h+1},s_{h+1})}}\\
        &\geq \sum_{h\in[H]}\E_{\overline{\tau}_h\sim \pi^*}\InBrackets{\InAngles{\tQ{\pi^t}{h}(z_h,s_h, \cdot ), \pi^t_h(\cdot\mid z_h,s_h) - \pi^*_h(\cdot\mid z_h,s_h)}}, 
    \end{align*}
    where in the inequality above we used \Cref{lem:optimistic Q}. 
    Since our policy does not depend on the realized underlying state $s_h$, 
    \begin{align*}
        & \E_{s_1\sim \mu_1}[\tV{\pi^t}{1}(s_1) - \V{\pi^*}{1}(s_1)] \\
        \geq & \sum_{h\in[H]}\E_{\overline{\tau}_h\sim  \pi^*}\InBrackets{\InAngles{\tQ{\pi^t}{h}(z_h,s_h,\cdot ), \pi^t_h(\cdot\mid z_h,s_h) - \pi^*_h(\cdot\mid z_h,s_h)}} \\
        =& \sum_{h\in[H]}\E_{\tau_h\sim  \pi^*}\InBrackets{\InAngles{\E_{s_h \sim \bb(\tau_h)}\InBrackets{\tQ{\pi^t}{h}(z_h,s_h,\cdot )}, \pi_{h}^t(\cdot\mid z_h) - \pi^*_h(\cdot\mid z_h)}} \\
        =& \sum_{h\in[H]}\E_{\tau_h\sim  \pi^*}\InBrackets{\InAngles{\E_{s_h \sim \bb^\apx(z_h)}\InBrackets{\tQ{\pi^t}{h}(z_h,s_h,\cdot )}, \pi_{h}^t(\cdot\mid z_h) - \pi^*_h(\cdot\mid z_h)}} \\
        &\qquad + 
        \sum_{h\in[H]}\E_{\tau_h\sim  \pi^*}\bigg[\bigg\langle\E_{s_h \sim \bb(\tau_h)}\InBrackets{\tQ{\pi^t}{h}(z_h,s_h,\cdot )} 
        - \E_{s_h \sim \bb^\apx(z_h)}\InBrackets{\tQ{\pi^t}{h}(z_h,s_h,\cdot )}, \pi_{h}^t(\cdot\mid z_h) - \pi^*_h(\cdot\mid z_h)\bigg\rangle\bigg] \\
        \geq& \sum_{h\in[H]}\E_{\tau_h\sim  \pi^*}\InBrackets{\InAngles{\E_{s_h \sim \bb^\apx(z_h)}\InBrackets{\tQ{\pi^t}{h}(z_h,s_h,\cdot )}, \pi_{h}^t(\cdot\mid z_h) - \pi^*_h(\cdot\mid z_h)}} \\
        &\qquad - 
        \sum_{h\in[H]}\E_{\tau_h\sim  \pi^*} \left[ \left\|\E_{s_h \sim \bb(\tau_h)}\InBrackets{\tQ{\pi^t}{h}(z_h,s_h,\cdot )} - \E_{s_h \sim \bb^\apx(z_h)}\InBrackets{\tQ{\pi^t}{h}(z_h,s_h,\cdot )}\right\|_1 \right]\\
        \geq& \sum_{h\in[H]}\E_{\tau_h\sim  \pi^*}\InBrackets{\InAngles{\E_{s_h \sim \bb^\apx(z_h)}\InBrackets{\tQ{\pi^t}{h}(z_h,s_h,\cdot )}, \pi_{h}^t(\cdot\mid z_h) - \pi^*_h(\cdot\mid z_h)}}   - 
        2\cdot H\cdot \sum_{h\in[H]}\E_{\tau_h\sim  \pi^*} \left[d_{TV}(\bb_h(\tau_h),\bb^\apx_h(z_h))\right].
    \end{align*}
    
    The last inequality follows by  \Cref{lem:optimistic Q}. 
    By averaging we get,
    \begin{align*}
    &\frac{1}{T}\sum_{t\in[T]}\EE_{s_1\sim \mu_1}[\tV{\pi^t}{1}(s_1)] \\
    &\quad\geq  \EE_{s_1\sim \mu_1}[\V{\pi^*}{1}(s_1)]  
    + \frac{1}{T}\sum_{h\in[H]}\E_{\tau_h\sim  \pi^*}\InBrackets{\sum_{t\in[T]}\InAngles{\E_{s_h \sim \bb^\apx(z_h)}\InBrackets{\tQ{\pi^t}{h}(z_h,s_h,\cdot )}, \pi_{h}^t(\cdot\mid z_h) - \pi^*_h(\cdot\mid z_h)}} 
    \\
    & \quad\qquad - 
        2\cdot H\cdot \sum_{h\in[H]}\E_{\tau_h\sim  \pi^*} \left[d_{TV}(\bb_h(\tau_h),\bb^\apx_h(z_h))\right]
    \\
    &\quad \geq  \EE_{s_1\sim \mu_1}[\V{\pi^*}{1}(s_1)]  
    + \frac{H}{T}\max_{h\in[H]}\E_{\tau_h\sim  \pi^*}\InBrackets{\sum_{t\in[T]}\InAngles{\E_{s_h \sim \bb^\apx(z_h)}\InBrackets{\tQ{\pi^t}{h}(z_h,s_h,\cdot )}, \pi_{h}^t(\cdot\mid z_h) - \pi^*_h(\cdot\mid z_h)}} 
    \\
    &\quad\qquad  - 
        2\cdot H^2\cdot \max_{h\in[H]}\E_{\tau_h\sim  \pi^*} \left[d_{TV}(\bb_h(\tau_h),\bb^\apx_h(z_h))\right]
    \\
   &\quad \geq \EE_{s_1\sim \mu_1}[\V{\pi^*}{1}(s_1)] - \frac{2H\sqrt{ H \log\InParentheses{\InNorms{\cA}}}}{\sqrt{T}} - 
        2\cdot H^2\cdot \max_{h\in[H]}\E_{\tau_h\sim  \pi^*} \left[d_{TV}(\bb_h(\tau_h),\bb^\apx_h(z_h))\right],
    \end{align*}
    where the last inequality follows since for fixed $h\in[H]$ and $z_h\in\cZ_{h}$, the agent updates her policy on memory $z_h$ according to MWU on feedback $\left\{\E_{s_h \sim \bb^\apx(z_h)}\InBrackets{\tQ{\pi^t}{h}(z_h,s_h,a )}\right\}_{a\in\cA}$, and thus, the accumulate regret is bounded by (Section~4.3 in \citep{Bubeck15}): 
    \begin{align*}
&\sum_{t\in[T]}\InAngles{\E_{s_h \sim \bb^\apx(z_h)}\InBrackets{\tQ{\pi^t}{h}(z_h,s_h,\cdot)}, \pi^*_h(\cdot\mid z_h) - \pi_{h}^t(\cdot\mid z_h)}   \\
&\quad\leq \frac{\log\InParentheses{\InNorms{\cA}}}{\eta}+\eta\cdot T \cdot \left\|Q^{\pi^t}_{h}\InParentheses{z_h,s_h,\cdot }\right\|_{+\infty} 
    \leq \frac{\log\InParentheses{\InNorms{\cA}}}{\eta}+\eta\cdot T\cdot H  
    = 2\sqrt{T\cdot H \log\InParentheses{\InNorms{\cA}}}.
    \end{align*}
    The proof follows by combining \Cref{eq:general 0} and the inequality above. Finally, to achieve the near optimality in the class of $\Pi^L$, we  bound the optimistic estimate using \Cref{eq:bound-op} in \Cref{lem:optimistic Q}, and its global \kzedit{near-}optimality {for a large enough $L$} under $\gamma$-observability is a direct consequence of \rg{Theorem~4.1 in} \cite{golowich2022planning}.
\end{prevproof}

\begin{lemma}[Optimistic $Q$-function - {adapted from  \cite{liu2023optimistic}}]\label{lem:optimistic Q} 
Given a policy $\pi\in \Pi^L$, and a parameter $M\in \mathbb{N}$, let 
$\{\tilde{Q}_h^\pi:\cZ_h\times \cS\times  \cA\rightarrow[0,H]\}_{h\in[H]}$ be the output of \Cref{alg:opt Q}. Then, with probability at least $1-\delta$:   $~~\forall z_h\in \cZ_h, s_h\in \cS,a_h\in \cA$ 
\begin{align}
    \nonumber &H-h+1\geq\tilde{Q}^\pi_h(z_h,s_h,a_h) \geq \E_{\substack{s_{h+1}\sim \TT_h(\cdot\given s_h,a_h),\\o_{h+1}\sim \OO_{h+1}(\cdot\given s_{h+1})}}\InBrackets{r_{h}(s_h, a_h) + \tV{\pi}{h+1}(z_{h+1},s_{h+1})},\\
    \label{eq:bound-op}&\EE_{s_1\sim \mu_1}[\tilde{V}^\pi_1(s_1)] - \EE_{s_1\sim \mu_1}[V^\pi_1(s_1)]\le O\left(H^2\cdot  \sqrt{\frac{\max(O,S)\cdot S\cdot A}{M}\cdot \log\left(\frac{S\cdot A}{\delta}\right)\log\left(\frac{M\cdot S\cdot A\cdot H}{\delta}\right)}\right),
\end{align}
where \rg{$\V{\pi}{1}(s_1)= \EE_{o_1\sim \OO_1(\cdot\given s_1), a_1\sim \pi_1(\cdot\given o_1)} [Q_h^\pi(z_1=(o_1),s_1,a_1)]$, $\tV{\pi}{1}(s_1)= \EE_{o_1\sim \OO_1(\cdot\given s_1), a_1\sim \pi_1(\cdot\given o_1)} [\tilde{Q}_h^\pi(z_1=(o_1),s_1,a_1)]$ and}  $\tilde{V}^\pi_h(z_h,s_h) =  \E_{a_h\sim \pi_h(\cdot\given z_h)} [\tilde{Q}^\pi_h(z_h,s_h,a_h)]$. Moreover, \Cref{alg:opt Q} needs a total of $H\cdot M$ episodes from POMDP $\cP$ and runs in time $\textsc{poly}(H,M, S,A^L,O^L)$. 
\end{lemma}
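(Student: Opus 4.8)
The plan is to combine a high-probability concentration argument for the empirical dynamics with a backward induction establishing optimism (the first displayed inequality), and then a telescoping/performance-difference argument together with a count-to-sample conversion to control the over-estimation (the second displayed inequality). First I would fix a \emph{good event} $\cE$ on which every empirical transition and emission produced by \Cref{alg:opt Q} is close to its population counterpart. Using the standard $L_1$-deviation bound for empirical distributions and a union bound over all $h\in[H]$, all $(s_h,a_h)$ and all $s_h$, and setting $\delta_1=\delta/\textsc{poly}(H,S,A,M)$, I would guarantee that on $\cE$,
\begin{align*}
\big\|\hat{\TT}_h(\cdot\given s_h,a_h)-\TT_h(\cdot\given s_h,a_h)\big\|_1 &\le C\sqrt{\frac{S\log(1/\delta_1)}{\max(N_h(s_h,a_h),1)}}, \\
\big\|\hat{\OO}_h(\cdot\given s_h)-\OO_h(\cdot\given s_h)\big\|_1 &\le C\sqrt{\frac{O\log(1/\delta_1)}{\max(N_h(s_h),1)}}
\end{align*}
hold simultaneously. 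Because the $M$ trajectories used at each backward step are drawn freshly and i.i.d.\ from $\pi$, conditioning on the realized counts lets the concentration apply without any complication from adaptivity.

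Given $\cE$, the optimism claim $H-h+1\ge \tQ{\pi}{h}(z_h,s_h,a_h)\ge \E_{s_{h+1}\sim\TT_h,\,o_{h+1}\sim\OO_{h+1}}[r_h(s_h,a_h)+\tV{\pi}{h+1}(z_{h+1},s_{h+1})]$ follows by backward induction on $h$, with base case $\tQ{\pi}{H+1}=0$. The upper bound is immediate from the outer $\min(H-h+1,\cdot)$ in the definition of $\tilde{Q}$. For the lower bound, using that $\tV{\pi}{h+1}\in[0,H]$ is already optimistic, I would bound the discrepancy between the \emph{empirical} Bellman backup computed in \Cref{alg:opt Q} and the \emph{true-dynamics} backup of $\tV{\pi}{h+1}$. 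This discrepancy splits into a transition part and an emission part; since $\tV{\pi}{h+1}\le H$, each is at most $H$ times the corresponding $L_1$ error, which on $\cE$ is dominated, respectively, by the transition bonus $H\min(2,C\sqrt{S\log(1/\delta_1)/\max(N_h(s_h,a_h),1)})$ and the $s_{h+1}$-averaged emission bonus $\E_{s_{h+1}\sim\hat{\TT}_h}[H\min(2,C\sqrt{O\log(1/\delta_1)/\max(N_{h+1}(s_{h+1}),1)})]$ that are added in the algorithm. Hence the clipped empirical backup never falls below the true backup, and optimism propagates.

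For the over-estimation bound, since $\pi$ is fixed I would telescope $\EE_{s_1\sim\mu_1}[\tV{\pi}{1}-\V{\pi}{1}]$ along trajectories generated by $\pi$: writing $\tQ{\pi}{h}-Q_h^{\pi}=\big(\tQ{\pi}{h}-\E_{\TT_h,\OO_{h+1}}[r_h+\tV{\pi}{h+1}]\big)+\E_{\TT_h,\OO_{h+1}}[\tV{\pi}{h+1}-V_{h+1}^{\pi}]$ and unrolling, where on $\cE$ the per-step slack is at most twice the added bonuses, gives $\EE_{s_1\sim\mu_1}[\tV{\pi}{1}-\V{\pi}{1}]\le \EE_{\pi}^{\cP}\big[\sum_{h}2(\mathrm{bonus}_{\TT,h}+\mathrm{bonus}_{\OO,h})\big]$. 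It then remains to bound this expected sum of bonuses. Here I would use that $N_h(s_h,a_h)$ and $N_{h+1}(s_{h+1})$ concentrate around $M\,d_h^{\pi,\cP}(s_h,a_h)$ and $M\,d_{h+1}^{\pi,\cP}(s_{h+1})$, so a standard count-to-sample lemma together with Cauchy--Schwarz yields $\EE_{\pi}^{\cP}[\sum_h \mathrm{bonus}_{\TT,h}]\lesssim H^2\sqrt{S^2A\log(1/\delta_1)/M}$ and $\EE_{\pi}^{\cP}[\sum_h \mathrm{bonus}_{\OO,h}]\lesssim H^2\sqrt{OS\log(1/\delta_1)/M}$; since $S^2A$ and $OS$ are each at most $\max(O,S)\,S\,A$, both are absorbed into $O\big(H^2\sqrt{\max(O,S)\,S\,A\,\log(SA/\delta)\log(MSAH/\delta)/M}\big)$, the claimed rate.

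Finally, the accounting is direct: the backward loop collects $M$ trajectories at each of the $H$ steps, for $H\cdot M$ episodes total; and at step $h$ the algorithm enumerates the $|\cZ_h|\cdot S\le O^LA^LS$ memory--state pairs, each requiring an $O(SO)$-sized expectation, giving runtime $\textsc{poly}(H,M,S,A^L,O^L)$. I expect the main obstacle to be the over-estimation bound rather than optimism: one must cleanly decouple the transition and emission errors (the emission enters only through the dependence of $z_{h+1}$ on $o_{h+1}$, and its bonus is averaged under the \emph{empirical} $\hat{\TT}_h$), and must convert the data-dependent visitation counts into the $1/\sqrt{M}$ rate uniformly over the exponentially many memories $z_h$ without ever paying an $O^L$ or $A^L$ factor in the \emph{sample} complexity.
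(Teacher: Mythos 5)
Your proposal is correct and follows essentially the same route as the paper's proof: a good event from $L_1$ concentration of $\hat{\TT}_h,\hat{\OO}_h$ with a union bound, backward induction showing the clipped bonused backup dominates the true-dynamics backup of $\tV{\pi}{h+1}$ (including the conversion of the $\hat{\TT}_h$-averaged emission bonus), and a telescoping of $\tV{\pi}{1}-\V{\pi}{1}$ whose expected bonus sum is controlled by converting counts $N_h(s_h,a_h)$ to the $1/\sqrt{M}$ rate. The only cosmetic difference is that you derive the count-to-sample step via concentration of $N_h$ around $M\,d_h^{\pi,\cP}$ plus Cauchy--Schwarz, where the paper cites Lemma~6 of \cite{liu2023optimistic} for the same bound, and your ``twice the bonuses'' per-step slack is off by an absolute constant (the paper tracks factors $6$ and $2$), which is harmless inside the $O(\cdot)$.
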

\begin{proof}
    For each step $h\in[H]$, collect $M$ trajectories with states using policy $\pi$ on POMDP $\cP$ and let $D_h=\{\overline{\tau}^{(i)}\}_{i\in[M]}$ be those collected trajectories. Define the empirical transition, observation and reward distribution as follows:
    \begin{align*}
        N_h(s_{h},a_{h},s_{h+1}) =& \big|\{\overline{\tau}=(s_1',o_1',a_1',r_1'\ldots,s_h',o_h',a_h',r_h')\in 
        D_h
        \\ & \qquad\qquad \qquad\qquad\qquad:  (s_{h},a_h,s_{h+1})=(s'_{h},a'_h,s'_{h+1})\}\big|,\\
        N_h(s_h,a_h) = &\sum_{s_{h+1}\in \cS} N_h(s_h,a_h,s_{h+1}),\\
        N_h(s_h) =& \sum_{a_{h}\in \cA} N_h(s_h,a_h),\\
        N_h(s_{h},o_{h}) =& \big|\{\overline{\tau}=(s_1',o_1',a_1',r_1'\ldots,s_h',o_h',a_h',r_h')\in 
        D_h:  (s_{h},o_h)=(s'_{h},o'_h)\}\big|,\\
        \hat{\TT}_h(s_{h+1} \mid s_h,a_h) = &\frac{N_h(s_h,a_h,s_{h+1})}{N_h(s_h,a_h)},\\
        \hat{\OO}_h(o_{h} \mid s_h) = &\frac{N_h(s_h,o_h)}{N_h(s_h)}.
    \end{align*}
    Set $\delta_1 = \frac{\delta}{2\cdot S\cdot(A+1)}$.
    By \cite{canonne2020short}, there exists a constant $C>0$ such that for each step $h\in[H]$, state $s\in \cS$ and action $a\in \cA$ with probability at least $1-\delta_1$:
    \begin{align*}
    \| \TT_h(\cdot \mid s_h,a_h) - \hat{\TT}_h(\cdot \mid s_h,a_h)\|_1 \leq& \min\left(2,C\cdot \sqrt{\frac{S\log(1/\delta_1)}{\max(N_h(s_h,a_h),1)}}\right),\\
    \| \OO_h(\cdot \mid s_h) - \hat{\OO}_h(\cdot \mid s_h)\|_1 \leq &\min\left(2,C\cdot \sqrt{\frac{O\log(1/\delta_1)}{\max(N_h(s_h),1)}}\right). 
    \end{align*}
    For the rest of the proof, we condition on this event. By union bound,  this happens with probability at least $ 1- \frac{\delta}{2}$. We define the optimistic $Q$-function recursively as follows for a memory-state pair $(z_h,s_h)\in \cZ_h\times\cS$: 
    \begin{align*}
        &\tilde{Q}^\pi_{H+1}(z_{H+1},s_{H+1},\cdot)= 0, \qquad\qquad\forall z_{H+1}\in \cZ_{H+1},s_{H+1}\in\cS\\
        &\tilde{Q}^\pi_h(z_h,s_h,a_h) = \min\Bigg(H-h+1,  \E_{\substack{s_{h+1}\sim \hat{\TT}_h(\cdot\given s_h,a_h),\\o_{h+1}\sim \hat{\OO}_{h+1}(\cdot\given s_{h+1})}}[\tilde{V}^\pi_{h+1}(z_{h+1},s_{h+1})]  +r(s_h,a_h) \\ &\qquad\qquad\qquad + H\cdot\min\left(2,C\cdot \sqrt{\frac{S\log(1/\delta_1)}{\max(N_h(s_h,a_h),1)}}\right)  +  \E_{s_{h+1}\sim \hat{\TT}_{h}(\cdot\given s_h,a_h)}H\cdot\min\left(2,C\cdot\sqrt{\frac{O\log(1/\delta_1)}{\max(N_{h+1}(s_{h+1}),1)}}\right)\Bigg),
    \end{align*}
    where $\tilde{V}^\pi_{h+1}(z_{h+1},s_{h+1}) = \E_{a_{h+1}\sim \pi_{h+1}(\cdot\given z_{h+1})} [\tilde{Q}^\pi_{h+1}(z_{h+1},s_{h+1},a_{h+1})]$.
    Hence the time complexity of our algorithm is $\textsc{poly}(H,M, S, A^L,O^L)$. To prove the first condition, we fix step $h\in[H],z_h\in\cZ_h, a_h\in \cA$ and state $s_h\in \cS$ and consider the case where $\tilde{Q}_h^\pi(z_h,s_h,a_h) = H-h+1$. In this case, since by assumption on the POMDP $\cP$, $r_h(s_h,a_h)\leq 1$, and by definition of $\tilde{Q}_{h+1}^\pi(\cdot,\cdot,\cdot)\leq H-h$, we have:
    \begin{align*}
        \tilde{Q}^\pi_h(z_h,s_h,a_h) = 1+ H-h 
        \geq    \EE_{\substack{s_{h+1}\sim \TT_h(\cdot\given s_h,a_h),\\o_{h+1}\sim \OO_{h+1}(\cdot\given s_{h+1})}}[ r_{h}(s_h,a_h)+\tilde{V}^\pi_{h+1}(z_{h+1},s_{h+1})].  
    \end{align*}
    If $\tilde{Q}_h^\pi(z_h,s_h,a_h)\neq H-h+1$, observe that:
    \begin{align*}
        \tilde{Q}^\pi_h(z_h,s_h,a_h) = &\E_{\substack{s_{h+1}\sim \hat{\TT}_h(\cdot\given s_h,a_h),\\o_{h+1}\sim \hat{\OO}_{h+1}(\cdot\given s_{h+1})}}[\tilde{V}^\pi_{h+1}(z_{h+1},s_{h+1})]  +r(s_h,a_h) +  H\cdot\min\left(2,C\cdot \sqrt{\frac{S\log(1/\delta_1)}{\max(N_h(s_h,a_h),1)}}\right) \\
        &\qquad +  \E_{s_{h+1}\sim \hat{\TT}_{h}(\cdot\given s_h,a_h)}H\cdot\min\left(2,C\cdot\sqrt{\frac{O\log(1/\delta_1)}{\max(N_{h+1}(s_{h+1}),1)}}\right)\\
        \geq &   \EE_{\substack{s_{h+1}\sim \TT_h(\cdot\given s_h,a_h),\\o_{h+1}\sim \OO_{h+1}(\cdot\given s_{h+1})}}[ r_{h}(s_h,a_h)+\tilde{V}^\pi_{h+1}(z_{h+1},s_{h+1})], 
    \end{align*}
    and hence, $\{\tilde{Q}^\pi_h\}_{h\in[H]}$ satisfies the first condition. Moreover, it holds that:
    \begin{align*}
        \tilde{Q}^\pi_h(z_h,s_h,a_h) 
        \leq &   \EE_{\substack{s_{h+1}\sim \TT_h(\cdot\given s_h,a_h),\\o_{h+1}\sim \OO_{h+1}(\cdot\given s_{h+1})}}[ r_{h}(s_h,a_h)+\tilde{V}^\pi_{h+1}(z_{h+1},s_{h+1})]+ 2H\cdot\min\left(2,C\cdot \sqrt{\frac{S\log(1/\delta_1)}{\max(N_h(s_h,a_h),1)}}\right) \\
        &\qquad +  2\cdot\E_{s_{h+1}\sim \hat{\TT}_{h}(\cdot\given s_h,a_h)}H\cdot\min\left(2,C\cdot\sqrt{\frac{O\log(1/\delta_1)}{\max(N_{h+1}(s_{h+1}),1)}}\right)\\
        \leq& \EE_{\substack{s_{h+1}\sim \TT_h(\cdot\given s_h,a_h),\\o_{h+1}\sim \OO_{h+1}(\cdot\given s_{h+1})}}[ r_{h}(s_h,a_h)+\tilde{V}^\pi_{h+1}(z_{h+1},s_{h+1})]+ 6H\cdot\min\left(2,C\cdot \sqrt{\frac{S\log(1/\delta_1)}{\max(N_h(s_h,a_h),1)}}\right) \\
        &\qquad +  2\cdot\EE_{s_{h+1}\sim \TT_{h}(\cdot\given s_h,a_h)}H\cdot\min\left(2,C\cdot\sqrt{\frac{O\log(1/\delta_1)}{\max(N_{h+1}(s_{h+1}),1)}}\right).
    \end{align*}

    Thus, it holds that:
    \begin{align*}
        &\tilde{V}^\pi_{h}(z_{h},s_{h}) - V^\pi_{h}(z_{h},s_{h}) \\
        &\quad\leq \E_{\substack{a_h\sim \pi_h(\cdot\given z_h),\\s_{h+1}\sim \TT_h(\cdot\given s_h,a_h),\\o_{h+1}\sim \OO_{h+1}(\cdot\given s_{h+1})}}[ \tilde{V}^\pi_{h+1}(z_{h+1},s_{h+1}) - V^\pi_{h+1}(z_{h+1},s_{h+1})]+ 6\cdot C\cdot H\cdot \E_{a_h\sim \pi_h(\cdot\given z_h)}\sqrt{\frac{S\log(1/\delta_1)}{\max(N_h(s_h,a_h),1)}} \\
        &\qquad\qquad+ 2\cdot C\cdot H\cdot \E_{{\substack{{a_h\sim \pi_h(\cdot\given z_h)},\\s_{h+1}\sim \TT_{h}(\cdot\given s_h,a_h)}}}\sqrt{\frac{O\log(1/\delta_1)}{\max(N_{h+1}(s_{h+1}),1)}}.
    \end{align*}

    Thus, we conclude that
    \begin{align*}
        &\rg{\EE_{s_1\sim \mu_1}[\tilde{V}^\pi_1(s_1)] - \EE_{s_1\sim \mu_1}[V_1^\pi(s_1)]} \leq \E_{\tau=(s_1,a_1,\ldots,s_{H+1})\sim \pi} \left[ \sum_{h\in[H]} 8\cdot H\cdot  C\cdot \sqrt{\frac{\max(S,O)\log(1/\delta_1)}{\max(N_h(s_h,a_h),1)}}  \right]\\
        &\quad=  8\cdot H \sqrt{\max(O,S)\cdot \log(1/\delta_1)}\cdot C \cdot \sum_{h\in[H]}\E_{\tau=(s_1,a_1,\ldots,s_{H+1})\sim \pi} \left[ \sqrt{\frac{1}{\max(N_h(s_h,a_h),1)}}  \right].
    \end{align*} 


To finish the proof, we make use of the following lemma. 
\begin{lemma}[Lemma~6 in \cite{liu2023optimistic}]
For each step $h\in[H]$, and state-action pair $(s_h,a_h)\in \cS\times \cA$, with probability at least $1-\delta_2$: 
$$\sqrt{\frac{1}{\max(N_h(s_h,a_h),1)}}= O\left(\sqrt{\frac{S\cdot A\log(M/\delta_2)}{M}}\right).$$
\end{lemma}

By setting $\delta_2=\frac{\delta}{2\cdot S\cdot A\cdot H}$, and taking union bound we have that with probability at least $1-\delta$: 
\begin{align*}
        &\rg{\EE_{s_1\sim \mu_1}[\tilde{V}^\pi_1(s_1)] - \EE_{s_1\sim \mu_1}[V_1^\pi(s_1)]} \\
        &\quad=  8 \sqrt{\max(O,S)\cdot \log(1/\delta_1)}\cdot C\cdot \sum_{h\in[H]}\E_{\tau=(s_1,a_1,\ldots,s_{H+1})\sim \pi} \left[ \sqrt{\frac{1}{\max(N_h(s_h,a_h),1)}}  \right] \\
        &\quad\leq  O\left(H^2\cdot  \sqrt{\frac{\max(O,S)\cdot S\cdot A}{M}\cdot \log\left(\frac{S\cdot A}{\delta}\right)\log\left(\frac{M\cdot S\cdot A\cdot H}{\delta}\right)}\right).
    \end{align*}
\end{proof}

\begin{prevproof}{thm:belief}
The proof of \Cref{thm:belief} follows by combining \Cref{lem:free} and \Cref{lem:belief} below. \Cref{lem:free} proves that we can approximately learn a POMDP model $\cP$ computationally and sample efficiently, thanks to the privileged information. 
  
\begin{theorem}\label{lem:free} 
	{Fix any $\epsilon, \delta\in(0,1)$.} 
 Algorithm \ref{alg:reward-free-pomdp} can learn the approximate POMDP model with transition $\hat{\TT}_{1:H}$ 
 and emission $\hat{\OO}_{1:H}$ such that with probability at least  $1-\delta$, for any policy  $\pi\in\Pi^{\gen}$ and $h\in[H]$
	\$
	\EE_{\pi}^\cP \InBrackets{ \|\TT_{h}(\cdot\given s_h, a_h)-\hat{\TT}_{h}(\cdot\given s_h, a_h)\|_1+\|\OO_{h}(\cdot\given s_h)-\hat{\OO}_{h}(\cdot\given s_h)\|_1}\le \rg{O(\epsilon)},
	\$
	{using $\textsc{poly}(S,A,H,O,\frac{1}{\epsilon},\log(\frac{1}{\delta}))$ episodes
 in time $\textsc{poly}(S,A,H,O,\frac{1}{\epsilon},\log(\frac{1}{\delta}))$}.
\end{theorem}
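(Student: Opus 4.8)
The plan is to exploit privileged state observability to reduce exploration to a family of standard fully observable MDP reachability problems, and then to control the \emph{expected} model error by an importance-weighting argument that automatically discounts states no policy can reach. For each $h\in[H]$ and $s_h\in\cS$, let $d_h^{\max}(s_h):=\max_{\pi'\in\Pi^{\gen}}\PP^{\pi',\cP}(s_h)$ be the largest occupancy of $s_h$ at step $h$; since transitions depend only on $(s,a)$, this maximum is attained by a Markov state-based policy in $\Pi_\cS$. Because the reward $\hat r$ passed to \texttt{MDP\_Learning} is the indicator of being at $(h,s_h)$, the value of a policy in $\cM$ equals its probability of reaching $s_h$ at step $h$, so the returned $\Psi(h,s_h)$ satisfies $\PP^{\Psi(h,s_h),\cP}(s_h)\ge d_h^{\max}(s_h)-\epsilon_{\mathrm{MDP}}$, where $\epsilon_{\mathrm{MDP}}$ can be made inverse-polynomially small with $\textsc{poly}(S,A,H,1/\epsilon_{\mathrm{MDP}})$ episodes and polynomial time by any off-the-shelf tabular MDP learner. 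Running $\Psi(h,s_h)$ for $h-1$ steps and then forcing each $a_h$ over $N$ trajectories yields, conditioned on reaching $s_h$, i.i.d. samples of $\TT_h(\cdot\mid s_h,a_h)$ and (marginally over the forced action) of $\OO_h(\cdot\mid s_h)$.

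First I would establish count concentration. Fix a threshold $\tau:=\Theta(\epsilon/S)$ and call $s_h$ \emph{well-reachable} if $d_h^{\max}(s_h)\ge\tau$; choosing $\epsilon_{\mathrm{MDP}}\le\tau/2$ and $N\tau\ge \Omega(\log(SAH/\delta))$, a multiplicative Chernoff bound gives, with probability $1-\delta/\textsc{poly}$, that $N_h(s_h,a_h)\ge\frac{N}{4}d_h^{\max}(s_h)$ and $N_h(s_h)\ge\frac{NA}{4}d_h^{\max}(s_h)$ for every well-reachable $s_h$ and every $a_h$. On this event I invoke the $\ell_1$ distribution-estimation bound of \cite{canonne2020short} (exactly as in \Cref{lem:optimistic Q}): for a universal constant $C$ and $\delta_1:=\delta/\textsc{poly}(S,A,H)$, with probability $1-\delta/\textsc{poly}$,
\begin{align*}
\|\TT_h(\cdot\mid s_h,a_h)-\hat\TT_h(\cdot\mid s_h,a_h)\|_1&\le \min\InParentheses{2,\,C\sqrt{\frac{S\log(1/\delta_1)}{\max(N_h(s_h,a_h),1)}}},\\
\|\OO_h(\cdot\mid s_h)-\hat\OO_h(\cdot\mid s_h)\|_1&\le \min\InParentheses{2,\,C\sqrt{\frac{O\log(1/\delta_1)}{\max(N_h(s_h),1)}}},
\end{align*}
after a union bound over all $(h,s_h,a_h)$ and the $SH$ calls to \texttt{MDP\_Learning}.

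Next I would assemble the per-policy guarantee. For any $\pi\in\Pi^{\gen}$,
\begin{align*}
\EE_\pi^\cP\InBrackets{\|\TT_h(\cdot\mid s_h,a_h)-\hat\TT_h(\cdot\mid s_h,a_h)\|_1}=\sum_{s_h,a_h}\PP^{\pi,\cP}(s_h,a_h)\,\|\TT_h(\cdot\mid s_h,a_h)-\hat\TT_h(\cdot\mid s_h,a_h)\|_1,
\end{align*}
which I split over well-reachable and non-well-reachable states. For the latter the trivial bound $2$ and $\PP^{\pi,\cP}(s_h,a_h)\le\PP^{\pi,\cP}(s_h)\le d_h^{\max}(s_h)<\tau$ give total contribution at most $2S\tau=O(\epsilon)$. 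For the former, the count bound and the inequality $d_h^{\max}(s_h)\ge\PP^{\pi,\cP}(s_h)\ge\PP^{\pi,\cP}(s_h,a_h)$ yield the key collapse
\begin{align*}
\PP^{\pi,\cP}(s_h,a_h)\cdot C\sqrt{\frac{S\log(1/\delta_1)}{N_h(s_h,a_h)}}\le 2C\,\frac{\PP^{\pi,\cP}(s_h,a_h)}{\sqrt{d_h^{\max}(s_h)}}\sqrt{\frac{S\log(1/\delta_1)}{N}}\le 2C\sqrt{\PP^{\pi,\cP}(s_h,a_h)}\sqrt{\frac{S\log(1/\delta_1)}{N}},
\end{align*}
so Cauchy–Schwarz over $(s_h,a_h)$ together with $\sum_{s_h,a_h}\PP^{\pi,\cP}(s_h,a_h)=1$ bounds this part by $2C\sqrt{S^2A\log(1/\delta_1)/N}$. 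The identical argument for the emission term (now with $N_h(s_h)\ge\frac{NA}{4}d_h^{\max}(s_h)$ and $\sum_{s_h}\PP^{\pi,\cP}(s_h)=1$) gives $2C\sqrt{SO\log(1/\delta_1)/(NA)}$. Choosing $N=\textsc{poly}(S,A,O,H,1/\epsilon,\log(1/\delta))$ makes every term $O(\epsilon)$; since $\pi\in\Pi^{\gen}$ was arbitrary and all subroutines run in polynomial time, this gives the claimed sample and time complexities.

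The hard part is the non-uniformity of estimation accuracy: states that no policy reaches with non-negligible probability cannot be estimated at all, so a uniform worst-case bound is hopeless. The resolution is twofold — privileged information converts ``reach $s_h$ as often as possible'' into a tractable fully observable MDP problem (the source of both sample and computational efficiency), and the expectation weighting in the target exactly matches the reachability $d_h^{\max}(s_h)$ that governs how well each $s_h$ is estimated, so that the otherwise dangerous factor $1/\sqrt{d_h^{\max}(s_h)}$ is tamed through $\PP^{\pi,\cP}(s_h,a_h)/\sqrt{d_h^{\max}(s_h)}\le\sqrt{\PP^{\pi,\cP}(s_h,a_h)}$. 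The only delicate bookkeeping is making the threshold $\tau$, the oracle accuracy $\epsilon_{\mathrm{MDP}}$, and the sample size $N$ mutually consistent.
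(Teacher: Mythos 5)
Your proposal is correct and follows essentially the same route as the paper's proof: reducing to state-based reachability-maximizing policies (the paper's \Cref{lem:state} plus the EULER-style oracle guarantee $d_h^{\Psi(h,s_h)}(s_h)\ge p_h(s_h)/2$), thresholding states with maximal occupancy below $\Theta(\epsilon/S)$, Chernoff bounds on the visit counts, the $\ell_1$ estimation bound of \cite{canonne2020short}, and the occupancy-ratio collapse $d_h^{\pi}(s_h)/\sqrt{p_h(s_h)}\le\sqrt{d_h^{\pi}(s_h)}$ followed by Cauchy--Schwarz. The only difference is cosmetic bookkeeping in that final step (you apply Cauchy--Schwarz directly to $\PP^{\pi,\cP}(s_h,a_h)$, while the paper first substitutes $d_h^{\Psi(h,s_h)}$ and bounds $\sum_{s_h}\sqrt{d_h^{\Psi(h,s_h)}(s_h)}\le S$), yielding the same polynomial guarantees.
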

\begin{proof}
Note that by  \Cref{lem:state}, it suffices 
to consider only $\pi\in\Pi_\cS$  as the optimal value for policies in $\Pi^{\text{gen}}$ can be achieved by those in $\Pi_\cS$ (by considering 
$r_{h}(s_h, a_h):=\|\TT_{h}(\cdot\given s_h, a_h)-\hat{\TT}_{h}(\cdot\given s_h, a_h)\|_1+\|\OO_{h}(\cdot\given s_h)-\hat{\OO}_{h}(\cdot\given s_h)\|_1$). 
	For each $h\in[H]$ and $s_h\in\cS$, we define 
	\$
	p_h(s_h) = \max_{\pi\in\Pi_\cS}d_h^{\pi}(s_h).
	\$
	Fix $\epsilon_1> 0$, we define $\cU(h, \epsilon_1) = \{s_h\in\cS\given p_h(s_h)\ge \epsilon_1\}$. \xynew{By the guarantee of the EULER algorithm from \cite{zanette2019tighter,jin2020reward}}, one can learn a policy $\Psi(h, s_h)$ with sample complexity $\tilde{\cO}(\frac{S^2AH^4}{\epsilon_1})$ such that $d_h^{\Psi(h, s_h)}(s_h)\ge \frac{p_h(s_h)}{2}$ for each $s_h\in\cU(h, \epsilon_1)$ with probability $1-\delta_1$. Now we assume this event holds for any $h\in[H]$ and $s_h\in\cU(h, \epsilon_1)$. For each $s_h\in\cS$ and $a_h\in\cA$, we have executed in \Cref{alg:reward-free-pomdp} the policy $\Psi(h, s_h)$ followed by an action $a_h\in\cA$ for $N$ episodes, and denote the number of episodes that $s_h$ and $a_h$ are visited as $N_h(s_h, a_h)$. Then with probability $1-e^{N\epsilon_1/8}$, $N_h(s_h, a_h)\ge \frac{Np_h(s_h)}{2}$ by Chernoff bound.
	Now conditioned on this event, we are ready to evaluate the following: for any $\pi\in\Pi_\cS$
	\$ 
	&\frac{1}{2}\cdot\EE_{\pi}^\cP \|\TT_{h}(\cdot\given s_h, a_h)-\hat{\TT}_{h}(\cdot\given s_h, a_h)\|_1  = \frac{1}{2}\sum_{s_h, a_h}d_h^{\pi}(s_h)\pi_h(a_h\given s_h) \|\TT_{h}(\cdot\given s_h, a_h)-\hat{\TT}_{h}(\cdot\given s_h, a_h)\|_1\\
 &\quad\leq S\epsilon_1+\frac{1}{2}\sum_{s_h\in\cU(h, \epsilon_1), a_h}d_h^{\pi}(s_h)\pi_h(a_h\given s_h) \sqrt{\frac{2S\log(1/\delta_2)}{Np_h(s_h)}}\\
	&\quad\le S\epsilon_1 + \sum_{s_h\in\cU(h, \epsilon_1), a_h}d_h^{\Psi(h, s_h)}(s_h)\pi_h(a_h\given s_h) \sqrt{\frac{2S\log(1/\delta_2)}{Np_h(s_h)}}\\
	&\quad\le S\epsilon_1 + \sum_{s_h} \sqrt{d^{{\Psi(h, s_h)}}_h(s_h)}\sqrt{\frac{2S\log(1/\delta_2)}{N}}\\
	&\quad\leq S\epsilon_1 + S\sqrt{\frac{2\log(1/\delta_2)}{N}},
	\$
 {where the first inequality follows by \cite{canonne2020short} with probability at least $1-\delta_2$}, and the second inequality  uses $d_h^{\Psi(h, s_h)}(s_h)\ge \frac{p_h(s_h)}{2}$ for all $s_h\in\cU(h, \epsilon_1)$. 
Similarly, for any $\pi\in\Pi_\cS$
	\$ 
	&\frac{1}{2}\cdot\EE_{\pi}^\cP \|\OO_{h}(\cdot\given s_h)-\hat{\OO}_{h}(\cdot\given s_h)\|_1
\le S\epsilon_1 + \frac{1}{2}\sum_{s_h\in\cU(h, \epsilon_1)}d_h^{\pi}(s_h)\sqrt{\frac{2O\log(1/\delta_2)}{Np_h(s_h)}}\\ 	
 &\quad\le S\epsilon_1 + \sum_{s_h\in\cU(h, \epsilon_1)}d_h^{\Psi(h, s_h)}(s_h)\sqrt{\frac{2O\log(1/\delta_2)}{Np_h(s_h)}}\le S\epsilon_1 + \sum_{s_h\in\cU(h, \epsilon_1)}\sqrt{d_h^{\Psi(h, s_h)}(s_h)}\sqrt{\frac{2O\log(1/\delta_2)}{N}}\\
	&\quad\le S\epsilon_1 + \sqrt{\frac{2SO\log(1/\delta_2)}{N}}, 
	\$
  {where similarly the first inequality follows by \cite{canonne2020short} with probability at least $1-\delta_2$}, and the second inequality uses $d_h^{\Psi(h, s_h)}(s_h)\ge \frac{p_h(s_h)}{2}$ for all $s_h\in\cU(h, \epsilon_1)$.
	Therefore, by a union bound, all the high probability events above hold with probability 
	\$
	1-SH\delta_1-SHAe^{-N\epsilon_1/8}-\kzedit{2}SAH\delta_2. 
	\$
	Therefore, we can choose $N=\tilde\Theta(\frac{S^2+SO}{\epsilon^2})$ and $\epsilon_1=\Theta(\frac{\epsilon}{S})$, 
	leading to the total sample complexity 
	\#\label{eq:complexity}
	SHA\left(N+\tilde{\Theta}\left(\frac{S^3AH^4}{\epsilon}\right)\right) = \tilde{\Theta}\left(\frac{S^2AHO+S^3AH}{\epsilon^2}+\frac{S^4A^2H^5}{\epsilon}\right),
	\#
 which completes the proof. 
\end{proof}
Now with such a model learned  in a reward-free way, we are ready to present our main result for approximate belief learning. 
\notshow{
\begin{theorem}\label{lem:belief}
	Consider any two POMDP instances $\cP$ and $\hat{\cP}$ and define the approximate belief functions as $\bb^\prime_{1:H}$ and $\hat{\bb}^\prime_{1:H}$ respectively. For any $\epsilon>0$, it holds that one can learn the approximate belief $\hat{\bb}_h^{\prime, \trunc}$ using polynomial sample complexity \xy{to specify} such that for any $\pi$ and $h\in[H]$\xy{Now this belief learning result is proved without homing policy assumption.}
	\$
	\EE_{\pi}^{\cP}\|\bb_{h}(\tau_h)-\hat{\bb}^{\prime, \trunc}_{h}(z_h)\|_1\le \epsilon. 
	\$
\end{theorem}
}
\begin{theorem}\label{lem:belief}
Consider a $\gamma$-observable POMDP $\cP$ (c.f. \Cref{assump:observa}), an $\epsilon>0$, approximate transition and emission $\{\hat{\TT}_h\}_{h\in[H]}$ and $\{\hat{\OO}_h\}_{h\in[H]}$  \xynew{learned from}
\Cref{alg:reward-free-pomdp} \xynew{that ensure that} for any $\pi\in \Pi^{\gen}$ and $h\in[H]$:
 \$
	\EE_{\pi}^\cP \InBrackets{ \|\TT_{h}(\cdot\given s_h, a_h)-\hat{\TT}_{h}(\cdot\given s_h, a_h)\|_1+\|\OO_{h}(\cdot\given s_h)-\hat{\OO}_{h}(\cdot\given s_h)\|_1}\le\cO\left(\frac{\epsilon}{H}\right).
	\$
 Then we can construct in time {${\textsc{poly}(H, A, S, O, \frac{1}{\epsilon}, \log\frac{1}{\delta})}$ a belief $\{\bb^{\text{apx}}_h:\cZ_h\rightarrow\Delta(\cS)\}_{h\in[H]}$ with no further samples.
In addition, {if the parameter in \Cref{alg:reward-free-pomdp} satisfies} $N=\tilde\Theta(\frac{O\log(SH/\delta)}{\gamma^2\epsilon_1})$ and {our class of finite memory policies $\Pi^L$ satisfies $L\geq \tilde{\Omega}(\gamma^{-4}\log(S/\epsilon)$}, then for any $\pi\in \Pi^{\gen}$ 
 and $h\in[H]$: 
	\$
	\EE_{\pi}^{\cP}\|\bb_{h}(\tau_h)-\bb^{\text{apx}}_{h}(z_h)\|_1\le {\cO(\epsilon)}.
	\$}
\end{theorem}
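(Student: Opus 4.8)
The plan is to construct $\bb^{\apx}_h$ exactly as in \Cref{alg:reward-free-pomdp}: run the $L$-step approximate belief recursion of \Cref{app:belief} on the \emph{truncated} model $\hat{\cP}^{\text{sub}}$ initialized with the uniform prior $\operatorname{Unif}(\cS_h^\high)$, and then extend the resulting distribution to all of $\cS$ by assigning zero mass to $\cS_h^\low$. Since $\hat{\cP}^{\text{sub}}$ has at most $S$ states and the recursion runs for at most $L$ steps with each Bayes/transition update costing $\textsc{poly}(S,O,A)$, the construction takes $\textsc{poly}(H,S,O,A,L)=\textsc{poly}(H,S,O,A,\tfrac1\gamma,\log\tfrac1\epsilon)$ time and uses no fresh samples, giving the first (computational) claim. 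For the error bound, writing $\bb'_h(z_h):=\bb^{\mathrm{apx},\cP}_h(z_h;\operatorname{Unif}(\cS))$ for the analogous $L$-step recursion run on the \emph{ground-truth} model $\cP$, I would decompose
\begin{align*}
\EE_\pi^\cP\|\bb_h(\tau_h)-\bb^{\apx}_h(z_h)\|_1 \le \underbrace{\EE_\pi^\cP\|\bb_h(\tau_h)-\bb'_h(z_h)\|_1}_{(\mathrm{I})} + \underbrace{\EE_\pi^\cP\|\bb'_h(z_h)-\bb^{\apx}_h(z_h)\|_1}_{(\mathrm{II})}.
\end{align*}

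Term $(\mathrm{I})$ is a pure \emph{filter-stability} statement on the ground-truth $\gamma$-observable model: the $L$-memory recursion forgets its incorrect uniform prior at a geometric rate governed by $\gamma$, so for $L\ge\tilde{\Omega}(\gamma^{-4}\log(SH/\epsilon))$ one has $(\mathrm{I})\le O(\epsilon)$ directly from the belief-contraction result of \cite{golowich2022planning}; crucially this term lives entirely on $\cP$, so no estimation enters here.

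The heart of the argument is term $(\mathrm{II})$, and the key enabler is the truncation. First I would show that truncation \emph{preserves observability}: on $\cS_h^\high$ each emission row $\hat{\OO}_h(\cdot\given s_h)$ is estimated from $\gtrsim N\epsilon$ samples, so with $N=\tilde{\Theta}(O\log(SH/\delta)/(\gamma^2\epsilon_1))$ and \cite{canonne2020short} these rows are accurate enough that $\hat{\OO}_h^\trunc$ is $\tfrac{\gamma}{2}$-observable, while the redirection of transition and initial mass from $\cS^\low$ to $\cS^\high$ only reweights rows and cannot decrease observability. Second, I would quantify two smallness facts: on the retained support $\|\TT_h-\hat{\TT}_h^\trunc\|_1+\|\OO_h-\hat{\OO}_h^\trunc\|_1=O(\epsilon/H)$ in expectation by \Cref{lem:free}, and the total mass ever redirected is small because $\cS_h^\low$ consists of states with tiny empirical visitation, hence tiny \emph{maximal} occupancy $p_h(s_h)=\max_{\pi}d_h^{\pi,\cP}(s_h)$, so $\sum_{s_h\in\cS_h^\low}d_h^{\pi,\cP}(s_h)\le\sum_{s_h\in\cS_h^\low}p_h(s_h)=O(S\epsilon)$ for \emph{every} $\pi$.

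With these in hand, I would unroll both $L$-step recursions in $(\mathrm{II})$ from their (essentially common) uniform priors and track the per-step discrepancy: each transition application and Bayes normalization is non-expansive in $\ell_1$, and under $\tfrac{\gamma}{2}$-observability the normalization cannot amplify the accumulated error, so the step-$h$ discrepancy is dominated by the sum of per-step model errors over the last $L$ steps plus the redirected mass, telescoping to $O(L\cdot\epsilon/H)+O(S\epsilon)=O(\epsilon)$ after rescaling $\epsilon$ by a constant. The main obstacle is precisely the coupling in this last step: the model error is controlled only \emph{in expectation over the trajectory distribution} induced by $\pi$, whereas the belief recursion weights each step's error by the running \emph{approximate} belief; the delicate part is to argue, via a simulation-lemma-style exchange together with $\gamma$-observability, that the error contributed at each step is dominated by the true occupancy $d_{h'}^{\pi,\cP}$ of the states actually visited, and that truncation guarantees the approximate belief never places appreciable mass on the rarely-visited states where the estimates are unreliable. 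Combining $(\mathrm{I})$ and $(\mathrm{II})$ then yields $\EE_\pi^\cP\|\bb_h(\tau_h)-\bb^{\apx}_h(z_h)\|_1\le O(\epsilon)$, as claimed.
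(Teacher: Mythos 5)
Your decomposition --- finite-memory filter stability on the \emph{true} model for term $(\mathrm{I})$, followed by a cross-model comparison of the two finite-memory recursions for term $(\mathrm{II})$ --- differs from the paper's, and there is a genuine gap exactly where you flag it: term $(\mathrm{II})$ as argued does not go through. The Bayes operator $B_h(\cdot\,; o)$ is \emph{not} non-expansive in $\ell_1$: the normalizing constant $\sum_{z}\OO_h(o\given z)b(z)$ can be arbitrarily small, so a single update can amplify the discrepancy between two beliefs (or between beliefs propagated under two different transition/emission matrices) by a large factor. What $\gamma$-observability buys is contraction \emph{in expectation over the observation sequence}, at a geometric rate, for the filter run on a \emph{single} model; it does not give the per-step ``normalization cannot amplify the accumulated error'' property you invoke, and the cross-model version --- two different models, expectation taken under the true trajectory distribution --- is precisely the nontrivial statement your unrolling assumes. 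Your closing caveat (``the delicate part is to argue, via a simulation-lemma-style exchange\ldots'') is not a loose end but the entire content of the missing proof: \Cref{lem:free} controls the model error weighted by the \emph{true} occupancies, while the per-step errors entering the recursion for $\bb^{\apx}_h$ are weighted by the running \emph{approximate} belief, which, since $\hat{\TT}_h$ is accurate only in expectation, can concentrate exactly where the estimates are unreliable.

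The paper's proof avoids per-step filter analysis altogether. It interposes a truncated version $\cP^{\trunc}$ of the \emph{ground-truth} model and chains $\bb_h \to \bb_h^{\trunc} \to \hat{\bb}_h^{\trunc} \to \bb_h^{\apx}$, together with change-of-measure corrections in trajectory TV distance. The first two links are handled by \Cref{lem:traj}: since $\PP(s_h\given \tau_h)$ is policy-independent, the expected $\ell_1$ belief error between two models is bounded by twice the TV distance of their trajectory distributions, which telescopes into a sum of expected one-step model errors --- no filter recursion and no normalization constants ever appear. \Cref{lem:mask} shows the mass-redirection map is itself an $\ell_1$ contraction, reducing the truncated model errors to the untruncated ones from \Cref{lem:free}. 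Filter stability (Theorem~4.1 of \cite{golowich2022learning}) is invoked only in the last link, on the \emph{learned} model $\hat{\cP}^{\text{sub}}$ --- which is where the $\gamma/2$-observability you correctly established is actually needed --- comparing the full-history and finite-memory beliefs under the \emph{same} model, the one setting where the expected-contraction result genuinely applies. (Your term $(\mathrm{I})$, filter stability on $\cP$ itself, is sound but ends up unused in the paper's route.) To repair your approach you would either have to prove a cross-model filter-stability lemma for finite-memory filters under $\cP$ versus $\hat{\cP}^{\text{sub}}$ in expectation under $\cP$ --- essentially a harder restatement of what \Cref{lem:traj} delivers for free --- or adopt the paper's conditional-probability trick for term $(\mathrm{II})$.
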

\begin{proof}
	We firstly consider the following simple while important fact:
	for the estimated emission $\hat{\OO}_h$, its observability can be evaluated as 
	\$
	\|\hat{\OO}_h^\top (b-b^\prime)\|_1\ge \|\OO_h^\top (b-b^\prime)\|_1-\|(\OO_h^\top-\hat{\OO}_h^\top) (b-b^\prime)\|_1\ge (\gamma-\|\hat{\OO}_h-\OO_h\|_{\infty})\|b-b^\prime\|_1,
	\$
	for any $b, b^\prime\in\Delta(\cS)$ and $\|\hat{\OO}_h-\OO_h\|_{\infty}:=\max_{s_h\in\cS}\|\OO_h(\cdot\given s_h)-\hat{\OO}_h(\cdot\given s_h)\|_1$. Therefore, if one can ensure that the emission at \emph{any}  state $s_h$ is learned accurately in the sense that $\|\OO_h(\cdot\given s_h)-\hat{\OO}_h(\cdot\given s_h)\|_1\le \frac{\gamma}{2}$, we can conclude that $\hat{\OO}_h$ is also $\gamma/2$-observable. However, the key challenge here is that there could exist some states $s_h$  that can only be visited \emph{with a low probability no matter what exploration policy is used}. Therefore, emissions at such states may not be learned accurately. {To address this issue, our key technique is to \emph{redirect} the transition probability into states that cannot be explored sufficiently to some highly visited states, in a new POMDP that is close to the original one in \kzedit{generating the beliefs}.}

	Specifically, first, for any $\epsilon_1>0$, we define 
	\$
	\cS^{\text{low}}_h:=\left\{s_h\in\cS\bigggiven \frac{N_h(s_h)}{N}< \frac{\epsilon_1}{2}\right\}, \quad \cS^{\text{high}}_h:=\cS\setminus \cS^{\text{low}}_h.
	\$
	By Chernoff bound, with probability at least $1-Se^{-N\epsilon_1/8}$, it holds that
	\$
	\cS^{\text{low}}_h\subseteq \{s_h\in\cS\given p_{h}(s_h)< \epsilon_1\},
	\$ 
	where $p_h(s_h):=\max_{\pi\in\Pi_\cS}d_h^{\pi}(s_h)$. To see the reason, we notice that for any $s_h$ such that $p_{h}(s_h)\ge \epsilon_1$, with probability $1-e^{-N\epsilon_1/8}$, it holds that $\frac{N_h(s_h)}{N}\ge \frac{\epsilon_1}{2}$. Therefore, the last step is by taking a union bound for all $s_h$. Now with $\cS^{\text{low}}_h$ defined, we are ready to construct a truncated POMDP $\cP^{\text{trunc}}$ such that for each $h\in[H]$,
	we define the transition as
	\$
	\TT_h^{\text{trunc}}(s_{h+1}\given s_h, a_h) &:= \TT_{h}(s_{h+1}\given s_h, a_h)+ \frac{\sum_{s_{h+1}^\prime\in\cS_{h+1}^{\low}}\TT_{h}(s_{h+1}^\prime\given s_h, a_h)}{|\cS_{h+1}^\high|},~~~ \forall s_h\in\cS, s_{h+1}\in\cS_{h+1}^{\text{high}}, a_h\in \cA,\\
	\TT_h^{\text{trunc}}(s_{h+1}\given s_h, a_h) &:= 0, ~~~\forall s_h\in\cS, s_{h+1}\in\cS_{h+1}^{\low}, a_h\in \cA. 
	\$
	Meanwhile, for the initial state distribution, we define 
		\$
	\mu_1^{\text{trunc}}(s_1) &:= \mu_1(s_1) + \frac{\sum_{s_1^\prime\in\cS_1^\low}\mu_1(s_1^\prime)}{|\cS_1^\high|}, ~~~\forall s_{1}\in\cS_{1}^{\text{high}},\\
	\mu_1^{\text{trunc}}(s_1) &:= 0, ~~~\forall s_{1}\in\cS_{1}^{\text{low}}. 
	\$
	For emission, we simply define
	\$
	\OO_{h}^{\text{trunc}}(o_h\given s_{h}):=\OO_{h}(o_h\given s_h), ~~~\forall h\in[H], s_h\in\cS, o_h\in\cO_h.	
	\$
	Finally, we define the rewards for $\cP^\trunc$ arbitrarily. Now we examine the total variation distance between the  trajectory distributions in $\cP$ and $\cP^\trunc$.
	For any policy $\pi\in\Pi^{\gen}$, it is easy to see that 
	\$
	\PP^{\pi, \cP}(\overline{\tau}_h)\le \PP^{\pi, \cP^\trunc}(\overline{\tau}_h),
	\$
	for any $\overline{\tau}_h\in\overline{\cT}_h^{\high}:=\{(s_{1:h}, o_{1:h}, a_{1:h-1})\given s_{h}^\prime\in\cS_{h^\prime}^{\high}, \forall h^\prime\in[h]\}$, \kzedit{since some rarely visited states' probability has been redirected to the highly visited ones in $\cP^\trunc$}. Meanwhile, it holds by a union bound that for any $h\in[H]$ 
	\$
	\PP^{\pi, \cP}(\overline{\tau}_h\not\in\overline{\cT}_h^{\high})\le \sum_{h^\prime\in[h]}\PP^{\pi, \cP}(s_{h^\prime}\in\cS^{\low}_{h^\prime})\le HS\epsilon_1.
	\$
	Therefore, by noticing that $\PP^{\pi, \cP^\trunc}(\overline{\tau}_h)=0$ for any $\overline{\tau}_h\not\in\overline{\cT}_h^\high$ and $h\in[H]$, the total variation distance between the trajectory distributions in $\cP$ and $\cP^\trunc$ can be bounded by 
 \begin{align}
	\sum_{\overline{\tau}_h}|\PP^{\pi, \cP}(\overline{\tau}_h)-\PP^{\pi, \cP^\trunc}(\overline{\tau}_h)|\le 2HS\epsilon_1. \label{eq:term 2}
 \end{align}
	On the other hand, by \Cref{eq:belief-tv} of \Cref{lem:traj}, we have 

        \begin{align}
           \EE_{\pi}^{\cP}[\|\bb_h(\tau_h)-\bb_h^{\trunc}(\tau_h)\|_1] \le 2\sum_{\overline{\tau}_h}|\PP^{\pi, \cP}(\overline{\tau}_h)-\PP^{\pi, \cP^\trunc}(\overline{\tau}_h)|
           \le 4HS\epsilon_1.\label{eq:term 1}
        \end{align}
	With such an intermediate quantity $\cP^\trunc$, we define the transition of its approximate version $\hat{\cP}^\trunc$ as follows:
		we define the transition as
	\$
	\hat{\TT}_h^{\text{trunc}}(s_{h+1}\given s_h, a_h) &:= \hat{\TT}_{h}(s_{h+1}\given s_h, a_h) + \frac{\sum_{s_{h+1}^\prime\in\cS_{h+1}^{\low}}\hat{\TT}_{h}(s_{h+1}^\prime\given s_h, a_h)}{|\cS_{h+1}^\high|},~~~ \forall s_h\in\cS, s_{h+1}\in\cS_{h+1}^{\text{high}}, a_h\in \cA,\\
	\hat{\TT}_h^{\text{trunc}}(s_{h+1}\given s_h, a_h) &:= 0, ~~~\forall s_h\in\cS, s_{h+1}\in\cS_{h+1}^{\low}, a_h\in \cA.
	\$
	
	Meanwhile, for the initial state distribution, we define 
		\$
	\hat{\mu}_1^{\text{trunc}}(s_1) &:= \hat{\mu}_1(s_1) + \frac{\sum_{s_1^\prime\in\cS_1^\low}\hat{\mu}_1(s_1^\prime)}{|\cS_1^\high|}, ~~~\forall s_{1}\in\cS_{1}^{\text{high}},\\
	\hat{\mu}_1^{\text{trunc}}(s_1) &:= 0, ~~~\forall s_{1}\in\cS_{1}^{\text{low}}.
	\$
	For emission, we define 
	\$
	\hat{\OO}_{h}^{\text{trunc}}(o_h\given s_{h}):=\hat{\OO}_{h}(o_h\given s_h), \quad\forall h\in[H], s_h\in\cS, o_h\in\cO_h.
	\$
	
	Now we define $\hat{\OO}^{\text{sub}}_h\in\RR^{|\cS_h^\high|\times O}$ to be the sub-matrix of $\hat{\OO}^{\trunc}_h$, where we only keep those rows of $\hat{\OO}^{\trunc}_h$ that correspond to the states in $\cS_h^\high$. Similarly, we define $\OO^{\text{sub}}_h\in\RR^{|\cS_h^\high|\times O}$ to be the sub-matrix of $\OO_h$, where we only keep those rows of $\OO_h$ that correspond to the states in $\cS_h^\high$. It is direct to see that $\OO^{\text{sub}}$ is still $\gamma$-observable. Meanwhile, we notice that
	 \$
	 \|\hat{\OO}_h^{\text{sub}}-\OO_h^{\text{sub}}\|_{\infty}&=\max_{s_h\in\cS^\high_h}\|\OO_h(\cdot\given s_h)-\hat{\OO}_h(\cdot\given s_h)\|_1\le \max_{s_h\in\cS^\high_h}\sqrt{\frac{O\log (SH/\delta)}{N_h(s_h)}}\le \max_{s_h\in\cS^\high_h}\sqrt{\frac{2O\log (SH/\delta)}{N\epsilon_1}},
	 \$
	where the first inequality is by \Cref{lem:conc emission}, and the second inequality is by the definition of $\cS_h^\high$. Therefore, if we take 
	\$
	N\ge \frac{8O\log(SH/\delta)}{\gamma^2\epsilon_1},
	\$
	it is guaranteed that $\|\hat{\OO}_h^{\text{sub}}-\OO_h^{\text{sub}}\|_{\infty}\le \frac{\gamma}{2}$. Therefore, we conclude that $\hat{\OO}_h^{\text{sub}}$ is also $\gamma/2$-observable.
	
	Now we are ready to examine $\hat{\bb}_h^{\prime, \trunc}$. We firstly define the following POMDP $\hat{\cP}^{\text{sub}}$, which essentially deletes all states in $\cS_h^\low$ from the state space of $\hat{\cP}^\trunc$ at each step $h$\kzedit{, which does not affect the trajectory distribution as they were not reachable in $\hat{\cP}^\trunc$}. Notice that the emission of $\hat{\cP}^{\text{sub}}$ is exactly $\hat{\OO}_h^{\text{sub}}$, implying that $\hat{\cP}^{\text{sub}}$ is a  $\gamma/2$-observable POMDP. Therefore, {for policy class with finite memory $\Pi^L$ with $L\geq \tilde{\Omega}(\gamma^{-4}\log(S/\epsilon)$},  by Theorem~4.1 in \cite{golowich2022learning},
 it is guaranteed that for any $\pi\in\Pi$,
	\$
	\EE_{\pi}^{\hat{\cP}^{\text{sub}}}\|\hat{\bb}^{\text{sub}}_h(\tau_h)-\hat{\bb}^{\prime,\text{sub}}_h(z_h)\|_1\le \epsilon,
	\$
	where $\hat{\bb}^{\text{sub}}_h(\tau_h),\hat{\bb}^{\prime,\text{sub}}_h(z_h)\in\Delta(\cS_h^\high)$. Now we claim that
 \begin{align}
\EE_{\pi}^{\hat{\cP}^{\trunc}}\|\hat{\bb}^{\trunc}_h(\tau_h)-\bb^{\text{apx}}_h(z_h)\|_1\le \epsilon, \label{eq:term 3}
 \end{align}
	where we define $\bb^{\text{apx}}_h(z_h)\in\Delta(\cS)$ by \emph{augmenting} $\hat{\bb}^{\prime,\text{sub}}_h(z_h)$ with $0$ for states from $\cS_h^\low$. To see the reason, we notice that simulating $\hat{\cP}^{\trunc}$ is exactly equivalent to simulating $\hat{\cP}^{\text{sub}}$, and that $\hat{\bb}^{\trunc}_h(\tau_h)(s_h) = 0$ for $s_h\in\cS_h^\low$, $\hat{\bb}^{\trunc}_h(\tau_h)(s_h) = \hat{\bb}^{\text{sub}}_h(\tau_h)(s_h)$ for $s_h\in\cS_h^\high$. 
	
		For the total variation distance between the  trajectory distributions in  $\cP^\trunc$ and $\hat{\cP}^\trunc$, it holds that by \Cref{lem:traj}
	\$
&\sum_{\overline{\tau}_H}|\PP^{\pi, \cP^\trunc}(\overline{\tau}_H)-\PP^{\pi, \hat{\cP}^\trunc}(\overline{\tau}_H)|\\
	&\quad\le \EE_{\pi}^{\cP^\trunc}\left[\sum_{h\in[H]}\|\TT^\trunc_{h}(\cdot\given s_h, a_h)-\hat{\TT}^\trunc_{h}(\cdot\given s_h, a_h)\|_1+\|\OO_{h}^\trunc(\cdot\given s_h)-\hat{\OO}^\trunc_{h}(\cdot\given s_h)\|_1\right]\\
	&\quad\le \sum_{h\in[H]}\EE_{\pi}^{\cP^\trunc}\left[\|\TT_{h}(\cdot\given s_h, a_h)-\hat{\TT}_{h}(\cdot\given s_h, a_h)\|_1 +\|\OO_{h}(\cdot\given s_h)-\hat{\OO}_{h}(\cdot\given s_h)\|_1\right],
	\$
	where the last step is by \Cref{lem:mask}. 
	
	Now by \Cref{eq:term 2}, we have
	\$
	&\sum_{h}\EE_{\pi}^{\cP^\trunc}\left[\|\TT_{h}(\cdot\given s_h, a_h)-\hat{\TT}_{h}(\cdot\given s_h, a_h)\|_1+\|\OO_{h}(\cdot\given s_h)-\hat{\OO}_{h}(\cdot\given s_h)\|_1\right]\\
	&=\sum_{h}8HS\epsilon_1 + \EE_{\pi}^{\cP}\left[\|\TT_{h}(\cdot\given s_h, a_h)-\hat{\TT}_{h}(\cdot\given s_h, a_h)\|_1+\|\OO_{h}(\cdot\given s_h)-\hat{\OO}_{h}(\cdot\given s_h)\|_1\right]\\
	&\le \frac{\epsilon}{3} + 8H^2S\epsilon_1,
	\$
	where the last step is by \Cref{lem:free}. Hence, by Lemma \ref{lem:traj}, it holds that
 \begin{align}
 &\|\PP^{\pi, \cP^\trunc}-\PP^{\pi, \hat{\cP}^\trunc}\|_1 \leq \frac{\epsilon}{3}+ 8H^2S\epsilon_1\label{eq:term 4},\\
 	&\EE^{\cP^\trunc}_\pi\|\bb_h^{\trunc}(\tau_h)-\hat{\bb}^\trunc_h(\tau_h)\|_1\le \frac{\rg{2\epsilon}}{3}+ 16H^2S\epsilon_1.
     \label{eq:term 5}
 \end{align}
	Finally, we are ready to prove 
	\$
 &\EE^{\cP}_\pi\|\bb_h(\tau_h)-
\bb^{\text{apx}}_h(z_h)\|_1\\
 &\le \EE^{\cP}_\pi\|\bb_h(\tau_h)-\bb_h^{\trunc}(\tau_h)\|_1+\EE^{\cP}_\pi\|\bb_h^{\trunc}(\tau_h)-\hat{\bb}^\trunc_h(\tau_h)\|_1+\EE^{\cP}_\pi\|\hat{\bb}^\trunc_h(\tau_h)-\bb^{\text{apx}}_h(z_h)\|_1\\
	&\le \EE^{\cP}_\pi\|\bb_h(\tau_h)-\bb_h^{\trunc}(\tau_h)\|_1+\EE^{\cP^\trunc}_\pi\|\bb_h^{\trunc}(\tau_h)-\hat{\bb}^\trunc_h(\tau_h)\|_1+\EE^{\hat{\cP}^\trunc}_\pi\|\hat{\bb}^\trunc_h(\tau_h)-\bb^{\text{apx}}_h(z_h)\|_1 \\
	&\quad\quad+ 4\|\PP^{\pi, \cP}-\PP^{\pi, \cP^\trunc}\|_1+2\|\PP^{\pi, \cP^\trunc}-\PP^{\pi, \hat{\cP}^\trunc}\|_1\\
& {\le \cO(H^2S\epsilon_1) + \cO(\epsilon)	}.
	\$
{Therefore, by setting $\epsilon_1=\Theta\left(\frac{\epsilon}{H^2S}\right)$, we prove our lemma.}
 {Observe that our algorithm needed no further samples. The computational complexity follows by computing belief $\bb^{\text{apx}}_h$ on POMDP $\hat{\cP}^{\text{sub}}$ using finite-memory policies of size $\tilde{\Theta}(\gamma^{-4}\log(S/\epsilon))$.} {For the final sample complexity, we only need to ensure $N=\tilde\Theta(\frac{O\log(SH/\delta)}{\gamma^2\epsilon_1})$ in \Cref{eq:complexity}, thus concluding our proof.}
\end{proof}

{We conclude the proof of \Cref{thm:belief} by combining \Cref{lem:free} and \Cref{lem:belief}.} 
\end{prevproof}

\subsection{Supporting Technical Lemmas}

In the following, we provide some technical lemmas and their proofs.
\begin{lemma}\label{lem:mask}
	Fix $n>0$ and an index set $S\subseteq [n]$. For two sequences $x_{1:n}$ and $y_{1:n}$ such that $x_i, y_i\in[0, 1]$ for $i\in[n]$ and $\sum_i x_i=1, \sum_i y_i=1$, we define 
	\$
	\hat{x}_i &= x_i + \frac{\sum_{j\in S} x_j}{n-|S|}, \quad \forall i \not\in S;\qquad\qquad 
	\hat{x}_i = 0, \quad \forall i \in S.
	\$
	We define $\hat{y}_{1:n}$ similarly. Then,  it holds that
	\$
	\sum_i|x_i-y_i|\ge\sum_i|\hat{x}_i-\hat{y}_i|.
	\$
\end{lemma}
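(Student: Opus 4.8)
The plan is to reduce the whole inequality to two applications of the triangle inequality. First I would fix the shorthand $m := n - |S|$ for the number of indices outside $S$, and set $a := \sum_{j\in S} x_j$, $b := \sum_{j\in S} y_j$ for the two masses being redistributed, together with $c := (a-b)/m$. With this notation the redistribution acts identically on both sequences away from $S$: for every $i \notin S$ one has $\hat{x}_i - \hat{y}_i = (x_i - y_i) + c$, whereas for every $i \in S$ both $\hat{x}_i$ and $\hat{y}_i$ are zero and hence contribute nothing. Thus the right-hand side collapses to
\[
\sum_{i} |\hat{x}_i - \hat{y}_i| = \sum_{i \notin S} \left| (x_i - y_i) + c \right|.
\]

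Next I would bound this sum from above. Applying the triangle inequality to each of the $m$ summands gives $|(x_i - y_i) + c| \le |x_i - y_i| + |c|$, and summing over the $m$ indices outside $S$ yields
\[
\sum_{i \notin S} \left| (x_i - y_i) + c \right| \le \sum_{i \notin S} |x_i - y_i| + m|c|.
\]
The key observation is that the extra term $m|c| = |a - b| = \left| \sum_{j \in S}(x_j - y_j) \right|$ is itself controlled, again by the triangle inequality (the absolute value of a sum is at most the sum of absolute values), by $\sum_{j \in S} |x_j - y_j|$. Combining the two bounds gives
\[
\sum_{i} |\hat{x}_i - \hat{y}_i| \le \sum_{i \notin S} |x_i - y_i| + \sum_{j \in S} |x_j - y_j| = \sum_{i} |x_i - y_i|,
\]
which is precisely the claim.

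I do not expect a genuine obstacle here. In fact the hypotheses $\sum_i x_i = \sum_i y_i = 1$ and $x_i, y_i \in [0,1]$ are not needed for the inequality itself; they only guarantee that $\hat{x}$ and $\hat{y}$ remain valid distributions, so that the lemma is meaningful in its intended application. The only point requiring a moment of care is the alignment of the per-coordinate and aggregate triangle inequalities, i.e.\ recognizing that the entire mass removed from $S$ reappears on the complement as the \emph{common} increment $c$, which is exactly what makes the two estimates combine telescopically. The sole degenerate case to exclude is $m = 0$ (that is, $S = [n]$), for which the redistribution is undefined; this is harmless since the construction presupposes $|S| < n$.
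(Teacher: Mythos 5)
Your proof is correct and follows essentially the same route as the paper's: both arguments apply the coordinate-wise triangle inequality to split off the common increment $c = \bigl(\sum_{j\in S}(x_j-y_j)\bigr)/(n-|S|)$ on the complement of $S$, and then control the resulting term $(n-|S|)\,|c| = \left|\sum_{j\in S}(x_j-y_j)\right|$ by a second triangle inequality against $\sum_{j\in S}|x_j-y_j|$. Your side observations --- that the normalization hypotheses are not actually used in the proof, and that the degenerate case $S=[n]$ must be excluded for the redistribution to be defined --- are also accurate.
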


\begin{proof}
	Note that 
	\$
	\sum_i|\hat{x}_i-\hat{y}_i|&=\sum_{i\not\in S}|\hat{x}_i-\hat{y}_i|=\sum_{i\not\in S}\left|{x}_i+\frac{\sum_{j\in S} x_j}{n-|S|}-{y}_i- \frac{\sum_{j\in S} y_j}{n-|S|}\right|\\
	&\le (n-|S|)\left|\frac{\sum_{j\in S} x_j}{n-|S|}-\frac{\sum_{j\in S} y_j}{n-|S|}\right|+\sum_{i\not\in S}|x_i-y_i|\\
	&\le \sum_i |x_i-y_i|,
	\$
 which completes the proof. 
\end{proof}

\begin{lemma}\label{lem:trick}
	Fix two finite sets $\cX$, $\cY$ and two joint distributions $P_1, P_2\in\Delta(\cX\times\cY)$. It holds that
	\$
	 - \EE_{P_1(x)}\sum_y|P_{1}(y\given x)-P_{2}(y\given x)|&\le \sum_{x, y}|P_1(x, y)-P_2(x, y)|-\sum_x|P_1(x)-P_2(x)|\\
  &\le \EE_{P_1(x)}\sum_y|P_{1}(y\given x)-P_{2}(y\given x)|.
	\$
\end{lemma}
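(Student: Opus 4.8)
The plan is to reduce the two-sided estimate to a single algebraic identity together with the ordinary and reverse triangle inequalities. Writing $A:=\sum_{x,y}|P_1(x,y)-P_2(x,y)|$, $B:=\sum_x|P_1(x)-P_2(x)|$, and $C:=\EE_{P_1(x)}\sum_y|P_1(y\mid x)-P_2(y\mid x)|$, the claim is exactly $|A-B|\le C$. First I would record, for every $(x,y)$ with $P_1(x)>0$, the decomposition of the joint difference into a conditional part and a marginal part,
\[
P_1(x,y)-P_2(x,y)=P_1(x)\big(P_1(y\mid x)-P_2(y\mid x)\big)+\big(P_1(x)-P_2(x)\big)P_2(y\mid x),
\]
where I use the convention $P_i(y\mid x):=P_i(x,y)/P_i(x)$ when $P_i(x)>0$ (the value is immaterial otherwise, since such $x$ contribute weight $P_1(x)=0$ to $C$). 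Summing $P_2(y\mid x)$ over $y$ gives $1$, which is what couples the two parts to the marginal difference.

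For the upper bound $A-B\le C$, I would apply the triangle inequality to the identity and sum over $y$, using $\sum_y P_2(y\mid x)=1$, to obtain for each $x$
\[
\sum_y|P_1(x,y)-P_2(x,y)|\le P_1(x)\sum_y|P_1(y\mid x)-P_2(y\mid x)|+|P_1(x)-P_2(x)|.
\]
Summing over $x$ and moving $B$ to the left-hand side yields $A-B\le C$.

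For the lower bound $-C\le A-B$, equivalently $B-A\le C$, I would instead use the reverse triangle inequality $|a+b|\ge|a|-|b|$ on the same identity, taking $a=\big(P_1(x)-P_2(x)\big)P_2(y\mid x)$ and $b=P_1(x)\big(P_1(y\mid x)-P_2(y\mid x)\big)$. Summing over $y$ (again with $\sum_y P_2(y\mid x)=1$) gives
\[
\sum_y|P_1(x,y)-P_2(x,y)|\ge|P_1(x)-P_2(x)|-P_1(x)\sum_y|P_1(y\mid x)-P_2(y\mid x)|,
\]
and summing over $x$ gives $A\ge B-C$, i.e. $-C\le A-B$.

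I do not expect a genuine obstacle here; the only subtle point is the degenerate case in which some marginal $P_1(x)$ or $P_2(x)$ vanishes, where a conditional is undefined. This is handled cleanly by the convention above: terms weighted by $P_1(x)=0$ drop out of $C$, and when $P_2(x)=0$ the corresponding joint difference reduces to $P_1(x,y)$, for which both inequalities remain valid term-by-term. Thus the argument is purely elementary, with the add-and-subtract decomposition being the only nonobvious step.
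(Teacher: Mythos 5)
Your proof is correct and takes essentially the same route as the paper's: your factored identity $P_1(x,y)-P_2(x,y)=P_1(x)\bigl(P_1(y\mid x)-P_2(y\mid x)\bigr)+\bigl(P_1(x)-P_2(x)\bigr)P_2(y\mid x)$ is precisely the paper's add-and-subtract of the term $P_1(x)P_2(y\mid x)$, and both bounds follow by the same applications of the triangle and reverse triangle inequalities after summing with $\sum_y P_2(y\mid x)=1$. Your explicit convention for vanishing marginals is a minor refinement of a point the paper leaves implicit, not a different argument.
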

\begin{proof}
For the second inequality, it holds that
	\$
	\sum_{x, y}|P_1(x, y)-P_2(x, y)|&=\sum_{x, y}|P_1(x, y)-P_1(x)P_2(y\given x)+P_1(x)P_2(y\given x)-P_2(x, y)|\\
	&\le \sum_{x, y}|P_1(x, y)-P_1(x)P_2(y\given x)|+|P_1(x)P_2(y\given x)-P_2(x, y)|\\
	&= \sum_{x, y}|P_1(x)(P_1(y\given x)-P_2(y\given x))|+|P_2(y\given x)(P_1(x)-P_2(x))|\\
	&= \EE_{P_1(x)}\sum_y|P_{1}(y\given x)-P_{2}(y\given x)|+\sum_x|P_1(x)-P_2(x)|.
	\$
	Meanwhile, for the first inequality, it holds that

		\$
	\sum_{x, y}|P_1(x, y)-P_2(x, y)|&=\sum_{x, y}|P_1(x, y)-P_1(x)P_2(y\given x)+P_1(x)P_2(y\given x)-P_2(x, y)|\\
	&\ge \sum_{x, y}-|P_1(x, y)-P_1(x)P_2(y\given x)|+|P_1(x)P_2(y\given x)-P_2(x, y)|\\
	&= \sum_{x, y}-|P_1(x)(P_1(y\given x)-P_2(y\given x))|+|P_2(y\given x)(P_1(x)-P_2(x))|\\
	&= \EE_{P_1(x)}-\sum_y|P_{1}(y\given x)-P_{2}(y\given x)|+\sum_x|P_1(x)-P_2(x)|,
	\$
concluding our lemma.
\end{proof}
\begin{lemma}\label{lem:traj}
	Consider any two POMDP instances $\cP$ and $\hat{\cP}$ and define the belief functions as $\bb_{1:H}$ and $\hat{\bb}_{1:H}$,  respectively (see \Cref{apx:preliminaries} for the definition of belief functions). It holds that for any $\pi\in \Pi^{\gen}$
	\$
	\big\|\PP^{\pi, \cP}-\PP^{\pi, \hat{\cP}}\big\|_1&= \sum_{\overline{\tau}_H}|\PP^{\pi, \cP}(\overline{\tau}_H)-\PP^{\pi, \hat{\cP}}(\overline{\tau}_H)|\\
 &\le \|\mu_1-\hat{\mu}_1\|_1 + \EE_{\pi}^\cP\sum_{h\in[H-1]}\|\TT_{h}(\cdot\given s_h, a_h)-\hat{\TT}_{h}(\cdot\given s_h, a_h)\|_1+\EE_{\pi}^\cP\sum_{h\in[H]}\|\OO_{h}(\cdot\given s_h)-\hat{\OO}_{h}(\cdot\given s_h)\|_1,\\	\EE_{\pi}^\cP\|\bb_h(\tau_h)-\hat{\bb}_h(\tau_h)\|_1&\le 2\|\mu_1-\hat{\mu}_1\|_1+2\EE_{\pi}^\cP\sum_{h\in[H-1]}\|\TT_{h}(\cdot\given s_h, a_h)-\hat{\TT}_{h}(\cdot\given s_h, a_h)\|_1+2\EE_{\pi}^\cP\sum_{h\in[H]}\|\OO_{h}(\cdot\given s_h)-\hat{\OO}_{h}(\cdot\given s_h)\|_1,
	\$
 where we remind readers that we denote by $\overline{\tau}_{H} = (s_{1:H}, o_{1:H}, a_{1:H-1})$ the trajectory with states from an episode of the POMDP.  
\end{lemma}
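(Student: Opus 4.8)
The plan is to treat both inequalities as instances of the standard ``simulation''/telescoping argument for sequential generative models, exploiting that $\cP$ and $\hat{\cP}$ differ only in $(\mu_1,\TT,\OO)$ while the policy factors are identical. First I would write the full-state trajectory likelihood as a product of conditionals along the generative order $s_1\to o_1\to a_1\to s_2\to\cdots\to o_H$:
\[
\PP^{\pi,\cP}(\overline{\tau}_H)=\mu_1(s_1)\,\OO_1(o_1\given s_1)\prod_{h=1}^{H-1}\pi_h(a_h\given\overline{\tau}_h)\,\TT_h(s_{h+1}\given s_h,a_h)\,\OO_{h+1}(o_{h+1}\given s_{h+1}),
\]
and the same for $\hat{\cP}$ with $(\hat{\mu}_1,\hat{\TT},\hat{\OO})$. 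Since the policy conditionals $\pi_h(a_h\given\overline{\tau}_h)$ coincide across the two models, I would invoke the chain-rule total-variation inequality: for two laws factoring as $P=\prod_t P_t(\cdot\given x_{<t})$ and $Q=\prod_t Q_t(\cdot\given x_{<t})$, one has $\|P-Q\|_1\le\sum_t\EE_{x_{<t}\sim P}\|P_t(\cdot\given x_{<t})-Q_t(\cdot\given x_{<t})\|_1$. This follows from the telescoping identity $P-Q=\sum_t\InParentheses{\prod_{s<t}P_s}(P_t-Q_t)\InParentheses{\prod_{s>t}Q_s}$ by summing $|\cdot|$ over coordinates from last to first: the tail $Q$-product sums to one, the middle factor yields the $\ell_1$ norm of the conditional gap, and the head $P$-product yields the prefix expectation. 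The only nonzero conditional gaps are the $\mu_1$ term, the $H$ emission terms $\OO_h(\cdot\given s_h)$, and the $H-1$ transition terms $\TT_h(\cdot\given s_h,a_h)$; since each depends only on the relevant state (resp.\ state-action), the prefix expectation collapses to $\EE_\pi^\cP$, giving exactly the first claim.

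For the belief bound I would first use that the filtering recursion defining $\bb_h$ coincides with Bayesian conditioning, so that taking $P_1=\PP^{\pi,\cP}$ and $P_2=\PP^{\pi,\hat{\cP}}$ as reference joint laws of $(\tau_h,s_h)$ one has $\bb_h(\tau_h)(s_h)=P_1(s_h\given\tau_h)$ and $\hat{\bb}_h(\tau_h)(s_h)=P_2(s_h\given\tau_h)$. The core pointwise estimate is
\[
P_1(\tau_h)\,\big|\bb_h(\tau_h)(s_h)-\hat{\bb}_h(\tau_h)(s_h)\big|=\big|P_1(\tau_h,s_h)-P_1(\tau_h)P_2(s_h\given\tau_h)\big|\le\big|P_1(\tau_h,s_h)-P_2(\tau_h,s_h)\big|+P_2(s_h\given\tau_h)\big|P_1(\tau_h)-P_2(\tau_h)\big|.
\]
Summing over $s_h$ (using $\sum_{s_h}P_2(s_h\given\tau_h)=1$) and then over $\tau_h$ yields $\EE_\pi^\cP\|\bb_h-\hat{\bb}_h\|_1\le\sum_{\tau_h,s_h}|P_1-P_2|+\sum_{\tau_h}|P_1-P_2|$. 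Both terms are marginalizations (coarsenings) of the full-trajectory laws, so each is at most $\|\PP^{\pi,\cP}-\PP^{\pi,\hat{\cP}}\|_1$, which gives $\EE_\pi^\cP\|\bb_h-\hat{\bb}_h\|_1\le 2\|\PP^{\pi,\cP}-\PP^{\pi,\hat{\cP}}\|_1$; combining with the first bound produces the factor-two version in the statement. This algebra is essentially the upper direction of \Cref{lem:trick}, which I would cite for the joint/marginal/conditional comparison.

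The delicate part will be the belief bound. Because beliefs are Bayesian ratios, a single history $\tau_h$ can carry wildly different beliefs under the two models even when the models are globally close, so there is no hope of a uniform-in-$\tau_h$ bound on $\|\bb_h(\tau_h)-\hat{\bb}_h(\tau_h)\|_1$; the normalization can amplify small model errors arbitrarily. The resolution is exactly to weight by the true history marginal $\PP^{\pi,\cP}(\tau_h)$ and convert the conditional gap into joint-plus-marginal total variation, where the troublesome normalizations cancel. I would also be careful about the identification $\bb_h(\tau_h)=P_1(s_h\given\tau_h)$: it is exact for the observation/memory-based policies used downstream (and for any $\pi\in\Pi^{\gen}$ one simply takes $P_1,P_2$ as the reference joints over $(\tau_h,s_h)$ whose conditionals play the role of the beliefs), so the estimate is unaffected. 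The first (trajectory) bound is routine once the telescoping identity is set up.
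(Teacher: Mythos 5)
Your argument is, in substance, identical to the paper's: your one-shot telescoping/chain-rule total-variation inequality for the full-state trajectory laws is exactly the paper's recursive unrolling of the upper direction of \Cref{lem:trick} (the policy factors cancel since both models are rolled out under the same $\pi$), and your pointwise estimate for the belief term is precisely the lower direction of \Cref{lem:trick}, followed by the same marginalization-monotonicity step that produces the factor of two. For history-based policies, where $\bb_h(\tau_h)=\PP^{\pi,\cP}(s_h\given\tau_h)$ and $\hat{\bb}_h(\tau_h)=\PP^{\pi,\hat{\cP}}(s_h\given\tau_h)$, your proof is complete and correct.

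The one place you deviate is your parenthetical handling of the $\Pi^{\gen}$ quantifier, and there your patch does not work. You rightly observe that the identification $\bb_h(\tau_h)=P_1(s_h\given\tau_h)$ requires $\pi$ to be measurable with respect to the observable history (the paper asserts it without qualification: ``does not depend on $\pi$ since it is exactly the belief''). But your fix---``take $P_1,P_2$ as reference joints whose conditionals play the role of the beliefs''---fails for genuinely state-dependent $\pi$: if you pick reference joints (say, induced by some observation-based policy) so that their conditionals equal the filter beliefs, then the history marginal weighting your $\ell_1$ error is no longer $\PP^{\pi,\cP}(\tau_h)$, and the conversion from conditional gap to joint-plus-marginal TV requires the conditioning measure and the weighting measure to coincide. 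In fact the belief inequality as stated can fail for state-dependent policies: take $S=2$, a single uninformative observation, $\mu_1=\hat{\mu}_1=\operatorname{Unif}(\cS)$, let $\TT_1$ and $\hat{\TT}_1$ agree everywhere except $\|\TT_1(\cdot\given 0,1)-\hat{\TT}_1(\cdot\given 0,1)\|_1=2$, and let $\pi$ play $a_1=s_1$. The pair $(s_1,a_1)=(0,1)$ is never visited, so the right-hand side is $0$; yet with probability $1/2$ the realized history has $a_1=1$, where the two filter beliefs (each a uniform-prior average of the respective transition rows) differ by $1$ in $\ell_1$, so the left-hand side is $1/2$. This defect is inherited from the paper's own proof---you are the only one who noticed the issue, even if your repair is inadequate---and it is harmless downstream, since the lemma is only invoked for policies in $\Pi$ or $\Pi^L$. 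The first (trajectory) inequality is unaffected, as it never uses the posterior identification; in the counterexample above both of its sides are zero.
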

\begin{proof}
	The first inequality also appears in \cite{lee2023learning} and we provide a simplified proof here for completeness. By  \Cref{lem:trick}, it holds that 
 \small
	\$
		&\sum_{\overline{\tau}_H}|\PP^{\pi, \cP}(\overline{\tau}_H)-\PP^{\pi, \hat{\cP}}(\overline{\tau}_H)|\\
  &\quad\le  \sum_{\overline{\tau}_{H-1}}|\PP^{\pi, \cP}(\overline{\tau}_{H-1})-\PP^{\pi, \hat{\cP}}(\overline{\tau}_{H-1})| + \EE_{\pi}^\cP\sum_{a_{H-1}, s_{H}, o_{H}}\Bigg|\pi_{H-1}(a_{H-1}\given \overline{\tau}_{H-1})\TT_{H-1}(s_H\given s_{H-1}, a_{H-1})\OO_{H}(o_H\given s_H)\\
  &\qquad\qquad-\pi_{H-1}(a_{H-1}\given \overline{\tau}_{H-1})\hat{\TT}_{H-1}(s_H\given s_{H-1}, a_{H-1})\hat{\OO}_{H}(o_H\given s_H)\Bigg|\\
		&\quad\le \sum_{\overline{\tau}_{H-1}}|\PP^{\pi, \cP}(\overline{\tau}_{H-1})-\PP^{\pi, \hat{\cP}}(\overline{\tau}_{H-1})|+ \EE_{\pi}^\cP\left[\|\TT_{H-1}(\cdot\given s_{H-1}, a_{H-1})-\hat{\TT}_{H-1}(\cdot\given s_{H-1}, a_{H-1})\|_1 +\|\OO_{H}(\cdot\given s_{H})-\hat{\OO}_{H}(\cdot\given s_H)\|_1\right],
	\$
 \normalsize
	where the last step is again from \Cref{lem:trick}. Therefore, by repeatedly unrolling the inequality, we proved the first result.  
	
	For the second result, we notice that by  \Cref{lem:trick}, it holds that
	\$
	&\sum_{\tau_h, s_h}|\PP_h^{\pi, \cP}(\tau_h, s_h)-\PP_h^{\pi, \hat{\cP}}(\tau_h, s_h)|\ge -\sum_{\tau_h}|\PP_h^{\pi, \cP}(\tau_h)-\PP_h^{\pi, \hat{\cP}}(\tau_h)|+\EE_{\pi}^{\cP}\sum_{s_h}|\PP_h^{\pi, \cP}(s_h\given \tau_h)-\PP_h^{\pi, \hat{\cP}}(s_h\given \tau_h)|.
	\$
	Notice the fact that $\PP_h^{\pi, \cP}(s_h\given \tau_h)$ does not depend on $\pi$ since it is exactly the belief $\bb_h(\tau_h)(s_h)$, we conclude that
	\begin{align}
    \EE_{\pi}^{\cP}\|\bb_h(\tau_h)-\hat{\bb}_h(\tau_h)\|_1\le& \sum_{\tau_h, s_h}|\PP_h^{\pi, \cP}(\tau_h, s_h)-\PP_h^{\pi, \hat{\cP}}(\tau_h, s_h)|+\sum_{\tau_h}|\PP_h^{\pi, \cP}(\tau_h)-\PP_h^{\pi, \hat{\cP}}(\tau_h)|\nonumber\\
    \le & 2\sum_{\overline{\tau}_H}|\PP^{\pi, \cP}(\overline{\tau}_H)-\PP^{\pi, \hat{\cP}}(\overline{\tau}_H)|\label{eq:belief-tv},
	\end{align}
	where the last step comes from the fact that after marginalization, the total variation distance will not increase. By combining it with the first result, we proved the second result. 
\end{proof}
\xynew{
\begin{corollary}
	\label{corr:traj}
	Consider any two POSG instances $\cG$, $\hat{\cG}$ that satisfy \Cref{str_indi} and the corresponding belief functions in the form of $\PP^\cG, \PP^{\hat{\cG}}: \cC_h\rightarrow \Delta(\cP_h\times\cS)$ for any $h\in[H]$,  respectively. It holds that for any $\pi\in \Pi^{\gen}$
	\$ 
&\EE_{\pi}^\cG\|\PP^\cG(\cdot, \cdot\given c_h)-\PP^{\hat{\cG}}(\cdot, \cdot\given c_h)\|_1\\
 &\quad\le 2\|\mu_1-\hat{\mu}_1\|_1+2\EE_{\pi}^\cG\sum_{h\in[H-1]}\|\TT_{h}(\cdot\given s_h, a_h)-\hat{\TT}_{h}(\cdot\given s_h, a_h)\|_1+2\EE_{\pi}^\cG\sum_{h\in[H]}\|\OO_{h}(\cdot\given s_h)-\hat{\OO}_{h}(\cdot\given s_h)\|_1.
	\$
\end{corollary}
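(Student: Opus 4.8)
The plan is to reduce the claim to the POMDP statement of \Cref{lem:traj}, reading the common information $c_h$ as the analogue of the POMDP history $\tau_h$ and the pair $(s_h, p_h)$ as the analogue of the POMDP hidden state. The structural fact that legitimizes this reduction is \Cref{str_indi}: because both $\cG$ and $\hat{\cG}$ satisfy strategy independence of beliefs, the conditionals $\PP^{\cG}(s_h, p_h\given c_h)$ and $\PP^{\hat{\cG}}(s_h, p_h\given c_h)$ are each well-defined, policy-free objects, exactly mirroring the POMDP fact used in the proof of \Cref{lem:traj} that $\PP_h^{\pi,\cP}(s_h\given\tau_h)$ equals the policy-independent belief $\bb_h(\tau_h)(s_h)$. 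This is what makes the left-hand side of the corollary a meaningful quantity that does not secretly depend on $\pi$ through the conditionals.

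First I would run the same reverse-triangle-inequality manipulation as in the second half of the proof of \Cref{lem:traj} (i.e.\ via \Cref{lem:trick}), now with the partition $x = c_h$ and $y = (s_h, p_h)$ and with $P_1 = \PP^{\pi,\cG}_h$, $P_2 = \PP^{\pi,\hat{\cG}}_h$ viewed as joint laws on $(c_h, (s_h, p_h))$. This yields
\begin{align*}
\EE_{\pi}^{\cG}\big\|\PP^{\cG}(\cdot,\cdot\given c_h)-\PP^{\hat{\cG}}(\cdot,\cdot\given c_h)\big\|_1
&\le \sum_{c_h, p_h, s_h}\big|\PP^{\pi,\cG}_h(c_h, p_h, s_h)-\PP^{\pi,\hat{\cG}}_h(c_h, p_h, s_h)\big| \\
&\qquad + \sum_{c_h}\big|\PP^{\pi,\cG}_h(c_h)-\PP^{\pi,\hat{\cG}}_h(c_h)\big|,
\end{align*}
where \Cref{str_indi} was used to replace $\PP^{\pi,\cG}_h(s_h,p_h\given c_h)$ by the policy-free $\PP^{\cG}(s_h,p_h\given c_h)$ so that the left-hand side matches the target.

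Next I would note that, by the evolution and sharing maps of \Cref{evo}, the triple $(c_h, p_h, s_h)$ is a deterministic function of the full trajectory $\overline{\tau}_h$ (the pair $(c_h, p_h)$ is read off from $(o_{1:h}, a_{1:h-1})$, while $s_h$ is a trajectory coordinate). Hence both sums on the right-hand side are total variations of pushforwards of the trajectory laws $\PP^{\pi,\cG}$ and $\PP^{\pi,\hat{\cG}}$, and since total variation cannot increase under a pushforward, each is at most $\sum_{\overline{\tau}_H}|\PP^{\pi,\cG}(\overline{\tau}_H)-\PP^{\pi,\hat{\cG}}(\overline{\tau}_H)|$; this is the source of the factor $2$ in the bound. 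Finally I would invoke the first inequality of \Cref{lem:traj} to control the trajectory-level total variation by $\|\mu_1-\hat{\mu}_1\|_1$ plus the expected transition and emission gaps, which upon combining the three displays gives the stated inequality.

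The main obstacle is not any new estimate but checking that the first inequality of \Cref{lem:traj} transfers from POMDPs to the multi-agent setting. Its proof telescopes the trajectory law into the initial distribution, the conditional action law of the (shared) policy, the transitions, and the emissions; for a POSG this factorization is verbatim once $a_h$ and $o_h$ are read as the joint action and joint observation and the shared random seeds are marginalized out, because the policy factor is identical across $\cG$ and $\hat{\cG}$ and therefore cancels in each telescoping step. One should also double-check (i) that \Cref{str_indi} genuinely renders both conditionals policy-free---without it the left-hand side would not even be well-defined---and (ii) that $(c_h, p_h, s_h)$ and $c_h$ are honest functions of $\overline{\tau}_h$ so that the data-processing (pushforward) bound applies.
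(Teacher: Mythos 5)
Your proposal matches the paper's own proof essentially step for step: the same application of \Cref{lem:trick} with $x=c_h$ and $y=(s_h,p_h)$, the same use of \Cref{str_indi} to render the conditionals $\PP^{\cG}(s_h,p_h\given c_h)$ and $\PP^{\hat{\cG}}(s_h,p_h\given c_h)$ policy-free, the same marginalization (data-processing) bound producing the factor $2$ against the trajectory-level total variation, and the same invocation of the first inequality of \Cref{lem:traj}. Your explicit checks---that $(c_h,p_h)$ is a deterministic function of $\overline{\tau}_h$ via \Cref{evo}, and that the telescoping proof of \Cref{lem:traj} transfers verbatim to the POSG with joint actions/observations and marginalized shared seeds---are details the paper leaves implicit, and they are correct.
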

}
 \begin{proof}
 		For the second result, we notice that by  \Cref{lem:trick}, it holds that
	\$
	&\sum_{c_h,p_h,s_h}|\PP_h^{\pi, \cG}(c_h,p_h,s_h)-\PP_h^{\pi, \hat{\cG}}(c_h,p_h,s_h)|\ge -\sum_{c_h}|\PP_h^{\pi, \cP}(c_h)-\PP_h^{\pi, \hat{\cG}}(c_h)|+\EE_{\pi}^{\cG}\sum_{s_h, p_h}|\PP_h^{\pi, \cG}(s_h, p_h\given c_h)-\PP_h^{\pi, \hat{\cG}}(s_h, p_h\given c_h)|.
	\$
	Notice the fact that $\PP_h^{\pi, \cG}(s_h, p_h \given c_h), \PP_h^{\pi, \hat{\cG}}(s_h, p_h \given c_h)$ do not depend on $\pi$ due to \Cref{str_indi}, we conclude that
	\begin{align}
    \EE_{\pi}^{\cP}\|\PP^\cG(\cdot, \cdot\given c_h)-\PP^{\hat{\cG}}(\cdot, \cdot\given c_h)\|_1\le& \sum_{c_h,p_h s_h}|\PP_h^{\pi, \cG}(c_h,p_h, s_h)-\PP_h^{\pi, \hat{\cG}}(c_h, p_h, s_h)|+\sum_{c_h}|\PP_h^{\pi, \cG}(c_h)-\PP_h^{\pi, \hat{\cG}}(c_h)|\nonumber\\
    \le & 2\sum_{\overline{\tau}_H}|\PP^{\pi, \cG}(\overline{\tau}_H)-\PP^{\pi, \hat{\cG}}(\overline{\tau}_H)|\label{eq:belief-tv-posg},
	\end{align}
	where the last step again comes from the fact that after marginalization, the total variation distance will not increase. By combining it with \Cref{lem:traj}, we proved the second result. 
 \end{proof}
\begin{lemma}\label{lem:state}
	For any reward function $r_{1:H}$ of $\cP$ with $r_{h}:\cS\times\cA\rightarrow\RR$ for any $h\in[H]$, it holds that
	\$
	\max_{\pi\in\Pi^{\gen}} v^{\cP}(\pi)\le  \max_{\pi\in\Pi_\cS} v^{\cP}(\pi).
	\$
\end{lemma}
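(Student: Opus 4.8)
The plan is to reduce the statement to the classical optimality of Markov policies in a finite-horizon MDP. The crucial structural fact is that both the transition kernel $\TT_h(\cdot \mid s_h, a_h)$ and the reward $r_h(s_h, a_h)$ depend on the past only through the current state--action pair, so the induced state--reward process is an MDP, and the additional observational information available to a general policy $\pi \in \Pi^{\gen}$ cannot improve upon the best state-based policy. Since $\Pi_\cS \subseteq \Pi^{\gen}$ already gives the reverse inequality, only the direction stated in the lemma requires work.

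First I would define the MDP optimal value functions $\{V_h^\star : \cS \to \RR\}_{h \in [H+1]}$ by the backward Bellman recursion $V_{H+1}^\star \equiv 0$ and $V_h^\star(s) = \max_{a \in \cA}\{ r_h(s, a) + \EE_{s' \sim \TT_h(\cdot \mid s, a)}[V_{h+1}^\star(s')] \}$. Letting $\pi^\star_h(s)$ be a maximizing action, we have $\pi^\star \in \Pi_\cS$, and a standard backward expansion shows $v^\cP(\pi^\star) = \EE_{s_1 \sim \mu_1}[V_1^\star(s_1)]$; hence it suffices to prove $v^\cP(\pi) \le \EE_{s_1 \sim \mu_1}[V_1^\star(s_1)]$ for every $\pi \in \Pi^{\gen}$.

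The core step is the claim that for every $\pi \in \Pi^{\gen}$, every $h \in [H+1]$, and every reachable $\overline{\tau}_h = (s_{1:h}, o_{1:h}, a_{1:h-1})$, we have $V_h^{\pi, \cP}(\overline{\tau}_h) = \EE_\pi^\cP[\sum_{t=h}^H r_t(s_t, a_t) \mid \overline{\tau}_h] \le V_h^\star(s_h)$, which I would establish by backward induction on $h$. The base case $h = H+1$ is immediate since both sides vanish. For the inductive step, I condition first on $a_h \sim \pi_h(\cdot \mid \overline{\tau}_h)$ and then on $(s_{h+1}, o_{h+1})$, write $\overline{\tau}_{h+1} = (\overline{\tau}_h, a_h, s_{h+1}, o_{h+1})$, and invoke the induction hypothesis $V_{h+1}^{\pi, \cP}(\overline{\tau}_{h+1}) \le V_{h+1}^\star(s_{h+1})$. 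The key observation is that the conditional law of $s_{h+1}$ given $(\overline{\tau}_h, a_h)$ is exactly $\TT_h(\cdot \mid s_h, a_h)$ — this is where I use that the POMDP transition depends on the past only through $(s_h, a_h)$ — so that $\EE_\pi^\cP[\sum_{t=h+1}^H r_t \mid \overline{\tau}_h, a_h] \le \EE_{s_{h+1} \sim \TT_h(\cdot \mid s_h, a_h)}[V_{h+1}^\star(s_{h+1})]$. Since $r_h(s_h, a_h)$ is determined by $(s_h, a_h)$, averaging over $a_h$ and bounding by the maximizing action yields $V_h^{\pi, \cP}(\overline{\tau}_h) \le \max_{a \in \cA}\{ r_h(s_h, a) + \EE_{s' \sim \TT_h(\cdot \mid s_h, a)}[V_{h+1}^\star(s')] \} = V_h^\star(s_h)$, closing the induction.

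Finally, specializing to $h = 1$, where $\overline{\tau}_1 = (s_1, o_1)$, and taking the expectation over $s_1 \sim \mu_1$ and $o_1 \sim \OO_1(\cdot \mid s_1)$ gives $v^\cP(\pi) = \EE_{s_1, o_1}[V_1^{\pi, \cP}(s_1, o_1)] \le \EE_{s_1 \sim \mu_1}[V_1^\star(s_1)] = v^\cP(\pi^\star) \le \max_{\pi' \in \Pi_\cS} v^\cP(\pi')$; taking the maximum over $\pi \in \Pi^{\gen}$ on the left yields the lemma. I expect the only delicate point to be the careful justification of the Markov step — namely that conditioning on the entire enriched history $\overline{\tau}_h$ rather than on $s_h$ alone leaves the next-state law equal to $\TT_h(\cdot \mid s_h, a_h)$ — since this is precisely where the assumption that rewards and transitions depend on $(s_h, a_h)$ only makes the extra information of a general policy useless.
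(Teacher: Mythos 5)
Your proof is correct and takes essentially the same route as the paper's: both establish, by backward induction over $h$ using the fact that transitions and rewards depend on the history only through $(s_h,a_h)$, that the value of any $\pi\in\Pi^{\gen}$ at every reachable enriched history is dominated by the state-based optimum at $s_h$ (the paper phrases this as $Q_h^{\pi^\star,\cP}(s_h,a_h)\ge Q_h^{\pi,\cP}(s_{1:h},o_{1:h},a_{1:h})$ with $\pi^\star$ from value iteration, while you compare $V_h^{\pi,\cP}(\overline{\tau}_h)$ to the Bellman-optimal $V_h^\star(s_h)$ — an immaterial difference). Your explicit justification of the Markov step, that the law of $s_{h+1}$ given $(\overline{\tau}_h,a_h)$ is exactly $\TT_h(\cdot\mid s_h,a_h)$, is precisely the point the paper uses implicitly in its inductive expansion.
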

\begin{proof}
	Denote $\pi^\star\in\Pi_\cS$ to be the optimal policy obtained by running value iteration only on the state space $\cS$ for $\cP$. Now we are ready to prove the following argument for any $\pi\in\Pi^{\gen}$ inductively:
	\$
	Q_{h}^{\pi^\star, \cP}(s_h, a_h)\ge Q_{h}^{\pi, \cP}(s_{1:h}, o_{1:h}, a_{1:h}).
	\$
	It is easy to see the argument holds for $h=H$. {Fix state-action pair $(s_h,a_h)\in \cS\times \cA$ and trajectory $(s_{1:h-1},o_{1:h},a_{1:h-1})$, we note that:}
	\$ 
	Q_{h}^{\pi^\star, \cP}(s_h, a_h)
 &=r_{h}(s_h, a_h)+\EE_{s_{h+1}\sim\TT_h(\cdot\given s_h, a_h)}\left[\max_{a_{h+1}}Q_{h+1}^{\pi^\star, \cP}(s_{h+1}, a_{h+1})\right]\\
	&\ge r_{h}(s_h, a_h)+\EE_{\substack{s_{h+1}\sim\TT_h(\cdot\given s_h, a_h),\\{o_{h+1}\sim \OO_{h+1}(\cdot\mid s_{h+1})}}}\left[\max_{a_{h+1}}Q_{h+1}^{\pi, \cP}(s_{1:h+1}, o_{1:h+1},a_{1:h+1})\right]\\
	&\ge r_{h}(s_h, a_h)+\EE_{\substack{s_{h+1}\sim\TT_h(\cdot\given s_h, a_h),\\{o_{h+1}\sim \OO_{h+1}(\cdot\mid s_{h+1})}}}\left[\EE_{a_{h+1}\sim\pi_{h+1}(\cdot\given s_{1:h+1}, o_{1:h+1},a_{1:h})}Q_{h+1}^{\pi, \cP}(s_{1:h+1}, o_{1:h+1},a_{1:h+1})\right]\\
	&=Q_{h}^{\pi, \cP}(s_{1:h}, o_{1:h}, a_{1:h}),
	\$
	where the first inequality comes from the inductive hypothesis.
\end{proof}

\section{Missing Details in \Cref{sec:exp} }\label{sec:apx_exp}

\paragraph{Implementation details.}
For each problem setting, we generated $20$ POMDPs randomly and report the average performance and its standard deviation for each algorithms. For our privileged value learning methods, we find that using a finite memory of $3$ already provides strong performance. For our  privileged policy learning methods, we instantiate the MDP learning algorithm with the fully observable optimistic natural policy gradient algorithm \citep{shani2020optimistic}. Meanwhile, for both the decoder learning and belief learning, we find that just utilizing all the historic trajectories gives us reasonable performance without additional samples. For baselines, the hyperparameters $\alpha$ for $Q$-value update and step size for the policy update are tuned by grid search, where $\alpha$ controls the update of temporal difference learning (recall the update rule of temporal difference learning as $Q\leftarrow (1-\alpha)Q+\alpha Q^\text{target}$). For asymmetric $Q$-learning, we use $\epsilon$-greedy exploration, where we use the seminal decreasing rate of $\epsilon_t = \frac{H+1}{H+t}$ \cite{jin2018q}. Finally, all simulations are conducted on a personal laptop with Apple M1 CPU and 16 GB memory.
\paragraph{More results for general POMDPs.} We further experimented on problems of the larger size. In \Cref{fig-larger} , we provided additional $9$ cases in the following figures corresponds to POMDPs with different combinations of mthe odel size, $S=A=O\in \{4, 6, 10\}$ and $H\in \{10, 20, 30\}$.
\begin{figure*}[!t]
 \hspace{-6pt} 
\centering
\includegraphics[width=1.00\textwidth]{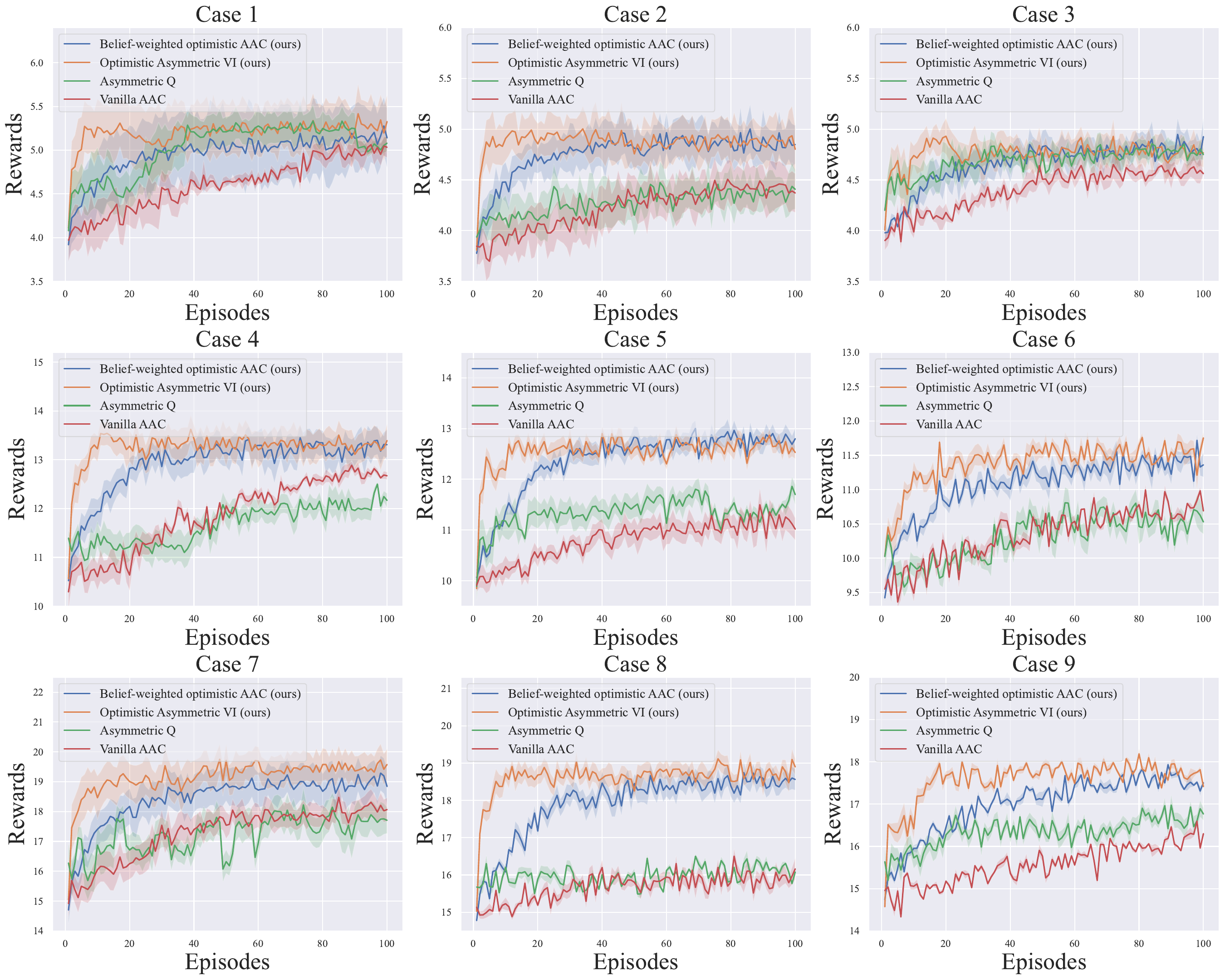}
\caption{Results for POMDPs of larger sizes, where our methods achieve the best performance with the lowest sample complexity (VI: value iteration; AAC: asymmetric actor-critic).}
\label{fig-larger}
 \end{figure*}

\paragraph{More results for POMDPs satisfying the deterministic filter condition.}

In \Cref{table-new}, we provided additional $9$ cases, which correspond to POMDPs with different combinations of the model size, $S=A=O\in \{4, 6, 10\}$ and $H\in \{20, 40\}$.
\begin{table}[ht]
\centering
\small
\renewcommand{\arraystretch}{1.3} 
\setlength{\tabcolsep}{8pt} 
\begin{tabular}{|l|c|c|c|c|}
\hline
\multirow{2}{*}{} 
& \multicolumn{4}{c|}{\textbf{Approaches}} \\ \cline{2-5} 
& \textbf{\begin{tabular}[c]{@{}c@{}}Asymmetric \\ Optimistic NPG (Ours)\end{tabular}} 
& \textbf{\begin{tabular}[c]{@{}c@{}}Expert Policy \\ Distillation (Ours)\end{tabular}} 
& \textbf{\begin{tabular}[c]{@{}c@{}}Asymmetric \\ Q-learning\end{tabular}} 
& \textbf{Vanilla AAC} \\ \hline
\multicolumn{5}{|c|}{\textbf{Deterministic POMDP}} \\ \hline
\textbf{Case 1} & $12.52 \pm 0.71$ & \pmb{$12.86 \pm 0.81$} & $11.78 \pm 0.72$ & $11.91 \pm 0.74$ \\ \hline
\textbf{Case 2} & $11.75 \pm 0.67$ & \pmb{$12.46 \pm 0.77$} & $10.27 \pm 0.69$ & $11.26 \pm 0.77$ \\ \hline
\textbf{Case 3} & $11.12 \pm 0.60$ & \pmb{$12.45 \pm 0.79$} & $10.01 \pm 0.62$ & $10.59 \pm 0.52$ \\ \hline
\textbf{Case 4} & $25.21 \pm 1.63$ & \pmb{$26.32 \pm 1.96$} & $24.18 \pm 1.72$ & $25.32 \pm 1.62$ \\ \hline
\textbf{Case 5} & $24.21 \pm 1.44$ & \pmb{$25.02 \pm 1.56$} & $24.19 \pm 1.32$ & $24.92 \pm 1.52$ \\ \hline
\textbf{Case 6} & $24.09 \pm 1.36$ & \pmb{$24.38 \pm 1.64$} & $23.23 \pm 1.41$ & $23.56 \pm 1.52$ \\ \hline
\multicolumn{5}{|c|}{\textbf{Block MDP}} \\ \hline
\textbf{Case 1} & $12.31 \pm 0.69$ & \pmb{$12.88 \pm 0.82$} & $11.98 \pm 0.74$ & $12.01 \pm 0.83$ \\ \hline
\textbf{Case 2} & $11.64 \pm 0.67$ & \pmb{$12.54 \pm 0.76$} & $10.53 \pm 0.67$ & $11.13 \pm 0.63$ \\ \hline
\textbf{Case 3} & $11.24 \pm 0.62$ & \pmb{$12.33 \pm 0.91$} & $10.85 \pm 0.71$ & $10.92 \pm 0.53$ \\ \hline
\textbf{Case 4} & $26.13 \pm 1.76$ & \pmb{$26.54 \pm 1.97$} & $25.82 \pm 1.54$ & $25.23 \pm 1.76$ \\ \hline
\textbf{Case 5} & $25.33 \pm 1.55$ & \pmb{$25.47 \pm 1.71$} & $24.87 \pm 1.43$ & $24.98 \pm 1.74$ \\ \hline
\textbf{Case 6} & $24.87 \pm 1.36$ & \pmb{$24.91 \pm 1.73$} & $24.46 \pm 1.52$ & $23.66 \pm 1.53$ \\ \hline
\end{tabular}
\vspace{2mm}
\caption{Rewards of different approaches for POMDPs under the deterministic filter condition (c.f. \Cref{def:deterministic-filter}).}
\label{table-new}
\end{table}

\section{Missing Details in \Cref{sec:marl} }\label{app:marl}
\begin{prevproof}{prop:decode} 
	
	Given the condition of \Cref{def:deterministic-filter}, the function $\phi_h$ that satisfies the condition of \Cref{prop:decode} can be constructed recursively as follows for any reachable $\tau_h$
	\$
	\phi_h(\tau_h):=\psi_h(\phi_{h-1}(\tau_{h-1}), a_{h-1}, o_h), 
	\$
	and $\phi_1(o_1):=\psi_1(o_1)$. Therefore, we can prove by induction that by belief update rule
	\$\PP^\cP(s_h\given \tau_h)=U_h(b^{\phi_{h-1}(\tau_{h-1})}; a_{h-1}, o_h)=b^{\psi_h(\phi_{h-1}(\tau_{h-1}), a_{h-1}, o_h)},
	\$
	where the last step is by the definition of our $\phi_h$. Therefore, we have $\PP^\cP(s_h=\psi_h(\phi_{h-1}(\tau_{h-1}), a_{h-1}, o_h)\given \tau_h)=1$.
	
	For the other direction, it is similar to the proof of Lemma C.1 in \cite{EfroniJKM22} for $m$-step decodable POMDP. Here we prove it for completeness. For any reachable trajectory $\tau_h\in\cT_h$, it holds by the belief update and induction that
	\$
	\PP^{\cP}(s_h\given \tau_h)=U_h(b^{\phi_{h-1}(\tau_{h-1})}, a_{h-1}, o_h)=\PP^\cP(s_{h}\given s_{h-1}=\phi_{h-1}(\tau_{h-1}), a_{h-1}, o_h).
	\$
	Meanwhile, since we know $\PP^{\cP}(\cdot\given \tau_h)$ is a one-hot vector, we can construct $\psi_h$ such that $\psi_h(s_{h-1}, a_{h-1}, o_{h})$ is the unique $s_h$ that makes $\PP^{\cP}(s_h\given \tau_h)>0$ with $s_{h-1}=\phi_{h-1}(\tau_{h-1})$. If this procedure does not complete the definition of $\psi$ for some $(s_{h-1}, a_{h-1}, o_h)$, it implies that either $s_{h-1}$ is not reachable or $o_h$ is not reachable given $s_{h-1}$, i.e., $\PP^{\cP}(o_h\given s_{h-1}, a_{h-1}^\prime)=0$ for any $a_{h-1}^\prime\in\cA$, thus recovering the conditions of \Cref{def:deterministic-filter}.
\end{prevproof}

\begin{prevproof}{prop:local-hard}

		Note that for any given problem instance of a POMDP, we can construct a POSG by adding a dummy agent that has the observation being the exact state at each time step, and has only one dummy action that does not affect the transition or reward. Therefore, even the local private information $p_{i, h}$ of the dummy agent can decode the underlying state, {and hence $(c_h, p_h)$ reveals the underlying  state}. 
 Therefore, the corresponding POSG with the dummy agent satisfies the condition in \Cref{prop:local-hard}. Meanwhile, it is direct to see that any CCE of the POSG is an optimal policy for the original POMDP. Now by the \texttt{PSPACE-hardness} of planning for POMDPs  \citep{papadimitriou1987complexity},  we proved our proposition.
\end{prevproof}
\begin{theorem}\label{thm:decoder}\xynew{Suppose the POSG $\cG$ satisfies \Cref{def:det-posg}.}
	For any $\pi\in\Pi_\cS$ and (potentially stochastic) decoding functions $\hat{g}=\{\hat{g}_{i, h}\}_{i\in[n], h\in[H]}$ with $\hat{g}_{i, h}:\cC_h\times\cP_{i, h}\rightarrow \Delta(\cS)$ for each $i\in[n], h\in[H]$, it holds that 
	\$
	&\operatorname{NE/CCE-gap}(\pi^{\hat{g}})-\operatorname{NE/CCE-gap}(\pi)\le 2nH^2\max_{i\in[n]}\max_{u_i\in\Pi_i}\max_{j\in[n], h\in[H]}\PP^{u_i\times\pi_{-i}, \cG}(s_h\not=\hat{g}_{j, h}(c_h, p_{j, h}))\\
	&\operatorname{CE-gap}(\pi^{\hat{g}})-\operatorname{CE-gap}(\pi)\le  2nH^2\max_{i\in[n]}\max_{m_i\in\cM_i}\max_{j\in[n], h\in[H]}\PP^{(m_i\diamond \pi_i)\odot\pi_{-i}, \cG}(s_h\not=\hat{g}_{j, h}(c_h, p_{j, h}))
	\$
	where $\pi^{\hat{g}}$ is the distilled policy 
 of $\pi$ through the decoding functions $\hat{g}$, where at step $h$, each agent $i$ firstly individually decodes the state by sampling $s_h\sim \hat{g}_{i, h}(\cdot\given c_h, p_{i, h})$, and then acts according to the expert $\pi_{i, h}$, \xynew{where in the following discussions, we slightly abuse the notation and regard $\hat{g}_{i, h}(c_h, p_{i, h})$ as a random variable following the distribution of $\hat{g}_{i, h}(\cdot\given c_h, p_{i, h})$.} In other words, the decoding process does not need correlations among the agents.
\end{theorem}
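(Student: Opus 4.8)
The plan is to control both gaps with a single coupling argument: the distilled joint policy $\pi^{\hat{g}}$ and the state-based expert $\pi$ induce \emph{identical} trajectories as long as every agent decodes the current state correctly, so every value we compare differs by at most $H$ times the probability that some agent mis-decodes at some step. I would first isolate this as a lemma. Fix agent $i$ and a subset $J\subseteq[n]$; let $\sigma$ be any joint policy in which each $j\in J$ plays the distilled rule (draw $\hat{s}_{j,h}\sim\hat{g}_{j,h}(\cdot\given c_h,p_{j,h})$, then act via $\pi_{j,h}(\cdot\given\hat{s}_{j,h})$) and let $\sigma'$ replace every such $j$ by its state-based rule $\pi_{j,h}(\cdot\given s_h)$. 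Couple the two processes by sharing all exogenous randomness (initial state, transitions, emissions, action seeds, and decoding seeds). On the event $\cE=\{\hat{s}_{j,h}=s_h\ \forall j\in J, h\in[H]\}$ the distilled agents play exactly $\pi_{j,h}(\cdot\given s_h)$, so the coupled trajectories and rewards coincide; since rewards lie in $[0,1]$ over horizon $H$, $|v_i^\cG(\sigma)-v_i^\cG(\sigma')|\le H\,\PP(\cE^c)$. The two processes being identical up to the first decoding error, the law of that first error is the same under $\sigma$ and $\sigma'$, so $\PP(\cE^c)$ may be evaluated under $\sigma'$; a union bound gives $\PP(\cE^c)\le |J|\,H\max_{j\in J,h}\PP^{\sigma',\cG}(s_h\ne\hat{g}_{j,h}(c_h,p_{j,h}))$.

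\textbf{NE/CCE gap.} Let $i^\star$ attain $\operatorname{NE/CCE-gap}(\pi^{\hat{g}})$. Using $\operatorname{NE/CCE-gap}(\pi)\ge \max_{u_{i^\star}\in\Pi_{i^\star}}v_{i^\star}^\cG(u_{i^\star}\times\pi_{-i^\star})-v_{i^\star}^\cG(\pi)$, I would write
\begin{align*}
\operatorname{NE/CCE-gap}(\pi^{\hat{g}})-\operatorname{NE/CCE-gap}(\pi)\le \underbrace{\Big(\max_{u_{i^\star}}v_{i^\star}^\cG(u_{i^\star}\times\pi^{\hat{g}}_{-i^\star})-\max_{u_{i^\star}}v_{i^\star}^\cG(u_{i^\star}\times\pi_{-i^\star})\Big)}_{\text{(I)}}+\underbrace{\Big(v_{i^\star}^\cG(\pi)-v_{i^\star}^\cG(\pi^{\hat{g}})\Big)}_{\text{(II)}}.
\end{align*}
Because the deviating agent $i^\star$ is \emph{fixed} across the two maxima in (I), $|\text{(I)}|\le\max_{u_{i^\star}}|v_{i^\star}^\cG(u_{i^\star}\times\pi^{\hat{g}}_{-i^\star})-v_{i^\star}^\cG(u_{i^\star}\times\pi_{-i^\star})|$, and the coupling lemma with $J=[n]\setminus\{i^\star\}$ and reference $u_{i^\star}\times\pi_{-i^\star}$ bounds this by $(n-1)H^2\max_{u_{i^\star}}\max_{j,h}\PP^{u_{i^\star}\times\pi_{-i^\star},\cG}(s_h\ne\hat{g}_{j,h})$. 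In (II) all $n$ agents switch, so the lemma with $J=[n]$ and reference $\pi$ gives $|\text{(II)}|\le nH^2\max_{j,h}\PP^{\pi,\cG}(s_h\ne\hat{g}_{j,h})$.

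\textbf{Matching the stated form.} The last step is to rewrite the on-policy quantity $\PP^{\pi,\cG}$ of (II) in the deviation form of the theorem, and here the deterministic-filter condition (\Cref{def:det-posg}) is exactly what I need: agent $i^\star$'s perfect decoder $\phi_{i^\star,h}$ represents its state-based component as the information-based policy $u_{i^\star,h}(\cdot\given c_h,p_{i^\star,h})=\pi_{i^\star,h}(\cdot\given\phi_{i^\star,h}(c_h,p_{i^\star,h}))\in\Pi_{i^\star}$, which (combined with $\pi_{-i^\star}$, itself read through perfect decoders, this being also what makes $\operatorname{NE/CCE-gap}(\pi)$ meaningful for $\pi\in\Pi_\cS$, the multi-agent analogue of \Cref{def:tilde_pi_E}) induces the identical trajectory law. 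Hence $\PP^{\pi,\cG}(\cdots)=\PP^{u_{i^\star}\times\pi_{-i^\star},\cG}(\cdots)\le\max_{u_{i^\star}}\PP^{u_{i^\star}\times\pi_{-i^\star},\cG}(\cdots)$. Adding (I) and (II) and maximizing over $i$ yields the constant $(2n-1)H^2\le 2nH^2$, giving the NE/CCE bound.

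\textbf{CE gap and the main obstacle.} The CE case runs identically once each deviation $u_i\in\Pi_i$ is replaced by a modification $m_i\in\cM_i$, comparing $(m_i\diamond\pi^{\hat{g}}_i)\odot\pi^{\hat{g}}_{-i}$ against $(m_i\diamond\pi_i)\odot\pi_{-i}$; now the modified agent is itself distilled, so \emph{both} analogues of (I) and (II) involve all $n$ agents, each bounded by $nH^2\max_{m_{i^\star},j,h}\PP^{(m_{i^\star}\diamond\pi_{i^\star})\odot\pi_{-i^\star},\cG}(s_h\ne\hat{g}_{j,h})$ (taking $m_{i^\star}=\mathrm{id}$ to absorb the on-policy term), summing to $2nH^2$. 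I expect the main obstacle to be making the coupling airtight under information sharing: I must check that whenever all played actions agree the common and private information evolve identically under \Cref{evo}, so the decoder inputs $(c_h,p_{j,h})$, and hence the mis-decoding events, stay synchronized across the two processes, and that the coupling-until-first-error step legitimately transfers $\PP(\cE^c)$ to the state-based reference, i.e. $\PP^\sigma(\cE^c)=\PP^{\sigma'}(\cE^c)$. The independent per-agent decoding noise stressed in the statement is precisely what preserves the product structure of $\pi^{\hat{g}}$ throughout the coupling.
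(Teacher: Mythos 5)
Your proposal is correct and takes essentially the same route as the paper's proof: your coupling-with-shared-randomness lemma is precisely the paper's reduction of stochastic $\hat{g}$ to a mixture of deterministic decoders combined with the trajectory-matching identity $\PP^{u_i\times\pi_{-i}, \cG}(\text{all decode correctly})=\PP^{u_i\times\pi_{-i}^{\hat{\phi}}, \cG}(\text{all decode correctly})$, and your decomposition into a best-response discrepancy term (I) and an on-policy term (II), with the on-policy term absorbed into the deviation form via the deterministic filter condition (taking $u_{i^\star}$ to be the decoder-composed $\pi_{i^\star}$, resp.\ $m_{i^\star}=\mathrm{id}$ for CE), mirrors the paper's chain step for step and yields the same $2nH^2$ after the union bound. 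The only cosmetic difference is that by interpreting $\operatorname{NE/CCE/CE-gap}(\pi)$ through the perfect decoders (the multi-agent analogue of \Cref{def:tilde_pi_E}) you fold in what the paper handles explicitly via the equalities $\max_{u_i\in\Pi_{\cS,i}}v_i(u_i\times\pi_{-i})=\max_{u_i\in\Pi_i}v_i(u_i\times\pi_{-i})$ and \Cref{lem:stf}.
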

\begin{proof}
	For notational simplicity, we write $v_i$ instead of $v_i^\cG$ when the underlying model is clear from the context. Firstly, we consider any deterministic decoding function $\hat{\phi}=\{\hat{\phi}_{i, h}\}_{i\in[n], h\in[H]}$ with $\hat{\phi}_{i, h}:\cC_h\times\cP_{i, h}\rightarrow \cS$ for each $i\in[n], h\in[H]$, and note the following that for any $i\in[n]$,
	\$
	&v_i(\pi)-v_i(\pi^{\hat{\phi}})=\EE_{\pi}^\cG[R]-\EE_{\pi^{\hat{\phi}}}^\cG[R]\\
	&=\EE_{\pi}^\cG[R\mathbbm{1}[\forall j\in[n], h\in[H]: s_h=\hat{\phi}_{j, h}(c_h, p_{j, h})]]-\EE_{\pi^{\hat{\phi}}}^\cG[R\mathbbm{1}[\forall j\in[n], h\in[H]: s_h=\hat{\phi}_{j, h}(c_h, p_{j, h})]] \\ &\qquad+\EE_{\pi}^\cG[R\mathbbm{1}[\exists j\in[n], h\in[H]: s_h\not=\hat{\phi}_{j, h}(c_h, p_{j, h})]] -\EE_{\pi^{\hat{\phi}}}^\cG[R\mathbbm{1}[\exists j\in[n], h\in[H]: s_h\not=\hat{\phi}_{j, h}(c_h, p_{j, h})]]\\
	&= \EE_{\pi}^\cG[R\mathbbm{1}[\exists j\in[n], h\in[H]: s_h\not=\hat{\phi}_{j, h}(c_h, p_{j, h})]]-\EE_{\pi^{\hat{\phi}}}^\cG[R\mathbbm{1}[\exists j\in[n], h\in[H]: s_h\not=\hat{\phi}_{j, h}(c_h, p_{j, h})]], 
	\$
	where we define  $R:=\sum_h r_{i,h}(s_h, a_h)$, 
 and the last step is by the definition of $\pi^{\hat{\phi}}$
	\$
	v_i(\pi)-v_i(\pi^{\hat{\phi}})\le H\PP^{\pi, \cG}(\exists j\in[n], h\in[H]: s_h\not=\hat{\phi}_{j, h}(c_h, p_{j, h})).
	\$
	Therefore, by noticing the fact that $\hat{g}$ is equivalent to a mixture of deterministic decoding functions, where at the beginning of each episode, one can firstly independently sample the outcome for each $(c_h, p_{j, h})$ for $j\in[n]$ and $h\in[H]$, we conclude that
	\$
	v_i(\pi)-v_i(\pi^{\hat{g}})&=v_i(\pi)-\EE_{\hat{\phi}\sim \hat{g}}v_i(\pi^{\hat{\phi}})\le H\EE_{\hat{\phi}\sim \hat{g}}\PP^{\pi, \cG}(\exists j\in[n], h\in[H]: s_h\not=\hat{\phi}_{j, h}(c_h, p_{j, h}))\\
	&=H\PP^{\pi, \cG}(\exists j\in[n], h\in[H]: s_h\not=\hat{g}_{j, h}(c_h, p_{j, h})).
	\$
	Now we prove our result for NE and CCE first. Due to similar arguments for evaluating $v_i(\pi)-v_i(\pi^{\hat{\phi}})$, for any $u_i\in\Pi_i\cup \Pi_{\cS, i}$, it holds that
	\$
	v_i(u_i\times\pi_{-i}^{\hat{\phi}})-v_i(u_i\times\pi_{-i})&\le \EE_{u_i\times\pi_{-i}^{\hat{\phi}}}^{\cG}[R\mathbbm{1}[\exists j\in[n]\setminus \{i\}, h\in[H]: s_h\not=\hat{\phi}_{j, h}(c_h, p_{j, h})]]\\
	&\le H\PP^{u_i\times\pi_{-i}^{\hat{\phi}}, \cG}(\exists j\in[n]\setminus \{i\}, h\in[H]: s_h\not=\hat{\phi}_{j, h}(c_h, p_{j, h})).
	\$
	We notice the following fact that
	\$
	\PP^{u_i\times\pi_{-i}, \cG}(\forall j\in[n]\setminus\{i\}, h\in[H]: s_h=\hat{\phi}_{j, h}(c_h, p_{j, h}))=\sum_{\bar{\tau}_H\in{\overline{\cT}}_H(\hat{\phi})}\PP^{u_i\times\pi_{-i}, \cG}(\bar{\tau}_H),
	\$
	where we define $\overline{\cT}_H(\hat{\phi}):=\{\overline{\tau}_H\in\overline{\cT}_H\given \forall j\in[n]\setminus\{i\}, h\in[H]: s_h=\hat{\phi}_{j, h}(c_h, p_{j, h})\}$. By definition of $\pi^{\hat{\phi}}$, it holds that
	\$
	\forall \overline{\tau}_H\in{\overline{\cT}}_H(\hat{\phi}): \PP^{u_i\times\pi_{-i}, \cG}(\overline{\tau}_H)= \PP^{u_i\times\pi^{\hat{\phi}}_{-i}, \cG}(\overline{\tau}_H).
	\$
	Therefore, we have 
	\$
 &\PP^{u_i\times\pi_{-i}, \cG}(\forall j\in[n]\setminus\{i\}, h\in[H]: s_h=\hat{\phi}_{j, h}(c_h, p_{j, h}))=\PP^{u_i\times\pi^{\hat{\phi}}_{-i}, \cG}(\forall j\in[n]\setminus\{i\}, h\in[H]: s_h=\hat{\phi}_{j, h}(c_h, p_{j, h})).
	\$
	Correspondingly, it holds that
	\$
	&\PP^{u_i\times\pi_{-i}, \cG}(\exists j\in[n]\setminus\{i\}, h\in[H]: s_h\not=\hat{\phi}_{j, h}(c_h, p_{j, h})) =\PP^{u_i\times\pi^{\hat{\phi}}_{-i}, \cG}(\exists j\in[n]\setminus\{i\}, h\in[H]: s_h\not=\hat{\phi}_{j, h}(c_h, p_{j, h})),
	\$
	which implies that 
	\$
	v_i(u_i\times\pi_{-i}^{\hat{\phi}})-v_i(u_i\times\pi_{-i})
	&\le H\PP^{u_i\times\pi_{-i}^{\hat{\phi}}, \cG}(\exists j\in[n]\setminus \{i\}, h\in[H]: s_h\not=\hat{\phi}_{j, h}(c_h, p_{j, h}))\\
	&= H\PP^{u_i\times\pi_{-i}, \cG}(\exists j\in[n]\setminus \{i\}, h\in[H]: s_h\not=\hat{\phi}_{j, h}(c_h, p_{j, h})).
	\$
	Again by the fact that $\hat{g}$ is equivalent to a mixture of deterministic decoding functions, it holds
	\#
		\nonumber v_i(u_i\times\pi_{-i}^{\hat{g}})-v_i(u_i\times\pi_{-i})&=\EE_{\hat{\phi}\sim\hat{g}}v_i(u_i\times\pi_{-i}^{\hat{\phi}})-v_i(u_i\times\pi_{-i})\\
	\nonumber &\le H\EE_{\hat{\phi}\sim\hat{g}}\PP^{u_i\times\pi_{-i}, \cG}(\exists j\in[n]\setminus \{i\}, h\in[H]: s_h\not=\hat{\phi}_{j, h}(c_h, p_{j, h}))\\
	&=H\PP^{u_i\times\pi_{-i}, \cG}(\exists j\in[n]\setminus \{i\}, h\in[H]: s_h\not=\hat{g}_{j, h}(c_h, p_{j, h})).\label{eq:same}
	\#
	Now we are ready to evaluate the NE/CCE-gap of policy $\pi^{\hat{g}}$ as follows:
	\$
	&\operatorname{NE/CCE-gap}(\pi^{\hat{g}})-\operatorname{NE/CCE-gap}(\pi)\\
	&\quad\le \max_{i\in[n]}\InParentheses{\max_{u_i\in\Pi_i}v_i(u_i\times\pi_{-i}^{\hat{g}})-\max_{u_i\in\Pi_{\cS, i}}v_i(u_i\times\pi_{-i})}+ H\PP^{\pi, \cG}(\exists j\in[n], h\in[H]: s_h\not=\hat{g}_{j, h}(c_h, p_{j, h})).
	\$
	Now we notice that $\max_{u_i\in\Pi_{\cS, i}}v_i(u_i\times\pi_{-i})=\max_{u_i\in\Pi_i}v_i(u_i\times\pi_{-i})$ since $\Pi_{\cS, i}\subseteq\Pi_i$ by the deterministic filter condition  \Cref{def:deterministic-filter} and $\pi_{-i}$ is a Markov policy. Therefore, we conclude that 
\small	
 \$
	&\operatorname{NE/CCE-gap}(\pi^{\hat{g}})-\operatorname{NE/CCE-gap}(\pi)\\
	&\quad\le \max_{i\in[n]}\InParentheses{ \max_{u_i\in\Pi_i}v_i(u_i\times\pi_{-i}^{\hat{g}})-\max_{u_i\in\Pi_i}v_i(u_i\times\pi_{-i})} + H\PP^{\pi, \cG}(\exists j\in[n], h\in[H]: s_h\not=\hat{g}_{j, h}(c_h, p_{j, h}))\\
	&\quad\le \max_{i\in[n]}\InParentheses{\max_{u_i\in\Pi_i}\InParentheses{ v_i(u_i\times\pi_{-i}^{\hat{g}})-v_i(u_i\times\pi_{-i})}}+ H\PP^{\pi, \cG}(\exists j\in[n], h\in[H]: s_h\not=\hat{g}_{j, h}(c_h, p_{j, h}))\\
	&\quad\le \max_{i\in[n]}\InParentheses{\max_{u_i\in\Pi_i}H\PP^{u_i\times\pi_{-i}, \cG}(\exists j\in[n]\setminus\{i\}, h\in[H]: s_h\not=\hat{g}_{j, h}(c_h, p_{j, h}))}  +H\PP^{\pi, \cG}(\exists j\in[n], h\in[H]: s_h\not=\hat{g}_{j, h}(c_h, p_{j, h}))\\
	&\quad\le 2H\max_{i\in[n], u_i\in\Pi_i}\sum_{j\in[n]}\sum_{h}\PP^{u_i\times\pi_{-i}, \cG}(s_h\not=\hat{g}_{j, h}(c_h, p_{j, h}))\\
	&\quad\le 2nH^2\max_{i\in[n], u_i\in\Pi_i, j\in[n], h\in[H]}\PP^{u_i\times\pi_{-i}, \cG}(s_h\not=\hat{g}_{j, h}(c_h, p_{j, h})),
	\$
 \normalsize
	where the second last step is by a union bound, thus proving our result for NE and CCE.
	
	For CE, consider any strategy modification $m_i\in\cM_i\cup\cM_{\cS, i}$, it holds that 
	\$
	&\operatorname{CE-gap}(\pi^{\hat{g}})-\operatorname{CE-gap}(\pi)\\
	&\quad\le \max_{m_i\in\cM_i}v_i((m_i\diamond\pi_i^{\hat{g}})\odot\pi_{-i}^{\hat{g}})-\max_{m_i\in\cM_{\cS, i}}v_i((m_i\diamond\pi_i)\odot\pi_{-i})  + H\PP^{\pi, \cG}(\exists j\in[n], h\in[H]: s_h\not=\hat{g}_{j, h}(c_h, p_{j, h})).
	\$
	Now we notice that $\max_{m_i\in\cM_{\cS, i}}v_i((m_i\diamond\pi_i)\odot\pi_{-i})=\max_{m_i\in\cM_{\cS, i}}v_i((m_i\diamond\pi_i)\odot\pi_{-i})$ since $\cM_{\cS, i}\subseteq\cM_{i}$ by the deterministic filter condition \Cref{def:det-posg} and \Cref{lem:stf}. Therefore, we conclude that 
		\$
	&\operatorname{CE-gap}(\pi^{\hat{g}})-\operatorname{CE-gap}(\pi)\\
	&\quad\le \max_{i\in[n]}\max_{m_i\in\cM_i}v_i((m_i\diamond \pi_i^{\hat{g}})\odot\pi_{-i}^{\hat{g}})-\max_{m_i\in\cM_i}v_i((m_i\diamond \pi_i)\odot\pi_{-i})  + H\PP^{\pi, \cG}(\exists j\in[n], h\in[H]: s_h\not=\hat{g}_{j, h}(c_h, p_{j, h}))\\
	&\quad\le \max_{i\in[n]}\max_{m_i\in\cM_i}\InParentheses{ v_i((m_i\diamond \pi_i^{\hat{g}})\odot\pi_{-i}^{\hat{g}})-v_i((m_i\diamond \pi_i)\odot\pi_{-i})} + H\PP^{\pi, \cG}(\exists j\in[n], h\in[H]: s_h\not=\hat{g}_{j, h}(c_h, p_{j, h}))\\
	&\quad\le \max_{i\in[n]}\max_{m_i\in\cM_i}H\PP^{(m_i\diamond \pi_i)\odot\pi_{-i}, \cG}(\exists j\in[n]\setminus\{i\}, h\in[H]: s_h\not=\hat{g}_{j, h}(c_h, p_{j, h}))\\
	&\qquad\qquad + H\PP^{\pi, \cG}(\exists j\in[n], h\in[H]: s_h\not=\hat{g}_{j, h}(c_h, p_{j, h}))\\
	&\quad\le 2H\max_{i\in[n], m_i\in\cM_i}\sum_{j\in[n]}\sum_{h}\PP^{(m_i\diamond \pi_i)\odot\pi_{-i}, \cG}(s_h\not=\hat{g}_{j, h}(c_h, p_{j, h}))\\
	&\quad\le 2nH^2\max_{i\in[n], m_i\in\cM_i, h\in[H], j\in[n]}\PP^{(m_i\diamond \pi_i)\odot\pi_{-i}, \cG}(s_h\not=\hat{g}_{j, h}(c_h, p_{j, h}))
	\$
	where the third step is due to the same reason as \Cref{eq:same}.
\end{proof}

\begin{lemma}\label{lem:stf}
	For any $\pi\in\Pi_\cS$, it holds for any \xynew{reward function and} $i\in[n]$ that
	\$
		\max_{m_i\in\cM_i^{\gen}}v_i((m_i\diamond\pi_i)\odot\pi_{-i})=\max_{m_i\in\cM_{\cS, i}}v_i((m_i\diamond\pi_i)\odot\pi_{-i}).
	\$
\end{lemma}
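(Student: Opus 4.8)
The plan is to reduce the optimization over strategy modifications for agent $i$ to a single-agent finite-horizon Markov decision problem, and then invoke the classical fact that Markov policies are optimal in such problems. Since $\cM_{\cS, i}\subseteq \cM_i^{\gen}$, the inequality $\max_{m_i\in\cM_i^{\gen}} v_i((m_i\diamond\pi_i)\odot\pi_{-i}) \ge \max_{m_i\in\cM_{\cS,i}} v_i((m_i\diamond\pi_i)\odot\pi_{-i})$ is immediate, so the entire content of the lemma is the reverse inequality, i.e.\ that conditioning a modification on the full history cannot beat conditioning it only on the current state.

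First I would fix $\pi=(\pi_j)_{j\in[n]}\in\Pi_\cS$ and agent $i$, and construct an auxiliary MDP $\cM^{\mathrm{aux}}$ with state space $\cS$, horizon $H$, and action set $\cN_i:=\{\nu:\cA_i\rightarrow\cA_i\}$ (the finite set of all self-maps of $\cA_i$). For each step $h$, state $s_h$, and $\nu\in\cN_i$, I define the induced reward and transition by marginalizing the original dynamics over the state-based policies:
\[
\tilde r_h(s_h,\nu):=\EE_{\substack{a_{i,h}\sim\pi_{i,h}(\cdot\given s_h)\\ a_{-i,h}\sim\pi_{-i,h}(\cdot\given s_h)}}\InBrackets{r_{i,h}(s_h,(\nu(a_{i,h}),a_{-i,h}))},
\]
\[
\tilde P_h(\cdot\given s_h,\nu):=\EE_{\substack{a_{i,h}\sim\pi_{i,h}(\cdot\given s_h)\\ a_{-i,h}\sim\pi_{-i,h}(\cdot\given s_h)}}\InBrackets{\TT_h(\cdot\given s_h,(\nu(a_{i,h}),a_{-i,h}))}.
\]
The key step is then a value-preserving correspondence: executing $(m_i\diamond\pi_i)\odot\pi_{-i}$ in $\cG$ produces, on the state sequence $s_{1:H}$, exactly the law and per-step expected reward of the $\cM^{\mathrm{aux}}$-policy that plays the map $m_{i,h}$ at step $h$; hence $v_i((m_i\diamond\pi_i)\odot\pi_{-i})$ equals the value of the corresponding policy in $\cM^{\mathrm{aux}}$. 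Under this correspondence a modification $m_i\in\cM_i^{\gen}$, which may condition on the entire history $(s_{1:h},o_{1:h},a_{1:h-1})$, maps to a history-dependent policy of $\cM^{\mathrm{aux}}$, while $m_i\in\cM_{\cS,i}$ maps to a Markov (state-based) policy. Because every $\pi_j$ is state-based and $\TT_h,r_{i,h}$ depend on the history only through $(s_h,a_h)$, the current state $s_h$ is a sufficient statistic: given $s_h$ and the future maps, the reward-to-go is independent of the past, so $\cM^{\mathrm{aux}}$ is a genuine MDP in which the observations $o_{1:h}$ appear in the history but are irrelevant to the dynamics and rewards.

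Finally I would invoke the classical finite-horizon MDP theory: the optimal value over all history-dependent policies of $\cM^{\mathrm{aux}}$ is attained by a deterministic Markov policy, computed by backward induction on $V^\star_h(s)=\max_{\nu\in\cN_i}\{\tilde r_h(s,\nu)+\EE_{s'\sim\tilde P_h(\cdot\given s,\nu)}[V^\star_{h+1}(s')]\}$ with $V^\star_{H+1}\equiv 0$. Translating the optimal Markov policy back through the correspondence yields a modification in $\cM_{\cS,i}$ attaining $\max_{m_i\in\cM_i^{\gen}} v_i((m_i\diamond\pi_i)\odot\pi_{-i})$, giving the reverse inequality and hence equality. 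The main obstacle I anticipate is making the value-preserving correspondence fully rigorous, i.e.\ verifying that $s_h$ is a sufficient statistic and that the randomness of the observations $o_{1:h}$ visible to a $\cM_i^{\gen}$-modification cannot be exploited; this is intuitively clear from the Markov structure and the fact that no agent's policy reacts to observations, but it requires a careful induction showing that any history-dependent modification can be replaced, step by step and without decreasing $v_i$, by one depending only on $s_h$.
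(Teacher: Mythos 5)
Your reduction is sound, and at bottom it is the same argument as the paper's: the paper proves the lemma by a direct backward induction showing $V_{i,h}^{(m_i^\star\diamond\pi_i)\odot\pi_{-i},\cG}(s_{1:h},o_{1:h},a_{1:h-1})\le V_{i,h}^{(\hat m_i^\star\diamond\pi_i)\odot\pi_{-i},\cG}(s_h)$ for all $h$, which is precisely the ``careful induction'' you flag but defer at the end of your proposal. Your auxiliary MDP with map-valued actions $\nu:\cA_i\rightarrow\cA_i$ packages that induction as the classical Markov-optimality theorem, and it closely parallels a construction the paper itself uses elsewhere: the extended MDP of \Cref{def:extended mdp}, which instead augments the state to $(s_h,a_{i,h})$ and takes the modified action as the MDP action — the two are equivalent encodings of conditioning on the recommendation. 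One genuine merit of your packaging is that defining the state-based comparator by backward induction on $V_h^\star$ makes it greedy at \emph{every} state, which cleanly handles states unreachable under the modified policy, a point the paper's argmax-defined $\hat m_i^\star$ glosses over.

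Two repairs are needed, however. First, you marginalize with $a_{i,h}\sim\pi_{i,h}(\cdot\given s_h)$ and $a_{-i,h}\sim\pi_{-i,h}(\cdot\given s_h)$ drawn separately, i.e., as a product. The lemma is invoked in the CE branch of \Cref{thm:decoder}, where the expert $\pi\in\Pi_\cS$ is a \emph{correlated} joint policy and the whole point of a strategy modification is that it conditions on the recommended action $a_{i,h}$, whose conditional law over $a_{-i,h}$ carries information; with independent draws your value-preserving correspondence $v_i((m_i\diamond\pi_i)\odot\pi_{-i})$ $=$ auxiliary-MDP value fails. The fix is immediate — draw $a_h\sim\pi_h(\cdot\given s_h)$ jointly inside the expectation, which your map-valued actions accommodate, since $\nu(a_{i,h})$ sits inside that joint expectation. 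Second, the ``classical finite-horizon MDP theory'' you invoke is usually stated for policies depending on the MDP's \emph{internal} history $(s_{1:h},\nu_{1:h-1})$, whereas an $m_i\in\cM_i^{\gen}$ also conditions on the observations $o_{1:h}$ and the realized joint actions — exogenous side information correlated with the state trajectory. You correctly identify that one must check $s_h$ is a sufficient statistic against this richer filtration, but that check is essentially the entire content of the lemma, so citing the off-the-shelf theorem does not discharge it; you must run the induction yourself, and it is exactly the paper's two-step inductive bound (replace the continuation value by the inductive hypothesis, then bound the history-dependent one-step map choice by the optimal state-based map at $s_h$).
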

\begin{proof}
	Denote $m_i^\star\in \argmax_{m_i\in\cM_i^{\gen}}v_i((m_i\diamond\pi_i)\odot\pi_{-i})$ and $\hat{m}_i^\star\in \argmax_{m_i\in\cM_{\cS, i}}v_i((m_i\diamond\pi_i)\odot\pi_{-i})$. Now we shall prove that $V_{i, h}^{(m_i^\star\diamond\pi_i)\odot\pi_{-i}, \cG}(s_{1:h}, o_{1:h}, a_{1:h-1})\le V_{i, h}^{(\hat{m}_i^\star\diamond\pi_i)\odot\pi_{-i}, \cG}(s_h)$ inductively for each $h\in[H]$. Note that it holds for $h=H+1$. Now we consider the following
	\$
	&V_{i, h}^{(m_i^\star\diamond\pi_i)\odot\pi_{-i}, \cG}(s_{1:h}, o_{1:h}, a_{1:h-1})\\
	&\quad=\EE_{\substack{a_h\sim\pi_h(\cdot\given s_{h})\\ s_{h+1}\sim\TT_{h}(\cdot\given s_h, m_{i, h}^{\star}(s_{1:h}, o_{1:h}, a_{1:h-1}, a_{i, h}), a_{-i, h})\\ o_{h+1}\sim\OO_{h+1}(\cdot\given s_{h+1})}}\Bigg\{ r_{h}(s_h, m_{i, h}^{\star}(s_{1:h}, o_{1:h}, a_{1:h-1}, a_{i, h}), a_{-i, h}) \\
 &\qquad\qquad+ V_{i, h+1}^{(m_i^\star\diamond\pi_i)\odot\pi_{-i}, \cG}(s_{1:h+1}, o_{1:h+1}, a_{1:h-1}, m_{i, h}^{\star}(s_{1:h}, o_{1:h}, a_{1:h-1}, a_{i, h}), a_{-i, h})\Bigg\}\\
&\quad\le\EE_{\substack{a_h\sim\pi_h(\cdot\given s_{h})\\ s_{h+1}\sim\TT_{h}(\cdot\given s_h, m_{i, h}^{\star}(s_{1:h}, o_{1:h}, a_{1:h-1}, a_{i, h}), a_{-i, h})\\ o_{h+1}\sim\OO_{h+1}(\cdot\given s_{h+1})}}\bigg\{ r_{h}(s_h, m_{i, h}^{\star}(s_{1:h}, o_{1:h}, a_{1:h-1}, a_{i, h}), a_{-i, h}) + V_{i, h+1}^{(\hat{m}_i^\star\diamond\pi_i)\odot\pi_{-i}, \cG}(s_{h+1})\bigg\}\\
&\quad\le   \EE_{\substack{a_h\sim\pi_h(\cdot\given s_{h})\\ s_{h+1}\sim\TT_{h}(\cdot\given s_h, \hat{m}_{i, h}^{\star}(s_h, a_{i, h}), a_{-i, h})\\ o_{h+1}\sim\OO_{h+1}(\cdot\given s_{h+1})}}\InBrackets{r_{h}(s_h, \hat{m}_{i, h}^{\star}(s_h, a_{i, h}), a_{-i, h}) + V_{i, h+1}^{(\hat{m}_i^\star\diamond\pi_i)\odot\pi_{-i}, \cG}(s_{h+1})}\\
&\quad=V_{i, h}^{(\hat{m}_i^\star\diamond\pi_i)\odot\pi_{-i}, \cG}(s_{h}),
	\$ 
	where the second inequality follows from the inductive hypothesis and the third step is by the definition of $\hat{m}_i^\star\in \argmax_{m_i\in\cM_{\cS, i}}v_i((m_i\diamond\pi_i)\odot\pi_{-i})$.
\end{proof}
\begin{lemma}\label{lem:ma-decode}
	Given an approximate POSG $\hat{\cG}$ that satisfies \Cref{str_indi} with approximate transitions and emissions being $\{\hat{\TT}_{h}, \hat{\OO}_h\}_{h\in[H]}$, we define the approximate decoding function $\hat{g}$ to be 
	\$
	\hat{g}_{i, h}(s_h\given c_h, p_{i, h}):=\PP^{\hat{\cG}}(s_h\given c_h, p_{i, h}),
	\$
	for each $h\in[H]$, $s_h\in\cS$, $c_h\in\cC_h$, $p_{i, h}\in\cP_{i, h}$. Then it holds that for any $\pi\in\Pi_{\cS}$,
	\$
	&\max_{i\in[n], u_i\in\Pi_i, j\in[n], h\in[H]}\PP^{u_i\times\pi_{-i}, \cG}(s_h\not=\hat{g}_{j, h}(c_h, p_{j, h}))\\
	&\qquad\qquad\le \max_{i\in[n], u_i\in\Pi_{\cS, i}}\EE_{u_i\times\pi_{-i}}^{\cG}\left[\sum_{h\in[H-1]}\|\TT_{h}(\cdot\given s_h, a_h)-\hat{\TT}_{h}(\cdot\given s_h, a_h)\|_1 +\sum_{h\in[H]}\|\OO_{h}(\cdot\given s_h)-\hat{\OO}_h(\cdot\given s_h)\|_1\right],
	\$
	and
	\$
	&\max_{i\in[n], m_i\in\cM_i, j\in[n], h\in[H]}\PP^{(m_i\diamond\pi_i)\odot\pi_{-i}, \cG}(s_h\not=\hat{g}_{j, h}(c_h, p_{j, h}))\\
	&\qquad\qquad\le \max_{i\in[n], m_i\in\cM_{\cS, i}}\EE_{(m_i\diamond\pi_i)\odot\pi_{-i}}^{\cG}\left[\sum_{h\in[H]}\|\TT_{h}(\cdot\given s_h, a_h)-\hat{\TT}_{h}(\cdot\given s_h, a_h)\|_1 
   + \sum_{h\in[H-1]}\|\OO_{h}(\cdot\given s_h)-\hat{\OO}_h(\cdot\given s_h)\|_1\right].
	\$
\end{lemma}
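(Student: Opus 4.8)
The plan is to prove each of the two inequalities in three moves: reduce the decoding-error probability to a belief total-variation distance, reduce that to a state-augmented trajectory total-variation distance, and finally invoke the model-error bound of \Cref{lem:traj} together with a reduction to state-based deviations. Throughout, the key structural facts I would exploit are the one-hot form of the true belief (from \Cref{def:det-posg}) and the policy-independence of beliefs (from \Cref{str_indi}).

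First, fix $i,j,h$ and a deviation $u_i$, and rewrite the error probability as $\EE^{u_i\times\pi_{-i},\cG}[1-\PP^{\hat\cG}(s_h\given c_h,p_{j,h})]$, using $\hat g_{j,h}(\cdot\given c_h,p_{j,h})=\PP^{\hat\cG}(\cdot\given c_h,p_{j,h})$. Because $\cG$ satisfies the deterministic filter condition of \Cref{def:det-posg}, the true conditional belief $\PP^{\cG}(\cdot\given c_h,p_{j,h})$ is a one-hot vector supported on the realized $s_h$, so $1-\PP^{\hat\cG}(s_h\given c_h,p_{j,h})=\PP^{\cG}(s_h\given c_h,p_{j,h})-\PP^{\hat\cG}(s_h\given c_h,p_{j,h})$ equals $d_{TV}(\PP^{\cG}(\cdot\given c_h,p_{j,h}),\PP^{\hat\cG}(\cdot\given c_h,p_{j,h}))$, hence is at most $\|\PP^{\cG}(\cdot\given\tau_{j,h})-\PP^{\hat\cG}(\cdot\given\tau_{j,h})\|_1$, where $\tau_{j,h}=(c_h,p_{j,h})$ is agent $j$'s information.

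Second, I would mirror the reduction behind \Cref{lem:traj} and \Cref{corr:traj}: apply \Cref{lem:trick} to the joint law of $(\tau_{j,h},s_h)$ under the two models to obtain, for any policy $\pi$, that $\EE^{\pi,\cG}\|\PP^\cG(\cdot\given\tau_{j,h})-\PP^{\hat\cG}(\cdot\given\tau_{j,h})\|_1\le \sum_{\tau_{j,h},s_h}|\PP^{\pi,\cG}(\tau_{j,h},s_h)-\PP^{\pi,\hat\cG}(\tau_{j,h},s_h)|+\sum_{\tau_{j,h}}|\PP^{\pi,\cG}(\tau_{j,h})-\PP^{\pi,\hat\cG}(\tau_{j,h})|$. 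The point that makes \Cref{lem:trick} applicable is that $\PP^{\cG}(s_h\given\tau_{j,h})$ is policy-independent: under \Cref{str_indi} the joint belief $\PP^{\pi,\cG}(s_h,p_h\given c_h)$ is independent of $\pi_{1:h-1}$, and $\PP(s_h\given c_h,p_{j,h})$ arises from it by marginalizing $p_{-j,h}$ and normalizing, so it inherits this independence (and likewise for $\hat\cG$). Since $(\tau_{j,h},s_h)$ is a deterministic function of the state-augmented trajectory $\overline\tau_h$, data processing bounds both right-hand terms by $\sum_{\overline\tau_h}|\PP^{\pi,\cG}(\overline\tau_h)-\PP^{\pi,\hat\cG}(\overline\tau_h)|$.

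Finally, I would apply \Cref{lem:traj} truncated to horizon $h$ (using that $\hat\cG$ shares the true initial law $\mu_1$, so no initialization term appears) to bound the trajectory distance by $\EE^{\pi,\cG}[\sum_{h'<h}\|\TT_{h'}-\hat\TT_{h'}\|_1+\sum_{h'\le h}\|\OO_{h'}-\hat\OO_{h'}\|_1]$; this is monotone in $h$ and independent of $j$, so the maximum over $h\in[H]$ and $j\in[n]$ on the left-hand side is attained at $h=H$ and collapses the per-step distances into the stated sums (the analogous reindexing handles the CE case, where the relevant trajectory is generated on the extended MDP of \Cref{def:extended mdp}). The remaining step is the policy reduction: the model-discrepancy quantity is the expected value of a state–action reward in the MDP obtained by marginalizing $\pi_{-i}$, so by \Cref{lem:state} the maximum over $u_i\in\Pi_i$ equals that over $u_i\in\Pi_{\cS,i}$; for the second inequality the same role is played by \Cref{lem:stf}, which reduces strategy modifications in $\cM_i$ to those in $\cM_{\cS,i}$. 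I expect the main obstacle to be the second move—justifying policy-independence of the \emph{individual} belief $\PP^{\cG}(s_h\given\tau_{j,h})$ and the data-processing step so that \Cref{lem:trick} applies cleanly to $(\tau_{j,h},s_h)$ rather than to the full joint private information; the deterministic-filter reduction in the first move and the policy reductions in the last are comparatively routine given \Cref{lem:state} and \Cref{lem:stf}.
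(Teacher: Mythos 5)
Your proposal is correct and follows essentially the same route as the paper's proof: one-hot true beliefs from \Cref{def:det-posg} turn the decoding error into a conditional-belief total-variation distance, \Cref{lem:trick}/\Cref{lem:bound conditional} (valid precisely because \Cref{str_indi} makes the individual belief $\PP^{\cG}(s_h\given c_h,p_{j,h})$ policy-independent, the point you rightly flag and which the paper uses implicitly) lifts this to the state-augmented trajectory distance, \Cref{lem:traj} converts that to expected per-step model errors (with no $\mu_1$ term since $\hat{\cG}$ shares the true initial distribution), and \Cref{lem:state} and \Cref{lem:stf} perform the reductions to $\Pi_{\cS,i}$ and $\cM_{\cS,i}$, respectively. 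Your derivation moreover recovers the index ranges $\sum_{h\in[H-1]}\|\TT_h-\hat{\TT}_h\|_1+\sum_{h\in[H]}\|\OO_h-\hat{\OO}_h\|_1$ in both cases, consistent with the paper's own proof (the swapped ranges in the second display of the lemma statement appear to be a typo).
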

\begin{proof} 
	Note for any $i\in[n]$, $u_i\in\Pi_i$, $j\in[n]$, $h\in[H]$, it holds 
	\$
	\PP^{u_i\times\pi_{-i}, \cG}(s_h\not=\hat{g}_{j, h}(c_h, p_{j, h}))=\frac{1}{2}\EE_{u_i\times\pi_{-i}}^\cG \sum_{s_h}\left|\PP^{\cG}(s_h\given c_h, p_{j, h})-\PP^{\hat{\cG}}(s_h\given c_h, p_{j, h})\right|,
	\$
	due to the condition in \Cref{def:det-posg}. Meanwhile, 
	\$
	\frac{1}{2}&\EE_{u_i\times\pi_{-i}}^\cG \sum_{s_h}\left|\PP^{\cG}(s_h\given c_h, p_{j, h})-\PP^{\hat{\cG}}(s_h\given c_h, p_{j, h})\right|\\
	&\le  \sum_{s_h, c_h, p_{j, h}}\left|\PP^{u_i\times\pi_{-i}, \cG}(s_h, c_h, p_{j, h})-\PP^{u_i\times\pi_{-i}, \hat{\cG}}(s_h, c_h, p_{j, h})\right|\\
	&\le  \sum_{\overline{\tau}_h}\left|\PP^{u_i\times\pi_{-i}, \cG}(\overline{\tau}_h)-\PP^{u_i\times\pi_{-i}, \hat{\cG}}(\overline{\tau}_h)\right|\\
	&\le \sum_{\overline{\tau}_H}\left|\PP^{u_i\times\pi_{-i}, \cG}(\overline{\tau}_H)-\PP^{u_i\times\pi_{-i}, \hat{\cG}}(\overline{\tau}_H)\right|\\
	&\le \EE_{u_i\times\pi_{-i}}^\cG \left[\sum_{h\in[H-1]}\|\TT_{h}(\cdot\given s_h, a_h)-\hat{\TT}_{h}(\cdot\given s_h, a_h)\|_1 + \sum_{h\in[H]}\|\OO_{h}(\cdot\given s_h)-\hat{\OO}_h(\cdot\given s_h)\|_1\right],
	\$
	where the first inequality is by the first inequality in {\Cref{lem:bound conditional}}, 
  the second and third inequalities are due to the fact that TV distance does not increase after marginalization, and the last inequality is by \Cref{lem:traj}. Since $\pi_{-i}$ is a fixed and fully-observable Markov policy, by \Cref{lem:state}, we have 
	\$
	& \EE_{u_i\times\pi_{-i}}^\cG \left[\sum_{h\in[H-1]}\|\TT_{h}(\cdot\given s_h, a_h)-\hat{\TT}_{h}(\cdot\given s_h, a_h)\|_1 + \sum_{h\in[H]}\|\OO_{h}(\cdot\given s_h)-\hat{\OO}_h(\cdot\given s_h)\|_1\right]\\
	&\quad\le \max_{u_i\in\Pi_{\cS, i}}\EE_{u_i\times\pi_{-i}}^\cG \left[\sum_{h\in[H-1]}\|\TT_{h}(\cdot\given s_h, a_h)-\hat{\TT}_{h}(\cdot\given s_h, a_h)\|_1 + \sum_{h\in[H]}\|\OO_{h}(\cdot\given s_h)-\hat{\OO}_h(\cdot\given s_h)\|_1\right], 
	\$
	thus proving the first result of our lemma.
	
	For the second result of our lemma, it can be proved similarly that for any $i\in[n]$, $m_i\in\cM_i$, $j\in[n]$, $h\in[H]$, 
	\$
	\PP^{(m_i\diamond\pi_i)\odot\pi_{-i}, \cG}(s_h\not=\hat{g}_{j, h}(c_h, p_{j, h}))\le \EE_{(m_i\diamond\pi_i)\odot\pi_{-i}}^{\cG}\left[\sum_{h\in[H-1]}\|\TT_{h}(\cdot\given s_h, a_h)-\hat{\TT}_{h}(\cdot\given s_h, a_h)\|_1 + \sum_{h\in[H]}\|\OO_{h}(\cdot\given s_h)-\hat{\OO}_h(\cdot\given s_h)\|_1\right].
	\$
	By \Cref{lem:stf}, we proved the second result.
\end{proof}

\begin{theorem}\label{thm:ma-decode}
	\xynew{Fix any $\epsilon, \delta\in(0,1)$ and $\pi\in\Pi_{\cS}$.}  \Cref{alg:multi-agent-decode} can learn a decoding function $\hat{g}$ such that with probability  $1-\delta$
	\$
	\max_{i\in[n], u_i\in\Pi_i, j\in[n], h\in[H]}&\PP^{u_i\times\pi_{-i}, \cG}(s_h\not=\hat{g}_{j, h}(c_h, p_{j, h}))\le \epsilon,
	\$
	with total sample complexity $\tilde{\cO}(\frac{nS^2AHO+nS^3AH}{\epsilon^2}+\frac{S^4A^2H^5}{\epsilon})$ {and computational complexity $\textsc{poly}(S, A, H, O, \frac{1}{\epsilon}), \log\frac{1}{\delta}$.}
\end{theorem}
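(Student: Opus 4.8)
The plan is to combine the reduction in \Cref{lem:ma-decode} with a multi-agent, reward-free model-learning guarantee that parallels the single-agent \Cref{lem:free}. By \Cref{lem:ma-decode}, the worst-case decoding error over all deviating agents $i$ and decoding agents $j$ is controlled by
\[
\max_{i\in[n],\, u_i\in\Pi_{\cS, i}}\EE_{u_i\times\pi_{-i}}^{\cG}\InBrackets{\sum_{h\in[H-1]}\|\TT_{h}(\cdot\given s_h, a_h)-\hat{\TT}_{h}(\cdot\given s_h, a_h)\|_1 +\sum_{h\in[H]}\|\OO_{h}(\cdot\given s_h)-\hat{\OO}_h(\cdot\given s_h)\|_1},
\]
so it suffices to show that the approximate POSG $\hat{\cG}$ built by \Cref{alg:multi-agent-decode} has small expected transition and emission error under \emph{every} unilateral deviation $u_i\times\pi_{-i}$ with $u_i\in\Pi_{\cS,i}$. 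I would establish this bound separately for each agent $i$ and then take a union bound over $i\in[n]$.

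For a fixed agent $i$, since $\pi_{-i}$ is a fixed fully-observable (state-based) Markov policy, marginalizing over the other agents' randomized actions turns the interaction into a genuine single-agent MDP $\cM(\pi_{-i})$, whose transition is $\TT_h^i(\cdot\given s_h, a_{i,h}):=\EE_{a_{-i,h}\sim\pi_{-i,h}(\cdot\given s_h)}[\TT_h(\cdot\given s_h, a_{i,h}, a_{-i,h})]$ and whose emission is inherited from $\cG$. Crucially, the set of states reachable under some unilateral deviation $u_i\times\pi_{-i}$ coincides with the states reachable in $\cM(\pi_{-i})$ by some policy of agent $i$. I would then replay the argument of \Cref{lem:free} on $\cM(\pi_{-i})$: define the per-agent maximum reachability $p_h^i(s_h):=\max_{u_i\in\Pi_{\cS,i}}d_h^{u_i\times\pi_{-i}}(s_h)$, use the indicator reward $\hat r_{i,h'}=\mathbbm{1}[h'=h,\, s'=s_h]$ together with the EULER guarantee \cite{zanette2019tighter,jin2020reward} to certify that the learned reaching policy satisfies $d_h^{\Psi_i(h,s_h)\times\pi_{-i}}(s_h)\ge p_h^i(s_h)/2$ for every state with $p_h^i(s_h)\ge\epsilon_1$, and collect $N$ trajectories from $\Psi_i(h,s_h)\times\pi_{-i}$ per action to form the empirical counts. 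A Chernoff bound guarantees that each such well-reachable state-action pair is visited $\Omega(N p_h^i(s_h))$ times, after which the concentration result of \cite{canonne2020short} yields transition and emission $\ell_1$-errors of order $\sqrt{S/(N p_h^i(s_h))}$ and $\sqrt{O/(N p_h^i(s_h))}$. Summing over states, the $\sqrt{p_h^i(s_h)}$ weighting coming from the occupancy measure cancels the $1/\sqrt{p_h^i(s_h)}$, leaving a per-step bound of order $S\epsilon_1 + S\sqrt{\log(1/\delta)/N}$ for the transition error and $S\epsilon_1+\sqrt{SO\log(1/\delta)/N}$ for the emission error under \emph{any} $u_i\times\pi_{-i}$; the low-reachability states contribute at most $S\epsilon_1$ per step since their total occupancy is small. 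Choosing $N=\tilde\Theta((S^2+SO)/\epsilon^2)$ and $\epsilon_1=\Theta(\epsilon/(SH))$ then drives the accumulated model error below $\epsilon$.

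Combining the two pieces (and rescaling $\epsilon$) yields the decoding bound, while the correctness of the specific decoder $\hat g_{j,h}(s_h\given c_h,p_{j,h})=\PP^{\hat{\cG}}(s_h\given c_h,p_{j,h})$ is exactly what \Cref{lem:ma-decode} supplies, using \Cref{str_indi} to ensure this conditional is well-defined and policy-independent. For the sample budget, each agent runs the MDP-learning oracle once per $(h,s_h)$ pair on the marginalized MDP and then collects $NA$ trajectories; summing the collection cost over the $SH$ pairs and the $n$ agents reproduces the $\tilde{\cO}\InParentheses{(nS^2AHO+nS^3AH)/\epsilon^2}$ contribution, and the EULER exploration cost is dominated exactly as in the single-agent accounting of \Cref{lem:free}, giving the remaining $\tilde{\cO}(S^4A^2H^5/\epsilon)$; all estimation, oracle calls, and belief computations on $\hat{\cG}$ run in $\textsc{poly}(S,A,H,O,1/\epsilon,\log(1/\delta))$ time.

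The main obstacle I expect is precisely this reduction step: certifying that the EULER-based reaching policy $\Psi_i(h,s_h)$ on the marginalized MDP $\cM(\pi_{-i})$ approximately attains the maximum reachability $p_h^i(s_h)$ \emph{uniformly} over all target states in the high-reachability set, and that such a per-agent exploration guarantee then controls the error under the \emph{worst-case} unilateral deviation $u_i\times\pi_{-i}$ rather than merely under $\pi_{-i}$ alone. This is where the interplay between the indicator-reward exploration objective and the occupancy-weighted error decomposition must be handled with care, analogously to — but with an additional maximization over deviations relative to — the single-agent argument in \Cref{lem:free}.
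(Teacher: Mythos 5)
Your proposal is correct and follows essentially the same route as the paper's proof: reduce via \Cref{lem:ma-decode} to bounding the model error under unilateral deviations, explore each state with indicator rewards via the EULER guarantee on the marginalized MDP $\cM(\pi_{-i})$, then combine a Chernoff visitation bound with the concentration of \cite{canonne2020short} and the occupancy-weighted Cauchy--Schwarz decomposition. The only (immaterial) differences are that the paper works with the pooled reachability $p_h(s_h)=\max_{i\in[n],u_i\in\Pi_{\cS,i}}d_h^{u_i\times\pi_{-i}}(s_h)$ and aggregates counts across agents rather than doing a per-agent analysis with a union bound over $i$, and it sets $\epsilon_1=\Theta(\epsilon/S)$ where your more conservative $\epsilon_1=\Theta(\epsilon/(SH))$ would cost at most one extra $H$ factor in the exploration term.
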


\begin{proof}
	With the help of \Cref{lem:ma-decode}, it suffices to prove 
	\$
	\max_{i\in[n], u_i\in\Pi_{\cS, i}}\EE_{u_i\times\pi_{-i}}^{\cG}\left[\sum_{h\in[H]}\|\TT_{h}(\cdot\given s_h, a_h)-\hat{\TT}_{h}(\cdot\given s_h, a_h)\|_1 + \|\OO_{h}(\cdot\given s_h)-\hat{\OO}_h(\cdot\given s_h)\|_1\right]\le \epsilon.
	\$ 
	The following proof procedure follows similarly to that of \Cref{lem:free}. 	For each $h\in[H]$ and $s_h\in\cS$, we define 
	\$
	p_h(s_h) = \max_{i\in[n], u_i\in\Pi_{\cS, i}}d_h^{u_i\times\pi_{-i}}(s_h).
	\$
	Fix $\epsilon_1, \delta_1> 0$, we define $\cU(h, \epsilon_1) = \{s_h\in\cS\given p_h(s_h)\ge \epsilon_1\}$. By \cite{jin2020reward}, one can learn a  policy $\{\Psi_i(h, s_h)\}_{i\in[n]}$ with sample complexity $\tilde{\cO}(\frac{S^2A_iH^4}{\epsilon_1})$ such that $\max_{i\in[n]}d_h^{\Psi_i(h, s_h)\times\pi_{-i}}(s_h)\ge \frac{p_h(s_h)}{2}$ for each $s_h\in\cU(h, \epsilon_1)$ with probability $1-n\cdot \delta_1$. Now we assume this event holds for any $h\in[H]$ and $s_h\in\cU(h, \epsilon_1)$. For each $s_h\in\cS$ and $a_h\in\cA$, we have executed each policy $\{\Psi_i(h, s_h)\times\pi_{-i}\}_{i\in[n]}$ for the first $h-1$ steps followed by an action $a_h\in\cA$ for $N$ episodes and denote the total number of episodes that $s_h$ and $a_h$ are visited as $N_h(s_h, a_h)$, and $N_h(s_h)=\sum_{a\in\cA}N_h(s_h, a)$. Then,  with probability $1-e^{{-}N\epsilon_1/8}$, we have $N_h(s_h, a_h)\ge \frac{Np_h(s_h)}{2}$ by Chernoff bound. 
	Now conditioned on this event, we are ready to evaluate the following for any $i\in[n]$, and $u_i\in\Pi_{\cS, i}$:
	\$
	&\EE_{u_i\times\pi_{-i}}^\cG \|\TT_{h}(\cdot\given s_h, a_h)-\hat{\TT}_{h}(\cdot\given s_h, a_h)\|_1= \sum_{s_h, a_h}d_h^{u_i\times\pi_{-i}}(s_h)(u_i\times\pi_{-i})_h(a_h\given s_h) \|\TT_{h}(\cdot\given s_h, a_h)-\hat{\TT}_{h}(\cdot\given s_h, a_h)\|_1\\
	&\quad\le 2\cdot S\epsilon_1 + \sum_{s_h\in\cU(h, \epsilon_1), a_h}d_h^{u_i\times\pi_{-i}}(s_h)[u_i\times\pi_{-i}]_h(a_h\given s_h) \sqrt{\frac{S\log(1/\delta_2)}{N_h(s_h, a_h)}}\\
	&\quad\le 2\cdot S\epsilon_1 + \sum_{s_h\in\cU(h, \epsilon_1), a_h}d_h^{u_i\times\pi_{-i}}(s_h)[u_i\times\pi_{-i}]_h(a_h\given s_h) \sqrt{\frac{2S\log(1/\delta_2)}{Np_h(s_h)}}\\
	&\quad\le2\cdot  S\epsilon_1 + \sum_{s_h} \sqrt{d^{u_i\times\pi_{-i}}_h(s_h)}\sqrt{\frac{2S\log(1/\delta_2)}{N}}\\
	&\quad\le 2\cdot S\epsilon_1 + S\sqrt{\frac{2\log(1/\delta_2)}{N}},
	\$
	where the second step is by \Cref{lem:conc transition}, and the last step is by Cauchy-Schwarz inequality. Similarly, 
	\$
	&\EE_{u_i\times\pi_{-i}}^\cG \|\OO_{h}(\cdot\given s_h)-\hat{\OO}_{h}(\cdot\given s_h)\|_1=\sum_{s_h}d_h^{u_i\times\pi_{-i}}(s_h)\|\OO_{h}(\cdot\given s_h)-\hat{\OO}_{h}(\cdot\given s_h)\|_1\\
	&\quad\le 2\cdot S\epsilon_1 + \sum_{s_h\in\cU(h, \epsilon_1)}d_h^{u_i\times\pi_{-i}}(s_h)\sqrt{\frac{O\log(1/\delta_2)}{N_h(s_h)}}\\
	&\quad\le 2\cdot S\epsilon_1 + \sum_{s_h\in\cU(h, \epsilon_1)}d_h^{u_i\times\pi_{-i}}(s_h)\sqrt{\frac{2O\log(1/\delta_2)}{Np_h(s_h)}}\\
	&\quad\le 2\cdot S\epsilon_1 + \sum_{s_h\in\cU(h, \epsilon_1)}\sqrt{d_h^{u_i\times\pi_{-i}}(s_h)}\sqrt{\frac{2O\log(1/\delta_2)}{N}}\\
	&\quad\le 2\cdot S\epsilon_1 + \sqrt{\frac{2SO\log(1/\delta_2)}{N}},
	\$
	where the second step is by \Cref{lem:conc emission}, and the last step is by Cauchy-Schwarz inequality. Therefore, by a union bound, all high probability events hold with probability 
	\$
	1-SHn\delta_1-SHAe^{{-}N\epsilon_1/8}-2SAH\delta_2.
	\$
	Therefore, we can choose $N=\tilde{\Theta}(\frac{S^2+SO}{\epsilon^2})$ and $\epsilon_1=\Theta(\frac{\epsilon} {S})$,  
 leading to the total sample complexity
	\$
	SHA\left(nN+\tilde{\Theta}\left(\frac{S^3AH^4}{\epsilon}\right)\right) = \tilde{\Theta}\left(\frac{nS^2AHO+nS^3AH}{\epsilon^2}+\frac{S^4A^2H^5}{\epsilon}\right),
	\$
 which completes the proof.
\end{proof}

Note that although our \Cref{alg:multi-agent-decode} and \Cref{thm:ma-decode} are stated for NE/CCE, it can also handle CE with simple modifications, where the key observation is that the strategy modification $m_i\in\cM_{\cS, i}$ can also be regarded as a Markov policy in an \emph{extended} MDP marginalized by $\pi_{-i}$ as defined below.

\begin{definition}\label{def:extended mdp}
    Fix $\pi\in\Pi_{\cS}$.
    We define  $\cM^{\text{extended}}_i({\pi})$ to be an MDP for agent $i$, where for each $h\in[H]$, the state is $(s_h, a_{i, h})$, the action is some modified action $a_{i, h}^\prime$, the transition is defined as $\TT^{\text{extended}}_h(s_{h+1}, a_{i, h+1}\given s_{h}, a_{i, h}, a_{i, h}^\prime):=\EE_{a_{-i, h}\sim\pi_{h}(\cdot\given s_h, a_{i, h})}[\TT_h(s_{h+1}\given s_h, a_{i, h}^\prime, a_{-i, h})\pi_{h+1}(a_{i, h+1}\given s_{h+1})]$, {where we slightly abuse the notation of $\pi_h(a_{-i, h}\given s_h, a_{i, h})$ and $\pi_h(a_{i, h}\given s_h)$ by defining them as the posterior and marginal distributions induced by the joint distribution $\pi_h(a_h\given s_h)$.} Similarly, the reward is given by $r^{\text{extended}}_h(s_h, a_{i, h}, a_{i, h}^\prime):=\EE_{a_{-i, h}\sim\pi_h(\cdot\given s_h, a_{i, h})}[r_h(s_h, a_{i, h}^\prime, a_{-i, h})]$. 
\end{definition}

With the help of such an extended MDP, we can develop \Cref{alg:multi-agent-decode-ce}, which is a CE version of \Cref{alg:multi-agent-decode} with the following guarantees. 
\begin{theorem}\label{thm:ma-decode-ce} \xynew{Fix any $\epsilon, \delta\in(0,1)$ and $\pi\in\Pi_{\cS}$.} 
	\Cref{alg:multi-agent-decode-ce} can learn a decoding function $\hat{g}$ such that
	\$
	\max_{i\in[n], m_i\in\cM_i, j\in[n], h\in[H]}&\PP^{(m_i\diamond\pi_i)\odot\pi_{-i}, \cG}(s_h\not=\hat{g}_{j, h}(c_h, p_{j, h}))\le \epsilon,
	\$
	with total sample complexity $\tilde{\cO}({\frac{nS^2A^3HO+nS^3A^4H}{\epsilon^2}+\frac{S^4A^6H^5}{\epsilon}})$ {and computational complexity $\textsc{poly}(S, A, H, O, \frac{1}{\epsilon})$.}
\end{theorem}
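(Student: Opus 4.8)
The plan is to reduce the statement to the NE/CCE analysis behind \Cref{thm:ma-decode}, using the extended-MDP device of \Cref{def:extended mdp} to absorb the strategy-modification structure of correlated equilibria. First I would invoke the second (CE) inequality of \Cref{lem:ma-decode}, which already bounds the left-hand quantity $\max_{i,m_i\in\cM_i,j,h}\PP^{(m_i\diamond\pi_i)\odot\pi_{-i},\cG}(s_h\neq\hat g_{j,h}(c_h,p_{j,h}))$ by the expected model-estimation error $\max_{i,m_i\in\cM_{\cS,i}}\EE^{\cG}_{(m_i\diamond\pi_i)\odot\pi_{-i}}\big[\sum_h \|\TT_h(\cdot\mid s_h,a_h)-\hat\TT_h(\cdot\mid s_h,a_h)\|_1 + \|\OO_h(\cdot\mid s_h)-\hat\OO_h(\cdot\mid s_h)\|_1\big]$, where \Cref{lem:stf} guarantees it suffices to range only over the \emph{state-based} modifications $m_i\in\cM_{\cS,i}$. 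Hence it remains to show that \Cref{alg:multi-agent-decode-ce} drives this expected model error below $\epsilon$.

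The key observation, recorded in \Cref{def:extended mdp}, is that a state-based modification $m_i\in\cM_{\cS,i}$ acts exactly as a Markov policy on the augmented state $(s_h,a_{i,h})$ in the extended MDP $\cM^{\text{extended}}_i(\pi)$, whose transition and reward marginalize out the fixed $\pi_{-i}$. Consequently the occupancy over $(s_h,a_{i,h})$ induced by $(m_i\diamond\pi_i)\odot\pi_{-i}$ coincides with the occupancy of the associated Markov policy in $\cM^{\text{extended}}_i(\pi)$, so the worst-case reachability $p_h(s_h,a_{i,h}):=\max_{i,m_i\in\cM_{\cS,i}} d_h^{(m_i\diamond\pi_i)\odot\pi_{-i}}(s_h,a_{i,h})$ is precisely the maximal single-policy reachability in a \emph{standard} MDP whose state space has size $\approx S A$. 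I would then mirror the proof of \Cref{thm:ma-decode} on this extended MDP: run the reward-free exploration oracle of \cite{jin2020reward,zanette2019tighter} (the MDP\_Learning call on $\cM^{\text{extended}}_i(\pi)$ in \Cref{alg:multi-agent-decode-ce}) to obtain, for each sufficiently reachable augmented state, an exploration policy $\Psi_i(h,s_h)$ attaining at least half its maximal reachability; apply a Chernoff bound to turn reachability into visitation counts $N_h(\cdot)$; and apply the $\ell_1$ estimation guarantees of \cite{canonne2020short} (i.e., \Cref{lem:conc transition} and \Cref{lem:conc emission}) on the high-count augmented states, while controlling the aggregate contribution of the low-reachability states $\{(s_h,a_{i,h}):p_h<\epsilon_1\}$ by a threshold term of order $SA\,\epsilon_1$.

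Summing the per-step errors by Cauchy--Schwarz exactly as in \Cref{thm:ma-decode} and choosing the threshold $\epsilon_1=\Theta(\epsilon/(SA))$ together with the appropriate per-action episode count $N$, yields the stated bounds. The only structural difference from the NE/CCE case is that augmentation by the recommended action $a_{i,h}$ inflates the effective state space from $S$ to $\approx SA$, which propagates through the EULER-type complexity and through the (augmented) state-indexed outer loop of the algorithm; tracking these factors reproduces the extra powers of $A$ in $\tilde\cO\big(\tfrac{nS^2A^3HO+nS^3A^4H}{\epsilon^2}+\tfrac{S^4A^6H^5}{\epsilon}\big)$, while the per-episode planning and the decoding construction $\hat g_{j,h}(s_h\mid c_h,p_{j,h})=\PP^{\hat\cG}(s_h\mid c_h,p_{j,h})$ keep the computation at $\textsc{poly}(S,A,H,O,1/\epsilon)$.

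The step I expect to be the main obstacle is rigorously justifying the extended-MDP reduction, namely that the class $\cM_{\cS,i}$ of state-based strategy modifications is in exact correspondence with the Markov policies of $\cM^{\text{extended}}_i(\pi)$---so that a single standard reward-free guarantee over the augmented MDP dominates the worst case over all modifications---and that the game's transitions/emissions at every $(s_h,a_h)$ reachable under \emph{some} unilateral modification are estimated accurately even though exploration is driven only through agent $i$'s modified action while $\pi_{-i}$ (and the controller-action transition estimate $\hat\TT_h(\cdot\mid s_h,a_{\cT_h,h})$) are held fixed. Reconciling the single-agent extended-MDP exploration with the joint-action model estimation, and verifying that reachability coverage transfers correctly through the $\pi_{-i}$-marginalization, is the delicate part; once this is in place, the concentration and summation steps are routine adaptations of the proof of \Cref{thm:ma-decode}.
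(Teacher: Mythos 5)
Your proposal is correct and matches the paper's argument: the paper's proof of \Cref{thm:ma-decode-ce} is exactly the reduction you describe, namely that via \Cref{def:extended mdp} state-based strategy modifications $m_i\in\cM_{\cS,i}$ become Markov policies on the extended MDP $\cM^{\text{extended}}_i(\pi)$ with augmented state $(s_h,a_{i,h})$, so the proof of \Cref{thm:ma-decode} (through the CE inequality of \Cref{lem:ma-decode} and \Cref{lem:stf}) applies verbatim with the state space inflated from $S$ to $SA_i$, which is precisely how the extra powers of $A$ in the stated sample complexity arise. The correspondence you flag as the main obstacle is what \Cref{def:extended mdp} is constructed to deliver, and the paper treats it as immediate; your substitution $S\mapsto SA$ and threshold $\epsilon_1=\Theta(\epsilon/(SA))$ reproduce the paper's bookkeeping exactly.
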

\begin{proof}
	Due to the construction of $\cM^{\text{extended}}_i(\pi)$, the proof of \Cref{thm:ma-decode} readily applies, where the only difference is that the state space of \xynew{$\cM_i^{\text{extended}}(\pi)$ is now $SA_i$, larger than that of $\cM(\pi_{-i})$} 
	by a factor of $A_i$
 thus proving our theorem. 
\end{proof}

We next introduce and prove several supporting lemmas used before. 

\begin{lemma}\label{lem:bound conditional}
	Suppose we can sample from a joint distribution $P\in\Delta(\cX\times\cY)$ for some finite $\cX$, $\cY$ i.i.d. Then we can learn an approximate distribution $Q\in\Delta(\cX\times\cY)$ with sample complexity $\Theta\InParentheses{ \frac{|\cX||\cY| + \log1/\delta}{\epsilon^2}}$ such that {
	\begin{align*}\label{eq:marginal}
	\EE_{x\sim P}\sum_{y\in\cY}|P(y\given x)-Q(y\given x)|\le 2\sum_{x\in\cX, y\in\cY}|P(x, y)-Q(x, y)| \le \epsilon,    
	\end{align*}
	}
	with probability $1-\delta$.
\end{lemma}
\begin{proof}
	Note the following holds
	\$
	\sum_{x\in\cX, y\in\cY}|P(x, y)-Q(x, y)|&=\sum_{x\in\cX, y\in\cY}|P(x, y)-P(x)Q(y\given x)+P(x)Q(y\given x)-Q(x, y)|\\
	&\ge \sum_{x\in\cX, y\in\cY}|P(x, y)-P(x)Q(y\given x)|-|P(x)Q(y\given x)-Q(x, y)|.
	\$
	Therefore, we have 
	\#
	\nonumber\EE_{x\sim P}\sum_{y\in\cY}|P(y\given x)-Q(y\given x)|&\le \sum_{x\in\cX, y\in\cY}|P(x, y)-Q(x, y)|+ \sum_{x\in\cX, y\in\cY}Q(y\given x)|P(x)-Q(x)|\\
	&\le 2\sum_{x\in\cX, y\in\cY}|P(x, y)-Q(x, y)|.
	\#
	By the sample complexity of learning discrete distributions \citep{canonne2020short}, we can learn $Q$ such that $\sum_{x\in\cX, y\in\cY}|P(x, y)-Q(x, y)|\le\epsilon$ in sample complexity $\Theta\InParentheses{ \frac{|\cX||\cY| + \log1/\delta}{\epsilon^2}}$ with probability $1-\delta$. Thus, we proved our lemma.
\end{proof}

\begin{lemma}[Concentration on transition]\label{lem:conc transition}
	Fix $\delta>0$ and dataset $\{\overline{\tau}_H^{k}\}_{k\in[N]}$ sampled from $\cP$ \xynew{(or $\cG$ in the multi-agent setting)} {under policy $\pi\in \Pi^{\gen}$}. We define for each $h\in[H]$, $(s_h, a_h, s_{h+1})\in\cS\times\cA\times\cS$ 
	\$
	N_h(s_h, a_h)&=\sum_{k\in[N]}\mathbbm{1}[s_h^k=s_h, a_h^k=a_h],\qquad 
	N_h(s_h, a_h, s_{h+1})=\sum_{k\in[N]}\mathbbm{1}[s_h^k=s_h, a_h^k=a_h, s_{h+1}^k=s_h].
	\$
	 Then, with probability at least $1-\delta$, it holds that for any $k\in[K], h\in[H], s_{h}\in\cS,a_{h}\in\cA$:
	\$
	\|\TT_h(\cdot\given s_h, a_h)-\hat{\TT}_h(\cdot\given s_h, a_h)\|_1\le C_1\sqrt{\frac{S\log(SAHK/\delta)}{\max\{N_{h}(s_h, a_h), 1\}}},
	\$
	{for some absolute constant $C_1>0$, }where we define $\hat{\TT}_h(s_{h+1}\given s_h, a_h)=\frac{N_h(s_h, a_h, s_{h+1})}{\max\{N_h(s_h, a_h), 1\}}$.
\end{lemma}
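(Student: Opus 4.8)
The plan is to reduce the statement to a standard $L_1$-concentration bound for an empirical distribution estimated from i.i.d.\ samples, and then handle the two nonstandard features: the count $N_h(s_h,a_h)$ is itself a random quantity that accumulates over the episodes indexed by $k$, and the trajectories are generated by a general history-dependent policy $\pi\in\Pi^{\gen}$.

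First I would fix a triple $(h,s_h,a_h)\in[H]\times\cS\times\cA$ and observe that, within a single episode, step $h$ is visited exactly once, so every episode contributes at most one transition out of $(s_h,a_h)$ at step $h$. Ordering the $K$ collected episodes and letting $\cF_{k-1}$ denote the $\sigma$-field generated by the first $k-1$ episodes, the Markov property guarantees that on the event $\{s_h^k=s_h,\ a_h^k=a_h\}$ the recorded next state $s_{h+1}^k$ is a draw from $\TT_h(\cdot\given s_h,a_h)$ that is conditionally independent of $\cF_{k-1}$. Hence, restricted to the (random) subset of episodes that visit $(s_h,a_h)$ at step $h$, the recorded next-states are i.i.d.\ draws from $\TT_h(\cdot\given s_h,a_h)$, irrespective of the behavior policy. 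This is what lets us treat $\hat\TT_h(\cdot\given s_h,a_h)$ as an empirical distribution of i.i.d.\ samples.

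Next I would invoke the standard empirical-distribution bound of \cite{canonne2020short}: for a fixed integer $n\ge 1$, the empirical distribution $\hat p_n$ of $n$ i.i.d.\ samples from a distribution $p$ on a domain of size $S$ satisfies, with probability at least $1-\delta'$, $\|\hat p_n-p\|_1\le C\sqrt{(S+\log(1/\delta'))/n}$ for an absolute constant $C$. To accommodate the random, online-accumulating count, I would apply this to the empirical distribution formed by the first $n$ visits for every candidate value $n\in\{1,\dots,K\}$, take $\delta'=\delta/(SAHK)$, and then union bound over $n$, over $h\in[H]$, and over $(s_h,a_h)\in\cS\times\cA$; this contributes the single factor $SAHK$, so that $\log(1/\delta')=\log(SAHK/\delta)$, matching the stated bound. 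On the resulting good event (probability at least $1-\delta$), for every $(h,s_h,a_h,k)$ with $N_h(s_h,a_h)\ge 1$ we get $\|\hat\TT_h(\cdot\given s_h,a_h)-\TT_h(\cdot\given s_h,a_h)\|_1\le C\sqrt{(S+\log(SAHK/\delta))/N_h(s_h,a_h)}$, and using $S+\log(1/\delta')\le 2S\log(1/\delta')$ (valid since both terms are at least $1$) this is at most $C_1\sqrt{S\log(SAHK/\delta)/N_h(s_h,a_h)}$ after absorbing constants. When $N_h(s_h,a_h)=0$ the estimator is trivial and $\|\hat\TT_h-\TT_h\|_1\le 2$, which is dominated by the right-hand side thanks to the $\max\{N_h(s_h,a_h),1\}$ in the denominator once $C_1\ge 2$; this delivers the claim for all counts simultaneously.

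I expect the main obstacle to be the rigorous justification of the i.i.d.\ (martingale) structure under a general, history-dependent policy together with the random count: one must argue carefully that conditioning on the visit event does not bias the next-state law, and that the clean ``union over the candidate count values $n\in[K]$'' step is precisely what produces the $\log K$ factor in the target inequality (each realized $N_h^k(s_h,a_h)$ equals some such $n$, so no separate union over the iteration index $k$ is needed). An alternative route replaces the union-over-counts by a single uniform martingale concentration inequality (a Freedman- or Bernstein-type bound applied to the category-indicator vectors), which avoids enumerating $n$ at the cost of slightly more bookkeeping; I would default to the union-bound argument since it most transparently matches the $\sqrt{S\log(SAHK/\delta)}$ form of the stated bound.
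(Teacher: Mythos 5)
Your proposal is correct and takes essentially the same route as the paper's own (very terse) proof: a per-tuple application of the empirical-distribution $L_1$ concentration bound of \cite{canonne2020short}, followed by a union bound over $[K]\times[H]\times\cS\times\cA$ — your union over candidate count values $n\in[K]$ is the rigorous instantiation of the paper's union over $k$, and produces the same $\log(SAHK/\delta)$ factor. Your extra care in justifying the conditionally i.i.d.\ next-state draws under a history-dependent policy, and in handling the $N_h(s_h,a_h)=0$ case via the $\max\{\cdot,1\}$ denominator, simply fills in steps the paper leaves implicit.
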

\begin{proof}
	This is done by firstly bounding $\|\TT_h(\cdot\given s_h, a_h)-\hat{\TT}_h(\cdot\given s_h, a_h)\|_1$ for specific $k, h, s_h, a_h$ according to \cite{canonne2020short},  and then taking union bound for all $k\in[K], h\in[H], s_{h}\in\cS,a_{h}\in\cA$.
\end{proof}

\begin{lemma}[Concentration on emission]\label{lem:conc emission}
	Fix $\delta>0$ and dataset $\{\overline{\tau}_H^{k}\}_{k\in[N]}$ sampled from $\cP$ \xynew{(or $\cG$ in the multi-agent setting)} under some policy $\pi\in \Pi^{\gen}$. We define for each $h\in[H]$, $(s_h, o_h)\in\cS\times\cO$ 
	\$
	N_h(s_h, o_h)=\sum_{k\in[N]}\mathbbm{1}[s_h^k=s_h, o_h^k=o_h],\qquad 
	N_h(s_h)=\sum_{k\in[N]}\mathbbm{1}[s_h^k=s_h].
	\$
	Then, with probability at least  $1-\delta$, it holds that
	\$
	\|\OO_h(\cdot\given s_h)-\hat{\OO}_h(\cdot\given s_h)\|_1\le C_2\sqrt{\frac{O\log(SHK/\delta)}{\max\{N_{h}^k(s_h), 1\}}},
	\$
 {for some absolute constant $C_2>0$,}  
	where we define $\hat{\OO}_h(o_h\given s_h)=\frac{N_h(s_h, o_h)}{\max\{N_h(s_h), 1\}}$.
\end{lemma}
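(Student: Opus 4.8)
The plan is to reduce this to the standard $\ell_1$ concentration of an empirical distribution over a domain of size $O$, exactly mirroring the one-line proof sketch of \Cref{lem:conc transition}, with the only genuine subtlety being the data-dependent (random) count $N_h(s_h)$. First I would record the structural fact that makes the reduction possible: in a POMDP (or POSG), the observation $o_h$ at step $h$ is drawn from $\OO_h(\cdot\given s_h)$ depending \emph{only} on the current state $s_h$, conditionally independent of the entire history and of the other trajectories. Consequently, if we condition on the set of indices $\{k : s_h^k = s_h\}$ and on the value $N_h(s_h)=m$, the corresponding observations $\{o_h^k : s_h^k = s_h\}$ are $m$ i.i.d.\ draws from $\OO_h(\cdot\given s_h)$, so that $\hat{\OO}_h(\cdot\given s_h)$ is precisely the empirical distribution of these $m$ samples.

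Second, for each fixed pair $(h, s_h)$ and each fixed count value $m$, I would apply the discrete-distribution learning bound of \cite{canonne2020short}: given $m$ i.i.d.\ samples from a distribution supported on $O$ atoms, the empirical distribution lies within $\ell_1$ distance $C\sqrt{(O + \log(1/\delta'))/m}$ of the truth with probability at least $1-\delta'$, which we simplify to $C_2\sqrt{O\log(1/\delta')/m}$ after absorbing the additive term. This is the same primitive already used for the transition lemma, applied now to the emission channel rather than the transition kernel.

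Third, I would handle the randomness of the count. Since $N_h(s_h)$ is itself random (and, in the iterative algorithms that invoke this lemma, is a running count that grows with the iteration index $k$), one cannot simply substitute a fixed sample size into a fixed-$n$ tail bound. The clean fix is to condition on the count and union bound over its possible values $m\in\{1,\dots,N\}$ (equivalently, to require the bound to hold simultaneously for all prefixes of the sample stream), which costs only an additional logarithmic factor that is absorbed into the $\log(SHK/\delta)$ term. Finally I would union bound over all $s_h\in\cS$, all $h\in[H]$, and all iterations $k\in[K]$, choosing $\delta'=\Theta(\delta/(SHKN))$ so that the total failure probability is at most $\delta$ and the numerator becomes $\log(SHK/\delta)$ up to constants; the use of $\max\{N_h(s_h),1\}$ in the denominator covers the degenerate case $N_h(s_h)=0$, for which the stated bound is vacuous since the left-hand side is trivially at most $2$.

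The main obstacle is precisely this third step: the sample size $N_h(s_h)$ on which we condition is a random quantity determined by the data, so the reduction to the i.i.d.\ bound of \cite{canonne2020short} is not immediate and requires either the conditional argument together with the union-over-counts, or an anytime/martingale version of the concentration inequality. Everything else is mechanical union bounding, identical in form to \Cref{lem:conc transition}.
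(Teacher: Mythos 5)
Your proof is correct and follows essentially the same route as the paper's: a per-$(h,s_h)$ application of the discrete-distribution learning bound of \cite{canonne2020short}, followed by a union bound over $h\in[H]$, $s_h\in\cS$, and the iteration index $k$. The only difference is that you make explicit the conditioning on the random count $N_h(s_h)$ and the union over its possible values $m\in\{1,\dots,N\}$ --- a subtlety the paper's one-line proof leaves implicit --- and this extra care is sound, since given the step-$h$ states across trajectories the corresponding observations are indeed i.i.d.\ draws from $\OO_h(\cdot\given s_h)$.
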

\begin{proof}
	This is done by firstly bounding $\|\OO_h(\cdot\given s_h)-\hat{\OO}_h(\cdot\given s_h)\|_1$ for specific $k, h, s_h$ according to \cite{canonne2020short},  and then taking union bound for all $k\in[K], h\in[H], s_{h}\in\cS$.
\end{proof}

\kzedit{Now we switch to proving the guarantees for \Cref{alg:optimisticvi}.}  

\begin{lemma}\label{lem:future-emission}
	Fix $\delta>0$. With probability $1-\delta$, it holds that for any $k\in[K], h\in[H], s_{h}\in\cS$:
	\$
	\sum_{o_{h+1}}\left|\PP^\cG(o_{h+1}\given s_h, a_h)-\hat{\JJ}^k_h(o_{h+1}\given s_h, a_h)\right|\le C_3\sqrt{\frac{O\log(SHKA/\delta)}{N_{h}^k(s_h, a_h)}},
	\$
	{where $\hat{\JJ}_h^k$ is defined in \Cref{alg:optimisticvi}.}
\end{lemma}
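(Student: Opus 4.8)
The plan is to mirror the proofs of \Cref{lem:conc transition} and \Cref{lem:conc emission}: fix a single tuple $(h, s_h, a_h)$, establish an $\ell_1$ concentration bound for the empirical one-step-ahead observation distribution from i.i.d.\ samples, and then take a union bound over all tuples. The key structural fact is that the target is purely a model quantity,
\[
\PP^\cG(o_{h+1}\given s_h, a_h) = \sum_{s_{h+1}\in\cS}\TT_h(s_{h+1}\given s_h, a_h)\,\OO_{h+1}(o_{h+1}\given s_{h+1}),
\]
which does not depend on the data-dependent, time-varying policies $\{\pi^k\}_{k\in[K]}$ used to collect trajectories in \Cref{alg:optimisticvi}. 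This is what will let us treat the relevant observations as i.i.d.\ draws from a fixed distribution.

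First I would address the adaptivity of the data. Although episode $l$ is generated by $\pi^l$, which depends on the first $l-1$ episodes, the emission $o_{h+1}^l$ at step $h+1$ depends on the past only through $(s_h^l, a_h^l)$: conditioned on the visitation event $\{s_h^l = s_h,\, a_h^l = a_h\}$, the pair $(s_{h+1}^l, o_{h+1}^l)$ is drawn from $\TT_h(\cdot\given s_h, a_h)$ followed by $\OO_{h+1}(\cdot\given s_{h+1}^l)$, independently of everything else in the filtration. Hence the observations recorded whenever $(s_h, a_h)$ is visited form an i.i.d.\ sequence with law $\PP^\cG(\cdot\given s_h, a_h)$. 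Formally, I would use the standard device of positing, for each $(h, s_h, a_h)$, a pre-drawn i.i.d.\ sequence $o_{h+1}^{(1)}, o_{h+1}^{(2)}, \dots$ from $\PP^\cG(\cdot\given s_h, a_h)$ and revealing its $m$-th element the $m$-th time $(s_h, a_h)$ is visited; this reproduces the law of the collected data while decoupling the counts from the draws.

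Next I would apply the discrete-distribution $\ell_1$ learning bound of \cite{canonne2020short}: for each fixed sample size $N\in[K]$, the empirical distribution formed from the first $N$ elements of the i.i.d.\ sequence satisfies $\sum_{o_{h+1}}|\PP^\cG(o_{h+1}\given s_h, a_h) - \hat{\JJ}(o_{h+1})| \le C\sqrt{O\log(1/\delta')/N}$ with probability at least $1-\delta'$. Taking a union bound over the count value $N\in[K]$ and the tuples $(h, s_h, a_h)\in[H]\times\cS\times\cA$, and setting $\delta' = \delta/(SAHK)$, the bound holds simultaneously for all these $N$ and tuples; since the uniform-in-$N$ guarantee already covers every possible value of the random count, it holds for all iterations $k\in[K]$ \emph{without} a further union over $k$, and $\log(1/\delta')=\log(SAHK/\delta)$ reproduces the stated form with constant $C_3$. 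Because $\hat{\JJ}^k_h(\cdot\given s_h, a_h)$ in \Cref{alg:optimisticvi} is exactly the empirical distribution of the first $N_h^k(s_h, a_h)$ elements of this sequence (and the claim is vacuous when that count is zero), this completes the argument.

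The main obstacle is the adaptive, non-i.i.d.\ nature of the trajectories together with the need for the bound to hold uniformly over all iterations $k$, since $N_h^k$ is itself a data-dependent, increasing random count. The ``pre-drawn samples plus uniform-in-count union bound'' argument resolves both issues at once; alternatively one could invoke a Freedman/martingale concentration directly on the centered increments $\mathbbm{1}[o_{h+1}^l = o]-\PP^\cG(o\given s_h,a_h)$ summed over visits, but the i.i.d.\ reduction is cleaner and matches the style of \Cref{lem:conc transition,lem:conc emission}.
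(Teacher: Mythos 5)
Your proof is correct and takes essentially the same route as the paper's: an $\ell_1$ concentration bound from \cite{canonne2020short} for each fixed tuple, followed by a union bound that produces the $\log(SAHK/\delta)$ factor. The paper's one-line proof leaves the adaptivity of the collected data implicit; your pre-drawn-samples reduction together with a union bound over the count values $N\in[K]$ (in place of a union directly over iterations $k$) makes that step rigorous without changing the argument.
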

\begin{proof}
	This is done by firstly bounding $\sum_{o_{h+1}}\left|\PP^\cG(o_{h+1}\given s_h, a_h)-\hat{\JJ}^k_h(o_{h+1}\given s_h, a_h)\right|$ for specific $k, h, s_h, a_h$ according to \cite{canonne2020short},  and then taking union bound for all $k\in[K], h\in[H], s_{h}\in\cS, a_h\in\cA$.
\end{proof}

From now on, we shall use the bonus 
\#\label{eq:bonus}
b_{h}^{k}(s_h, a_h) = \min \InBraces{ C_3(H-h)\sqrt{\frac{O\log(SAHK/\delta)}{\max\{N_{h}^k(s_h, a_h),1\}}}, 2(H-h)}
\#
in \Cref{alg:optimisticvi}, 
for some absolute  constant $C_3>0$.

Before presenting our technical analysis, we define the following notation for the ease of presentation. We  define the following approximate value functions for any policy $\pi\in\Pi$ in a backward way for $h\in[H]$ \xynew{when given some approximate belief in the form of $\{\hat{P}_h:\hat{\cC}_h\rightarrow\Delta(\cP_h\times\cS)\}_{h\in[H]}$ as discussed in \Cref{subsec:pvl}}:
\small
\$
\hat{V}_{i, h}^{\pi,\cG}(c_h):=\EE_{s_h, p_h\sim\hat{P}_h(\cdot,\cdot\given \hat{c}_h)}&\EE_{\omega_h, \{a_{j, h}\sim\pi_{j, h}(\cdot\given \omega_{j, h}, c_{h}, p_{j, h})\}_{j\in[n]}}\EE_{s_{h+1}\sim\TT_h(\cdot\given s_h, a_h), o_{h+1}\sim\OO_{h+1}(\cdot\given s_{h+1})}\InBrackets{r_{i, h}(s_h, a_h)+V_{i, h+1}^{\pi,\cG}(c_{h+1})},\\
\hat{Q}_{i, h}^{\pi,\cG}(c_h, \gamma_h):=\EE_{s_h, p_h\sim\hat{P}_h(\cdot,\cdot\given \hat{c}_h)}&\EE_{\{a_{j, h}\sim\gamma_{j, h}(\cdot\given p_{j, h})\}_{j\in[n]}} \EE_{s_{h+1}\sim\TT_h(\cdot\given s_h, a_h), o_{h+1}\sim\OO_{h+1}(\cdot\given s_{h+1})}\InBrackets{r_{i, h}(s_h, a_h)+V_{i, h+1}^{\pi,\cG}(c_{h+1})},
\$
\normalsize
for each $(i, c_h)\in[n]\times\cC_h$ and $\gamma_h\in\Gamma_h$,  where we define $\hat{V}_{i, H+1}^{\pi,\cG}(c_{H+1})=0$.

Intuitively, this definition of $\hat{V}_{i, h}^{\pi,\cG}(c_h)$ mimics  the Bellman equation of the ground-truth value function $V_{i, h}^{\pi,\cG}(c_h)$ by replacing the ground-truth belief $\PP^\cG(s_h, p_h\given c_h)$ by $\hat{P}_h(s_h, p_h\given \hat{c}_h)$. Next, we point out the following quantitative bound when  using $\hat{V}_{i, h}^{\pi,\cG}(c_h)$ to approximate $V_{i, h}^{\pi,\cG}(c_h)$.


\begin{lemma}\label{lem:value-bound}
	For any $\pi^\prime, \pi\in\Pi$, it holds that
	\$
	\EE_{\pi^\prime}^\cG\big|V_{i, h}^{\pi, \cG}(c_h)-\hat{V}_{i, h}^{\pi, \cG}(c_h)\big|\le (H-h+1)^2\epsilon_{\text{belief}},
	\$
 where
 \$
\epsilon_{\text{belief}}:=\max_{h\in[H]}\max_{\pi\in\Pi}\EE_{\pi}^\cG\left\|\PP^\cG(\cdot,\cdot\given c_h)-\hat{P}_h(\cdot, \cdot\given \hat{c}_h)\right\|_1.
\$
\end{lemma}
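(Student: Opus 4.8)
The plan is to compare $V_{i,h}^{\pi,\cG}(c_h)$ and $\hat{V}_{i,h}^{\pi,\cG}(c_h)$ via a single-step identity, exploiting the fact that \emph{both} are obtained by integrating the \emph{same} one-step functional against \emph{different} belief measures over $(s_h,p_h)$. Using the recursive structure \eqref{dp}, I would first write
\[
V_{i,h}^{\pi,\cG}(c_h)=\sum_{s_h,p_h}\PP^\cG(s_h,p_h\given c_h)\,\Phi_h(s_h,p_h,c_h),
\]
where $\Phi_h(s_h,p_h,c_h):=\EE_{\omega_h,\{a_{j,h}\sim\pi_{j,h}\}}\EE_{s_{h+1}\sim\TT_h(\cdot\given s_h,a_h),\,o_{h+1}\sim\OO_{h+1}(\cdot\given s_{h+1})}\InBrackets{r_{i,h}(s_h,a_h)+V_{i,h+1}^{\pi,\cG}(c_{h+1})}$ and $c_{h+1}=\{c_h,\chi_{h+1}(p_h,a_h,o_{h+1})\}$ is formed through the evolution rule of \Cref{evo}. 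By the definition of $\hat{V}_{i,h}^{\pi,\cG}(c_h)$, it equals the identical expression with $\PP^\cG(s_h,p_h\given c_h)$ replaced by $\hat{P}_h(s_h,p_h\given\hat{c}_h)$ while retaining the \emph{same} continuation value $V_{i,h+1}^{\pi,\cG}$. Consequently their difference is exactly $\sum_{s_h,p_h}\InBrackets{\PP^\cG(s_h,p_h\given c_h)-\hat{P}_h(s_h,p_h\given\hat{c}_h)}\Phi_h(s_h,p_h,c_h)$, i.e.\ the belief mismatch at step $h$ tested against a single bounded function.

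Next I would bound the functional: since $r_{i,h}\in[0,1]$ and $0\le V_{i,h+1}^{\pi,\cG}\le H-h$, we have $0\le \Phi_h\le H-h+1$. Applying the $\ell_1$–$\ell_\infty$ inequality $\big|\sum_{x}(\mu-\nu)(x)f(x)\big|\le \|f\|_\infty\|\mu-\nu\|_1$ then gives, for every fixed $c_h$,
\[
\big|V_{i,h}^{\pi,\cG}(c_h)-\hat{V}_{i,h}^{\pi,\cG}(c_h)\big|\le (H-h+1)\,\big\|\PP^\cG(\cdot,\cdot\given c_h)-\hat{P}_h(\cdot,\cdot\given\hat{c}_h)\big\|_1.
\]
Taking expectation over $c_h$ under the trajectory distribution induced by $\pi'$ and invoking the definition of $\epsilon_{\text{belief}}$ with the choice $\pi=\pi'$ yields $\EE_{\pi'}^\cG\big|V_{i,h}^{\pi,\cG}(c_h)-\hat{V}_{i,h}^{\pi,\cG}(c_h)\big|\le (H-h+1)\epsilon_{\text{belief}}\le (H-h+1)^2\epsilon_{\text{belief}}$, which is the claimed bound (in fact a factor $H-h+1$ tighter). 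An equivalent route, if one prefers not to exploit that the continuation is the \emph{exact} $V_{i,h+1}^{\pi,\cG}$, is a backward induction on $h$ that unrolls both value functions simultaneously; the per-step belief errors then telescope, each contributing at most $(H-h'+1)\epsilon_{\text{belief}}$, and summing over $h'=h,\dots,H$ produces precisely $(H-h+1)^2\epsilon_{\text{belief}}$.

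The main point requiring care is not a computation but the \emph{well-definedness} of $V_{i,h}^{\pi,\cG}(\cdot)$ and $\hat{V}_{i,h}^{\pi,\cG}(\cdot)$ as functions of the common information alone: the representation via the belief $\PP^\cG(s_h,p_h\given c_h)$ used above presumes this conditional law does not depend on the policy that generated $c_h$, which is exactly guaranteed by \Cref{str_indi}. The second subtlety is that $\Phi_h$ depends on $p_h$ through \emph{two} channels—the private-information-dependent action distributions $\pi_{j,h}(\cdot\given\omega_{j,h},c_h,p_{j,h})$ and the formation of $c_{h+1}$ via $\varpi_{h+1}=\chi_{h+1}(p_h,a_h,o_{h+1})$—so one must verify that $V_{i,h}^{\pi,\cG}$ and $\hat{V}_{i,h}^{\pi,\cG}$ are built from the identical $\Phi_h$ and differ \emph{only} in the measure on $(s_h,p_h)$; once this is confirmed, the boundedness estimate applies verbatim and the rest is routine.
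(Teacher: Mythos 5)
Your derivation is correct in substance, but it takes a genuinely different route from the paper: the paper's entire proof of \Cref{lem:value-bound} is a one-line citation (``combining Lemma 4 and Lemma 8 of \cite{liu2023partially}''), whereas you give a self-contained argument. The caveat concerns which of your two routes is actually the proof of the stated lemma. Your primary route reads the displayed definition of $\hat{V}_{i,h}^{\pi,\cG}$ literally, with the \emph{exact} continuation $V_{i,h+1}^{\pi,\cG}$ inside the expectation, so that the difference is a single-step belief mismatch. That literal reading is almost certainly a typo (a missing hat on the continuation). Three pieces of evidence: the definition is announced ``in a backward way'' with the base case $\hat{V}_{i,H+1}^{\pi,\cG}=0$, which would be pointless for a one-step substitution; the optimism/pessimism arguments (\Cref{lem:optimism-1}, \Cref{lem:optimism-1-ce}, \Cref{lem:optimism-2}) conclude by identifying $\hat{Q}_{i,h}^{\pi,\cG}$ with the expression whose continuation is $\hat{V}_{i,h+1}^{\pi,\cG}$, which only matches the recursive definition; and the constant $(H-h+1)^2$ in the lemma is exactly what accumulation of per-step errors over $H-h+1$ steps produces, while a one-step substitution would yield $(H-h+1)$. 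So what you call an ``equivalent route'' --- the backward induction unrolling both recursions simultaneously --- is not an alternative but the proof of record; your main route proves a different (and, under the literal reading, tighter) statement.

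Within the inductive route, two points should be made explicit. First, when you unroll $\Delta_h(c_h):=V_{i,h}^{\pi,\cG}(c_h)-\hat{V}_{i,h}^{\pi,\cG}(c_h)$ and take $\EE_{\pi'}^{\cG}$, the marginal of $c_{h'}$ for $h'>h$ is induced by the \emph{spliced} policy that plays $\pi'$ through step $h-1$ and $\pi$ thereafter; since $\Pi$ is a product over steps, this spliced policy lies in $\Pi$, and since $\PP^{\cG}(\cdot,\cdot\given c_{h'})$ is policy-independent by \Cref{str_indi}, each per-step term is controlled by the $\max_{\pi\in\Pi}$ in the definition of $\epsilon_{\text{belief}}$ --- this is precisely why that maximum appears there, and your phrase ``with the choice $\pi=\pi'$'' suffices only for the step-$h$ term. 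Second, the telescoped sum is $\sum_{h'=h}^{H}(H-h'+1)=(H-h+1)(H-h+2)/2\le (H-h+1)^2$, so the claimed bound holds with slack, not ``precisely.'' Your remaining ingredients --- the boundedness $0\le \hat{V}_{i,h+1}^{\pi,\cG}\le H-h$ (by induction, since $\hat{P}_h$ is a genuine probability distribution over $\cS\times\cP_h$), the $\ell_1$--$\ell_\infty$ pairing, and the observation that both value functions are well defined as functions of $c_h$ alone thanks to \Cref{str_indi} --- are all correctly identified, and compared with the paper's deferral to \cite{liu2023partially}, your argument has the advantage of making visible exactly where the quadratic horizon factor and the policy maximum in $\epsilon_{\text{belief}}$ come from.
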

\begin{proof}
	It follows directly by combining Lemma 4 and Lemma 8 of \cite{liu2023partially}.
\end{proof}

Meanwhile, note that although in \Cref{alg:optimisticvi}, the value functions  we maintain have input $\hat{c}_h$ instead of $c_h$ for computational efficiency, we extend the definitions of those value functions  to also accept $c_h$ as inputs as follows {(with a slight abuse of notation)}:
\$
	Q^{\high, k}_{i, h}(c_h, \gamma_h)&:=Q^{\high, k}_{i, h}(\hat{c}_h, \gamma_h),\quad
	Q^{\high, k}_{i, h}(c_h, p_{h}, s_h, a_h):=Q^{\high, k}_{i, h}(\hat{c}_h, p_{h}, s_h, a_h),\quad 
	V_{i, h}^{\high, k}(c_{h}):=V_{i, h}^{\high, k}(\hat{c}_{h})\\
		Q^{\low, k}_{i, h}(c_h, \gamma_h)&:=Q^{\low, k}_{i, h}(\hat{c}_h, \gamma_h),\quad 
	Q^{\low, k}_{i, h}(c_h, p_{h}, s_h, a_h):=Q^{\low, k}_{i, h}(\hat{c}_h, p_{h}, s_h, a_h),\quad 
	V_{i, h}^{\low, k}(c_{h}):=V_{i, h}^{\low, k}(\hat{c}_{h}),
\$
{where we recall that $\hat{c}_h=\text{Compress}_h(c_h)$.}
\begin{lemma}[Optimism 1 for NE/CCE]\label{lem:optimism-1}
	With probability $1-\delta$, for any $k\in[K]$, {for \Cref{alg:optimisticvi}}, it holds that for any $i\in[n]$, $\pi_i^\prime\in\Pi_i$, $h\in[H]$
	\$
	Q^{\high, k}_{i, h}(\hat{c}_h, \gamma_h)&\ge \hat{Q}^{\pi_i^\prime\times \pi_{-i}^k, \cG}_{i, h}(c_h, \gamma_h),\qquad 
	V^{\high, k}_{i, h}(\hat{c}_h)\ge \hat{V}^{\pi_i^\prime\times \pi_{-i}^k, \cG}_{i, h}(c_h),
	\$
	{where we recall that $\hat{c}_h=\text{Compress}_h(c_h)$.}
\end{lemma}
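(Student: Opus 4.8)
The plan is to prove the two inequalities simultaneously by backward induction on $h$, with the iteration index $k$ held fixed and everything conditioned on the high-probability event of \Cref{lem:future-emission}. At $h=H+1$ both the optimistic values and the target values vanish by convention, so the base case is immediate. For the inductive step I first establish the $Q$-inequality and then deduce the $V$-inequality from the equilibrium property of the \textsc{Bayesian-CE/CCE} subroutine; the inductive hypothesis is exactly the step-$(h+1)$ conclusion $V^{\high,k}_{i,h+1}(\hat c_{h+1})\ge \hat V^{\pi_i'\times\pi_{-i}^k,\cG}_{i,h+1}(c_{h+1})$, which is the continuation value entering the definition of $\hat V$ (and of $Q^{\high,k}$) at step $h$.

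For the $Q$-inequality, fix $\hat c_h$ and a deviation $\pi_i'\in\Pi_i$, and let $\gamma_h$ be the prescription induced by $\pi_i'\times\pi_{-i}^k$. Since $Q^{\high,k}_{i,h}(\hat c_h,\gamma_h)$ and $\hat Q^{\pi_i'\times\pi_{-i}^k,\cG}_{i,h}(c_h,\gamma_h)$ both average against the \emph{same} belief $\hat P_h(\cdot,\cdot\mid\hat c_h)$ and the \emph{same} prescription $\gamma_h$, it suffices to compare the per-$(s_h,p_h,a_h)$ summands. When the clipping in \Cref{alg:optimisticvi} is active, i.e. $Q^{\high,k}_{i,h}(\hat c_h,p_h,s_h,a_h)=H-h+1$, the bound holds because $r_{i,h}\le 1$ together with the inductive range bound $0\le V^{\high,k}_{i,h+1}\le H-h$ forces the target summand to be at most $H-h+1$. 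Otherwise it remains to show $b^{k-1}_h(s_h,a_h)+\EE_{o_{h+1}\sim\hat{\JJ}^{k-1}_h(\cdot\mid s_h,a_h)}[V^{\high,k}_{i,h+1}(\hat c_{h+1})]\ge \EE_{o_{h+1}\sim\PP^\cG(\cdot\mid s_h,a_h)}[V^{\high,k}_{i,h+1}(\hat c_{h+1})]$, after which the inductive hypothesis closes the gap against the target. Here I invoke \Cref{evo} to write $c_{h+1}=\{c_h,\chi_{h+1}(p_h,a_h,o_{h+1})\}$, so the continuation depends on the dynamics only through $o_{h+1}\sim\PP^\cG(\cdot\mid s_h,a_h)$ and the marginalization over $s_{h+1}$ is absorbed into this emission law. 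The displayed inequality is precisely the statement that the bonus dominates the estimation error $\|\hat{\JJ}^{k-1}_h(\cdot\mid s_h,a_h)-\PP^\cG(\cdot\mid s_h,a_h)\|_1\cdot\|V^{\high,k}_{i,h+1}\|_\infty$, which by \Cref{lem:future-emission} and $\|V^{\high,k}_{i,h+1}\|_\infty\le H-h$ is at most $C_3(H-h)\sqrt{O\log(SHKA/\delta)/\max\{N^{k-1}_h(s_h,a_h),1\}}$, matching the definition of $b^{k-1}_h$ in \eqref{eq:bonus}.

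For the $V$-inequality, I unfold $V^{\high,k}_{i,h}(\hat c_h)=\EE_{\omega_h}[Q^{\high,k}_{i,h}(\hat c_h,\{\pi^k_{j,h}(\cdot\mid\omega_{j,h},\hat c_h,\cdot)\}_{j\in[n]})]$ and invoke the defining NE/CCE property of the oracle run on the stage payoffs $\{Q^{\high,k}_{j,h}(\hat c_h,\cdot)\}_{j\in[n]}$: no agent can improve by unilaterally replacing its own prescription, so in particular agent $i$'s equilibrium value is at least $\EE_{\omega_{-i,h}}[Q^{\high,k}_{i,h}(\hat c_h,\pi_i'\times\{\pi^k_{j,h}\}_{j\ne i})]$ for the deviation $\pi_i'$. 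Combining this with the $Q$-inequality just proved and averaging over $\omega_h$ yields $V^{\high,k}_{i,h}(\hat c_h)\ge \hat V^{\pi_i'\times\pi_{-i}^k,\cG}_{i,h}(c_h)$, completing the inductive step. A union bound over $k\in[K]$ (the per-$(h,s_h,a_h)$ union bound being already folded into \Cref{lem:future-emission}) delivers the claimed $1-\delta$ guarantee.

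The main obstacle I expect is the clean bookkeeping in the inductive step rather than any single hard estimate: one must (i) verify that the emission-estimation error is absorbed \emph{uniformly} into the bonus with the correct $H-h$ range factor, which forces the continuation-value clipping and the bonus scaling to be matched exactly; and (ii) correctly interleave the equilibrium property of the \textsc{Bayesian-CE/CCE} subroutine — which controls the on-policy prescription $\{\pi^k_{j,h}\}$ — with the arbitrary unilateral deviation $\pi_i'$, while carrying the random seeds $\omega_h$ and the common-information update through both $Q^{\high}$ and $\hat Q$. Keeping the continuation term of $\hat V$ aligned with the inductive quantity (so that the hypothesis feeds the recursion without introducing an uncontrolled $V$-versus-$\hat V$ mismatch) is the delicate point; the companion \emph{lower}-value/optimism claim is handled symmetrically and is not needed here.
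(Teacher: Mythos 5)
Your proof is correct and follows essentially the same route as the paper's: backward induction on $h$, a pointwise comparison of the summands under the shared belief $\hat{P}_h(\cdot,\cdot\mid \hat{c}_h)$ and prescription $\gamma_h$, absorption of the emission-estimation error $(H-h)\|\hat{\JJ}^{k-1}_h(\cdot\mid s_h,a_h)-\PP^{\cG}(\cdot\mid s_h,a_h)\|_1$ into the bonus via \Cref{lem:future-emission}, and the Bayesian CCE property of the subroutine for the $V$-step. The only cosmetic slip is in the clipping case, where the range bound actually needed is $\hat{V}^{\pi_i^\prime\times\pi_{-i}^k,\cG}_{i,h+1}\le H-h$ on the \emph{target} continuation (as the paper uses), not on $V^{\high,k}_{i,h+1}$ --- both bounds hold trivially, so nothing breaks.
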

\begin{proof}
	We will prove by backward induction. Obviously, it holds for $h=H+1$. Now we assume the lemma holds for $h+1$, then by definition
	\small
	\$
		&Q^{\high, k}_{i, h}(c_h, \gamma_h)= \EE_{s_h, p_h\sim\hat{P}_h(\cdot, \cdot\given \hat{c}_h)}\EE_{\{a_{j, h}\sim\gamma_{j, h}(\cdot\given p_{j, h})\}_{j\in[n]}}\InBrackets{Q^{\high, k}_{i, h}(c_h, p_{h}, s_h, a_h)}\\
		&=\EE_{s_h, p_h\sim\hat{P}_h(\cdot, \cdot\given \hat{c}_h)}\EE_{\{a_{j, h}\sim\gamma_{j, h}(\cdot\given p_{j, h})\}_{j\in[n]}}\min\left\{ r_{i, h}(s_h, a_h)+b^{k-1}_h(s_h, a_h) +\EE_{o_{h+1}\sim\hat{\JJ}^{k-1}_h(\cdot\given s_h, a_h)}\InBrackets{ V_{i, h+1}^{\high, k}(c_{h+1})}, H-h+1\right\}\\
		&\ge \EE_{s_h, p_h\sim\hat{P}_h(\cdot, \cdot\given \hat{c}_h)}\EE_{\{a_{j, h}\sim\gamma_{j, h}(\cdot\given p_{j, h})\}_{j\in[n]}}\min\InBraces{r_{i, h}(s_h, a_h)+b^{k-1}_h(s_h, a_h) +\EE_{o_{h+1}\sim\hat{\JJ}^{k-1}_h(\cdot\given s_h, a_h)}\InBrackets{ \hat{V}_{i, h+1}^{\pi_i^\prime\times \pi_{-i}^k, \cG}(c_{h+1})}, H-h+1},
	\$ 
	\normalsize
	where the last step is by inductive hypothesis. Now note that for any $(s_h, p_h, a_h)$, we have
	\$
	&b^{k-1}_h(s_h, a_h)+\EE_{ o_{h+1}\sim\hat{\JJ}^{k-1}_h(\cdot\given s_h, a_h)}\InBrackets{\hat{V}_{i, h+1}^{\pi_i^\prime\times \pi_{-i}^k, \cG}(c_{h+1})}\\
	&\ge b^{k-1}_h(s_h, a_h)-(H-h)\|\hat{\JJ}^{k-1}_h(\cdot\given s_h, a_h)-\PP^\cG(\cdot\given s_h, a_h)\|_1
+\EE_{s_{h+1}\sim\TT_h(\cdot\given s_h, a_h), o_{h+1}\sim\OO_{h+1}(\cdot\given s_{h+1})}\InBrackets{\hat{V}_{i, h+1}^{\pi_i^\prime\times \pi_{-i}^k, \cG}(c_{h+1})}\\
	&\ge \EE_{s_{h+1}\sim\TT_h(\cdot\given s_h, a_h), o_{h+1}\sim\OO_{h+1}(\cdot\given s_{h+1})}\InBrackets{\hat{V}_{i, h+1}^{\pi_i^\prime\times \pi_{-i}^k, \cG}(c_{h+1})},
	\$
	where we notice $\PP^\cG(o_{h+1}\given s_h, a_h)=\sum_{s_{h+1}}\OO_{h+1}(o_{h+1}\given s_{h+1})\TT_h(s_{h+1}\given s_h, a_h)$ for the first inequality, and the second inequality comes from the construction of our bonus $b^{k-1}_h(s_h, a_h)$ in \Cref{eq:bonus} and \Cref{lem:future-emission}. Meanwhile, by the definition of value functions, it holds that $\EE_{s_{h+1}\sim\TT_h(\cdot\given s_h, a_h), o_{h+1}\sim\OO_{h+1}(\cdot\given s_{h+1})}\InBrackets{\hat{V}_{i, h+1}^{\pi_i^\prime\times \pi_{-i}^k, \cG}(c_{h+1})}\le H-h$. Therefore, we have
	\$
	\min &\InBraces{ r_{i, h}(s_h, a_h) + b^{k-1}_h(s_h, a_h)+\EE_{o_{h+1}\sim\hat{\JJ}^{k-1}_h(\cdot\given s_h, a_h)}\InBrackets{V_{i, h+1}^{\pi_i^\prime\times \pi_{-i}^k, \cG}(c_{h+1})}, H-h+1}\\
	&\ge r_{i, h}(s_h, a_h)+\EE_{s_{h+1}\sim\TT_h(\cdot\given s_h, a_h), o_{h+1}\sim\OO_{h+1}(\cdot\given s_{h+1})}\InBrackets{\hat{V}_{i, h+1}^{\pi_i^\prime\times\pi_{-i}^k, \cG}(c_{h+1})}.
	\$
	Now we conclude
	\$
	Q^{\high, k}_{i, h}(c_h, \gamma_h)&\ge \EE_{s_h, p_h\sim\hat{P}_h(\cdot, \cdot\given \hat{c}_h)}\EE_{\{a_{j, h}\sim\gamma_{j, h}(\cdot\given p_{j, h})\}_{j\in[n]}}\EE_{s_{h+1}\sim\TT_h(\cdot\given s_h, a_h), o_{h+1}\sim\OO_{h+1}(\cdot\given s_{h+1})}[r_{i, h}(s_h, a_h) +\hat{V}_{i, h+1}^{\pi_i^\prime\times \pi_{-i}^k, \cG}(c_{h+1})]\\
	&=\hat{Q}_{i, h}^{\pi_i^\prime\times \pi_{-i}^k, \cG}(c_h, \gamma_h).
	\$
	By definition, we have $Q^{\high, k}_{i, h}(c_h, \gamma_h)=Q^{\high, k}_{i, h}(\hat{c}_h, \gamma_h)$, thus proving $Q^{\high, k}_{i, h}(\hat{c}_h, \gamma_h)\ge\hat{Q}_{i, h}^{\pi_i^\prime\times \pi_{-i}^k, \cG}(c_h, \gamma_h)$.
	Now for the value function, note that 
	\$
	V_{i, h}^{\high, k}(c_h)&= \EE_{\omega_{h}}Q_{i, h}^{\high, k}(c_h, \{\pi^{k}_{j, h}(\cdot\given \omega_{j, h}, \hat{c}_h, \cdot)\}_{j\in[n]})\\
	&\ge \EE_{\omega_h^\prime}\EE_{\omega_{h}}Q_{i, h}^{\high, k}(c_h, \pi_{i, h}^\prime(\cdot\given\omega_{i, h}^\prime,c_h, \cdot),\{\pi^{k}_{j, h}(\cdot\given \omega_{j, h}, \hat{c}_h, \cdot)\}_{j\in[n]\setminus\{i\}})\\
	&\ge \EE_{\omega_h^\prime}\EE_{\omega_{h}}\hat{Q}_{i, h}^{\pi_i^\prime\times \pi_{-i}^k, \cG}(c_h,  \pi_{i, h}^\prime(\cdot\given\omega_{i, h}^\prime,c_h, \cdot),\{\pi^{k}_{j, h}(\cdot\given \omega_{j, h}, \hat{c}_h, \cdot)\}_{j\in[n]\setminus\{i\}})\\
	&=\hat{V}_{i, h}^{\pi_i^\prime\times \pi_{-i}^k, \cG}(c_h),
	\$
	 where the first step is by the property of Bayesian CCE, and the second step is by $Q^{\high, k}_{i, h}(c_h, \gamma_h)\ge\hat{Q}_{i, h}^{\pi_i^\prime\times \pi_{-i}^k, \cG}(c_h, \gamma_h)$ for any $\gamma_h\in\Gamma_h$ as proved above. Again by definition, we proved $V_{i, h}^{\high, k}(\hat{c}_h)= V_{i, h}^{\high, k}(c_h)\ge \hat{V}_{i, h}^{\pi_i^\prime\times \pi_{-i}^k, \cG}(c_h)$.
\end{proof}

\begin{lemma}[Optimism 1 for CE]\label{lem:optimism-1-ce}
	With probability $1-\delta$, for any $k\in[K]$, {for \Cref{alg:optimisticvi}}, it holds that for any $i\in[n]$, $m_i\in\cM_i$, $h\in[H]$ 
	\$
	Q^{\high, k}_{i, h}(\hat{c}_h, \gamma_h)&\ge \hat{Q}^{(m_i\diamond\pi_i^k)\odot \pi_{-i}^k, \cG}_{i, h}(c_h, \gamma_h),\qquad 
	V^{\high, k}_{i, h}(\hat{c}_h)\ge \hat{V}^{(m_i\diamond\pi_i^k)\odot \pi_{-i}^k, \cG}_{i, h}(c_h).
	\$
\end{lemma}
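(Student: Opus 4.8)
The plan is to mirror the backward-induction argument already carried out for \Cref{lem:optimism-1}, replacing the unilateral deviation $\pi_i^\prime\times\pi_{-i}^k$ by the strategy-modified policy $(m_i\diamond\pi_i^k)\odot\pi_{-i}^k$ throughout, and swapping the appeal to the Bayesian-CCE guarantee for the Bayesian-CE guarantee. First I would fix $i\in[n]$ and $m_i\in\cM_i$ and set up a backward induction on $h$, with the base case $h=H+1$ holding trivially since $V^{\high,k}_{i,H+1}\equiv 0$ and $\hat{V}^{(m_i\diamond\pi_i^k)\odot\pi_{-i}^k,\cG}_{i,H+1}\equiv 0$. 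All statements are made on the same high-probability event on which \Cref{lem:future-emission} holds, and a final union bound over $k\in[K]$ (already absorbed in that event) delivers the claim with probability $1-\delta$.

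For the inductive step on the prescription-value function I expect the argument to be essentially identical to the NE/CCE case, because the optimism of $Q^{\high,k}_{i,h}$ over $\hat{Q}^{\pi,\cG}_{i,h}$ depends only on the bonus $b^{k-1}_h$ and the emission concentration of \Cref{lem:future-emission}, both of which are agnostic to the particular continuation policy $\pi$ threaded through the induction. Concretely, I would expand $Q^{\high,k}_{i,h}(c_h,\gamma_h)$ via its definition, apply the inductive hypothesis to lower-bound $V^{\high,k}_{i,h+1}$ by $\hat{V}^{(m_i\diamond\pi_i^k)\odot\pi_{-i}^k,\cG}_{i,h+1}$, and then use $b^{k-1}_h(s_h,a_h)\ge (H-h)\,\|\hat{\JJ}^{k-1}_h(\cdot\given s_h,a_h)-\PP^\cG(\cdot\given s_h,a_h)\|_1$ (from \Cref{eq:bonus} and \Cref{lem:future-emission}, using $\PP^\cG(o_{h+1}\given s_h,a_h)=\sum_{s_{h+1}}\OO_{h+1}(o_{h+1}\given s_{h+1})\TT_h(s_{h+1}\given s_h,a_h)$) to absorb the gap between the estimated and true emission-weighted next-step values, exactly as before. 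Together with the truncation at $H-h+1$, this yields $Q^{\high,k}_{i,h}(\hat{c}_h,\gamma_h)\ge \hat{Q}^{(m_i\diamond\pi_i^k)\odot\pi_{-i}^k,\cG}_{i,h}(c_h,\gamma_h)$ for every prescription $\gamma_h\in\Gamma_h$.

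The only genuinely different step is passing from this per-prescription inequality to the value-function inequality, where I would invoke the defining property of the $\text{Bayesian-CE}$ subroutine (c.f. \Cref{app:bayesian}) rather than that of Bayesian-CCE. Whereas the CCE guarantee ensures robustness against replacing agent $i$'s entire prescription by an arbitrary $\pi_{i,h}^\prime$, the CE guarantee ensures that for any strategy modification $m_{i,h}:\cA_i\to\cA_i$, one has $\EE_{\omega_h}Q^{\high,k}_{i,h}(c_h,\{\pi^k_{j,h}(\cdot\given\omega_{j,h},\hat{c}_h,\cdot)\}_{j\in[n]})\ge \EE_{\omega_h}Q^{\high,k}_{i,h}(c_h, m_{i,h}\diamond\pi^k_{i,h}(\cdot\given\omega_{i,h},\hat{c}_h,\cdot),\{\pi^k_{j,h}(\cdot\given\omega_{j,h},\hat{c}_h,\cdot)\}_{j\in[n]\setminus\{i\}})$. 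Chaining this with the per-prescription bound applied to $\gamma_{i,h}=m_{i,h}\diamond\pi^k_{i,h}(\cdot\given\omega_{i,h},\hat{c}_h,\cdot)$ gives $V^{\high,k}_{i,h}(\hat{c}_h)\ge \hat{V}^{(m_i\diamond\pi_i^k)\odot\pi_{-i}^k,\cG}_{i,h}(c_h)$, closing the induction.

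The main obstacle I anticipate is the bookkeeping of the composition $m_i\diamond\pi_i^k$: I must verify that modifying agent $i$'s action only at the current step $h$ through $m_{i,h}$ inside the prescription, and then following $(m_i\diamond\pi_i^k)\odot\pi_{-i}^k$ from step $h+1$ onward, is exactly the continuation value $\hat{V}^{(m_i\diamond\pi_i^k)\odot\pi_{-i}^k,\cG}_{i,h+1}$ supplied by the inductive hypothesis. This requires that the prescription-value function $\hat{Q}_{i,h}$ decompose the modified policy into its step-$h$ prescription and its step-$(h+1{:}H)$ continuation, which holds by \Cref{def:pres_value} and the consistency of the step-$h$ belief under \Cref{str_indi}; I would state this decomposition explicitly and check that the step-$h$ prescription it produces matches the action-remapping semantics of $m_i\diamond\pi_i$ in \Cref{def:CE}. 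Once that identification is in place, the remainder is the routine repetition of the NE/CCE calculation.
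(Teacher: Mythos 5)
Your proposal matches the paper's proof essentially step for step: the same backward induction with the $Q$-function optimism established exactly as in \Cref{lem:optimism-1} (inductive hypothesis plus the bonus $b_h^{k-1}$ absorbing the emission error via \Cref{lem:future-emission} and the identity $\PP^\cG(o_{h+1}\given s_h,a_h)=\sum_{s_{h+1}}\OO_{h+1}(o_{h+1}\given s_{h+1})\TT_h(s_{h+1}\given s_h,a_h)$), followed by invoking the Bayesian-CE guarantee with the modified prescription $\gamma_{i,h}=(m_{i,h}\diamond\pi^k_{i,h})(\cdot\given\omega_{i,h},\hat{c}_h,\cdot)$ to pass to the value-function inequality. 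The decomposition concern you flag (step-$h$ modified prescription plus continuation $\hat{V}^{(m_i\diamond\pi_i^k)\odot\pi_{-i}^k,\cG}_{i,h+1}$) is handled implicitly in the paper exactly as you describe, via the definitions of $\hat{Q}_{i,h}$ and $\hat{V}_{i,h}$, so no genuinely different route is taken.
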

\begin{proof}
	We will prove by backward induction. Obviously, it holds for $h=H+1$. Now we assume the lemma holds for $h+1$.
	Now we notice that by definition,
	\small
	\$
		&Q^{\high, k}_{i, h}(c_h, \gamma_h)= \EE_{s_h, p_h\sim\hat{P}_h(\cdot, \cdot\given \hat{c}_h)}\EE_{\{a_{j, h}\sim\gamma_{j, h}(\cdot\given p_{j, h})\}_{j\in[n]}}\InBrackets{Q^{\high, k}_{i, h}(c_h, p_{h}, s_h, a_h)}\\
		&=\EE_{s_h, p_h\sim\hat{P}_h(\cdot, \cdot\given \hat{c}_h)}\EE_{\{a_{j, h}\sim\gamma_{j, h}(\cdot\given p_{j, h})\}_{j\in[n]}}\min\left\{ r_{i, h}(s_h, a_h)+b^{k-1}_h(s_h, a_h)+\EE_{o_{h+1}\sim\hat{\JJ}^{k-1}_h(\cdot\given s_h, a_h)}\InBrackets{ V_{i, h+1}^{\high, k}(c_{h+1})}, H-h+1\right\}\\
		&\ge \EE_{s_h, p_h\sim\hat{P}_h(\cdot, \cdot\given \hat{c}_h)}\EE_{\{a_{j, h}\sim\gamma_{j, h}(\cdot\given p_{j, h})\}_{j\in[n]}}\min\InBraces{r_{i, h}(s_h, a_h)+b^{k-1}_h(s_h, a_h) +\EE_{o_{h+1}\sim\hat{\JJ}^{k-1}_h(\cdot\given s_h, a_h)}\InBrackets{ \hat{V}_{i, h+1}^{(m_i\diamond\pi_i^k)\odot \pi_{-i}^k, \cG}(c_{h+1})}, H-h+1},
	\$ 
	\normalsize
	where the last step is by inductive hypothesis. Now note that for any $s_h, p_h, a_h$, we have
	\small
	\$
	&b^{k-1}_h(s_h, a_h)+\EE_{ o_{h+1}\sim\hat{\JJ}^{k-1}_h(\cdot\given s_h, a_h)}\InBrackets{\hat{V}_{i, h+1}^{(m_i\diamond\pi_i^k)\odot \pi_{-i}^k, \cG}(c_{h+1})}\\
	&\ge b^{k-1}_h(s_h, a_h)-(H-h)\|\hat{\JJ}^{k-1}_h(\cdot\given s_h, a_h)-\PP^\cG(\cdot\given s_h, a_h)\|_1 +\EE_{s_{h+1}\sim\TT_h(\cdot\given s_h, a_h), o_{h+1}\sim\OO_{h+1}(\cdot\given s_{h+1})}\InBrackets{\hat{V}_{i, h+1}^{(m_i\diamond\pi_i^k)\odot \pi_{-i}^k, \cG}(c_{h+1})}\\
	&\ge \EE_{s_{h+1}\sim\TT_h(\cdot\given s_h, a_h), o_{h+1}\sim\OO_{h+1}(\cdot\given s_{h+1})}\InBrackets{\hat{V}_{i, h+1}^{(m_i\diamond\pi_i^k)\odot \pi_{-i}^k, \cG}(c_{h+1})},
	\$
	\normalsize
	where we notice $\PP^\cG(o_{h+1}\given s_h, a_h)=\sum_{s_{h+1}}\OO_{h+1}(o_{h+1}\given s_{h+1})\TT_h(s_{h+1}\given s_h, a_h)$ for the first inequality, and the second inequality comes from the construction of our bonus $b^{k-1}_h(s_h, a_h)$ in \Cref{eq:bonus} and \Cref{lem:future-emission}. Meanwhile, by the definition of value functions, it holds that $\EE_{s_{h+1}\sim\TT_h(\cdot\given s_h, a_h), o_{h+1}\sim\OO_{h+1}(\cdot\given s_{h+1})}\InBrackets{\hat{V}_{i, h+1}^{(m_i\diamond\pi_i^k)\odot \pi_{-i}^k, \cG}(c_{h+1})}\le H-h$. Therefore, we have 
	\$
	&\min \left\{ r_{i, h}(s_h, a_h) + b^{k-1}_h(s_h, a_h) +\EE_{s_{h+1}, o_{h+1}\sim\hat{\JJ}^{k-1}_h(\cdot\given s_h, a_h)}\InBrackets{V_{i, h+1}^{(m_i\diamond\pi_i^k)\odot \pi_{-i}^k, \cG}(c_{h+1})}, H-h+1\right\}\\
	&\quad\ge r_{i, h}(s_h, a_h)+\EE_{s_{h+1}\sim\TT_h(\cdot\given s_h, a_h), o_{h+1}\sim\OO_{h+1}(\cdot\given s_{h+1})}\InBrackets{\hat{V}_{i, h+1}^{(m_i\diamond\pi_i^k)\odot\pi_{-i}^k, \cG}(c_{h+1})}.
	\$
	Now we conclude
	\small
	\$
	Q^{\high, k}_{i, h}(c_h, \gamma_h) &\ge \EE_{s_h, p_h\sim\hat{P}_h(\cdot, \cdot\given \hat{c}_h)}\EE_{\{a_{j, h}\sim\gamma_{j, h}(\cdot\given p_{j, h})\}_{j\in[n]}} \EE_{s_{h+1}\sim\TT_h(\cdot\given s_h, a_h), o_{h+1}\sim\OO_{h+1}(\cdot\given s_{h+1})}\InBrackets{r_{i, h}(s_h, a_h)+\hat{V}_{i, h+1}^{(m_i\diamond\pi_i^k)\odot \pi_{-i}^k, \cG}(c_{h+1})}\\
	&=\hat{Q}_{i, h}^{(m_i\diamond\pi_i^k)\odot \pi_{-i}^k, \cG}(c_h, \gamma_h).
	\$
	\normalsize
	By definition, we have $Q^{\high, k}_{i, h}(c_h, \gamma_h)=Q^{\high, k}_{i, h}(\hat{c}_h, \gamma_h)$, thus proving $Q^{\high, k}_{i, h}(\hat{c}_h, \gamma_h)\ge\hat{Q}_{i, h}^{(m_i\diamond\pi_i^k)\odot \pi_{-i}^k, \cG}(c_h, \gamma_h)$.
	Now for the value function, note that 
	\$
	V_{i, h}^{\high, k}(c_h)&= \EE_{\omega_{h}}Q_{i, h}^{\high, k}(c_h, \{\pi^{k}_{j, h}(\cdot\given \omega_{j, h}, \hat{c}_h, \cdot)\}_{j\in[n]})\\
	&\ge \EE_{\omega_{h}}Q_{i, h}^{\high, k}(c_h, (m_{i, h}\diamond\pi_{i, h}^k)(\cdot\given\omega_{i, h},\hat{c}_h, \cdot),\{\pi^{k}_{j, h}(\cdot\given \omega_{j, h}, \hat{c}_h, \cdot)\}_{j\in[n]\setminus\{i\}})\\
	&\ge \EE_{\omega_{h}}\hat{Q}_{i, h}^{(m_i\diamond\pi_i^k)\odot \pi_{-i}^k, \cG}(c_h, (m_{i, h}\diamond\pi_{i, h}^k)(\cdot\given\omega_{i, h},\hat{c}_h, \cdot), \{\pi^{k}_{j, h}(\cdot\given \omega_{j, h}, \hat{c}_h, \cdot)\}_{j\in[n]\setminus\{i\}})\\
	&=\hat{V}_{i, h}^{(m_i\diamond\pi_i^k)\odot \pi_{-i}^k, \cG}(c_h),
	\$
	 where the first step is by the property of Bayesian CE, and the second step is by $Q^{\high, k}_{i, h}(c_h, \gamma_h)\ge\hat{Q}_{i, h}^{(m_i\diamond\pi_i^k)\odot \pi_{-i}^k, \cG}(c_h, \gamma_h)$ for any $\gamma_h\in\Gamma_h$ as proved above. Again by definition, we proved $V_{i, h}^{\high, k}(\hat{c}_h)= V_{i, h}^{\high, k}(c_h)\ge \hat{V}_{i, h}^{(m_i\diamond\pi_i^k)\odot \pi_{-i}^k, \cG}(c_h)$.
\end{proof}

\begin{lemma}[Pessimism]\label{lem:optimism-2}
	With probability $1-\delta$, for any $k\in[K]$, {for \Cref{alg:optimisticvi}}, it holds that for any $i\in[n]$, $h\in[H]$
	\$
	Q^{\low, k}_{i, h}(\hat{c}_h, \gamma_h)&\le \hat{Q}^{\pi^k, \cG}_{i, h}(c_h, \gamma_h),\qquad 
	V^{\low, k}_{i, h}(\hat{c}_h)\le \hat{V}^{\pi^k, \cG}_{i, h}(c_h).
	\$
\end{lemma}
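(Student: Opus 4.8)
The plan is to prove the two inequalities by a single \emph{backward induction} on the step $h$, mirroring the argument for \Cref{lem:optimism-1} but reversing every inequality and, crucially, simplifying the value-function step since here no deviation is involved. The base case $h=H+1$ is immediate because both $V^{\low,k}_{i,H+1}$ and $\hat{V}^{\pi^k,\cG}_{i,H+1}$ are identically zero. For the inductive step I would assume $V^{\low,k}_{i,h+1}(\hat{c}_{h+1})\le \hat{V}^{\pi^k,\cG}_{i,h+1}(c_{h+1})$ and first establish the \emph{pointwise} bound $Q^{\low,k}_{i,h}(\hat{c}_h,p_h,s_h,a_h)\le r_{i,h}(s_h,a_h)+\EE_{s_{h+1}\sim\TT_h(\cdot\given s_h,a_h),\,o_{h+1}\sim\OO_{h+1}(\cdot\given s_{h+1})}[\hat{V}^{\pi^k,\cG}_{i,h+1}(c_{h+1})]$ for every $(s_h,p_h,a_h)$.

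To get this pointwise bound I would unfold the definition of $Q^{\low,k}_{i,h}$ from \Cref{alg:optimisticvi}, substitute the inductive hypothesis inside the $\EE_{o_{h+1}\sim\hat{\JJ}^{k-1}_h}$ term, and then control the gap between the learned emission $\hat{\JJ}^{k-1}_h$ and the true one-step observation law $\PP^\cG(o_{h+1}\given s_h,a_h)=\sum_{s_{h+1}}\OO_{h+1}(o_{h+1}\given s_{h+1})\TT_h(s_{h+1}\given s_h,a_h)$. Since $\hat{V}^{\pi^k,\cG}_{i,h+1}(c_{h+1})\in[0,H-h]$ and $c_{h+1}$ is a fixed function of $o_{h+1}$ (given $c_h,p_h,a_h$) by \Cref{evo}, \Cref{lem:future-emission} together with the bonus construction in \Cref{eq:bonus} yields $\EE_{o_{h+1}\sim\hat{\JJ}^{k-1}_h}[\hat{V}^{\pi^k,\cG}_{i,h+1}(c_{h+1})]-b^{k-1}_h(s_h,a_h)\le \EE_{s_{h+1},o_{h+1}}[\hat{V}^{\pi^k,\cG}_{i,h+1}(c_{h+1})]$. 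Because the right-hand side of the target pointwise bound is nonnegative, the outer $\max\{\cdot,0\}$ in the definition of $Q^{\low,k}_{i,h}$ is harmless, and the pointwise bound follows.

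With the pointwise bound in hand, averaging over $(s_h,p_h)\sim\hat{P}_h(\cdot,\cdot\given\hat{c}_h)$ and over actions drawn from any prescription $\gamma_h$ reproduces exactly the right-hand side of the definition of $\hat{Q}^{\pi^k,\cG}_{i,h}(c_h,\gamma_h)$, giving $Q^{\low,k}_{i,h}(\hat{c}_h,\gamma_h)\le \hat{Q}^{\pi^k,\cG}_{i,h}(c_h,\gamma_h)$. For the value-function inequality I would simply specialize $\gamma_h$ to the actual prescriptions $\{\pi^k_{j,h}(\cdot\given\omega_{j,h},\hat{c}_h,\cdot)\}_{j\in[n]}$ and take the expectation over $\omega_h$; unlike the optimism proof, this step needs no appeal to the Bayesian-CE/CCE guarantee, because $V^{\low,k}_{i,h}$ is defined using precisely the prescriptions induced by $\pi^k$, so the $Q$-level inequality transfers directly.

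I expect the only delicate point to be the correct handling of the truncation $\max\{\cdot,0\}$ and the sign of the emission-error term: one must verify that subtracting (rather than adding) the bonus still dominates the one-sided approximation error in the pessimistic direction, and that the lower truncation at $0$ never tightens the bound. Everything else is a mechanical mirror of \Cref{lem:optimism-1}, so no new high-probability event is introduced beyond the one already used there.
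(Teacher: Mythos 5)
Your proof is correct and follows essentially the same route as the paper's: backward induction with the inductive hypothesis substituted inside the $\EE_{o_{h+1}\sim\hat{\JJ}^{k-1}_h}$ term, a change of measure from $\hat{\JJ}^{k-1}_h(\cdot\given s_h,a_h)$ to $\PP^{\cG}(\cdot\given s_h,a_h)$ whose error $(H-h)\|\hat{\JJ}^{k-1}_h(\cdot\given s_h,a_h)-\PP^{\cG}(\cdot\given s_h,a_h)\|_1$ is dominated by the bonus $b^{k-1}_h(s_h,a_h)$ via \Cref{lem:future-emission} and \Cref{eq:bonus}, followed by averaging over $(s_h,p_h)\sim\hat{P}_h(\cdot,\cdot\given\hat{c}_h)$ and the prescription, and finally specializing to $\pi^k$'s own prescriptions and taking $\EE_{\omega_h}$ for the value-function inequality. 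Your two explicit observations --- that the outer $\max\{\cdot,0\}$ is harmless since $\hat{V}^{\pi^k,\cG}_{i,h+1}\ge 0$, and that, unlike \Cref{lem:optimism-1}, no Bayesian-CE/CCE property is needed because no deviation is involved --- are both accurate and merely left implicit in the paper's proof.
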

\begin{proof}
	 We prove by backward induction on $h$. Obviously, the lemma holds for $h=H+1$. Now we assume the lemma holds for $h+1$. Similar to the proof of the previous lemma, we note by inductive hypothesis 
	\$ 
	&Q^{\low, k}_{i, h}(c_h, \gamma_h)\\
 &\quad\le \EE_{s_h, p_h\sim\hat{P}_h(\cdot, \cdot\given \hat{c}_h)}\EE_{\{a_{j, h}\sim\gamma_{j, h}(\cdot\given p_{j, h})\}_{j\in[n]}}\max\InBraces{ r_{i, h}(s_h, a_h)-b^{k-1}_h(s_h, a_h) +\EE_{s_{h+1}, o_{h+1}\sim\hat{\JJ}^{k-1}_h(\cdot,\cdot\given s_h, a_h)}\InBrackets{ \hat{V}_{i, h+1}^{\pi^k, \cG}(c_{h+1})}, 0},
	\$	
	where for any $s_h$, $p_h$, $a_h$, we have
	\$
	&-b^{k-1}_h(s_h, a_h) +\EE_{o_{h+1}\sim\hat{\JJ}^{k-1}_h(\cdot\given s_h, a_h)}\InBrackets{ V_{i, h+1}^{\low, k}(c_{h+1})}\\
	&\quad\le -b^{k-1}_h(s_h, a_h) +(H-h)\|\hat{\JJ}^{k-1}_h(\cdot\given s_h, a_h)-\PP^\cG(\cdot\given s_h, a_h)\|_1+\EE_{s_{h+1}\sim\TT_h(\cdot\given s_h, a_h), o_{h+1}\sim\OO_{h+1}(\cdot\given s_{h+1})}\InBrackets{ \hat{V}_{i, h+1}^{\pi^k, \cG}(c_{h+1})}\\
	&\quad\le \EE_{s_{h+1}\sim\TT_h(\cdot\given s_h, a_h), o_{h+1}\sim\OO_{h+1}(\cdot\given s_{h+1})}\InBrackets{ \hat{V}_{i, h+1}^{\pi^k, \cG}(c_{h+1})},
	\$
	where the last step again comes from the construction of our bonus in \Cref{eq:bonus} and \Cref{lem:future-emission}. Therefore, we conclude 
	\$
	Q^{\low, k}_{i, h}(c_h, \gamma_h)&\le \EE_{s_h, p_h\sim\hat{P}_h(\cdot, \cdot\given \hat{c}_h)}\EE_{\{a_{j, h}\sim\gamma_{j, h}(\cdot\given p_{j, h})\}_{j\in[n]}}\EE_{o_{h+1}\sim\hat{\JJ}^{k-1}_h(\cdot\given s_h, a_h)}\InBrackets{ r_{i, h}(s_h, a_h) +\hat{V}_{i, h+1}^{\pi^k, \cG}(c_{h+1})}=\hat{Q}_{i, h}^{\pi^k, \cG}(c_h, \gamma_h).
	\$
	Similarly, for value function, it holds  that 
	\$
	&V^{\low, k}_{i, h}(c_h)=\EE_{\omega_h}Q^{\low, k}_{i, h}(c_h, \{\pi^{k}_{j, h}(\cdot\given \omega_{j, h}, \hat{c}_h, \cdot)\}_{j\in[n]})
 \le \EE_{\omega_h}\hat{Q}_{i, h}^{\pi^k, \cG}(c_h, \{\pi^{k}_{j, h}(\cdot\given \omega_{j, h}, \hat{c}_h, \cdot)\}_{j\in[n]})=\hat{V}^{\pi^k, \cG}_{i, h}(c_h),
	\$
	thus proving our lemma.
\end{proof}

\begin{theorem}[NE/CCE version]\label{thm:vi-cce}
	With probability $1-\delta$, \Cref{alg:optimisticvi} enjoys the regret guarantee of 
	\small
	\$
&\sum_{k\in[K]}\max_{i\in[n]}\InParentheses{ \max_{\pi_i^\prime\in\Pi_i}V_{i, 1}^{\pi_i^\prime\times\pi_{-i}^k, \cG}(c_1^k)-V_{i, 1}^{\pi^k, \cG}(c_1^k)}
 \le  \cO(KH^2\epsilon_{\text{belief}} + H^2\sqrt{SAOK\log(SAHK/\delta)} + H^2SA\sqrt{O\log(SAHK/\delta)}).
	\$
	\normalsize
	Correspondingly, this implies that one can learn an $(\epsilon + H^2\epsilon_{\text{belief}})$-NE if $\cG$ is zero-sum and $(\epsilon + H^2 \epsilon_{\text{belief}})$-CCE if $\cG$ is general-sum with sample complexity $\cO(\frac{H^4SAO\log(SAHO/\delta)}{\epsilon^2})$ and computation complexity $\textsc{poly}(S, \max_{h\in[H]}|\hat{\cC}_h|, \max_{h\in[H]}|\cP_h|, H, \frac{1}{\epsilon}, \log\frac{1}{\delta})$.
\end{theorem}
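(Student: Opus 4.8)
The plan is to prove \Cref{thm:vi-cce} by the standard optimism--pessimism regret decomposition and then convert the cumulative regret into an equilibrium guarantee via best-iterate selection. First I would fix an agent $i\in[n]$ and iteration $k\in[K]$ and \emph{sandwich} the true regret of $\pi^k$ between the optimistic and pessimistic value estimates maintained by \Cref{alg:optimisticvi}. Combining \Cref{lem:value-bound} (which controls $|V^{\pi,\cG}_{i,h}-\hat V^{\pi,\cG}_{i,h}|$ by $(H-h+1)^2\epsilon_{\text{belief}}$), the optimism guarantee $V^{\high,k}_{i,1}(\hat c_1)\ge \hat V^{\pi_i'\times\pi^k_{-i},\cG}_{i,1}(c_1)$ of \Cref{lem:optimism-1}, and the pessimism guarantee $V^{\low,k}_{i,1}(\hat c_1)\le \hat V^{\pi^k,\cG}_{i,1}(c_1)$ of \Cref{lem:optimism-2}, I obtain, for every $i$ and $k$ (using that $c_1^k$ is fixed so the bound in \Cref{lem:value-bound} is pointwise at $h=1$),
$$\max_{\pi_i'\in\Pi_i}V^{\pi_i'\times\pi^k_{-i},\cG}_{i,1}(c_1^k)-V^{\pi^k,\cG}_{i,1}(c_1^k)\le \InParentheses{V^{\high,k}_{i,1}(\hat c_1^k)-V^{\low,k}_{i,1}(\hat c_1^k)}+2H^2\epsilon_{\text{belief}}.$$
Summing over $k$ and taking $\max_i$ reduces the task to bounding $\sum_{k}\max_i\InParentheses{V^{\high,k}_{i,1}-V^{\low,k}_{i,1}}$.

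The core of the argument is to telescope the optimism gap $\delta^k_h(\hat c_h):=V^{\high,k}_{i,h}(\hat c_h)-V^{\low,k}_{i,h}(\hat c_h)$ through the Bellman recursion. Subtracting the $\low$ recursion from the $\high$ one and dropping the truncations (which only shrink the gap), the per-step increment in $Q^{\high,k}-Q^{\low,k}$ is dominated by $2b^{k-1}_h(s_h,a_h)$ plus the propagated next-step gap $\delta^k_{h+1}$, where the common-information transition is driven by the empirical emission $\hat\JJ^{k-1}$; the mismatch between $\hat\JJ^{k-1}_h$ and the true $\PP^\cG(\cdot\mid s_h,a_h)$ is exactly the quantity absorbed into $b^{k-1}_h$ on the high-probability event of \Cref{lem:future-emission} together with the bonus definition \eqref{eq:bonus}. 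Unrolling from $h=1$ to $H$ therefore expresses $\delta^k_1$ as an expectation of $\sum_{h}2b^{k-1}_h(s_h,a_h)$ over the belief-simulated trajectory (states and private information drawn from $\hat P_h(\cdot\mid\hat c_h)$, actions from $\pi^k$, observations/common information advanced through $\hat\JJ^{k-1}$).

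I expect the main obstacle to be reconciling this \emph{planned} belief-simulated expectation with quantities controllable by the \emph{realized} data. I would first move from the belief-simulated visitation to the true visitation $\EE^\cG_{\pi^k}[\cdot]$: since \Cref{str_indi} makes the posterior strategy-independent and $\hat P_h$ matches it up to $\epsilon_{\text{belief}}$ in $\ell_1$, while each bonus is bounded by $2H$, every such replacement costs $\cO(H\epsilon_{\text{belief}})$ per step and contributes an aggregate $\cO(KH^2\epsilon_{\text{belief}})$ already of the target order. Second, I would pass from $\EE^\cG_{\pi^k}\InBrackets{\sum_h b^{k-1}_h(s_h,a_h)}$ to the realized bonuses $\sum_h b^{k-1}_h(s^k_h,a^k_h)$ by an Azuma--Hoeffding bound, since the trajectory at iteration $k$ is an independent draw given the history, incurring a lower-order $\cO(H\sqrt{H^2K\log(1/\delta)})$ term. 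This planned-versus-realized reconciliation, together with keeping the emission error inside the bonus throughout the telescoping, is the delicate part.

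Finally I would close with a potential/pigeonhole bound on the realized counts: because $b^{k-1}_h(s^k_h,a^k_h)\le C_3(H-h)\sqrt{O\log(SAHK/\delta)/\max(N^{k-1}_h(s^k_h,a^k_h),1)}$ and $\sum_{k}1/\sqrt{\max(N^{k-1}_h(s^k_h,a^k_h),1)}=\cO(\sqrt{SAK}+SA)$ for each fixed $h$, summing over $h$ gives
$$\sum_{k}\sum_{h}b^{k-1}_h(s^k_h,a^k_h)=\cO\InParentheses{H^2\sqrt{SAOK\log(SAHK/\delta)}+H^2SA\sqrt{O\log(SAHK/\delta)}},$$
which is precisely the statistical part of the claimed regret. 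For the equilibrium consequence, I would note that the returned index $k^\star=\arg\min_{k\in[K],i\in[n]}(V^{\high,k}_{i,1}-V^{\low,k}_{i,1})$ attains a gap no larger than the per-iteration average, namely the total regret divided by $K$; combined with the sandwich this makes $\pi^{k^\star}$ an $(\epsilon+H^2\epsilon_{\text{belief}})$-CCE, and its marginalization an $(\epsilon+H^2\epsilon_{\text{belief}})$-NE in the zero-sum case, once $K=\tilde\Theta(H^4SAO/\epsilon^2)$ drives the statistical term below $\epsilon$, yielding the sample complexity $\cO\InParentheses{H^4SAO\log(SAHO/\delta)/\epsilon^2}$; the per-iteration backward value iteration over $\hat\cC_h$ with the Bayesian-CE/CCE subroutine supplies the stated $\textsc{poly}\InParentheses{S,\max_h|\hat\cC_h|,\max_h|\cP_h|,H,\frac1\epsilon,\log\frac1\delta}$ computational cost.
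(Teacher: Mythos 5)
Your proposal is correct and follows essentially the same route as the paper's proof: the optimism/pessimism sandwich via \Cref{lem:optimism-1} and \Cref{lem:optimism-2}, replacing the approximate belief by the true one at aggregate cost $\cO(KH^2\epsilon_{\text{belief}})$, Azuma--Hoeffding martingale arguments (the paper's $Z^1_{k,h}$, $Z^2_{k,h}$) to pass from the belief-weighted expectation to the realized trajectory, the pigeonhole bound on the accumulated bonuses, conversion between $\hat V$ and $V$ via \Cref{lem:value-bound}, and best-iterate selection with zero-sum marginalization for the NE claim. The only differences are cosmetic bookkeeping: you apply \Cref{lem:value-bound} up front rather than at the end, and carry $2b^{k-1}_h$ per step where the paper's unrolling accrues $3b^{k-1}_h$ (the extra bonus covering the $\hat{\JJ}^{k-1}_h$-versus-$\PP^{\cG}$ emission swap), all absorbed in the $\cO(\cdot)$.
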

\begin{proof}
	Note for any given $i\in[n]$ and $\pi_i^\prime\in\Pi_i$, by \Cref{lem:optimism-1} and \Cref{lem:optimism-2}, it holds 
	\$
	\max_{\pi_i^\prime\in\Pi_i}V_{i, h}^{\pi_i^\prime\times\pi_{-i}^k, \cG}(c_h^k)-V_{i, h}^{\pi^k, \cG}(c_h^k)\le V^{\high, k}_{i, h}(c_h^k)-V^{\low, k}_{i, h}(c_h^k).
	\$
	Therefore, it suffices to bound $V^{\high, k}_{i, h}(c_h^k)-V^{\low, k}_{i, h}(c_h^k)$:
	\small
	\$
		&V^{\high, k}_{i, h}(c_h^k)-V^{\low, k}_{i, h}(c_h^k)\\
		&\quad= \EE_{s_h, p_h\sim\hat{P}_h(\cdot,\cdot\given \hat{c}_h^k)}\EE_{\omega_h}\InBrackets{Q_{i, h}^{\high, k}(c_h^k, \{\pi_{j, h}^k(\cdot\given \cdot, \hat{c}_h^k, \cdot)\}_{j\in[n]})-Q_{i, h}^{\low, k}(c_h^k, \{\pi_{j, h}^k(\cdot\given \cdot, \hat{c}_h^k, \cdot)\}_{j\in[n]})}\\
		&\quad\le \EE_{s_h, p_h\sim\PP^\cG(\cdot,\cdot\given c_h^k)}\EE_{\omega_h}\InBrackets{Q_{i, h}^{\high, k}(c_h^k, \{\pi_{j, h}^k(\cdot\given \cdot, \hat{c}_h^k, \cdot)\}_{j\in[n]})-Q_{i, h}^{\low, k}(c_h^k, \{\pi_{j, h}^k(\cdot\given \cdot, \hat{c}_h^k, \cdot)\}_{j\in[n]})}+(H-h+1)\epsilon_h(c_h^k)\\
		&\quad\le \EE_{s_h, p_h\sim\PP^\cG(\cdot,\cdot\given c_h^k)}\EE_{\omega_h}\EE_{\{a_{j, h}\sim\pi_{j, h}^k(\cdot\given \omega_{j, h}, \hat{c}_h^k, p_{j, h})\}_{j\in[n]}}\InBrackets{Q_{i, h}^{\high, k}(c_h^k, p_h, s_h, a_h)-Q_{i, h}^{\low, k}(c_h^k, p_h, s_h, a_h)}+(H-h+1)\epsilon_h(c_h^k)\\
		&\quad\le Z_{k, h}^{1} + Q_{i, h}^{\high, k}(c_h^k, p_h^k, s_h^k, a_h^k)-Q_{i, h}^{\low, k}(c_h^k, p_h^k, s_h^k, a_h^k)+(H-h+1)\epsilon_h(c_h^k)\\
		&\quad\le Z_{k, h}^{1} + \EE_{o_{h+1}\sim\hat{\JJ}_h^{k-1}(\cdot\given s_h^k, a_h^k)}\InBrackets{V_{i, h+1}^{\high, k}(c_{h+1})-V_{i, h+1}^{\low, k}(c_{h+1})}+(H-h+1)\epsilon_h(c_h^k)+2b_h^{k-1}(s_h^k, a_h^k)\\
		&\quad\le Z_{k, h}^{1} + \EE_{o_{h+1}\sim\PP^{\cG}(\cdot\given s_h^k, a_h^k)}\InBrackets{V_{i, h+1}^{\high, k}(c_{h+1})-V_{i, h+1}^{\low, k}(c_{h+1})}+(H-h+1)\epsilon_h(c_h^k)+3b_h^{k-1}(s_h^k, a_h^k)\\
		&\quad\le Z_{k, h}^{1} + Z_{k, h}^{2}+V_{i, h+1}^{\high, k}(c_{h+1}^k)-V_{i, h+1}^{\low, k}(c_{h+1}^k)+(H-h+1)\epsilon_h(c_h^k)+3b_h^{k-1}(s_h^k, a_h^k),
	\$
	\normalsize
	where we define the two Martingale difference sequences as follows 
	\$
	Z_{k, h}^1&:=\EE_{s_h, p_h\sim\PP^\cG(\cdot,\cdot\given c_h^k)}\EE_{\omega_h}\EE_{\{a_{j, h}\sim\pi_{j, h}^k(\cdot\given \omega_{j, h}, \hat{c}_h^k, p_{j, h})\}_{j\in[n]}}\bigg[Q_{i, h}^{\high, k}(c_h^k, p_h, s_h, a_h)-Q_{i, h}^{\low, k}(c_h^k, p_h, s_h, a_h)\bigg]\\
	&\qquad\quad-\InParentheses{ Q_{i, h}^{\high, k}(c_h^k, p_h^k, s_h^k, a_h^k)-Q_{i, h}^{\low, k}(c_h^k, p_h^k, s_h^k, a_h^k)}\\
	Z_{k, h}^2&:=\EE_{o_{h+1}\sim\PP^{\cG}(\cdot\given s_h^k, a_h^k)}\InBrackets{V_{i, h+1}^{\high, k}(c_{h+1})-V_{i, h+1}^{\low, k}(c_{h+1})}-\InParentheses{V_{i, h+1}^{\high, k}(c_{h+1}^k)-V_{i, h+1}^{\low, k}(c_{h+1}^k)},
	\$
	and the error of the belief is defined as
	\$
	\epsilon_h(c_h^k):=\|\hat{P}_h(\cdot,\cdot\given \hat{c}_h^k)-\PP^\cG(\cdot,\cdot\given c_h^k)\|_1.
	\$
	Since $|Z_{k, h}^1|\le H$, $|Z_{k, h}^2|\le H$, and $\epsilon_h(c_h^k)\le 2$, by Azuma-Hoeffding bound, we conclude with probability $1-3\delta$, the following holds
	\$
	\sum_{k, h}Z_{k, h}^1&\le \cO(H\sqrt{HK\log\frac{1}{\delta}}),\qquad\qquad
	\sum_{k, h}Z_{k, h}^2\le \cO(H\sqrt{HK\log\frac{1}{\delta}}),\\
	\sum_{k, h}\epsilon_h(c_h^k)&\le \sum_k\EE_{\pi^k}^\cG\InBrackets{\sum_h\epsilon_h(c_h)}+\cO(\sqrt{HK\log\frac{1}{\delta}})\le KH\epsilon_{\text{belief}} + \cO(\sqrt{HK\log\frac{1}{\delta}}).
	\$
	Meanwhile, by the pigeonhole principle, it holds that
	\$
	&\sum_{k, h}b_h^{k-1}(s_h^k, a_h^k)\le H\sqrt{O\log(SAHK/\delta)}\sum_{k, h}\frac{1}{\sqrt{\max\{1, N_h^{k-1}(s_h^k, a_h^k)}\}}\\ 
 &\quad\le \cO \InParentheses{ H\sqrt{O\log(SAHK/\delta)}(H\sqrt{SAK} + HSA)}.
	\$
	Now by \Cref{lem:optimism-1} and \Cref{lem:optimism-2} and putting everything together, we conclude
	\small
	\$
	&\sum_{k\in[K]}\max_{i\in[n]}\InParentheses{ \max_{\pi_i^\prime\in\Pi_i}\hat{V}_{i, 1}^{\pi_i^\prime\times\pi_{-i}, \cG}(c_1^k)-\hat{V}_{i, 1}^{\pi, \cG}(c_1^k)}
  \le KH^2\epsilon_{\text{belief}} + \cO(H^2\sqrt{SAOK\log(SAHK/\delta)} + H^2SA\sqrt{O\log(SAHK/\delta)}).
	\$
	\normalsize
	Now by \Cref{lem:value-bound}, we proved the regret guarantees as follows
	\small
		\$
	&\sum_{k\in[K]}\max_{i\in[n]}\InParentheses{ \max_{\pi_i^\prime\in\Pi_i}V_{i, 1}^{\pi_i^\prime\times\pi_{-i}^k, \cG}(c_1^k)-V_{i, 1}^{\pi^k, \cG}(c_1^k)}
 \\&\quad \le  \cO(KH^2\epsilon_{\text{belief}} + H^2\sqrt{SAOK\log(SAHK/\delta)} + H^2SA\sqrt{O\log(SAHK/\delta)}).
	\$
	\normalsize
	For the PAC guarantees, since we define $k^\star\in\arg\min_{i\in[n], k\in[K]} V^{\high, k}_{i, 1}(c_1^k)-V^{\low, k}_{i, 1}(c_1^k)$, we have
	\$
	\text{CCE-gap}(\pi^{k^\star})&\le \cO(H^2\epsilon_{\text{belief}})+ \max_{i\in[n]}\left(V^{\high, k^\star}_{i, h}(c_1^{k^\star})-V^{\low, k^\star}_{i, h}(c_1^{k^\star})\right)\\
	&\le \cO(H^2\epsilon_{\text{belief}})+\left(\frac{1}{K}\sum_{k\in[K]}V^{\high, k}_{i, h}(c_1^k)-V^{\low, k}_{i, h}(c_1^k)\right)\\
	&\le \cO(H^2\epsilon_{\text{belief}} + H^2\sqrt{SAO\log(SAHK/\delta)/K} + \frac{H^2SA}{K}\sqrt{O\log(SAHK/\delta)}).
	\$
	Finally, for two-player zero-sum games, we denote $\hat{\pi}^{k^\star}$ to be the marginalized policy of $\pi^{k^\star}$. Then we have
	\$
	\text{NE-gap}(\hat{\pi}^{k^\star})\le \text{CCE-gap}(\pi^{k^\star}),
	\$ 
	thus concluding our theorem.
\end{proof}

\begin{theorem}[CE version]\label{thm:vi-ce}
	With probability $1-\delta$, \Cref{alg:optimisticvi} enjoys the regret guarantee of
	\small
	\$
	&\sum_{k\in[K]}\max_{i\in[n]}\InParentheses{ \max_{m_i^\prime\in\cM_i}V_{i, 1}^{(m_i^\prime\diamond\pi_i^k)\odot\pi_{-i}^k, \cG}(c_1^k)-V_{i, 1}^{\pi^k, \cG}(c_1^k)}
 \\&\quad\le  \cO(KH^2\epsilon_{\text{belief}} + H^2\sqrt{SAOK\log(SAHK/\delta)} + H^2SA\sqrt{O\log(SAHK/\delta)}).
	\$
	\normalsize
	Correspondingly, this implies that  one can learn an $(\epsilon + H^2\epsilon_{\text{belief}})$-CE with sample complexity $\cO(\frac{H^4SAO\log(SAHO/\delta)}{\epsilon^2})$ \xynew{and computation complexity $\textsc{poly}(S, \max_{h\in[H]}|\hat{\cC}_h|, \max_{h\in[H]}|\cP_h|, H, \frac{1}{\epsilon}, \log\frac{1}{\delta})$} 
\end{theorem}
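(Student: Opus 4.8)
The plan is to mirror the proof of the NE/CCE regret bound in \Cref{thm:vi-cce} almost verbatim, replacing the unilateral deviation class $\Pi_i$ by the strategy-modification class $\cM_i$ and replacing the Bayesian-CCE optimism by the Bayesian-CE optimism already established in \Cref{lem:optimism-1-ce}. Concretely, fix an episode $k$ and an agent $i$ and consider the best strategy modification $m_i'\in\cM_i$. First I would sandwich the per-episode CE regret between the optimistic and pessimistic value estimates produced by \Cref{alg:optimisticvi}: by \Cref{lem:optimism-1-ce} the deviated value $\hat V^{(m_i'\diamond\pi_i^k)\odot\pi_{-i}^k,\cG}_{i,1}(c_1^k)$ is upper bounded by $V^{\high,k}_{i,1}(c_1^k)$, while by \Cref{lem:optimism-2} the on-policy value $\hat V^{\pi^k,\cG}_{i,1}(c_1^k)$ is lower bounded by $V^{\low,k}_{i,1}(c_1^k)$. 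Combined with \Cref{lem:value-bound}, which controls the gap between the belief-approximated value $\hat V$ and the true value $V$ by $\cO(H^2\epsilon_{\text{belief}})$, this reduces the problem to bounding $\sum_k \max_i\big(V^{\high,k}_{i,1}(c_1^k)-V^{\low,k}_{i,1}(c_1^k)\big)$.

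Next I would establish the one-step recursion for $V^{\high,k}_{i,h}(c_h^k)-V^{\low,k}_{i,h}(c_h^k)$, which is identical to the one in \Cref{thm:vi-cce} because, once the optimism and pessimism are in place, the backward expansion no longer references the deviation class. Unfolding the definitions of $Q^{\high,k}$ and $Q^{\low,k}$ and passing from the approximate belief $\hat P_h(\cdot,\cdot\mid\hat c_h^k)$ to the true belief $\PP^\cG(\cdot,\cdot\mid c_h^k)$ introduces the belief-error term $(H-h+1)\epsilon_h(c_h^k)$ with $\epsilon_h(c_h^k):=\|\hat P_h(\cdot,\cdot\mid\hat c_h^k)-\PP^\cG(\cdot,\cdot\mid c_h^k)\|_1$; passing from expectations over $(s_h,p_h,a_h)$ and $o_{h+1}$ to the realized trajectory introduces two martingale-difference sequences $Z^1_{k,h}$ and $Z^2_{k,h}$; and the difference of the two bonus terms (which enter with opposite signs in $Q^{\high}$ and $Q^{\low}$) together with the emission-estimation error controlled by \Cref{lem:future-emission} contributes $3b_h^{k-1}(s_h^k,a_h^k)$ with $b$ as in \eqref{eq:bonus}. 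This yields $V^{\high,k}_{i,h}-V^{\low,k}_{i,h}\le Z^1_{k,h}+Z^2_{k,h}+(V^{\high,k}_{i,h+1}-V^{\low,k}_{i,h+1})+(H-h+1)\epsilon_h(c_h^k)+3b_h^{k-1}(s_h^k,a_h^k)$.

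I would then telescope over $h$ and sum over $k$, bounding each accumulated piece exactly as in the NE/CCE proof: Azuma--Hoeffding gives $\sum_{k,h}Z^1_{k,h}$ and $\sum_{k,h}Z^2_{k,h}$ of order $\cO(H\sqrt{HK\log(1/\delta)})$; the definition $\epsilon_{\text{belief}}=\max_h\max_\pi\EE_\pi^\cG\|\PP^\cG(\cdot,\cdot\mid c_h)-\hat P_h(\cdot,\cdot\mid\hat c_h)\|_1$ plus one more Azuma--Hoeffding step gives $\sum_{k,h}\epsilon_h(c_h^k)\le KH\epsilon_{\text{belief}}+\cO(\sqrt{HK\log(1/\delta)})$; and the pigeonhole principle on the visitation counts gives $\sum_{k,h}b_h^{k-1}(s_h^k,a_h^k)=\cO(H\sqrt{O\log(SAHK/\delta)}(H\sqrt{SAK}+HSA))$. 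Converting $\hat V$ back to $V$ via \Cref{lem:value-bound} produces the stated regret bound $\cO(KH^2\epsilon_{\text{belief}}+H^2\sqrt{SAOK\log(SAHK/\delta)}+H^2SA\sqrt{O\log(SAHK/\delta)})$. For the PAC guarantee, since $k^\star\in\arg\min_{i,k}\big(V^{\high,k}_{i,1}(c_1^k)-V^{\low,k}_{i,1}(c_1^k)\big)$, the CE-gap of the returned $\pi^{k^\star}$ is at most $\cO(H^2\epsilon_{\text{belief}})$ plus the average per-episode gap $\frac{1}{K}\sum_k(V^{\high,k}_{i,1}-V^{\low,k}_{i,1})$, so choosing $K=\cO(H^4SAO\log(SAHO/\delta)/\epsilon^2)$ yields an $(\epsilon+H^2\epsilon_{\text{belief}})$-CE with the claimed sample and computational complexities.

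The one genuinely CE-specific ingredient, and hence the main thing to get right, is the optimism step: I must verify that the Bayesian-CE subroutine controls swap regret at the level of prescriptions, so that the optimistic upper bound survives the best strategy modification $m_i'$ acting on the sampled prescription $\pi^k_{i,h}(\cdot\mid\omega_{i,h},\hat c_h,\cdot)$, rather than the mere unilateral replacement used in the NE/CCE case. This is precisely what \Cref{lem:optimism-1-ce} asserts, so the obstacle is conceptual rather than computational; its validity rests on the strategy-independence of beliefs (\Cref{str_indi}), which makes $\PP^\cG(s_h,p_h\mid c_h)$ and $\hat P_h$ policy-independent and thereby legitimizes the belief-weighted value decomposition used throughout. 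Everything else---the concentration, martingale, and pigeonhole bounds---is insensitive to whether we measure NE/CCE or CE regret and can be imported unchanged from \Cref{thm:vi-cce}.
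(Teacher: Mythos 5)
Your proposal is correct and matches the paper's proof, which is itself a one-line reduction to \Cref{thm:vi-cce}: replace the unilateral-deviation sandwich with $\hat{V}_{i,h}^{(m_i\diamond\pi_i^k)\odot\pi_{-i}^k,\cG}(c_h^k)-\hat{V}_{i,h}^{\pi^k,\cG}(c_h^k)\le V_{i,h}^{\high,k}(c_h^k)-V_{i,h}^{\low,k}(c_h^k)$ via \Cref{lem:optimism-1-ce} (the Bayesian-CE optimism over strategy modifications, which you correctly flag as the only CE-specific ingredient) and \Cref{lem:optimism-2}, after which the recursion, martingale, belief-error, bonus/pigeonhole, and PAC steps are imported unchanged. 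You have merely spelled out in full the steps the paper leaves implicit by reference to \Cref{thm:vi-cce}; there is no substantive difference.
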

\begin{proof}
	Then proof follows as that of \Cref{thm:vi-cce}, where we only need to change the first step of the proof as 
	\$
	\hat{V}_{i, h}^{\pi_i^\prime\times\pi_{-i}^k, \cG}(c_h^k)-\hat{V}_{i, h}^{\pi^k, \cG}(c_h^k)\le V^{\high, k}_{i, h}(c_h^k)-V^{\low, k}_{i, h}(c_h^k),
	\$
	by \Cref{lem:optimism-1-ce} and \Cref{lem:optimism-2}, and the remaining steps are exactly the same.
\end{proof}

\begin{lemma}[Adapted from \Cref{lem:free}]\label{lem:free-posg}
	Algorithm \ref{alg:reward-free-posg} can learn the approximate POMDP with transition $\hat{\TT}_{1:H}$ and emission $\hat{\OO}_{1:H}$ such that for any policy $\pi\in\Pi^{\gen}$ and $h\in[H]$
	\$
	\EE_{\pi}^\cG \InBrackets{ \|\TT_{h}(\cdot\given s_h, a_h)-\hat{\TT}_{h}(\cdot\given s_h, a_h)\|_1+\|\OO_{h}(\cdot\given s_h)-\hat{\OO}_{h}(\cdot\given s_h)\|_1}\le\cO(\epsilon),
	\$
	using sample complexity $\tilde{\cO}(\frac{S^2AHO+S^3AH}{\epsilon^2}+\frac{S^4A^2H^5}{\epsilon})$ with probability $1-\delta$.
\end{lemma}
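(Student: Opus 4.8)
The plan is to mirror the proof of \Cref{lem:free} essentially verbatim, the key observation being that in the privileged-information (centralized-training) setting the \emph{state} dynamics of $\cG$, viewed as a function of the \emph{joint} action $a_h\in\cA$, together with the \emph{joint} emission $\OO_h(\cdot\given s_h)\in\Delta(\cO)$, constitute exactly the same estimation problem as a single POMDP with action space of size $A$ and observation space of size $O$. Since the target quantity in the statement is a reward-free objective, I would first apply \Cref{lem:state} to the MDP $\cM=(H,\cS,\cA,\{\TT_h\}_{h\in[H]},\mu_1)$ induced by $\cG$ (obtained by discarding observations and emissions) with the state-action reward
\[
r_h(s_h,a_h):=\|\TT_h(\cdot\given s_h,a_h)-\hat\TT_h(\cdot\given s_h,a_h)\|_1+\|\OO_h(\cdot\given s_h)-\hat\OO_h(\cdot\given s_h)\|_1,
\]
which is a legitimate function of $(s_h,a_h)$. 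This reduces the supremum over $\Pi^{\gen}$ to one over state-based policies $\Pi_\cS$, and the exploration policies $\Psi(h,s_h)$ that \Cref{alg:reward-free-posg} produces via $\text{MDP\_Learning}$ live precisely in $\Pi_\cS$.

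Next I would set $p_h(s_h):=\max_{\pi\in\Pi_\cS}d_h^{\pi}(s_h)$, fix a threshold $\epsilon_1>0$, and define the well-visited set $\cU(h,\epsilon_1):=\{s_h\in\cS:p_h(s_h)\ge\epsilon_1\}$. Invoking a reward-free exploration oracle on $\cM$ (e.g., EULER \cite{zanette2019tighter} or the reward-free procedure of \cite{jin2020reward}), which is exactly what $\text{MDP\_Learning}$ instantiates inside \Cref{alg:reward-free-posg}, one obtains with sample complexity $\tilde{\cO}(S^2AH^4/\epsilon_1)$ and probability $1-\delta_1$ policies $\Psi(h,s_h)$ satisfying $d_h^{\Psi(h,s_h)}(s_h)\ge p_h(s_h)/2$ for every $s_h\in\cU(h,\epsilon_1)$. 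Conditioning on this and on the Chernoff bound $N_h(s_h,a_h)\ge Np_h(s_h)/2$ (valid with probability $1-e^{-N\epsilon_1/8}$ per state), I would replicate the two concentration displays of \Cref{lem:free}: for any $\pi\in\Pi_\cS$, using the $\ell_1$ distribution-learning bound of \cite{canonne2020short} (\Cref{lem:conc transition,lem:conc emission}) together with Cauchy--Schwarz,
\begin{align*}
\tfrac12\EE_\pi^\cG\|\TT_h(\cdot\given s_h,a_h)-\hat\TT_h(\cdot\given s_h,a_h)\|_1 &\le S\epsilon_1+\sum_{s_h}\sqrt{d_h^{\Psi(h,s_h)}(s_h)}\sqrt{\tfrac{2S\log(1/\delta_2)}{N}}\le S\epsilon_1+S\sqrt{\tfrac{2\log(1/\delta_2)}{N}},
\end{align*}
and the analogous emission estimate $\tfrac12\EE_\pi^\cG\|\OO_h(\cdot\given s_h)-\hat\OO_h(\cdot\given s_h)\|_1\le S\epsilon_1+\sqrt{2SO\log(1/\delta_2)/N}$.

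Taking a union bound over the $SH$ exploration calls, the $SHA$ Chernoff events and the $2SAH$ concentration events, and then choosing $N=\tilde\Theta((S^2+SO)/\epsilon^2)$ and $\epsilon_1=\Theta(\epsilon/S)$, gives accuracy $\cO(\epsilon)$ and total sample complexity $SHA\bigl(N+\tilde\Theta(S^3AH^4/\epsilon)\bigr)=\tilde\Theta\bigl(\tfrac{S^2AHO+S^3AH}{\epsilon^2}+\tfrac{S^4A^2H^5}{\epsilon}\bigr)$, matching \Cref{eq:complexity}. The main point to verify --- the only place where the multi-agent structure genuinely enters --- is that a single centralized exploration policy on $\cM$ explores the joint state space as effectively as in the single-agent case. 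This holds because privileged information grants joint control of all agents during training, so the joint-action MDP $\cM$ is explorable by a single policy in $\Pi_\cS$; any restriction of the transition to the controller actions of agents in $\cI_h$ only shrinks the effective action space and is dominated by the joint bound $A$. Everything else transfers unchanged from the proof of \Cref{lem:free}.
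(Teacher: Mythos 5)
Your proposal is correct and matches the paper's own argument: the paper proves \Cref{lem:free-posg} precisely by treating the POSG $\cG$ as a centralized model and re-running the proof of \Cref{lem:free}, noting (as you do) that the only multi-agent modification is estimating the transition with respect to the controller actions $a_{\cI_h,h}$, which only shrinks the effective action space and so cannot worsen the sample complexity. You simply inline the details of \Cref{lem:free} (the reduction via \Cref{lem:state}, the EULER/reward-free oracle, Chernoff and concentration bounds, and the parameter choices $N=\tilde\Theta((S^2+SO)/\epsilon^2)$, $\epsilon_1=\Theta(\epsilon/S)$) where the paper cites them, so the two proofs are essentially identical.
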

\begin{proof}
	Note that \Cref{alg:reward-free-posg} is essentially treating the POSG $\cG$ as a centralized MDP and running \Cref{alg:reward-free-pomdp}, where the only modifications we make in \Cref{alg:reward-free-posg} is that we take the controller set \xynew{(see some examples of the controller set in \Cref{subsec:example})} into considerations when learning the models. Specifically, for the transition $\hat{\TT}_h$, what we estimate is only $\hat{\TT}_h(s_{h+1}\given s_h, a_{\cI_h, h})$ instead of $\hat{\TT}_h(s_{h+1}\given s_h, a_h)$, where $\cT_h\subseteq[n]$ is the controller set. Therefore, the sample complexity of \Cref{alg:reward-free-posg} will not be worse than that of \Cref{alg:reward-free-pomdp}.
\end{proof}


\begin{prevproof}{thm:posg-belief}
	
	Note that the proof idea essentially resembles that of \Cref{lem:belief}, where we construct the model $\cG^\trunc$ for $\cG$ in exactly the same way as  constructing $\cP^\trunc$ for $\cP$. Therefore, \xynew{by \Cref{corr:traj}}, we have 
	\$
	&\EE_{\pi}^{\cG}[\|\PP^{\cG}(\cdot, \cdot\given c_h)-\PP^{\cG^\trunc}(\cdot, \cdot\given c_h)\|_1]\le 2\sum_{\overline{\tau}_H}|\PP^{\pi, \cG}(\overline{\tau}_H)-\PP^{\pi, \cG^\trunc}(\overline{\tau}_H)|\le 4HS\epsilon_1.
	\$
	Meanwhile, we can construct $\hat{\cG}^\trunc$ and $\hat{\cG}^{\text{sub}}$ using  exactly the same  way as for $\hat{\cP}^\trunc$ and $\hat{\cP}^{\text{sub}}$, where $\hat{\cG}^{\text{sub}}$ is a $\gamma/2$-observable POSG. 
	
Now, according to \cite{liu2023partially}, for all the examples in \Cref{subsec:example}, there exists a compression function that maps $c_h$ to $\hat{c}_h$ such that the size of the compressed common information is quasi-polynomial, i.e., $\hat{C}_h\le (AO)^{C\gamma^{-4}\log\frac{SH}{\epsilon_2}}$ for some absolute constant $C$ and $\epsilon_2\in(0,1)$, and the corresponding approximate belief $\{\hat{P}_{h}:\hat{\cC}_h\rightarrow\Delta(\cS\times\cP_h)\}_{h\in[H]}$ satisfies that
	\$
	\EE^{\hat{\cG}^{\text{sub}}}_\pi\|\PP^{\hat{\cG}^{\text{sub}}}(\cdot,\cdot\given c_h)-\tilde{P}_h(\cdot,\cdot\given \hat{c}_h)\|_1\le \epsilon_2.
	\$	 
	Therefore, we can do the same augmentation for $\tilde{P}_h$ on states from $\cS_h^\low$ to construct the approximate belief $\hat{P}_h$ as in the proof of \Cref{lem:belief}, and the remaining steps follow from the proof of \Cref{lem:belief}. {This will lead to a total of polynomial-time and polynomial-sample complexities.} 
\end{prevproof}

\subsection{Background on Bayesian Games}\label{app:bayesian}

The Bayesian game is a generalization of normal-form games in partially observable settings. Specifically, a Bayesian  game is specified as $(n, \{\cA_i\}_{i\in[n]}, \{\Theta_i\}_{i\in[n]}, \{r_i\}_{i\in[n]}, \mu)$, where $n$ is the number of players, $\cA_i$ is the actor space, $\Theta_i$ is the type space, $r_i:\times_{i\in[n]}(\Theta_i\times\cA_i)\rightarrow [0, 1]$ is the reward function, and $\mu$ is the prior distribution of the joint type. At the beginning of the game, a type $\theta=(\theta_i)_{i\in[n]}$ is drawn from the prior distribution $\mu\in\Delta(\Theta)$. Then each agent $i$ gets its own  type $\theta_i$ and takes the action $a_i$. We define a pure strategy of an agent as $s_i\in\text{ST}_i:=\{\Theta_i\rightarrow \cA_i\}$. We define $J_i(s_i, s_{-i})$ to be the expected rewards for agent $i$, given the pure joint strategy $(s_i, s_{-i})$.

By definition, $J_i(s_i, s_{-i})$ can be evaluated as
\$
J_i(s_i, s_{-i}):=\EE_{\theta\sim\mu}r_i(\theta, s_i(\theta_i), s_{-i}(\theta_{-i})).
\$

\paragraph{Bayesian NE.} We define $\gamma^\star\in \times_{i\in[n]}\Delta(\text{ST}_i)$ is an $\epsilon$-NE is it satisfies that
\$
\EE_{s\sim\gamma^\star}J_i(s)\ge \EE_{s\sim\gamma^\star}J_i(s_i^\prime, s_{-i})-\epsilon,~~~~ \forall i\in[n],s_i^\prime\in\text{ST}_i.
\$
\paragraph{Bayesian CCE.} We say a distribution of joint strategies $\gamma^\star\in\Delta(\times_{i\in[n]}\text{ST}_i)$ to be a $\epsilon$-Bayesian CCE if it satisfies 
\$
\EE_{s\sim\gamma^\star}J_i(s)\ge \EE_{s\sim\gamma^\star}J_i(s_i^\prime, s_{-i})-\epsilon,~~~~ \forall i\in[n],s_i^\prime\in\text{ST}_i.
\$
\paragraph{(Agent-form) Bayesian CE.} We say a distribution of joint strategies  $\gamma^\star\in\Delta(\times_{i\in[n]}\text{ST}_i)$ to be an $\epsilon$-agent-form Bayesian CE if it satisfies 
\$
\EE_{s\sim \gamma^\star}J_i(s)\ge \EE_{s\sim \gamma^\star} J_i(m_i\diamond s_i, s_{-i})-\epsilon,~~~~ \forall i\in[n],m_i^\prime\in\cM_i,
\$
where $\cM_i=\{\Theta_i\times\cA_i\rightarrow\cA_i\}$ is the space for strategy modification, where $m_i$ modifies $s_i$ as follows: given current type $\theta_i$ and the recommended action $a_i$, the strategy modification changes the action to the another action $m_i(\theta_i, a_i)$.

Note that Bayesian NE for zero-sum games, and (agent-form) Bayesian CE/CCE are all tractable solution concepts and can be computed with polynomial computational complexity, e.g., \cite{gordon2008no, hartline2015no,fujii2023bayes}.


%
%


\end{document}